\pdfoutput=1
\documentclass{article}

\PassOptionsToPackage{numbers, sort&compress}{natbib}

\usepackage[main, final]{neurips_2026}

\usepackage[utf8]{inputenc} %
\usepackage[T1]{fontenc}    %
\usepackage{hyperref}       %
\usepackage{url}            %
\usepackage{booktabs}       %
\usepackage{array}          %
\usepackage{tabularx}
\newcolumntype{P}[1]{>{\raggedright\arraybackslash}p{#1}}
\newcolumntype{Y}{>{\raggedright\arraybackslash}X}
\newlength{\dashhdr}\newcommand{\dashunder}[1]{\settowidth{\dashhdr}{#1}\makebox[\dashhdr]{---}}
\newlength{\guarw}\newcommand{\guar}[2]{\settowidth{\guarw}{in exp.,}\makebox[\guarw][l]{#1,}\ #2}
\usepackage{amsfonts}       %
\usepackage{nicefrac}       %
\usepackage{microtype}      %
\usepackage{xcolor}         %
\definecolor{linknavy}{HTML}{1A3D6D}
\hypersetup{colorlinks=true, linkcolor=linknavy, citecolor=linknavy, urlcolor=linknavy,
  pdftitle={Sharp Convergence and Sample Complexity of Policy Mirror Descent for Average-Reward MDPs},
  pdfauthor={Enes Arda, Atilla Eryilmaz}}

\usepackage{amsmath}
\usepackage{amssymb}
\usepackage{amsthm}
\usepackage{mathtools}

\usepackage{graphicx}
\usepackage{float}
\usepackage{enumitem}
\usepackage{subcaption}
\usepackage{algorithm}
\usepackage{algorithmic}
\usepackage[capitalize,noabbrev]{cleveref}
\usepackage{enumitem}

\DeclareMathOperator*{\argmin}{arg\,min}

\newcommand{\E}{\mathbb{E}}

\newcommand{\St}{\mathcal{S}}

\theoremstyle{plain}
\newtheorem{theorem}{Theorem}[section]
\newtheorem{proposition}[theorem]{Proposition}
\newtheorem{lemma}[theorem]{Lemma}
\newtheorem{fact}[theorem]{Fact}
\newtheorem{corollary}[theorem]{Corollary}
\theoremstyle{definition}

\newtheorem{assumption}[theorem]{Assumption}
\theoremstyle{remark}
\newtheorem{remark}[theorem]{Remark}

\title{Sharp Convergence and Sample Complexity of Policy Mirror Descent for Average-Reward MDPs}

\author{%
  Enes Arda \\
  The Ohio State University \\
  \texttt{arda.2@osu.edu} \\
  \And
  Atilla Eryilmaz \\
  The Ohio State University \\
  \texttt{eryilmaz.2@osu.edu} \\
}

\begin{document}

\maketitle

\begin{abstract}
Policy mirror descent (PMD) has a mature finite-time theory in discounted Markov decision processes (MDPs), but less is known in the average-reward setting, a more natural objective for many control applications. We give a finite-time, finite-sample analysis of PMD in ergodic average-reward MDPs built around a single master recursion that governs convergence for any critic, without external regularization. Its specializations yield linear rates for exact, inexact-tabular, and linear function approximation (LFA) updates. For exact PMD, the rate is tight in the worst case, and convergence can become superlinear. We complement these convergence results with critics whose end-to-end sample complexities improve the prior best policy-gradient dependence from $|\mathcal S||\mathcal A|\,t_{\mathrm{mix}}^3$ to $|\mathcal S|\,t_{\mathrm{mix}}^3+|\mathcal S||\mathcal A|\,t_{\mathrm{mix}}^2$ in the tabular setting and from $|\mathcal S||\mathcal A|\,t_{\mathrm{mix}}^5$ to $d\,t_{\mathrm{mix}}^3$ under LFA. Matching information-theoretic lower bounds establish that both critics are optimal up to logarithmic factors.
\end{abstract}

\section{Introduction} \label{sec:intro}
Policy optimization, the process of updating an agent's decision rule to maximize expected long-term reward, is a workhorse of reinforcement learning applications \citep[e.g.,][]{adaGeneralizationTransferLearning2022, zamir_deep_2024}.
It underlies classical policy-gradient \citep{sutton_policy_1999} and natural-gradient methods \citep{kakade_natural_2001} as well as widely used algorithms such as TRPO \citep{schulman_trust_2017} and PPO \citep{schulman_proximal_2017}. A large and rapidly growing theory explains the behavior of these methods in \emph{discounted} Markov decision processes (MDPs), including finite-time finite-sample guarantees \citep{shani_adaptive_2020, agarwal_theory_2020, lan_policy_2022}, function approximation extensions \citep{agarwal_theory_2020, yuan_linear_2023, alfano_novel_2023, cayci_linear_2024}, and linear convergence results for policy mirror descent (PMD) and its natural-gradient instantiations \citep{cen_fast_2021, xiao_convergence_2022, lan_policy_2022, johnson_optimal_2023, zhan_policy_2023}. However, many continuing-control applications are more naturally modeled by the \emph{average-reward} criterion, where performance is determined by the long-run steady-state behavior rather than geometrically discounted rewards. This objective appears throughout ergodic control and operations research, including resource allocation, scheduling, queueing, and communication networks \citep{puterman_markov_1994, bertsekas_dynamic_2017, gosavi_reinforcement_2004, mahadevan_average_1996, srikant_communication_2014, neely_stochastic_2010, kumar_global_2024}.

\looseness=-1
To make the distinction between discounted and average-reward MDPs concrete, let $c_k$ denote the cost incurred by a policy at time $k$. The discounted and average objectives can be written informally as
\[
\rho_\gamma = \lim_{H\to\infty}\sum_{k=0}^{H-1}\gamma^k\,c_k,
\qquad
\rho = \lim_{H\to\infty}\frac{1}{H}\sum_{k=0}^{H-1} c_k,
\]
with $\gamma\in(0,1)$.
The key difference is the weighting: discounting weights early costs more heavily, so \emph{initial conditions} matter, while average reward weights all times equally and is governed by the \emph{stationary} behavior. Two consequences follow.
First, discounted PMD analyses exploit the initial state distribution to lower bound state visitation uniformly.
In average reward, initial conditions are transient, so controlling the stationary distribution shift becomes a new challenge.
Second, discounting yields a finite effective horizon $H_\gamma:=1/(1-\gamma)$ and admits unbiased value estimates via geometric stopping times. In average reward, all times are weighted equally, so there is no such stopping distribution, and truncating the horizon introduces a bias that must be controlled through mixing to stationarity.

A standard relationship that links the two objectives is
$\rho=\lim_{\gamma\uparrow 1}(1-\gamma)\rho_\gamma$ \citep{ross_introduction_1983, bertsekas_dynamic_2017}. This has led to the common workaround of analyzing a discounted surrogate with $\gamma$ close to $1$ and then sending $\gamma\uparrow 1$ \citep{murthy_performance_2023, kumar_global_2024}. While analytically appealing, this reduction can yield bounds and algorithmic intuition that break down in the average-reward regime. In particular, discounted finite-time guarantees typically scale polynomially in $H_\gamma$ (up to $H_\gamma^5$ for policy-gradient methods \citep{agarwal_theory_2020, xiao_convergence_2022, kumar_global_2024} and $H_\gamma^2$ for approximate policy iteration \citep{murthy_performance_2023}), which make the resulting bounds in the average-reward limit blow up since $\lim_{\gamma \uparrow 1} (1-\gamma)H_\gamma^p=\infty$ for $p>1$.
Average-reward MDPs therefore need a theory of their own, developed directly rather than inherited from discounted analyses.

\paragraph{Contributions.}
In this paper, we develop a finite-time, finite-sample theory of policy mirror descent (PMD) in \emph{average-reward} Markov decision processes (MDPs):
\begin{itemize}[leftmargin=*, itemsep=5pt, topsep=2pt, parsep=0pt]
\item \textbf{Master recursion.} We establish a master recursion (Lemma~\ref{lem:master}) that governs PMD for any critic, from which all downstream results follow as specializations.
\item \textbf{Linear and asymptotic convergence.} We prove linear convergence for exact, inexact-tabular, and linear function approximation (LFA) updates (Corollary~\ref{cor:exact-pmd}; Theorems~\ref{thm:avg-pmd-linear-inexact-limsup},~\ref{thm:non-tabular-linear-convergence}), at a rate that is tight in the worst case for exact PMD (Proposition~\ref{prop:ap-rate-lower-bound}). For all three updates, we further sharpen prior worst-case rates with an asymptotic-rate analysis. Our LFA result is, to our knowledge, the first non-tabular guarantee of linear convergence to the optimum up to critic error.
\item \textbf{End-to-end sample complexity.} For the tabular and LFA settings, we give critics whose end-to-end sample complexities (Thms.~\ref{thm:pmd_sample_complexity},~\ref{thm:non_tabular_pmd_sample_complexity_generative}) improve the prior best policy-gradient dependence from $|\mathcal S||\mathcal A|\,t_{\mathrm{mix}}^3$ to $|\mathcal S|\,t_{\mathrm{mix}}^3+|\mathcal S||\mathcal A|\,t_{\mathrm{mix}}^2$ in the tabular setting and from $|\mathcal S||\mathcal A|\,t_{\mathrm{mix}}^5$ to $d\,t_{\mathrm{mix}}^3$ in the LFA setting. Both guarantees hold with high probability for the last iterate rather than in expectation for a random iterate. The LFA guarantee needs only a single trajectory, a strictly weaker oracle than the generative model the tabular guarantee uses.
\item \textbf{Matching lower bounds.} We establish information-theoretic lower bounds via Assouad reductions (Propositions~\ref{prop:tabular_lower_bound},~\ref{prop:nontabular_lower_bound}), showing that both critics are optimal up to logarithmic factors.
\end{itemize}

\section{Related work} \label{sec:related-work}
Recent work on policy optimization for average-reward MDPs spans several partial results.
\citet{sun_policy_2024} prove linear convergence of robust PMD for average-reward MDPs under squared Euclidean geometry. Specialized to zero robustness radius, their result partially overlaps with ours for an exact critic (Theorem~\ref{thm:avg-pmd-linear-inexact-limsup} with $\varepsilon_Q=0$) but uses a stricter step-size growth condition and does not address general Bregman geometry, inexact critics, function approximation, or sample complexity.
\citet{li_stochastic_2024} give stochastic first-order methods with $O(1/\sqrt{T})$ unregularized rates and linear convergence up to an $\smash{\widetilde O}(1/T)$ stochastic term only under strongly convex regularization. Their LFA sample complexity on a single trajectory (their Corollary~3) has $|\mathcal S||\mathcal A|\,t_{\mathrm{mix}}^5$ dependence.
\citet{kumar_global_2024, murthy_convergence_2023} establish $O(1/T)$ rates for projected policy gradient and for natural policy gradient (NPG) and mirror-descent-like methods, \citet{grosof_convergence_2025} obtain an $O(1/\sqrt{T})$ rate for NPG on countably-infinite-state queueing systems, and \citet{murthy_performance_2023} develop performance bounds for approximate policy iteration; their bound for a fixed-step mirror-descent update keeps an error floor that does not vanish with the critic error.
\citet{bai_regret_2024, ganesh_order-optimal_2025, ganesh_sharper_2025} study general-parameterized average-reward policy gradient, achieving polynomial convergence and regret rates.
In concurrent work, \citet{lee_policy_2026} extend policy gradient to multichain MDPs via $\alpha$-clipped PMD, with sample complexity $\smash{\widetilde O}(\varepsilon^{-6})$.\footnote{Throughout, $\smash{\widetilde O}(\cdot)$ and $\smash{\widetilde\Theta}(\cdot)$ hide polylogarithmic factors in the problem parameters, the target accuracy, and the confidence.} This extension is complementary to our sharper $\smash{\widetilde O}(\varepsilon^{-2})$ analysis in the ergodic case.

Against this backdrop, we provide a unified PMD analysis under general Bregman geometry spanning exact, inexact-tabular, and LFA updates. We improve on prior tabular results along multiple axes (general Bregman geometry, inexact critics, asymptotic-rate analysis, sharper sample complexity, high-probability last-iterate guarantees) and, to our knowledge, establish the first linear convergence of non-tabular average-reward PMD to the optimum up to critic error, with a sharper sample complexity. Matching information-theoretic lower bounds establish that both critics are optimal up to logarithmic factors. Table~\ref{tab:comparison} summarizes these comparisons, and Appendix~\ref{ap:related-work} details them theorem by theorem.

\begin{table}[t]
\caption{Comparison with prior work. Rates concern the unregularized objective, up to critic-error floors. Sample complexities omit instance factors other than rate constants (App.~\ref{ap:related-work}). Guarantees hold w.h.p.\ or in expectation (in exp.), for the last or a uniformly random (rand.) iterate.}
\label{tab:comparison}
\centering

\begingroup
\footnotesize
\setlength{\tabcolsep}{2pt}
\renewcommand{\arraystretch}{1.15}
\begin{tabularx}{\textwidth}{@{}P{2.7cm}P{3.15cm}P{3.15cm}P{2.8cm}Y@{}}
\toprule
Work & Update & Convergence & Sample complexity & Guarantee \\
\midrule
\multicolumn{5}{@{}l}{\emph{Exact critic}} \\
\addlinespace[2pt]
\citet{murthy_convergence_2023} & PMD (KL) & $O(1/T)$ & \dashunder{Sample complexity} & \dashunder{Guarantee} \\
\citet{kumar_global_2024} & Projected PG & $O(1/T)$ & \dashunder{Sample complexity} & \dashunder{Guarantee} \\
\citet{sun_policy_2024} & PMD (Euclidean) & $O((1-1/M_0)^T)$ & \dashunder{Sample complexity} & \dashunder{Guarantee} \\
\textbf{Ours} & PMD (Bregman) & $O((1-1/\bar\chi)^T)$;\newline $\limsup_t \Delta_t^{1/t} \le 1-1/\chi$ & \dashunder{Sample complexity} & \dashunder{Guarantee} \\
\midrule
\multicolumn{5}{@{}l}{\emph{Tabular critic, generative-model oracle}} \\
\addlinespace[2pt]
\citet{lee_policy_2026}$^{\dagger}$ & $\alpha$-clipped PMD (Bregman) & $O((1-1/C_\alpha)^T)$ & $\widetilde O(t_{\mathrm{tar}}^3|\mathcal S||\mathcal A|C_\alpha^6/\varepsilon^6)$ & \guar{w.h.p.}{last} \\
\citet{li_stochastic_2024}$^{\ddagger}$ & PMD (Bregman) & $O(1/\sqrt T)$ & $\widetilde O(t_{\mathrm{mix}}^3|\mathcal S||\mathcal A|/\varepsilon^2)$ & \guar{in exp.}{rand.} \\
\textbf{Ours} & PMD (Bregman) & $O((1-1/\bar\chi)^T)$ & \hspace*{-26pt}$\widetilde O(t_{\mathrm{mix}}^2|\mathcal S|(|\mathcal A|+t_{\mathrm{mix}})\bar\chi^3/\varepsilon^2)$ & \guar{w.h.p.}{last} \\
\emph{Lower bound (critic)} & & & \hspace*{-26pt}$\Omega(t_{\mathrm{mix}}^2|\mathcal S|(|\mathcal A|+t_{\mathrm{mix}})/\varepsilon^2)$ & \\
\midrule
\multicolumn{5}{@{}l}{\emph{Linear function approximation, single-trajectory oracle}} \\
\addlinespace[2pt]
\citet{murthy_performance_2023} & PMD (KL) & linear, non-vanishing floor & \dashunder{Sample complexity} & \guar{in exp.}{last} \\
\citet{li_stochastic_2024} & PMD (KL) & $O(1/\sqrt T)$ & $\widetilde O(t_{\mathrm{mix}}^5|\mathcal S||\mathcal A|/\varepsilon^2)$ & \guar{in exp.}{rand.} \\
\textbf{Ours} & PMD (Bregman) & $O((1-1/\bar\chi)^T)$ & $\widetilde O(t_{\mathrm{mix}}^3 d\,\bar\chi^3/\varepsilon^2)$ & \guar{w.h.p.}{last} \\
\emph{Lower bound (critic)} & & & $\Omega(t_{\mathrm{mix}}^3 d/\varepsilon^2)$ & \\
\bottomrule
\end{tabularx}

\vspace{4pt}
\parbox{\linewidth}{\footnotesize
$M_0\ge\bar\chi$ (Section~\ref{sec:pmd-algorithms}).
$^{\dagger}$Multichain, targeting the best $\alpha$-clipped policy, not the optimum; $t_{\mathrm{tar}}$ and $C_\alpha$ are their analogues of $t_{\mathrm{mix}}$ and $\bar\chi$, respectively.
$^{\ddagger}$Mixing unichain; App.~\ref{ap:mixing-assumption-details} discusses this extension of our results. Other rows are ergodic.}
\endgroup
\end{table}

\section{Problem setup and preliminaries} \label{sec:preliminaries}

We consider a Markov decision process (MDP) $(\mathcal S,\mathcal A, P, c),$
where $\mathcal S$ is a finite state space, $\mathcal A$ is a finite action space,
$P(\cdot\mid s,a)$ is the transition kernel, and $c:\mathcal S\times\mathcal A\to[0,1]$
is the cost function. Let $\mathcal{P}(\mathcal{X})$ denote the probability simplex over the finite set $\mathcal{X}$. A policy $\pi \in \mathcal{P}(\mathcal{A})^\mathcal{S} =: \Pi$ is a collection
of conditional distributions $\pi(\cdot\mid s)\in\mathcal{P}(\mathcal A)$ for all $s\in\mathcal S$.
Each policy $\pi$ induces a Markov kernel $P_\pi(\cdot\mid s) := \sum_{a\in\mathcal A} \pi(a\mid s)\,P(\cdot\mid s,a)$. For all objects that live in finite-dimensional spaces, we freely switch between function and vector/matrix notation (e.g., $\pi(\cdot\mid s)=\pi_s\in\mathbb{R}^{|\mathcal A|}$ is a stochastic row vector and $P_\pi\in\mathbb{R}^{|\mathcal S|\times|\mathcal S|}$ a row-stochastic matrix). We assume throughout:

\begin{assumption}[Ergodicity]
\label{as:ergodic}
For every policy $\pi\in\Pi$, the Markov chain with kernel $P_\pi$ is irreducible and aperiodic, hence admits a unique stationary distribution $d^\pi\in\mathcal{P}(\mathcal S)$ satisfying $d^\pi = d^\pi P_\pi$.
\end{assumption}

Assumption~\ref{as:ergodic} is a standard regularity condition in average-reward settings, as it makes the steady-state objective well defined \citep{kumar_global_2024, li_stochastic_2024, murthy_convergence_2023, murthy_performance_2024}; App.~\ref{ap:mixing-assumption-details} discusses its relaxation to unichain MDPs. A key consequence of Asm.~\ref{as:ergodic} is that the stationary distributions are uniformly bounded away from zero:
\begin{lemma}
\label{lem:uniform-d-bound}
The constant $\underline d := \inf_{\pi\in\Pi,\, s\in\mathcal S} d^\pi(s)$ is positive. Under a uniform minorization condition, $\underline d$ admits an explicit lower bound (Appendix~\ref{ap:mixing-assumption-details}).
\end{lemma}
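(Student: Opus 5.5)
The plan is a compactness-and-continuity argument. The policy set $\Pi=\mathcal P(\mathcal A)^{\mathcal S}$ is a finite product of simplices, hence a compact subset of $\mathbb R^{|\mathcal S||\mathcal A|}$. So it suffices to show two things: $d^\pi(s)>0$ for every $\pi$ and $s$, and $\pi\mapsto d^\pi$ is continuous on $\Pi$. Then each map $\pi\mapsto d^\pi(s)$ attains its infimum over $\Pi$, and minimizing over the finitely many $s$ gives $\underline d=\min_s\min_\pi d^\pi(s)>0$.

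\emph{Positivity.} Fix $\pi$. By Assumption~\ref{as:ergodic}, $P_\pi$ is irreducible, so for any $s$ pick $s_0$ with $d^\pi(s_0)>0$. Irreducibility gives some $n$ with $P_\pi^n(s_0,s)>0$. Stationarity $d^\pi=d^\pi P_\pi^n$ then yields
\[
d^\pi(s)\ge d^\pi(s_0)P_\pi^n(s_0,s)>0.
\]

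\emph{Continuity.} This is the step needing care. The map $\pi\mapsto P_\pi$ is linear, hence continuous. For an irreducible stochastic matrix $P$, the stationary distribution is the unique solution of the linear system $d(I-P)=0$, $d\mathbf 1=1$. Equivalently, replace one column of $I-P$ by $\mathbf 1$ to get a matrix $A(P)$, and solve $d\,A(P)=e_j^\top$. Uniqueness of the stationary distribution means $I-P$ has rank $|\mathcal S|-1$ with left kernel spanned by $d^\pi$. One then checks that $A(P_\pi)$ is invertible for every $\pi\in\Pi$.

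To see invertibility, note that any left null vector $x$ of $A(P_\pi)$ satisfies $x\mathbf 1=0$ and $x(I-P_\pi)$ vanishes off column $j$. Since $(I-P_\pi)\mathbf 1=0$, summing over columns forces the $j$-th entry of $x(I-P_\pi)$ to vanish as well. Hence $x(I-P_\pi)=0$, so $x=c\,d^\pi$. Then $x\mathbf 1=0$ gives $c=0$, so $x=0$.

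With invertibility in hand, Cramer's rule gives $d^\pi=e_j^\top A(P_\pi)^{-1}$. This is a rational function of the entries of $\pi$ whose denominator never vanishes on $\Pi$, so it is continuous. An alternative route is the Markov chain tree theorem, which writes $d^\pi(s)$ as a ratio of sums of spanning-tree weights. That route avoids the invertibility argument but requires the tree-theorem machinery.

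The only potential obstacle is this continuity step, since a naive limit argument could lose uniqueness at boundary policies. That does not happen here, because Assumption~\ref{as:ergodic} applies to \emph{every} $\pi\in\Pi$, including deterministic and boundary policies. Aperiodicity is not needed for this lemma.

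The explicit bound under uniform minorization is deferred to the appendix. There one would use a Doeblin-type argument: if $P_\pi^m(s',\cdot)\ge\beta\nu(\cdot)$ for all $\pi$, then $d^\pi\ge\beta\nu$ elementwise.
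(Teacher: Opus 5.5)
Your proposal is correct and follows the paper's proof: positivity of $d^\pi$ from irreducibility and stationarity, continuity of $\pi\mapsto d^\pi$ through nonsingularity of the system $d(I-P_\pi)=0$ with one equation replaced by $d\mathbf 1=1$, and compactness of $\Pi$. The only differences are that you verify nonsingularity directly where the paper cites simplicity of the eigenvalue $1$, and that for the explicit bound the paper uses the one-step condition $P(\cdot\mid s,a)\ge\alpha\nu$ to get $d^\pi\ge\alpha\nu$ — which, as you should state, gives $\underline d\ge\alpha\min_s\nu(s)$ only when $\nu$ has full support.
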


\paragraph{Objective.}
The average cost of $\pi$ is defined as $\rho(\pi) := \mathbb E_{s\sim d^\pi,\,a\sim\pi_s}\!\bigl[c(s,a)\bigr]$.
Our goal is to find an optimal policy $\pi^\star \in \argmin_{\pi \in \Pi} \rho(\pi)$ (fixed but arbitrary), with $\rho^\star := \rho(\pi^\star)$ and $d^\star := d^{\pi^\star}$.

Policy optimization requires knowing which actions are better than others in each state. In the average-reward setting, starting from any state--action pair $(s,a)$ and following $\pi$ afterwards leaves the long-run average at $\rho(\pi)$, so actions must be compared by the cost they incur along the way. This is measured by the \emph{differential action-value function} \citep{kumar_global_2024}, the expected total cost in excess of $\rho(\pi)$:
\begin{equation} \label{eq:diff-Q-rolling}
Q^\pi(s,a) \;:=\; \lim_{H\to\infty} \mathbb{E}\!\left[\sum_{k=0}^{H-1}\bigl(c(s_k,a_k)-\rho(\pi)\bigr) \;\Bigg|\; s_0 = s,\, a_0 = a\right].
\end{equation}
Under Assumption~\ref{as:ergodic}, $P_\pi^k$ converges geometrically, so the limit exists (see Appendix~\ref{ap:preliminaries}). Centering by $\rho(\pi)$ is necessary since the uncentered sum
$\mathbb{E}\sum_{k<H} c(s_k,a_k)$ grows linearly in $H$ at rate
$\rho(\pi)$. In the discounted setting, geometric weighting plays the analogous
role.

The following facts are the average-reward counterparts of two standard discounted tools, often used in the convergence analysis of policy gradient methods \citep{sutton_policy_1999, agarwal_theory_2020, xiao_convergence_2022, yuan_linear_2023}. See App.~\ref{ap:preliminaries} for proofs.
\begin{fact}[Performance difference lemma]
\label{fact:avg-pdl}
For any policies $\pi,\pi'$,
\[
    \rho(\pi') - \rho(\pi)
    \;=\;
    \E_{s\sim d^{\pi'}}
    \Bigl[
        \bigl\langle Q^\pi_s,\,\pi'_s - \pi_s\bigr\rangle
    \Bigr].
\]
\end{fact}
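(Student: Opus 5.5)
The plan is to derive the identity from the average-reward Poisson (Bellman) equation for $Q^\pi$ and then average it against the stationary distribution of $\pi'$, so that the bias terms telescope away. First, I would introduce the differential state-value function $V^\pi(s) := \langle Q^\pi_s, \pi_s\rangle$. I would establish the Poisson equation
\[
Q^\pi(s,a) = c(s,a) - \rho(\pi) + \sum_{s'} P(s'\mid s,a)\, V^\pi(s').
\]
This follows from definition \eqref{eq:diff-Q-rolling} by splitting off the $k=0$ term and conditioning on $s_1$. The remaining sum over $k=1,\dots,H-1$ equals $\sum_{s'}P(s'\mid s,a)\sum_a \pi(a\mid s')\,\mathbb E[\sum_{k=0}^{H-2}(c(s_k,a_k)-\rho(\pi))\mid s_0=s',a_0=a]$. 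Passing $H\to\infty$ is legitimate because the limit exists: under Assumption~\ref{as:ergodic}, $P_\pi^k\to \mathbf 1 d^\pi$ geometrically, so the summands decay geometrically and the partial sums converge. The sums over $s'$ and $a$ are finite, so limits can be exchanged freely.

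Second, fix $\pi'$ and compute $\langle Q^\pi_s, \pi'_s - \pi_s\rangle = \sum_a \pi'(a\mid s) Q^\pi(s,a) - V^\pi(s)$. Substituting the Poisson equation gives
\[
\langle Q^\pi_s, \pi'_s - \pi_s\rangle = c_{\pi'}(s) - \rho(\pi) + (P_{\pi'}V^\pi)(s) - V^\pi(s),
\]
where $c_{\pi'}(s) := \sum_a \pi'(a\mid s)c(s,a)$.

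Third, take the expectation over $s\sim d^{\pi'}$, which is well defined by Assumption~\ref{as:ergodic}. The term $\E_{d^{\pi'}}[c_{\pi'}]$ equals $\rho(\pi')$ by definition of $\rho$, and $\rho(\pi)$ is a constant. By stationarity, $d^{\pi'}P_{\pi'}V^\pi = d^{\pi'}V^\pi$, so the value terms cancel exactly. This leaves $\rho(\pi')-\rho(\pi)$, as claimed.

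The only delicate step is the first one: justifying existence of the limit and the interchange needed for the Poisson equation. Averaging over the actions at $s'$ reduces the tail to $\sum_{k}(P_\pi^k c_\pi - \rho(\pi)\mathbf 1)$. This series converges absolutely because $d^\pi c_\pi = \rho(\pi)$ and $\|P_\pi^k - \mathbf 1 d^\pi\|\le C\lambda^k$ for some $\lambda<1$. That geometric bound holds because an irreducible aperiodic finite chain is primitive.
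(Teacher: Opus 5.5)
Your proof is correct and follows essentially the same route as the paper: use the Poisson relation to write $\langle Q^\pi_s,\pi_s\rangle = h^\pi(s)$ and $\langle Q^\pi_s,\pi'_s\rangle = c_{\pi'}(s)-\rho(\pi)+(P_{\pi'}h^\pi)(s)$, average over $d^{\pi'}$, and let stationarity of $d^{\pi'}$ under $P_{\pi'}$ cancel the value terms. The only difference is cosmetic. You derive the one-step relation directly from the limit definition of $Q^\pi$ with $V^\pi := \langle Q^\pi_s,\pi_s\rangle$, whereas the paper imports it from its Poisson-equation fact with the centered $h^\pi$; no centering is needed because any additive constant in the value function cancels.
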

\begin{fact}[Policy gradient theorem]
\label{fact:avg-policy-gradient}
As a gradient on the simplex, $\nabla_{\pi_s}\rho(\pi)$ can be represented as
\[
\nabla_{\pi_s}\rho(\pi)\;=\;d^\pi(s)Q^\pi_s.
\]
\end{fact}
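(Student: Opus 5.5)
We prove the policy gradient theorem (Fact~\ref{fact:avg-policy-gradient}) by differentiating $\rho$ along a perturbation of $\pi_s$, using the performance difference lemma (Fact~\ref{fact:avg-pdl}) to avoid differentiating $d^\pi$ directly. Fix $\pi$ and a direction: let $\pi'$ agree with $\pi$ except at state $s$, where $\pi'_s=\pi_s+\eta v$ with $v\in\mathbb R^{|\mathcal A|}$, $\sum_a v_a=0$, and $\eta$ small enough that $\pi'_s$ stays in the simplex. For interior $\pi_s$ every zero-sum $v$ is admissible; at the boundary we restrict to feasible directions, which is how we read ``gradient on the simplex''. Fact~\ref{fact:avg-pdl} then gives
\[
\rho(\pi')-\rho(\pi)=\sum_{s'} d^{\pi'}(s')\langle Q^\pi_{s'},\pi'_{s'}-\pi_{s'}\rangle=\eta\, d^{\pi'}(s)\langle Q^\pi_s,v\rangle .
\]
This identity is exact: all other states contribute nothing because $\pi'_{s'}=\pi_{s'}$ there.

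Dividing by $\eta$ and letting $\eta\to0$, the directional derivative equals $d^\pi(s)\langle Q^\pi_s,v\rangle$, provided $d^{\pi'}(s)\to d^\pi(s)$. Continuity of $\pi\mapsto d^\pi$ is the one step needing justification. Under Assumption~\ref{as:ergodic}, $d^\pi$ is the unique solution of the linear system $d(I-P_\pi)=0$, $d\mathbf 1=1$. Irreducibility makes $I-P_\pi$ have rank $|\mathcal S|-1$, so the augmented system has a unique solution. That solution is a rational function of the entries of $P_\pi$, for instance via Cramer's rule or the Markov chain tree formula, with a denominator that does not vanish anywhere on $\Pi$. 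Since $P_\pi$ is linear in $\pi$, $d^\pi$ is continuous, indeed smooth, in $\pi$. Hence the directional derivative of $\rho$ at $\pi_s$ along every admissible $v$ is $\langle d^\pi(s)Q^\pi_s, v\rangle$.

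The gradient on the simplex is defined only up to addition of multiples of $\mathbf 1$, which is exactly why we speak of a \emph{representation}. Because every admissible $v$ sums to zero, $d^\pi(s)Q^\pi_s$ represents $\nabla_{\pi_s}\rho(\pi)$. Smoothness of $\rho$, which is a polynomial in $\pi$ composed with a rational function, ensures these directional derivatives are those of a genuine differential. As a check, the same identity can be obtained by differentiating the average-cost Poisson equation directly, but the route through Fact~\ref{fact:avg-pdl} is shorter.

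The main obstacle is the continuity of $d^\pi$; the rest is algebra.
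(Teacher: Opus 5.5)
Your proposal is correct and matches the paper's own derivation: perturb $\pi_s$ alone along a zero-sum direction, apply Fact~\ref{fact:avg-pdl} so that only state $s$ contributes to the exact identity $\rho(\pi')-\rho(\pi)=\eta\, d^{\pi'}(s)\langle Q^\pi_s, v\rangle$, and then let $\eta\to 0$ using continuity of $\pi\mapsto d^\pi$. Your justification of that continuity, a unique solution of the augmented linear system that is rational in $\pi$ with a nonvanishing denominator, is the same linear-system argument the paper gives in the proof of Lemma~\ref{lem:uniform-d-bound}; there one equation of $d(P_\pi - I)=0$ is replaced by $d\mathbf 1 = 1$ and continuity of matrix inversion is invoked.
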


\subsection{Policy mirror descent algorithms} \label{sec:pmd-algorithms}
Mirror descent \citep{nemirovskij_problem_1983} minimizes a function $f$ through the steps $x^{t+1} \in \argmin_{x}\bigl\{\eta_t\langle \nabla f(x^t), x\rangle + D(x, x^t)\bigr\}$, where $D(x, x') := F(x) - F(x') - \langle \nabla F(x'), x - x'\rangle$ is the Bregman divergence of a Legendre function $F$ \citep{lattimore_bandit_2020}. Following \citep{shani_adaptive_2020, xiao_convergence_2022}, we apply it to $\rho$ over $\Pi$ with the state-weighted divergence $D^{d^t}(\pi, \pi^t)$, where $D^{d}(\pi, \pi') := \sum_s d(s)\, D(\pi_s, \pi'_s)$. With $Q^t := Q^{\pi^t}$ and $d^t := d^{\pi^t}$, Fact~\ref{fact:avg-policy-gradient} gives $\nabla_{\pi_s}\rho(\pi^t) = d^t(s)\,Q^t_s$, so the update splits across states and the positive weights $d^t(s)$ drop out. Replacing $Q^t$ by a critic $\tilde Q^t$ yields the \emph{policy mirror descent} (PMD) update
\begin{equation}
\label{eq:pmd-update}
\pi^{t+1}_s \;\in\; \argmin_{p\in\mathcal{P}(\mathcal A)} \Bigl\{\eta_t\,\langle \tilde Q^t_s,\,p\rangle + D\bigl(p,\pi^t_s\bigr)\Bigr\}.
\end{equation}
\begin{remark}
When $D$ is the KL divergence, which induces the local Fisher geometry on the simplex, \eqref{eq:pmd-update} reduces to a natural policy gradient step \citep{agarwal_theory_2020, xiao_convergence_2022}. When $D$ is the squared Euclidean distance, it reduces to projected $Q$-descent \citep{xiao_convergence_2022, sun_policy_2024}. We keep the general Bregman geometry in this paper.
\end{remark}

Three instantiations of the critic $\smash{\tilde Q}^t$ cover our analyses: $\smash{\tilde Q}^t = Q^t$, the exact critic (Section~\ref{sec:tabular}); $\smash{\tilde Q}^t = \smash{\widehat Q}^t$, an estimate of each $Q^t(s,a)$ with a uniform error bound (Section~\ref{sec:tabular}); and $\smash{\tilde Q}^t(s,a) = \phi_{s,a}^\top w_t$, a linear function of features $\phi_{s,a}$ with a mean-squared-error (MSE) bound (Section~\ref{sec:non-tabular}).

The most important property of mirror descent is the three-point inequality, which relates one-step progress to a difference of Bregman divergences and enables telescoping across iterations.

\begin{fact}[Three-point inequality for PMD]
\label{fact:three-point}
For any $p\in\mathcal{P}(\mathcal A)$, the PMD update~\eqref{eq:pmd-update} satisfies
\[
\eta_t\langle \tilde Q^t_s,\,\pi_s^{t+1}-p\rangle
\le
D(p,\pi_s^t) - D(p,\pi_s^{t+1}) - D(\pi_s^{t+1},\pi_s^t).
\]
\end{fact}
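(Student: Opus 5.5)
The plan is to use the standard first-order optimality characterization of the Bregman proximal step, followed by the Bregman three-point identity. Fix a state $s$ and write $x^+ := \pi^{t+1}_s$, $x := \pi^t_s$, $g := \eta_t\tilde Q^t_s$. By~\eqref{eq:pmd-update}, $x^+$ minimizes the convex function $h(p) := \langle g, p\rangle + D(p,x)$ over the convex set $\mathcal P(\mathcal A)$. Since $F$ is Legendre, $D(\cdot,x)$ is differentiable on the relative interior of its domain with gradient $\nabla F(p) - \nabla F(x)$. The first-order optimality condition for a convex minimization over a convex set then gives, for every $p\in\mathcal P(\mathcal A)$,
\[
\bigl\langle g + \nabla F(x^+) - \nabla F(x),\; p - x^+\bigr\rangle \;\ge\; 0 .
\]
Equivalently,
\[
\langle g, x^+ - p\rangle \le \langle \nabla F(x^+) - \nabla F(x),\, p - x^+\rangle .
\]

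The second step is the three-point identity for Bregman divergences. Expanding each divergence from its definition $D(u,v) = F(u)-F(v)-\langle\nabla F(v),u-v\rangle$, the $F$ terms cancel and one obtains
\[
D(p,x) - D(p,x^+) - D(x^+,x) \;=\; \langle \nabla F(x^+) - \nabla F(x),\; p - x^+\rangle .
\]
This is a routine algebraic check. Substituting it into the right-hand side of the optimality inequality yields exactly the claimed bound $\eta_t\langle \tilde Q^t_s,\pi^{t+1}_s - p\rangle \le D(p,\pi^t_s) - D(p,\pi^{t+1}_s) - D(\pi^{t+1}_s,\pi^t_s)$.

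The main obstacle is justifying the optimality condition when $x^+$ could lie on the boundary of the simplex, where $\nabla F$ may blow up, as for negative entropy. I would handle this with the Legendre property. Essential smoothness forces the minimizer of $h$ to lie in $\operatorname{int}(\operatorname{dom} F)$, since the gradient norm diverges at the boundary of the domain, so $\nabla F(x^+)$ is well defined. Convexity of $h$ then gives the variational inequality: by minimality, $h(x^+ + \lambda(p-x^+)) \ge h(x^+)$ for $\lambda\in(0,1]$, and dividing by $\lambda$ and letting $\lambda\downarrow 0$ gives $\langle\nabla h(x^+), p - x^+\rangle \ge 0$. If $p$ lies outside $\operatorname{dom}F$, then $D(p,\cdot)=+\infty$ and the inequality holds trivially. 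For domains such as the Euclidean case, where $\operatorname{dom} F$ covers the whole simplex, the same directional-derivative argument applies directly at boundary points.
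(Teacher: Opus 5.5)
Your proof is correct and is essentially the paper's argument. The paper invokes the three-point descent lemma of Xiao (2022, Lemma~6) with the linear function $\eta_t\langle \tilde Q^t_s,\cdot\rangle$ and reference point $\pi^t_s$, and the standard proof of that lemma is exactly your first-order optimality condition combined with the Bregman three-point identity. The paper likewise relies on your essential-smoothness step, applied inductively from the interior $\pi^0$: each $\pi^t_s$ stays in the interior of $\operatorname{dom}F$, so $\nabla F(\pi^t_s)$, and hence $D(\cdot,\pi^t_s)$, is well defined at every iteration.

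One small correction: if $p\notin\operatorname{dom}F$, the right-hand side as written is $\infty-\infty$. Your claim that it ``holds trivially'' is therefore true only for the rearranged form $\eta_t\langle \tilde Q^t_s,\pi^{t+1}_s-p\rangle + D(p,\pi^{t+1}_s) + D(\pi^{t+1}_s,\pi^t_s)\le D(p,\pi^t_s)$, which reads $+\infty\le+\infty$.
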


Setting $p = \pi^t_s$ in Fact~\ref{fact:three-point} yields, for any critic $\smash{\tilde Q}^t$, the descent-direction property
\begin{equation} \label{eq:pmd-sign}
\langle \tilde Q^t_s,\,\pi^{t+1}_s - \pi^t_s\rangle \;\le\; 0, \qquad \forall\, s \in \mathcal S.
\end{equation}
Specializing to the exact case $\smash{\tilde Q}^t = Q^t$, taking expectation over $s\sim
d^{t+1}$, and applying Fact~\ref{fact:avg-pdl} gives the descent property
$\rho(\pi^{t+1}) \le \rho(\pi^t)$, so each exact PMD update decreases the
objective \emph{without} requiring any restriction on the step size $\eta_t$.
Moreover, \eqref{eq:pmd-update} shows that larger $\eta_t$ pushes
$\pi_s^{t+1}$ closer to a greedy update, recovering policy iteration in the
limit. In the discounted setting, aggressively growing step sizes thus inherit the linear convergence of policy iteration
\citep{xiao_convergence_2022, yuan_linear_2023}. We show the same in the average-reward setting, at a rate governed by the distribution mismatch introduced next.

\paragraph{Stationary distribution mismatch.}
Let
\[
\chi_t
\;:=\;
\left\|
\frac{d^{\star}}{d^{t}}
\right\|_\infty
\;=\;
\max_{s\in\mathcal S}\frac{d^{\star}(s)}{d^{t}(s)}
\]
measure the mismatch between the stationary distributions of the PMD iterate $\pi^t$
and an optimal policy $\pi^\star$.
Its discounted analogue, the distribution-mismatch coefficient \citep{agarwal_theory_2020, xiao_convergence_2022, yuan_linear_2023}, is well-studied, but its analysis exploits a structural property that the average-reward setting lacks.
In the discounted setting, the analogue of $d^\pi$ is the discounted occupancy $d^\pi_\gamma := (1-\gamma)\sum_{k\ge 0}\gamma^k \mu_k^\pi$, with $\mu_k^\pi$ the state distribution at time $k$ under $\pi$ from an initial $\mu_0$. Dropping all $k\ge 1$ terms gives $d^\pi_\gamma(s) \geq (1-\gamma)\mu_0(s)$, so the discounted distribution-mismatch coefficients are uniformly bounded via a freely chosen full-support $\mu_0$. In the average-reward setting, this trick is not available: the objective depends on the stationary distribution $d^\pi$, which forgets $\mu_0$, so no choice of $\mu_0$ can bound it from below. Controlling $\chi_t$ along the iterates therefore becomes a central technical challenge.
Nevertheless, by Lem.~\ref{lem:uniform-d-bound} we have the loose bound $\sup_t \chi_t \leq \underline{d}^{-1}$,
which characterizes the convergence rate in the analysis of \citet{li_stochastic_2024}.
\citet{sun_policy_2024} use the tighter intermediate constant $M_0$ satisfying $\sup_t\chi_t \leq M_0 \leq
\underline{d}^{-1}$.
Our asymptotic analysis below is sharper than both, and we show empirically that its rate takes over after only a brief warm-up (Fig.~\ref{fig:rate-gap-main}).

\subsection{Master recursion for inexact PMD} \label{sec:master}
Write $\Delta_t := \rho(\pi^t) - \rho^\star$ for the suboptimality gap at iteration $t$ and $D_t^\star := D^{d^\star}(\pi^\star, \pi^t)$ for the optimal-policy weighted divergence. The critic error in~\eqref{eq:pmd-update} enters the analysis through the quantities
\begin{equation} \label{eq:AB-def}
A_t := \Bigl|\mathbb{E}_{s\sim d^{t+1}}\bigl[\langle \tilde Q^t_s - Q^t_s, \pi^{t+1}_s - \pi^t_s\rangle\bigr]\Bigr|, \quad B_t := \Bigl|\mathbb{E}_{s\sim d^\star}\bigl[\langle \tilde Q^t_s - Q^t_s, \pi^t_s - \pi^\star_s\rangle\bigr]\Bigr|.
\end{equation}
The following lemma combines the descent-direction property~\eqref{eq:pmd-sign} with the bounds on $A_t, B_t$ to deliver both a non-asymptotic linear rate and a sharper asymptotic rate, governing all PMD specializations.

\begin{lemma}[Master convergence for inexact PMD] \label{lem:master}
Let $\{\pi^t\}$ be generated by~\eqref{eq:pmd-update} with $\pi^0_s \in \operatorname{rint}(\mathcal{P}(\mathcal A))$ for all $s \in \mathcal S$, and suppose $A_t \leq \bar A$, $B_t \leq \bar B$ for all $t \geq 0$ with constants $\bar A, \bar B \geq 0$.

\emph{(i) Non-asymptotic linear rate.} Suppose $\sup_t \chi_t \leq \bar\chi$ for some $\bar\chi \in (1, \infty)$ and the step sizes satisfy $\eta_t \geq \tfrac{\bar\chi}{\bar\chi - 1}\, \eta_{t-1}$ for all $t \geq 0$ with some $\eta_{-1} > 0$. Then for all $t \geq 0$, with $(x)_+ := \max\{x, 0\}$,
\begin{equation} \label{eq:master-nonasymp}
\bigl(\Delta_t - (\bar\chi\, \bar A + \bar B)\bigr)_+ \;\leq\; \Bigl(1 - \tfrac{1}{\bar\chi}\Bigr)^t \Bigl(\Delta_0 + \tfrac{D_0^\star}{\eta_{-1}\bar\chi}\Bigr).
\end{equation}

\emph{(ii) Asymptotic rate.} Suppose $D_0^\star < \infty$ and the step sizes satisfy $\lim_{t\to\infty} \eta_t/\eta_{t-1} = \infty$ with some $\eta_{-1} > 0$, and let $\chi := \limsup_t \chi_t$. Then for every $\zeta > 0$,
\begin{equation} \label{eq:master-asymp}
\limsup_{t\to\infty}\, \bigl(\Delta_t - ((\chi+\zeta)\, \bar A + \bar B)\bigr)_+^{1/t} \;\leq\; 1 - \tfrac{1}{\chi+\zeta};
\end{equation}
hence, $\limsup_{t\to\infty}\Delta_t \leq \chi\, \bar A + \bar B$. If in addition $\bar A = 0$, then $\limsup_{t\to\infty}(\Delta_t - \bar B)_+^{1/t} \leq 1 - 1/\chi$.
\end{lemma}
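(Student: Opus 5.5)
The plan is to adapt the discounted-setting analysis of \citet{xiao_convergence_2022} and \citet{yuan_linear_2023}, with the stationary distributions $d^{t+1}$ and $d^\star$ taking the place of discounted occupancies. Everything reduces to a single one-step inequality of the form
\[
\Delta_{t+1} - \Delta_t \;\le\; \tfrac{1}{\chi_{t+1}}\Bigl(-\Delta_t + \tfrac{D_t^\star - D_{t+1}^\star}{\eta_t} + \bar B\Bigr) + \bar A ,
\]
and the two parts of the lemma are then read off from it.

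\textbf{Step 1 (descent).} By Fact~\ref{fact:avg-pdl} with $\pi'=\pi^{t+1}$, $\pi=\pi^t$,
\[
\Delta_{t+1}-\Delta_t = \E_{s\sim d^{t+1}}\langle Q^t_s,\pi^{t+1}_s-\pi^t_s\rangle .
\]
Split $Q^t=\tilde Q^t+(Q^t-\tilde Q^t)$. The error part is at most $A_t\le\bar A$. The main part is a $d^{t+1}$-average of the nonpositive quantities $g_s:=\langle \tilde Q^t_s,\pi^{t+1}_s-\pi^t_s\rangle\le 0$ from \eqref{eq:pmd-sign}.

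\textbf{Step 2 (change of measure).} Since each $g_s\le0$, we have
\[
\E_{d^{t+1}} g_s \le \tfrac{1}{\chi_{t+1}}\E_{d^\star} g_s ,
\]
because $d^{t+1}(s)\ge d^\star(s)/\chi_{t+1}$. This is the one place where the mismatch coefficient enters.

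\textbf{Step 3 (three-point bound).} Bound $\E_{d^\star}g_s$ by applying Fact~\ref{fact:three-point} with $p=\pi^\star_s$. Since $\pi^{t+1}_s-\pi^t_s=(\pi^{t+1}_s-\pi^\star_s)+(\pi^\star_s-\pi^t_s)$,
\[
\E_{d^\star} g_s \le \tfrac{1}{\eta_t}\bigl(D_t^\star-D_{t+1}^\star\bigr) + \E_{d^\star}\langle\tilde Q^t_s,\pi^\star_s-\pi^t_s\rangle .
\]
The last term equals $\E_{d^\star}\langle Q^t_s,\pi^\star_s-\pi^t_s\rangle$ up to an error of at most $B_t\le\bar B$. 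By Fact~\ref{fact:avg-pdl} with $\pi'=\pi^\star$, that expectation is exactly $-\Delta_t$. Combining Steps 1--3 gives the one-step inequality displayed above.

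\textbf{Step 4 (part (i)).} Replace $\chi_{t+1}$ by $\bar\chi$. This is the delicate step, since the bracket may be positive. If the bracket is nonnegative, use $1/\chi_{t+1}\le1$ together with $\Delta_{t+1}\ge0$. Otherwise use $1/\chi_{t+1}\ge1/\bar\chi$, which holds because $\chi_{t+1}\ge1$: indeed $\max_s d^\star(s)/d^t(s)\ge1$ since both are probability vectors. I expect to follow \citet{xiao_convergence_2022}, who use the bracket sign carefully, so that the outcome is
\[
\Delta_{t+1}-c \le (1-1/\bar\chi)(\Delta_t-c) + \tfrac{1}{\bar\chi\eta_t}(D_t^\star-D_{t+1}^\star),
\qquad c=\bar\chi\bar A+\bar B .
\]
Here $\bar B$ enters multiplied by $1/\bar\chi$ and $\bar A$ unscaled, which is consistent with the floor $c$. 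Next add $D_{t+1}^\star/(\bar\chi\eta_t)$ to both sides and use $\eta_{t+1}\ge\frac{\bar\chi}{\bar\chi-1}\eta_t$, i.e. $\frac{1}{\eta_{t+1}}\le(1-\frac1{\bar\chi})\frac1{\eta_t}$. This gives the Lyapunov contraction
\[
\Phi_{t+1}\le(1-1/\bar\chi)\Phi_t,\qquad \Phi_t:=(\Delta_t-c)+D_t^\star/(\bar\chi\eta_{t-1}).
\]
Unroll it, and drop the nonnegative divergence term, taking positive parts to handle negative $\Delta_t-c$. Interior initialization ensures $D_0^\star<\infty$ and that the iterates stay in the relative interior, so Fact~\ref{fact:three-point} applies at every step. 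The main obstacle is the bookkeeping of signs when $\Delta_t-c<0$. There I would argue via the positive part, noting that once $\Delta_t\le c$ the bound is trivial, and check that the recursion still propagates.

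\textbf{Step 5 (part (ii)).} Fix $\zeta>0$. For $t\ge T_0$ we have $\chi_t\le\chi+\zeta$, so part (i)'s argument applies from $T_0$ with $\bar\chi=\chi+\zeta$, provided the step-size ratio condition holds. It does eventually, because $\eta_t/\eta_{t-1}\to\infty$. Taking $t$-th roots kills the constant prefactor and gives \eqref{eq:master-asymp}. Letting $\zeta\downarrow0$ yields the limsup floor $\chi\bar A+\bar B$. When $\bar A=0$, the floor $\bar B$ is independent of $\zeta$, so sending $\zeta\to0$ in the rate gives $1-1/\chi$.
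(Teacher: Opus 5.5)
Your skeleton matches the paper's proof: the three-point inequality at $p=\pi^\star_s$, the performance difference lemma applied twice, a change of measure that uses the sign \eqref{eq:pmd-sign}, the Lyapunov quantity $\Delta_t+D_t^\star/(\eta_{t-1}\bar\chi)$, and a tail restart for (ii). \textbf{The gap is the step you flag as delicate, and your own ordering creates it.}

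Write $b_t:=-\Delta_t+(D_t^\star-D_{t+1}^\star)/\eta_t+\bar B$ for your bracket. Once you replace $\E_{d^\star}g_s$ by its upper bound $b_t$, the sign information is lost, and your case split does not recover it. If $b_t\ge0$, then $1/\chi_{t+1}\le1$ only gives $\Delta_{t+1}-\Delta_t\le b_t+\bar A$. This is weaker than the required $\Delta_{t+1}-\Delta_t\le b_t/\bar\chi+\bar A$, since $\bar\chi>1$. The fact $\Delta_{t+1}\ge0$ is a lower bound on a quantity you need to bound from above, so it contributes nothing. As written, Step~4 does not produce the recursion you display, and deferring to a reference leaves it unproved.

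The repair is immediate. In Step~2, use $\chi_{t+1}\le\bar\chi$ while the integrand is still known to be nonpositive:
$\E_{d^{t+1}}g_s\le\chi_{t+1}^{-1}\E_{d^\star}g_s\le\bar\chi^{-1}\E_{d^\star}g_s$, because $\E_{d^\star}g_s\le0$. Only then apply the three-point bound. This is exactly what the paper does, writing $\E_{d^\star}[g]\ge\bar\chi\,\E_{d^{t+1}}[g]$, and it yields $\Delta_{t+1}-\Delta_t\le b_t/\bar\chi+\bar A$ with no cases. If you prefer to keep your split, the case $b_t\ge0$ must instead use the descent bound $\Delta_{t+1}-\Delta_t\le\E_{d^{t+1}}g_s+\bar A\le\bar A\le b_t/\bar\chi+\bar A$.

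With this in place, your concern about signs when $\Delta_t<c$ disappears. The contraction $\Phi_{t+1}\le(1-1/\bar\chi)\Phi_t$ uses only $D_t^\star\ge0$, so it holds whatever the sign of $\Phi_t$. Then $\Delta_t-c\le\Phi_t\le(1-1/\bar\chi)^t\Phi_0\le(1-1/\bar\chi)^t\Psi_0$, and taking positive parts gives \eqref{eq:master-nonasymp}.

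Two smaller corrections:
\begin{itemize}
\item An interior $\pi^0$ does not make $D_0^\star$ finite in general. A log-barrier $F$ with a deterministic $\pi^\star$ gives $D_0^\star=\infty$. This is harmless in part (i), which is then vacuous.
\item In (ii), the prefactor $\Phi_{T_0}$ is finite only because the three-point inequality propagates $D_t^\star<\infty$ forward from $D_0^\star<\infty$. You should state this before taking $t$-th roots.
\end{itemize}
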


\begin{remark} \label{rmk:initial-divergence}
Lemma~\ref{lem:master}(i) is informative whenever $D_0^\star<\infty$, which part~(ii) assumes. Since the lemma assumes an interior $\pi^0$, $D_0^\star$ is finite whenever $F$ is finite on the closed simplex, even for a deterministic $\pi^\star$. This covers the two standard choices: the KL divergence and the squared Euclidean distance. The exception is a geometry whose $F$ diverges at the boundary, such as a log barrier. There, a deterministic $\pi^\star$ gives $D_0^\star=\infty$, and the bound of part~(i), though still valid, is vacuous.
\end{remark}

Both conclusions arise from the same one-step Lyapunov recursion $\Psi_{t+1} \leq (1 - 1/\bar\chi)\Psi_t + (\bar\chi A_t + B_t)/\bar\chi$ for $\Psi_t := \Delta_t + D_t^\star/(\eta_{t-1}\bar\chi)$: part~(i) iterates it from $t = 0$, and part~(ii) on a tail with $\bar\chi$ replaced by $\chi + \zeta$ (Appendix~\ref{ap:master-lemma}).
Crucially, the lemma separates the \emph{actor's machinery} (step-size schedule, mismatch coefficient) from the \emph{critic's error} (the bounds $\bar A, \bar B$), so each of our convergence theorems reduces to instantiating $\smash{\tilde Q}^t$ and bounding $A_t, B_t$.

\textbf{On the step size schedule.}
A natural concern is that the aggressively growing step sizes of Lem.~\ref{lem:master} would amplify critic errors. But step size and critic error decouple in PMD: unlike stochastic gradient descent (SGD), where each step injects noise scaled by $\eta_t$, here the critic's error is fixed, and $\eta_t$ only controls how aggressively each update commits to that (possibly biased) direction. Larger $\eta_t$ speeds up convergence but leaves the error floor unchanged. Lemma~\ref{lem:master} formalizes this, and Fig.~\ref{fig:app:pmd-inexact}(a) (App.~\ref{ap:exp-inexact}) verifies it empirically. Numerical overflow is also mild: with $T = \Theta(\bar\chi\log(1/\varepsilon))$ iterations, the growth condition requires only $\eta_T = \mathrm{poly}(1/\varepsilon)$. With a constant step size, the one-step inequality behind Lem.~\ref{lem:master} gives an $O(1/T)$ rate in expectation to the same error floor for a uniformly random iterate; with an exact critic, the same holds for the last iterate (App.~\ref{ap:constant-step}). This matches prior $O(1/T)$ rates \citep{kumar_global_2024, murthy_convergence_2023}. Growing step sizes instead make the rate linear.

Importantly, Lem.~\ref{lem:master}'s guarantees do not require knowing $\chi$ or $\bar\chi$: doubling the step size every iteration gives part~(i) with $\bar\chi$ replaced by $\max\{2,\bar\chi\}$, and part~(ii) needs only $\eta_t/\eta_{t-1}\to\infty$. Constants enter only to state finite-time or finite-sample guarantees, and only as valid upper bounds.

\section{Analysis in the tabular setting} \label{sec:tabular}
\vspace{-0.03em}
We first instantiate Lem.~\ref{lem:master} in the tabular setting, where the critic estimates each $Q^t(s,a)$ with a uniform error bound. Section~\ref{sec:tab-linconv} derives linear convergence of inexact PMD, including the exact case. Section~\ref{sec:sample-complexity-tabular} gives an end-to-end sample complexity and a matching lower bound for the critic.
\subsection{Linear convergence with inexact updates} \label{sec:tab-linconv}
Since $Q^t$ is not available in general, we study the PMD update~\eqref{eq:pmd-update} with $\smash{\tilde Q}^t = \smash{\widehat Q}^t \approx Q^t$, under the following critic-error assumption, which we satisfy with finite samples in Section~\ref{sec:sample-complexity-tabular}:

\begin{assumption}[Uniform critic error]
\label{as:uniform-critic-error}
There exists $\varepsilon_Q \ge 0$ such that $\sup_{t \geq 0}\|\smash{\widehat Q}^t - Q^t\|_\infty \le \varepsilon_Q$.
\end{assumption}
Under Asm.~\ref{as:uniform-critic-error}, $\|\pi_s - \pi'_s\|_1 \le 2$ gives $A_t, B_t \le 2\varepsilon_Q$ in~\eqref{eq:AB-def}, so Lem.~\ref{lem:master} with $\bar A = \bar B = 2\varepsilon_Q$ yields:

\begin{theorem}[Linear convergence of tabular inexact PMD]
\label{thm:avg-pmd-linear-inexact-limsup}
Suppose Assumption~\ref{as:uniform-critic-error} holds, and let $\{\pi^t\}$ be generated by~\eqref{eq:pmd-update} with $\smash{\tilde Q}^t = \smash{\widehat Q}^t$. Under the hypotheses of Lemma~\ref{lem:master}(i),
\begin{equation} \label{eq:inexact-nonasymptotic-linear}
(\Delta_t - 2(\bar\chi + 1)\varepsilon_Q)_+ \;\le\; \Bigl(1 - \tfrac{1}{\bar\chi}\Bigr)^t \Bigl(\Delta_0 + \tfrac{D_0^\star}{\eta_{-1}\bar\chi}\Bigr) \qquad \forall\, t \ge 0;
\end{equation}
and under the hypotheses of Lemma~\ref{lem:master}(ii), for every $\zeta > 0$,
\begin{equation} \label{eq:inexact-asymptotic-linear}
\limsup_{t\to\infty}\bigl(\Delta_t - 2(\chi+\zeta+1)\varepsilon_Q\bigr)_+^{1/t} \;\le\; 1 - \tfrac{1}{\chi+\zeta}.
\end{equation}
\end{theorem}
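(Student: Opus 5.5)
The plan is to obtain the theorem as a direct specialization of Lemma~\ref{lem:master}. All the work lies in bounding the critic-error quantities $A_t$ and $B_t$ from~\eqref{eq:AB-def} uniformly in $t$, and then reading off the two conclusions.

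The first step is the bound on $A_t$. For each state $s$, Hölder's inequality gives
\[
\bigl|\langle \widehat Q^t_s - Q^t_s,\,\pi^{t+1}_s-\pi^t_s\rangle\bigr| \le \|\widehat Q^t_s - Q^t_s\|_\infty\,\|\pi^{t+1}_s-\pi^t_s\|_1 \le 2\varepsilon_Q .
\]
The last inequality uses Assumption~\ref{as:uniform-critic-error} together with the fact that the $\ell_1$ distance between two distributions on $\mathcal A$ is at most $2$. Averaging over $s\sim d^{t+1}$ and moving the absolute value inside the expectation (Jensen) yields $A_t\le 2\varepsilon_Q$.

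The bound on $B_t$ is identical: apply Hölder with $\pi^t_s-\pi^\star_s$ in place of $\pi^{t+1}_s-\pi^t_s$, then average over $s\sim d^\star$. This gives $B_t\le 2\varepsilon_Q$. Both bounds hold for every $t\ge 0$ because Assumption~\ref{as:uniform-critic-error} is uniform in $t$. We may therefore take $\bar A=\bar B=2\varepsilon_Q$.

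For the non-asymptotic bound, invoke Lemma~\ref{lem:master}(i) under its stated hypotheses. The error floor becomes
\[
\bar\chi\bar A+\bar B = 2\bar\chi\varepsilon_Q+2\varepsilon_Q = 2(\bar\chi+1)\varepsilon_Q,
\]
which is exactly~\eqref{eq:inexact-nonasymptotic-linear}.

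For the asymptotic bound, invoke Lemma~\ref{lem:master}(ii) under its hypotheses. For any $\zeta>0$ the floor is
\[
(\chi+\zeta)\bar A+\bar B = 2(\chi+\zeta+1)\varepsilon_Q,
\]
which gives~\eqref{eq:inexact-asymptotic-linear}.

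There is no real obstacle here. The only point needing care is confirming that the $\bar A,\bar B$ bounds hold uniformly over all $t$, not just along a tail, which is immediate from the uniform critic-error assumption.
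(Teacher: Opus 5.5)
Your proposal is correct and matches the paper's proof: it bounds $A_t, B_t \le 2\varepsilon_Q$ via the sup-norm critic error and $\|\pi_s - \pi'_s\|_1 \le 2$. It then applies Lemma~\ref{lem:master}(i) and (ii) with $\bar A = \bar B = 2\varepsilon_Q$ to read off the floors $2(\bar\chi+1)\varepsilon_Q$ and $2(\chi+\zeta+1)\varepsilon_Q$.
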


\begin{corollary}[Exact PMD and superlinear convergence] \label{cor:exact-pmd}
For exact PMD ($\smash{\tilde Q}^t = Q^t$), setting $\varepsilon_Q = 0$ in~\eqref{eq:inexact-asymptotic-linear} removes the $\zeta$-dependence from the floor, so taking $\zeta \downarrow 0$ gives the sharper asymptotic rate
\[
\limsup_{t\to\infty} \Delta_t^{1/t} \;\leq\; 1 - \tfrac{1}{\chi}.
\]
Moreover, $\Delta_t \to 0$, so by compactness of $\Pi$ and continuity of $\pi \mapsto \rho(\pi)$, every limit point of $\{\pi^t\}$ is optimal. Consequently, if $\{\pi^t\}$ converges to an optimal policy (e.g., when the optimal policy is unique), then taking that limit as $\pi^\star$ gives $\chi_t \to \chi = 1$, so exact PMD is \emph{superlinear}: $\limsup_{t\to\infty}\Delta_t^{1/t} = 0$.
\end{corollary}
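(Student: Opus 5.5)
The plan is to read everything off Theorem~\ref{thm:avg-pmd-linear-inexact-limsup}, equivalently Lemma~\ref{lem:master}(ii) with $\bar A=\bar B=0$, since the exact critic makes $A_t=B_t=0$. With $\varepsilon_Q=0$, \eqref{eq:inexact-asymptotic-linear} reads $\limsup_t \Delta_t^{1/t}\le 1-1/(\chi+\zeta)$ for every $\zeta>0$. Here $(\Delta_t)_+=\Delta_t$ because $\Delta_t\ge 0$. The left-hand side no longer depends on $\zeta$, so letting $\zeta\downarrow 0$ gives $\limsup_t\Delta_t^{1/t}\le 1-1/\chi$. This also follows directly from the last sentence of Lemma~\ref{lem:master}(ii) with $\bar B=0$.

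Next I show $\Delta_t\to 0$. The bound $\chi_t\le \underline d^{-1}<\infty$ from Lemma~\ref{lem:uniform-d-bound} gives $\chi<\infty$. Hence $1-1/\chi<1$. If $\chi\ge 1$ (which always holds, since $\sum_s d^\star(s)=\sum_s d^t(s)$ forces $\max_s d^\star/d^t\ge1$), then $\limsup\Delta_t^{1/t}<1$, so $\Delta_t$ decays geometrically to $0$. This also follows from the clause $\limsup_t\Delta_t\le \chi\bar A+\bar B=0$.

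For optimality of limit points, $\rho$ is continuous on $\Pi$. This holds because $d^\pi$ is the unique solution of a linear system whose coefficients are continuous in $\pi$ and which is nonsingular under Assumption~\ref{as:ergodic}. Then along any convergent subsequence $\pi^{t_k}\to\bar\pi$ we get $\rho(\bar\pi)=\lim_k\rho(\pi^{t_k})=\rho^\star$. Compactness of $\Pi$ guarantees that limit points exist.

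For superlinearity, suppose $\pi^t\to\bar\pi$ with $\bar\pi$ optimal, and set $\pi^\star:=\bar\pi$. This is legitimate because $\pi^\star$ was an arbitrary fixed optimal policy and the preceding bounds hold for any such choice. I need one caveat about $D_0^\star<\infty$, which Lemma~\ref{lem:master}(ii) assumes. This quantity now depends on the limit policy, so I will add it as a hypothesis, or note from Remark~\ref{rmk:initial-divergence} that it holds automatically for KL or Euclidean geometry. Continuity of $\pi\mapsto d^\pi$ then gives $d^t\to d^\star$ entrywise, with $d^\star$ bounded below by $\underline d>0$. Hence $\chi_t=\max_s d^\star(s)/d^t(s)\to 1$, so $\chi=1$ and $\limsup\Delta_t^{1/t}\le 1-1/1=0$. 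Since $\Delta_t^{1/t}\ge 0$, the limsup equals $0$.

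The main obstacle is justifying continuity of $\pi\mapsto d^\pi$ and hence of $\rho$. I would argue it via $d^\pi=\mathbf 1^\top(I-P_\pi+\mathbf 1 \mathbf 1^\top)^{-1}$-type formulas: the matrix is invertible for every $\pi$ by irreducibility, and it depends polynomially on $\pi$. The remaining steps are immediate specializations.
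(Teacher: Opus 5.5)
Your proposal is correct and follows the paper's route: you specialize Lemma~\ref{lem:master}(ii) (equivalently the theorem's asymptotic bound) with $\bar A=\bar B=0$ and let $\zeta\downarrow 0$, deduce $\Delta_t\to 0$ and optimality of limit points from continuity of $\pi\mapsto d^\pi$, and then re-choose $\pi^\star$ as the limit so that $\chi_t\to 1$. Your caveat that $D_0^\star<\infty$ must be rechecked for the new reference policy is a legitimate point the paper leaves implicit; it holds automatically for the KL and squared Euclidean geometries.
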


\begin{remark} \label{rmk:exact-geometries}
For the KL and squared Euclidean geometries, Corollary~\ref{cor:exact-pmd}'s convergence assumption can be dropped (App.~\ref{ap:exact-geometries}, adapting \citet{lin_convergence_2024}): with KL and any nondecreasing $\eta_t \to \infty$, $\Delta_{t+1}/\Delta_t \to 0$ whenever $\Delta_0 > 0$, and with the squared Euclidean distance and any nondecreasing $\eta_t > 0$, exact PMD reaches an optimal policy in finitely many iterations and stays there.
\end{remark}

\begin{figure}[h]
\centering
\vspace{-0.2cm}
\begin{minipage}[t]{0.42\linewidth}
\vspace{0pt}
We empirically validate that the asymptotic rate of Corollary~\ref{cor:exact-pmd} is typically much faster than the worst-case finite-time rate driven by $\bar\chi$ in~\eqref{eq:inexact-nonasymptotic-linear}. Across four diverse average-reward MDPs (random Dirichlet MDPs, access-control queuing, $5{\times}5$ gridworld, Jack's car rental, $80$ runs each), the median $\chi_t/\bar\chi$ (Fig.~\ref{fig:rate-gap-main}) collapses by $10$--$1000\times$ within roughly $20$--$40$ iterations and stays there for the rest of the trajectory.
\end{minipage}\hfill
\begin{minipage}[t]{0.54\linewidth}
\vspace{0pt}
\centering
\includegraphics[width=\linewidth]{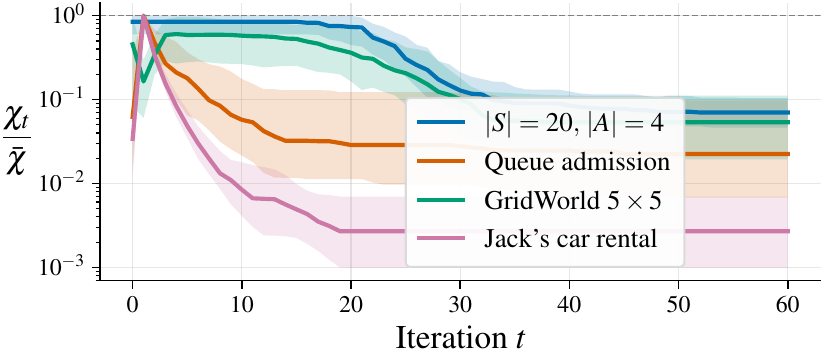}
\vspace{-0.5cm}
\caption{Median $\chi_t/\bar\chi$ trajectories. IQR shaded.}
\label{fig:rate-gap-main}
\end{minipage}
\vspace{-0.2cm}
\end{figure}

In other words, the trajectory enters Cor.~\ref{cor:exact-pmd}'s asymptotic regime (where the rate is governed by $\chi \ll \bar\chi$ rather than by $\bar\chi$ itself) after only a brief warm-up. The two-part decomposition of Lem.~\ref{lem:master} therefore delivers a substantively sharper convergence guarantee than the worst-case bounds of \citet{li_stochastic_2024} and \citet{sun_policy_2024}: empirically, $\bar\chi$ controls only this warm-up, and the asymptotic rate governs convergence from then on. Details and further experiments are in App.~\ref{ap:numerical-experiments}.

In the worst case, however, the rate of Lemma~\ref{lem:master}(i) is tight up to a constant factor over any fixed number of iterations: for every $\bar\chi > 1$ and $T \ge 1$, some MDP and initial policy meeting the hypotheses of Lemma~\ref{lem:master}(i) give $\Delta_t \ge \frac12(1-1/\bar\chi)^t\Delta_0 > 0$ for all $t \le T$, for exact PMD with any Bregman divergence and any positive step sizes (App.~\ref{ap:rate-lower-bound}, adapting \citet[Theorem~4.2]{johnson_optimal_2023}).
\subsection{Sample complexity} \label{sec:sample-complexity-tabular}
We now satisfy Assumption~\ref{as:uniform-critic-error} with transitions sampled from a generative model.
Our critic rests on the identity $Q^\pi(s,a) = c(s,a) - \rho(\pi) + P(\cdot\mid s,a)\,h^\pi$ (App.~\ref{ap:poisson-existence}), where $h^\pi(s) := \langle Q^\pi_s, \pi_s\rangle$ is the differential value function. The critic first estimates $h^\pi$ at every state, then shares these estimates across all state--action pairs through one-step samples. For each state $s$, we draw $B$ rollouts $\{(s_k^{(i)}, a_k^{(i)})\}_{k=0}^{H-1}$, $i = 1, \ldots, B$, of length $H$ that start at $s_0^{(i)} = s$ and evolve under $\pi$.

Estimating $h^\pi$ from such rollouts presents new challenges that have no discounted analog:
in a discounted MDP, truncating the geometric sum $\sum_{k=0}^\infty\gamma^k c_k$ at horizon $H$ yields a tail bias bounded by $\gamma^H/(1-\gamma)$, which decays geometrically with $H$, so larger $H$ always helps \citep[Lemma~15]{xiao_convergence_2022}. In contrast, $h^\pi$ is an \emph{equal-weighted} infinite-horizon sum of expected costs \emph{centered by the unknown average cost} $\rho(\pi)$, and these two features together make finite-horizon estimation nontrivial. Given an estimate $\hat\rho_s(\pi)$, the natural truncated centered-return estimator
\begin{equation}\label{eq:q-estimation}
\hat h^\pi(s) \;:=\; \frac{1}{B}\sum_{i=1}^B\sum_{k=0}^{H-1}\bigl(c(s_k^{(i)},a_k^{(i)}) - \hat\rho_s(\pi)\bigr)
\end{equation}
decomposes as
\[
\hat h^\pi(s) \;=\; \underbrace{\frac{1}{B}\sum_{i=1}^B\sum_{k=0}^{H-1}\bigl(c(s_k^{(i)},a_k^{(i)}) - \rho(\pi)\bigr)}_{\text{tail bias decays with $H$ via mixing}} \;-\; \underbrace{H\bigl(\hat\rho_s(\pi) - \rho(\pi)\bigr)}_{\text{grows linearly in $H$}},
\]
revealing two bias sources that pull $H$ in \emph{opposite directions}: the tail bias of the first shrinks as $H\to\infty$ via mixing to $d^\pi$, while the second amplifies any error in $\hat\rho_s(\pi)$ by a factor of $H$. Taking $H$ large without correspondingly sharpening $\hat\rho_s(\pi)$ therefore \emph{worsens} $\smash{\hat h}^\pi$, whereas in the discounted setting larger $H$ always helps. A second, subtler pitfall: if $\hat\rho_s(\pi)$ is computed as the empirical mean of the same $BH$ costs that appear in~\eqref{eq:q-estimation}, the two sums cancel exactly and give $\smash{\hat h}^\pi \equiv 0$.
We resolve these issues by estimating $\rho(\pi)$ from the \emph{tail} of the rollouts after a burn-in cutoff $\tau \in \{1,\dots,H-1\}$:
\begin{equation} \label{eq:rho-estimation}
\hat\rho_s(\pi) \;:=\; \frac{1}{B(H-\tau)}\sum_{i=1}^B\sum_{k=\tau}^{H-1} c(s_k^{(i)},a_k^{(i)}),
\end{equation}
which avoids the exact cancellation and concentrates around $\rho(\pi)$ once $\tau$ exceeds the mixing time. To quantify how large $H$ and $\tau$ must be, we use the following uniform mixing property:

\begin{lemma}[Uniform mixing]
\label{lem:uniform-mix}
There exists an integer $t_{\mathrm{mix}} \geq 1$ such that $\sup_{\mu_0 \in \mathcal{P}(\mathcal S)} \|\mu_0\, P_\pi^{t_{\mathrm{mix}}} - d^\pi\|_{\mathrm{TV}} \leq 1/4$ for every $\pi \in \Pi$, and consequently
\begin{equation} \label{eq:tmix-decay}
\sup_{\mu_0 \in \mathcal{P}(\mathcal S)} \bigl\|\mu_0\, P_\pi^k - d^\pi\bigr\|_{\mathrm{TV}} \;\leq\; 2^{-\lfloor k/t_{\mathrm{mix}}\rfloor} \qquad \forall\, k \geq 0,\; \forall\, \pi \in \Pi.
\end{equation}
\end{lemma}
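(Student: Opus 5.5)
The plan has two parts: first show that a single finite time $t_{\mathrm{mix}}$ works for every policy (uniformity over $\Pi$), then derive the geometric decay \eqref{eq:tmix-decay} by the standard submultiplicativity argument.

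\textbf{Existence of a uniform $t_{\mathrm{mix}}$.} For a fixed $\pi$, Assumption~\ref{as:ergodic} makes $P_\pi$ irreducible and aperiodic on a finite space, so $P_\pi^k\to\mathbf 1 d^\pi$ and the worst-case distance
\[
\bar\delta_\pi(k):=\sup_{\mu_0}\|\mu_0P_\pi^k-d^\pi\|_{\mathrm{TV}}=\max_s\|P_\pi^k(s,\cdot)-d^\pi\|_{\mathrm{TV}}
\]
tends to $0$. To make this uniform, I would use compactness of $\Pi$ together with continuity of $\pi\mapsto P_\pi$, which is linear in $\pi$. I would also need continuity of $\pi\mapsto d^\pi$. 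This follows because $d^\pi$ is the unique solution of the linear system $d(I-P_\pi)=0$, $d\mathbf 1=1$, whose coefficient matrix has full rank for every $\pi$; hence the solution depends continuously on $\pi$.

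Fix $\pi$ and pick $k_\pi$ with $\bar\delta_\pi(k_\pi)<1/4$. By continuity of $\pi'\mapsto \bar\delta_{\pi'}(k_\pi)$, the strict inequality persists on an open neighborhood $U_\pi$. Compactness gives a finite subcover $U_{\pi_1},\dots,U_{\pi_m}$, but the corresponding $k_{\pi_j}$ differ. To pass to one common $k$, I would use that $\bar\delta_{\pi'}(k)$ is nonincreasing in $k$, which holds because $\mu_0P^{k+1}-d=(\mu_0P-d)P^k$ and $\mu_0 P$ is just another initial distribution. Then $t_{\mathrm{mix}}:=\max_j k_{\pi_j}$ satisfies $\bar\delta_\pi(t_{\mathrm{mix}})\le 1/4$ for every $\pi\in\Pi$.

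\textbf{Geometric decay.} This is the standard argument (Levin--Peres). Let $\bar d_\pi(k):=\max_{s,s'}\|P_\pi^k(s,\cdot)-P_\pi^k(s',\cdot)\|_{\mathrm{TV}}$. Two facts drive it:
\begin{itemize}
\item submultiplicativity, $\bar d_\pi(j+k)\le\bar d_\pi(j)\bar d_\pi(k)$, proved by coupling or by writing differences of rows as signed measures of zero mass;
\item the sandwich $\bar\delta_\pi\le\bar d_\pi\le2\bar\delta_\pi$, where the left inequality uses $d^\pi=d^\pi P_\pi^k$ as a convex mixture of rows.
\end{itemize}
Then $\bar d_\pi(t_{\mathrm{mix}})\le1/2$, so $\bar\delta_\pi(k)\le\bar d_\pi(\lfloor k/t_{\mathrm{mix}}\rfloor t_{\mathrm{mix}})\le 2^{-\lfloor k/t_{\mathrm{mix}}\rfloor}$, again using monotonicity in $k$. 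The case $k<t_{\mathrm{mix}}$ is trivial since TV distance is at most $1$.

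The main obstacle is the uniformity step, specifically the continuity of $d^\pi$ and the passage from pointwise $k_\pi$ to one common value via monotonicity. The remaining steps are routine.
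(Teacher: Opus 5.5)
Your proposal is correct and follows the paper's proof essentially step for step. The shared steps are: continuity of $\pi \mapsto d^\pi$ from the nonsingular stationarity system; an open cover of the compact set $\Pi$ on which $\sup_{\mu_0}\|\mu_0 P_{\pi'}^{k_\pi} - d^{\pi'}\|_{\mathrm{TV}} < 1/4$; monotonicity in $k$ to pass to $t_{\mathrm{mix}} := \max_j k_{\pi_j}$; and the Levin--Peres sandwich plus submultiplicativity of the pairwise distance to get the $2^{-\lfloor k/t_{\mathrm{mix}}\rfloor}$ decay. The differences are cosmetic: the paper proves per-policy convergence via primitivity and a minorization of $P_\pi^{m_\pi}$ rather than citing the convergence theorem, and it derives monotonicity from non-expansiveness of $P_\pi$ in total variation rather than by restarting from $\mu_0 P_\pi$.
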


\begin{remark} \label{rmk:uniform-geometric-mixing}
Uniform mixing is usually posed as an assumption in average-reward reinforcement learning \citep[e.g.,][]{li_stochastic_2024, wang_optimal_2024}. In our finite setting, it follows from Assumption~\ref{as:ergodic} by compactness of $\Pi$. Appendix~\ref{ap:mixing-assumption-details} gives the proof and a sufficient condition under which $t_{\mathrm{mix}}$ is explicit.
\end{remark}

For each pair $(s,a)$, we then draw $n$ fresh one-step samples $s'_j \sim P(\cdot\mid s,a)$ and set
\begin{equation} \label{eq:q-hat}
\widehat Q^\pi(s,a) \;:=\; c(s,a) - \hat\rho_s(\pi) + \frac{1}{n}\sum_{j=1}^n \hat h^\pi(s'_j),
\end{equation}
where $\hat\rho_s(\pi)$ is the estimate~\eqref{eq:rho-estimation} from the rollouts at $s$ and $\smash{\hat h}^\pi$ is clipped to $[-2t_{\mathrm{mix}}, 2t_{\mathrm{mix}}]$, which contains $h^\pi$ by~\eqref{eq:tmix-decay}. Combined with a Bernstein-type bound for the rollouts, the mixing decay~\eqref{eq:tmix-decay} yields the following finite-sample guarantee:
\begin{lemma}[Tabular critic upper bound]
\label{lem:Q_estimation_burnin}
Fix a policy $\pi$, and let $\smash{\widehat Q}^\pi$ be computed via~\eqref{eq:q-hat}. Then for any target critic accuracy $\varepsilon_Q \in (0, 1)$ and confidence $\delta \in (0, 1)$, the number of generative-model transitions
\[
N \;=\; \widetilde O\bigl((|\mathcal S|\,t_{\mathrm{mix}}^3 + |\mathcal S||\mathcal A|\,t_{\mathrm{mix}}^2)/\varepsilon_Q^2\bigr)
\]
suffices to guarantee $\|\smash{\widehat Q}^\pi - Q^\pi\|_\infty \leq \varepsilon_Q$ with probability at least $1 - \delta$.
\end{lemma}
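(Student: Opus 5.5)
The plan is to split the error of $\widehat Q^\pi(s,a)$ via the Poisson identity $Q^\pi(s,a)=c(s,a)-\rho(\pi)+P(\cdot\mid s,a)h^\pi$:
\[
\widehat Q^\pi(s,a)-Q^\pi(s,a)=-\bigl(\hat\rho_s(\pi)-\rho(\pi)\bigr)+\frac1n\sum_{j}\bigl(\hat h^\pi(s'_j)-h^\pi(s'_j)\bigr)+\Bigl(\frac1n\sum_j h^\pi(s'_j)-P(\cdot\mid s,a)h^\pi\Bigr).
\]
The goal is to make each of the three terms at most $\varepsilon_Q/3$ uniformly over $(s,a)$, with a union bound over $|\mathcal S|+|\mathcal S||\mathcal A|$ events at confidence $\delta/(2|\mathcal S|)$ and $\delta/(2|\mathcal S||\mathcal A|)$.

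\emph{The $h$-term} is handled by a sup-norm bound: its contribution is at most $\|\hat h^\pi-h^\pi\|_\infty$, so it suffices to get $|\hat h^\pi(s)-h^\pi(s)|\le\varepsilon_Q/3$ for every $s$. Clipping to $[-2t_{\mathrm{mix}},2t_{\mathrm{mix}}]$ only reduces this error, because $h^\pi$ lies in that interval. Using the decomposition after~\eqref{eq:q-estimation}, the error splits into three pieces:
\begin{itemize}
\item a truncation bias $|\E\sum_{k<H}(c_k-\rho)-h^\pi(s)|\le\sum_{k\ge H}2^{-\lfloor k/t_{\mathrm{mix}}\rfloor}\lesssim t_{\mathrm{mix}}2^{-H/t_{\mathrm{mix}}}$ by~\eqref{eq:tmix-decay}, made small with $H=\Theta(t_{\mathrm{mix}}\log(1/\varepsilon_Q))$;
\item a fluctuation of the $B$ i.i.d.\ rollout sums around their mean;
\item the amplified term $H|\hat\rho_s-\rho|$.
\end{itemize}
For the fluctuation, the per-rollout centered sum has variance $O(t_{\mathrm{mix}}H)$ rather than $H^2$. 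This holds because covariances $\mathrm{Cov}(c_k,c_{k'})$ decay like $2^{-|k-k'|/t_{\mathrm{mix}}}$ by~\eqref{eq:tmix-decay}, and the sum is bounded by $O(H)$. Bernstein then gives deviation $\sqrt{t_{\mathrm{mix}}H\log/B}+H\log/B$, which needs $B\gtrsim t_{\mathrm{mix}}^2/\varepsilon_Q^2$ up to logs. For $\hat\rho_s$, I take $\tau=\Theta(t_{\mathrm{mix}}\log(1/\varepsilon_Q))$ and $H-\tau\ge\tau$. The burn-in makes the mean bias $\frac{1}{H-\tau}\sum_{k\ge\tau}|\E c_k-\rho|\lesssim t_{\mathrm{mix}}2^{-\tau/t_{\mathrm{mix}}}/(H-\tau)$, which is negligible after multiplying by $H$. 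The same covariance argument gives $\mathrm{Var}(\hat\rho_s)\lesssim t_{\mathrm{mix}}/(B(H-\tau))$. Hence $H|\hat\rho_s-\rho|\lesssim H\sqrt{t_{\mathrm{mix}}\log/(BH)}=\sqrt{t_{\mathrm{mix}}H\log/B}$, the same order as the fluctuation, so the same $B$ suffices. The cost of this part is $|\mathcal S|BH=\widetilde O(|\mathcal S|t_{\mathrm{mix}}^3/\varepsilon_Q^2)$.

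\emph{The one-step term and the $\rho$ term.} Since $|h^\pi|\le2t_{\mathrm{mix}}$, Hoeffding on the $n$ samples $s'_j$ gives error $t_{\mathrm{mix}}\sqrt{\log/n}$. Here $n=\widetilde O(t_{\mathrm{mix}}^2/\varepsilon_Q^2)$ per pair suffices, for a total of $|\mathcal S||\mathcal A|t_{\mathrm{mix}}^2/\varepsilon_Q^2$. The term $|\hat\rho_s-\rho|$ in $\widehat Q$ is already controlled at the much finer level $\varepsilon_Q/H$. One technicality: $\hat h^\pi$ is built from the rollouts, while the $s'_j$ are fresh, so I first condition on the rollouts, then apply the sup-norm bound for the $h$-term and Hoeffding for $h^\pi(s'_j)$, which does not involve $\hat h$. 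Summing the two budgets gives $N$.

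\emph{Main obstacle.} The hardest step is the Bernstein bound for the Markov-dependent within-rollout sums. The goal there is the variance proxy $O(t_{\mathrm{mix}}H)$; the naive $H^2$ would cost an extra factor $t_{\mathrm{mix}}$ (giving $t_{\mathrm{mix}}^4$). Since rollouts are i.i.d.\ across $i$, I would apply scalar Bernstein across $i$ to each per-rollout sum. That sum is bounded by $H$, and its variance is computed via the covariance decay: $|\mathrm{Cov}(c_k,c_{k+m})|\le 2\cdot 2^{-\lfloor m/t_{\mathrm{mix}}\rfloor}$, obtained by conditioning on $s_k$ and applying~\eqref{eq:tmix-decay} to the chain started at $s_k$. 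The $H\log/B$ range term is then dominated once $B\gtrsim H/\varepsilon_Q$, which holds within the stated $B$.
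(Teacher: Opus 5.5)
Your proposal is correct and follows essentially the same route as the paper: the Poisson-identity split into three $\varepsilon_Q/3$ terms, per-state rollouts with a burn-in estimate of $\rho(\pi)$, a covariance-summability bound giving per-rollout variance $O(H\,t_{\mathrm{mix}})$ followed by Bernstein across the i.i.d.\ rollouts, and clipping plus Hoeffding for the one-step samples. The paper differs only cosmetically, folding the fluctuation and the amplified $H(\hat\rho_s-\rho)$ term into the single i.i.d.\ variable $Y_i = U_i - (\tau/H_\tau)T_i$ and applying Bernstein once; one small fix on your side is to take $H,\tau = \Theta(t_{\mathrm{mix}}\log(t_{\mathrm{mix}}/\varepsilon_Q))$ rather than $\Theta(t_{\mathrm{mix}}\log(1/\varepsilon_Q))$, since the truncation and burn-in biases carry a $t_{\mathrm{mix}}$ prefactor, and to condition on the pair $(s_k,a_k)$ rather than on $s_k$ alone when bounding $\mathrm{Cov}(c_k,c_{k+m})$.
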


Rollouts started at every state--action pair would cost $t_{\mathrm{mix}}^3/\varepsilon_Q^2$ transitions per pair. Our critic instead runs them per state and shares the resulting estimates across all pairs through the one-step samples in~\eqref{eq:q-hat}, which cost only $t_{\mathrm{mix}}^2/\varepsilon_Q^2$ per pair. Both terms are information-theoretically necessary: an Assouad reduction perturbs the transitions of the action that $\pi$ takes at every state, which forces the first term, and those of the other actions at every pair, which forces the second.
\begin{proposition}[Tabular critic lower bound]
\label{prop:tabular_lower_bound}
For every $|\mathcal S| \geq 2$, $p \in (0, 1/4)$, $\varepsilon_Q \in (0, 1/(128 p)]$, and every (possibly adaptive) generative-model algorithm, there are an average-reward MDP with state space $\mathcal S$ and action space $\mathcal A$ satisfying Assumption~\ref{as:ergodic} with $t_{\mathrm{mix}} = \Theta(1/p)$ and a deterministic policy $\pi$ such that outputting $\smash{\widehat Q} \in \mathbb{R}^{\mathcal S\times\mathcal A}$ with $\Pr(\|\smash{\widehat Q} - Q^\pi\|_\infty \leq \varepsilon_Q) \geq 2/3$ requires
\[
N \;=\; \Omega\bigl((|\mathcal S|\,t_{\mathrm{mix}}^3 + |\mathcal S||\mathcal A|\,t_{\mathrm{mix}}^2)/\varepsilon_Q^2\bigr)
\]
generative-model transitions. The explicit construction and constants are given in Appendix~\ref{ap:proofs-tabular}.
\end{proposition}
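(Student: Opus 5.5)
We prove the bound as the maximum of two separate lower bounds, $\Omega(|\mathcal S|\,t_{\mathrm{mix}}^3/\varepsilon_Q^2)$ and $\Omega(|\mathcal S||\mathcal A|\,t_{\mathrm{mix}}^2/\varepsilon_Q^2)$. Each comes from an Assouad-type reduction over a hypercube of MDPs. Estimation to accuracy $\varepsilon_Q$ with probability $2/3$ in sup-norm forces correct identification of a constant fraction of the hypercube coordinates. We then sum per-coordinate testing costs via KL divergences of the generative-model observations. Adaptivity is handled by the chain rule for KL: the divergence is $\sum_{(s,a)}\E[N_{s,a}]\,\mathrm{KL}(P_\theta(\cdot\mid s,a)\,\|\,P_{\theta'}(\cdot\mid s,a))$. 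Only queries at perturbed pairs contribute.

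\textbf{Base construction.} We pair up the states, which is possible since $|\mathcal S|\ge 2$; for odd $|\mathcal S|$ a leftover state is attached trivially. Each pair forms a two-state "sticky" gadget $\{u,v\}$ with cost $1$ at $u$ and $0$ at $v$. Under the deterministic policy $\pi$, the chain stays at its current state with probability $1-p$ and switches otherwise. To couple the gadgets and ensure irreducibility of every policy, we add a small $O(p)$ probability of jumping uniformly to any state. This gives $t_{\mathrm{mix}}=\Theta(1/p)$ uniformly over policies, and Assumption~\ref{as:ergodic} holds. For this chain $h^\pi(u)-h^\pi(v)=\Theta(1/p)$. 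A perturbation of size $\epsilon$ in the switching probability, $p\to p(1\pm\epsilon)$, changes the time spent in the costly state. It therefore moves $h^\pi(u)-h^\pi(v)$, and hence $Q^\pi$ at the $\pi$-actions, by $\Theta(\epsilon/p)$; since $\partial(1/p)/\partial p=-1/p^2$, the shift in $1/p$ is $p\epsilon/p^2=\epsilon/p$. We choose $\epsilon=\Theta(p\,\varepsilon_Q)$ with a large enough constant that the two hypotheses differ by more than $2\varepsilon_Q$. This requires $\epsilon\le 1/2$, say, which is what the condition $\varepsilon_Q\le 1/(128p)$ guarantees. Each query at the perturbed pair yields KL $\Theta(\epsilon^2 p)=\Theta(p^3\varepsilon_Q^2)$, so distinguishing requires $\Omega(1/(p^3\varepsilon_Q^2))=\Omega(t_{\mathrm{mix}}^3/\varepsilon_Q^2)$ queries at that state. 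With one bit per gadget, or per state using both directions, Assouad yields the first term, $|\mathcal S|\,t_{\mathrm{mix}}^3/\varepsilon_Q^2$.

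\textbf{Second term.} For every non-$\pi$ action $a$ at every state $s$, we let $P(\cdot\mid s,a)$ go to $u$ or $v$ of some gadget with probabilities $1/2\pm\epsilon'$. Then $Q^\pi(s,a)=c(s,a)-\rho+P(\cdot\mid s,a)h^\pi$ shifts by $\Theta(\epsilon'\,(h(u)-h(v)))=\Theta(\epsilon'/p)$, so we take $\epsilon'=\Theta(p\,\varepsilon_Q)$. Each query has KL $\Theta(\epsilon'^2)=\Theta(p^2\varepsilon_Q^2)$, giving $\Omega(t_{\mathrm{mix}}^2/\varepsilon_Q^2)$ queries per pair. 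These perturbations do not alter $P_\pi$, hence leave $h^\pi$ and $\rho$ unchanged, so the bits are decoupled. Assouad over the $|\mathcal S|(|\mathcal A|-1)$ bits gives the second term; for $|\mathcal A|\ge2$, $|\mathcal A|-1\ge|\mathcal A|/2$, and the $|\mathcal A|=1$ case is absorbed by the first term.

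\textbf{Main obstacle.} The delicate step is the first reduction. Perturbing $p$ at one state must change $Q^\pi$ by $\Theta(\epsilon/p)$ at that state's actions without making the other coordinates non-separable. We must also confirm that mixing stays $\Theta(1/p)$ for all policies, including the non-$\pi$ actions and the leftover state. I would first compute $h^\pi$ in closed form for the gadget, using the Poisson equation, treating the global jump as a perturbation, and bound the cross-gadget effects by $O(\epsilon)$, which is lower order. Finally, I would invoke the standard fact that sup-norm accuracy $\varepsilon_Q$ implies correct decoding of every coordinate whose two hypotheses are $2\varepsilon_Q$-separated. Combined with the success probability $2/3$, this forces the per-coordinate testing errors below the Assouad threshold.
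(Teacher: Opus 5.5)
Your proposal is correct and follows essentially the paper's proof. The shared ingredients are paired sign-flip gadgets with cost $(1+\sigma)/2$ and a uniform reset of probability $p$. The first bit type is one bit per gadget in the switching probability of $\pi$'s action; the paper takes $\Delta = 64p^2\varepsilon_Q$, with per-query KL at most $2\Delta^2/p$. The second is one bit per non-$\pi$ pair in a $1/2 \pm \Delta'$ transition, with $\Delta' = 8p\varepsilon_Q$ and KL at most $16\Delta'^2$. All bits are decoded from sup-norm accuracy and summed through the KL chain rule.

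The differences are cosmetic:
\begin{itemize}
\item The paper puts both bit types in a single family and sums coordinatewise Le Cam bounds under a fixed member, rather than taking a maximum of two Assouad bounds.
\item The cross-gadget effect you plan to bound perturbatively is exactly zero. Every member has a uniform stationary distribution, and $h^\pi$ averages to zero under it, so the reset contributes nothing to $P h^\pi$. This gives $h^\pi(j,\sigma) = \sigma/(2p + 4q_j(1-p))$ exactly.
\end{itemize}
The same symmetry is why you should keep one bit per gadget. Perturbing the two states of a gadget separately would destroy the uniform stationary distribution.
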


Lemma~\ref{lem:Q_estimation_burnin} and Proposition~\ref{prop:tabular_lower_bound} match up to logarithmic factors for accuracies covered by both and confidence $\delta \leq 1/3$, so the critic's worst-case sample complexity is $\smash{\widetilde\Theta}\bigl((|\mathcal S|\,t_{\mathrm{mix}}^3 + |\mathcal S||\mathcal A|\,t_{\mathrm{mix}}^2)/\varepsilon_Q^2\bigr)$.

Composing Lem.~\ref{lem:Q_estimation_burnin}'s critic with Thm.~\ref{thm:avg-pmd-linear-inexact-limsup}'s actor yields the following end-to-end sample complexity:
\begin{theorem}[Tabular sample complexity]
\label{thm:pmd_sample_complexity}
Suppose $D_0^\star < \infty$. For any target suboptimality $\varepsilon \in (0, 1)$ and confidence $\delta \in (0, 1)$, using the critic of Lemma~\ref{lem:Q_estimation_burnin} at accuracy $\varepsilon_Q = \varepsilon/(4(\bar\chi + 1))$ together with the actor of Theorem~\ref{thm:avg-pmd-linear-inexact-limsup} in \eqref{eq:inexact-nonasymptotic-linear} guarantees $\Delta_T \leq \varepsilon$ with probability at least $1 - \delta$ after $T = \smash{\widetilde\Theta}(\bar\chi)$ iterations, using a total of
\[
N \;=\; \widetilde O\bigl((|\mathcal S|\,t_{\mathrm{mix}}^3 + |\mathcal S||\mathcal A|\,t_{\mathrm{mix}}^2)\,\bar\chi^3/\varepsilon^2\bigr)
\]
generative-model transitions.
\end{theorem}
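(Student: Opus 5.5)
The plan is to compose Theorem~\ref{thm:avg-pmd-linear-inexact-limsup}(i) with Lemma~\ref{lem:Q_estimation_burnin} through a union bound over iterations. Fix $T$, to be chosen below. At each iteration $t=0,\dots,T-1$ we run the critic of Lemma~\ref{lem:Q_estimation_burnin} on the current policy $\pi^t$ with accuracy $\varepsilon_Q=\varepsilon/(4(\bar\chi+1))$ and confidence $\delta/T$, using fresh generative-model samples. Conditioned on the history up to $\pi^t$, the policy $\pi^t$ is fixed and the new samples are independent of it. Lemma~\ref{lem:Q_estimation_burnin} therefore gives $\Pr(\|\widehat Q^t-Q^t\|_\infty>\varepsilon_Q\mid\mathcal F_t)\le\delta/T$. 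Taking expectations and a union bound, the event
\[
\mathcal E:=\bigl\{\|\widehat Q^t-Q^t\|_\infty\le\varepsilon_Q \text{ for all } t<T\bigr\}
\]
has probability at least $1-\delta$.

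On $\mathcal E$, Assumption~\ref{as:uniform-critic-error} holds on the horizon that matters. Theorem~\ref{thm:avg-pmd-linear-inexact-limsup} only needs the error bound up to $t-1$ to control $\Delta_t$, because the one-step recursion at step $t$ uses only $A_t$ and $B_t$. Equivalently, for $t\ge T$ we can formally set $\widehat Q^t=Q^t$. Then \eqref{eq:inexact-nonasymptotic-linear} gives
\[
\Delta_T\le 2(\bar\chi+1)\varepsilon_Q+\Bigl(1-\tfrac1{\bar\chi}\Bigr)^T C_0,\qquad C_0:=\Delta_0+\frac{D_0^\star}{\eta_{-1}\bar\chi}.
\]
The first term equals $\varepsilon/2$. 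Since $C_0<\infty$ by $D_0^\star<\infty$ and $\Delta_0\le1$, the second term is at most $\varepsilon/2$ once $T\ge\bar\chi\log(2C_0/\varepsilon)$, using $(1-1/\bar\chi)^T\le e^{-T/\bar\chi}$. So we take $T=\lceil\bar\chi\log(2C_0/\varepsilon)\rceil=\widetilde\Theta(\bar\chi)$, with the logarithm absorbing $C_0$.

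It remains to count samples. Each call costs $\widetilde O\bigl((|\mathcal S|t_{\mathrm{mix}}^3+|\mathcal S||\mathcal A|t_{\mathrm{mix}}^2)/\varepsilon_Q^2\bigr)$ transitions. Here $\varepsilon_Q^{-2}=16(\bar\chi+1)^2/\varepsilon^2=O(\bar\chi^2/\varepsilon^2)$, using $\bar\chi>1$, and the confidence $\delta/T$ contributes only a $\log(T/\delta)$ factor that is hidden in $\widetilde O$. Multiplying by $T=\widetilde O(\bar\chi)$ gives the stated $\widetilde O\bigl((|\mathcal S|t_{\mathrm{mix}}^3+|\mathcal S||\mathcal A|t_{\mathrm{mix}}^2)\bar\chi^3/\varepsilon^2\bigr)$. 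We also need $\varepsilon_Q\in(0,1)$, which holds since $\varepsilon<1$.

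The main obstacle is the measurability and adaptivity issue: the policy fed to the critic is random and depends on past samples. Lemma~\ref{lem:Q_estimation_burnin} is stated for a fixed $\pi$, so we must argue conditionally on the filtration and rely on fresh samples per iteration. A second point to check is that the step-size hypothesis of Lemma~\ref{lem:master}(i) is deterministic, so it holds pathwise. The bound $\sup_t\chi_t\le\bar\chi$ also holds pathwise, for example with $\bar\chi=\underline d^{-1}$ from Lemma~\ref{lem:uniform-d-bound} or any valid deterministic upper bound. Hence no extra randomness enters the actor analysis.
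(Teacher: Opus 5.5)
Your proposal matches the paper's proof essentially step for step. The paper, too, applies the critic lemma conditionally on the past with fresh samples and confidence $\delta/T$, takes a union bound over $t<T$, composes with \eqref{eq:inexact-nonasymptotic-linear} using floor $\varepsilon/2$ and a $T$ that makes the contraction term at most $\varepsilon/2$, and multiplies the per-call cost by $T$.

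The only nit concerns the choice of $T$. The paper takes $T=\lceil\bar\chi\log\max\{e,2\Psi_0/\varepsilon\}\rceil$, which guarantees $T\ge\bar\chi>1$, so the confidence $\delta/T$ is well defined and $T=\widetilde\Theta(\bar\chi)$ also holds from below. Your $T=\lceil\bar\chi\log(2C_0/\varepsilon)\rceil$ is nonpositive whenever $C_0\le\varepsilon/2$.
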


\begin{remark} \label{rmk:tabular-sample-complexity}
With the same generative-model oracle as \citet[Cor.~1]{li_stochastic_2024}, Thm.~\ref{thm:pmd_sample_complexity} improves their $|\mathcal S||\mathcal A|\,t_{\mathrm{mix}}^3$ dependence to $|\mathcal S|\,t_{\mathrm{mix}}^3 + |\mathcal S||\mathcal A|\,t_{\mathrm{mix}}^2$, a factor of order $\min\{|\mathcal A|, t_{\mathrm{mix}}\}$, and strengthens their in-expectation, random-iterate guarantee to a high-probability, last-iterate one. By Prop.~\ref{prop:tabular_lower_bound}, this dependence of the critic's sample complexity is information-theoretically tight for any policy-gradient method whose critic satisfies Assumption~\ref{as:uniform-critic-error} for every policy. In comparison, the $\widetilde O(|\mathcal S||\mathcal A|\,t_{\mathrm{mix}}/\varepsilon^2)$ rate of \citet{wang_optimal_2024} is achieved by model-based planning, which does not need a critic satisfying Asm.~\ref{as:uniform-critic-error} and is not directly applicable to the policy-gradient methods we analyze.
\end{remark}

\section{Analysis under linear function approximation (LFA)} \label{sec:non-tabular}
Section~\ref{sec:tabular}'s tabular sample complexity (Thm.~\ref{thm:pmd_sample_complexity}) scales with $|\mathcal S|$ and $|\mathcal A|$. We now instantiate Lem.~\ref{lem:master} with a linear critic whose error is bounded in MSE rather than uniformly (Section~\ref{sec:non-tabular-linconv}). Its $d$-dimensional weights replace the $|\mathcal S||\mathcal A|$ entries of a tabular critic, and an on-policy MSE bound can be met from a single trajectory instead of a generative model (Section~\ref{sec:sample-complexity-non-tabular}).
\subsection{Linear convergence with inexact updates} \label{sec:non-tabular-linconv}
To parameterize the critic, we assume access to a feature vector $\phi_{s,a} \in \mathbb{R}^d$ for every $(s,a)$ and use a linear critic $\phi_{s,a}^\top w \approx Q^\pi(s,a)$. Since the features are fixed, we identify the critic with its weights $w \in \mathbb{R}^d$.
Write $\nu^\pi(s,a) := d^\pi(s)\pi(a\mid s)$ for the stationary state--action distribution of $\pi$, and $\nu^t := \nu^{\pi^t}$, $\nu^\star := \nu^{\pi^\star}$. The critic's target at time $t$ is the best linear fit to $Q^t$ under $\nu^t$:
\begin{equation}\label{eq:w-star-def}
w_t^\star \in \argmin_{w\in\mathbb R^d} \tfrac12\,\mathbb{E}_{(s,a)\sim \nu^t}\!\left[\bigl(w^\top \phi_{s,a} - Q^{t}(s,a)\bigr)^2\right].
\end{equation}
Section~\ref{sec:sample-complexity-non-tabular} constructs an estimate $w_t \approx w_t^\star$ from samples, and the PMD update is then~\eqref{eq:pmd-update} with the LFA critic $\smash{\tilde Q}^t(s,a) = \phi_{s,a}^\top w_t$.
\begin{remark}
    Although~\eqref{eq:pmd-update} is written state by state, the actor in this section need not be tabular. For example, specializing $D$ to KL divergence yields the multiplicative update $\pi^{t+1}(a\mid s) \propto \pi^t(a\mid s)\exp(-\eta_t \phi_{s,a}^\top w_t)$, so from a uniform $\pi^0$ every iterate is a log-linear policy in the critic's features, recovering the policy class of \citet{yuan_linear_2023}. Our results hold for any Bregman divergence $D$.
\end{remark}

To bound $A_t, B_t$ in Lemma~\ref{lem:master}, we make three assumptions. First, no linear critic can beat the best linear fit, so we bound its approximation error:

\begin{assumption}[Approximation error]\label{as:approx-error}
There exists $\varepsilon_{\mathrm{app}}\ge 0$ such that $\mathbb{E}_{(s,a)\sim\nu^t}[(w_t^{\star\top}\phi_{s,a}-Q^t(s,a))^2] \le \varepsilon_{\mathrm{app}}^2$ for all $t\ge 0$.
\end{assumption}

Second, the critic's statistical error, measured in the feature-covariance norm $\|u\|_{\Sigma_t} := \sqrt{u^\top \Sigma_t u}$ with $\Sigma_t := \mathbb{E}_{(s,a)\sim\nu^t}[\phi_{s,a}\phi_{s,a}^\top]$, must be controllable (we show how in Section~\ref{sec:sample-complexity-non-tabular}):

\begin{assumption}[Statistical error]\label{as:stat-error}
There exists $\varepsilon_{\mathrm{stat}}\ge 0$ such that $\sup_{t\geq0}\|w_t - w_t^\star\|_{\Sigma_t} \le \varepsilon_{\mathrm{stat}}$.
\end{assumption}

Third, alongside the uniform mismatch coefficient $\chi_t$, the MSE analysis requires a second-moment form of distribution mismatch:

\begin{assumption}[Second-moment distribution mismatch] \label{as:L2-mismatch-bound}
There exist finite deterministic constants $C_{+},C_{\star}<\infty$ such that, almost surely for all $t\ge 0$, $\nu^{t+1}$ and $\nu^\star$ are absolutely continuous with respect to $\nu^t$ and
\[
\mathbb{E}_{(s,a)\sim \nu^t}\!\left[\left(\tfrac{\nu^{t+1}(s,a)}{\nu^t(s,a)}\right)^2\right]
\le C_{+},
\qquad
\mathbb{E}_{(s,a)\sim \nu^t}\!\left[\left(\tfrac{\nu^{\star}(s,a)}{\nu^t(s,a)}\right)^2\right]
\le C_{\star}.
\]
\end{assumption}

\begin{remark} \label{rmk:eps-exploration}
A simple sufficient condition to satisfy Assumption~\ref{as:L2-mismatch-bound} is to enforce an $\varepsilon$-exploration on every iterate so that $\pi^t(a\mid s)\ge \varepsilon/|\mathcal{A}|$ for all $(s,a)$. Then $\nu^t(s,a)\ge (\underline{d}\varepsilon)/|\mathcal{A}|$ and the assumption holds with $C_{+},C_{\star}= |\mathcal{A}|/(\underline{d}\varepsilon)$, which can be loose and incurs an $O(\varepsilon)$ optimization bias.
\end{remark}

Under these assumptions, a change-of-measure argument (Appendix~\ref{ap:proofs-non-tabular}) bounds the critic-error quantities~\eqref{eq:AB-def} by $A_t \leq 2\sqrt{C_+}(\varepsilon_{\mathrm{stat}} + \varepsilon_{\mathrm{app}})$ and $B_t \leq 2\sqrt{C_\star}(\varepsilon_{\mathrm{stat}} + \varepsilon_{\mathrm{app}})$, so Lemma~\ref{lem:master} with these bounds as $\bar A$ and $\bar B$ yields:

\begin{theorem}[Linear convergence of inexact PMD under LFA]\label{thm:non-tabular-linear-convergence}
Suppose Assumptions~\ref{as:approx-error}--\ref{as:L2-mismatch-bound} hold, and let $\{\pi^t\}$ be generated by~\eqref{eq:pmd-update} with $\smash{\tilde Q}^t(s,a) = \phi_{s,a}^\top w_t$. Set $\varepsilon_{\mathrm{floor}}(x) := 2(x\sqrt{C_+} + \sqrt{C_\star})(\varepsilon_{\mathrm{stat}} + \varepsilon_{\mathrm{app}})$. Under the hypotheses of Lemma~\ref{lem:master}(i),
\begin{equation} \label{eq:lfa-nonasymptotic-linear}
(\Delta_t - \varepsilon_{\mathrm{floor}}(\bar\chi))_+ \;\le\; \Bigl(1 - \tfrac{1}{\bar\chi}\Bigr)^t\,\Bigl(\Delta_0 + \tfrac{D_0^\star}{\eta_{-1}\bar\chi}\Bigr) \qquad \forall\, t \ge 0;
\end{equation}
and under the hypotheses of Lemma~\ref{lem:master}(ii), for every $\zeta > 0$,
\begin{equation} \label{eq:lfa-asymptotic-linear}
\limsup_{t\to\infty}\bigl(\Delta_t - \varepsilon_{\mathrm{floor}}(\chi+\zeta)\bigr)_+^{1/t} \;\le\; 1 - \tfrac{1}{\chi+\zeta}.
\end{equation}
\end{theorem}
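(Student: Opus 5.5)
The plan is to obtain Theorem~\ref{thm:non-tabular-linear-convergence} as a direct specialization of Lemma~\ref{lem:master}. Once we show $A_t \le \bar A := 2\sqrt{C_+}(\varepsilon_{\mathrm{stat}}+\varepsilon_{\mathrm{app}})$ and $B_t \le \bar B := 2\sqrt{C_\star}(\varepsilon_{\mathrm{stat}}+\varepsilon_{\mathrm{app}})$ for all $t$, we have $x\bar A+\bar B = \varepsilon_{\mathrm{floor}}(x)$. Taking $x=\bar\chi$ in part~(i) gives \eqref{eq:lfa-nonasymptotic-linear}, and taking $x=\chi+\zeta$ in part~(ii) gives \eqref{eq:lfa-asymptotic-linear}. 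So the whole task is the change-of-measure bound on $A_t,B_t$.

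Write $e_t(s,a) := \phi_{s,a}^\top w_t - Q^t(s,a)$. We first bound its MSE under $\nu^t$. Split $e_t = \phi^\top(w_t-w_t^\star) + (\phi^\top w_t^\star - Q^t)$ and apply Minkowski in $L^2(\nu^t)$. The first piece has norm $\|w_t-w_t^\star\|_{\Sigma_t}\le\varepsilon_{\mathrm{stat}}$ by Assumption~\ref{as:stat-error}, and the second is at most $\varepsilon_{\mathrm{app}}$ by Assumption~\ref{as:approx-error}. Hence $\|e_t\|_{L^2(\nu^t)} \le \varepsilon_{\mathrm{stat}}+\varepsilon_{\mathrm{app}}$.

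Next we rewrite $A_t$ as a sum over state--action pairs: $A_t \le \sum_{s,a} d^{t+1}(s)\,|e_t(s,a)|\,|\pi^{t+1}(a|s)-\pi^t(a|s)| \le \sum_{s,a} |e_t(s,a)|\,(\nu^{t+1}(s,a)+ d^{t+1}(s)\pi^t(a|s))$. The term $\nu^{t+1}$ is handled by Cauchy--Schwarz after writing $\nu^{t+1} = (\nu^{t+1}/\nu^t)\nu^t$, which gives $\sqrt{C_+}\,\|e_t\|_{L^2(\nu^t)}$.

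The main obstacle is the cross term $d^{t+1}(s)\pi^t(a|s)$. Its density relative to $\nu^t$ is $d^{t+1}(s)/d^t(s)$, which is a state-marginal ratio rather than the joint ratio that Assumption~\ref{as:L2-mismatch-bound} controls. We would try Jensen: the state marginal of $\nu^{t+1}/\nu^t$ under $\nu^t$ satisfies $\mathbb E_{s\sim d^t}[(d^{t+1}(s)/d^t(s))^2] \le \mathbb E_{\nu^t}[(\nu^{t+1}/\nu^t)^2]\le C_+$. This holds because the conditional expectation of the ratio given $s$ equals $d^{t+1}(s)/d^t(s)$, and squaring contracts under conditional expectation. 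Cauchy--Schwarz then bounds the cross term by $\sqrt{C_+}\|e_t\|_{L^2(\nu^t)}$ as well, so $A_t \le 2\sqrt{C_+}(\varepsilon_{\mathrm{stat}}+\varepsilon_{\mathrm{app}})$. Absolute continuity, which is needed to make the ratios well defined, is supplied by the assumption.

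$B_t$ is handled identically with $d^\star$, $\pi^\star$ in place of $d^{t+1}$, $\pi^{t+1}$. The two terms are $\nu^\star$ and $d^\star(s)\pi^t(a|s)$, and the second is controlled by the same Jensen step applied to $\nu^\star/\nu^t$. This yields $B_t\le 2\sqrt{C_\star}(\varepsilon_{\mathrm{stat}}+\varepsilon_{\mathrm{app}})$.

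All bounds hold almost surely with deterministic constants, so Lemma~\ref{lem:master} applies pathwise. Its remaining hypotheses (interior $\pi^0$, the step-size conditions, $\bar\chi$ and $D_0^\star$) are assumed verbatim in the statement.
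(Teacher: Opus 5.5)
Your proposal is correct and follows the paper's proof essentially step for step. You use the same split of $e_t$ into a statistical part and an approximation part, the same Cauchy--Schwarz change of measure against $\nu^t$ for $\nu^{t+1}$ and $\nu^\star$, the same Jensen step bounding the mixed measures $d^{t+1}\times\pi^t$ and $d^\star\times\pi^t$ by $C_+$ and $C_\star$, and the same specialization of Lemma~\ref{lem:master}. The only difference is that you move the absolute value inside the sum before applying Cauchy--Schwarz, where the paper bounds $|\mathbb{E}_{\nu'}[e_t]|$ for each measure separately; this gives the same bound.
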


\subsection{Sample complexity} \label{sec:sample-complexity-non-tabular}
We now satisfy Assumption~\ref{as:stat-error} with transitions sampled along a single trajectory, by minimizing the objective in~\eqref{eq:w-star-def} with SGD. Two obstacles arise when adapting the closest prior work.

\paragraph{Obstacle 1: unbiased samples without a discount factor.} In the closest discounted work, \citet{yuan_linear_2023} fit the critic by averaged SGD, which needs independent state--action pairs with unbiased $Q$-labels. They obtain both from geometric stopping, which samples exactly from the discounted occupancy. Average reward weights all times equally, so it has no such stopping distribution.

\textit{Our approach: sampling through mixing.} We approximate independent pairs and unbiased labels on a single trajectory that alternates between burn-ins of $m$ transitions and rollouts of $H$ transitions. For the pairs, we take $(s_k, a_k)$ at the end of the $k$-th burn-in, which makes each pair's distribution given the past close to $\nu^t$ by~\eqref{eq:tmix-decay}.
A coupling argument then lets the analysis treat the pairs as independent, up to the burn-in error (App.~\ref{ap:proofs-non-tabular}).
For the labels, we reuse the truncated centered-return estimator of Section~\ref{sec:sample-complexity-tabular}, applied to the rollout that starts at $(s_k, a_k)$. By the bounds of Lem.~\ref{lem:ap-Q_estimation_burnin}, each label's bias decays geometrically in $H$, like the burn-in error in $m$, so $m, H = \smash{\widetilde O}(t_{\mathrm{mix}})$ transitions per labeled pair make both errors arbitrarily small.

\paragraph{Obstacle 2: accurate critics for a linear-rate actor.} In the closest average-reward work, the sublinear actor of \citet[Cor.~3]{li_stochastic_2024} needs $\smash{\widetilde O}(\kappa^2/\varepsilon^2)$ steps, where $\kappa^2$ bounds the second moment of their critic outputs and scales as $t_{\mathrm{mix}}^2$. The many small actor steps average out the critic noise, so each critic call only needs to control bias. Each call of their exploratory variance-reduced temporal difference (EVRTD) critic does this on a single trajectory at a cost of order $t_{\mathrm{mix}}^3$, adding up to $t_{\mathrm{mix}}^5$. Our actor (Thm.~\ref{thm:non-tabular-linear-convergence}) instead converges linearly, in $\widetilde O(\bar\chi)$ steps with growing step sizes. To retain the linear rate with few, large actor steps, each critic call must also control its variance to meet Asm.~\ref{as:stat-error}.

\textit{Our approach: averaging inside the critic.} We move the averaging from the actor into the critic. Each critic call runs SGD on the labeled pairs with the tail averaging of \citet{jain_parallelizing_2018} instead of the full averaging used by \citet{yuan_linear_2023}. The labels' second moment is $\smash{\widetilde O}(t_{\mathrm{mix}}^2)$ by Lem.~\ref{lem:ap-Q_estimation_burnin} and plays the role of $\kappa^2$, so this tail-averaged SGD needs $\smash{\widetilde O}(d\,t_{\mathrm{mix}}^2/\varepsilon_{\mathrm{stat}}^2)$ such pairs to leading order. Since a labeled pair's cost scales only as $t_{\mathrm{mix}}$ rather than the $t_{\mathrm{mix}}^3$ of an EVRTD call, the end-to-end dependence (Thm.~\ref{thm:non_tabular_pmd_sample_complexity_generative}) drops from $t_{\mathrm{mix}}^5$ to $t_{\mathrm{mix}}^3$.

Algorithm~\ref{alg:end2end_w_oracle} (App.~\ref{ap:proofs-non-tabular}) combines the two approaches into our critic: tail-averaged SGD on the quadratic loss~\eqref{eq:w-star-def} over inner iterations $k=1,\ldots,K$. Iteration $k$ takes the next pair $(s_k, a_k)$ and its truncated centered-return label $\smash{\widehat Q}_k$ along the trajectory, then applies the stochastic gradient of~\eqref{eq:w-star-def} at that pair, $(w_{t,k-1}^\top \phi_{s_k,a_k} - \smash{\widehat Q}_k)\,\phi_{s_k,a_k}$. The algorithm returns the average of the last $\lceil K/2\rceil$ iterates, which has a small expected error. To make the error small with high probability, Algorithm~\ref{alg:boost} runs Algorithm~\ref{alg:end2end_w_oracle} along the trajectory $O(\log(2/\delta))$ times and returns as $w_t$ the output closest to a majority of the outputs in the empirical feature-covariance norm, the selection rule of \citet{hsu_loss_2016}. The same trajectory continues across outer iterations, each critic call following the current policy $\pi^t$. The resulting critic satisfies the following finite-sample guarantee:

\begin{lemma}[LFA critic upper bound]
\label{lem:w-sample-complexity}
Suppose Assumption~\ref{as:approx-error} holds, $\|\phi_{s,a}\|_2\le L$ for all $(s,a)$, and $\Sigma_t \succeq \mu I_d$ for all $t$, for some $\mu>0$. Let $w_t$ be the output of Algorithm~\ref{alg:boost} with step size
$\alpha=\frac{1}{4L^2}$. Then for any target critic accuracy $\varepsilon_{\mathrm{stat}}\in(0,1)$ and confidence $\delta\in(0,1)$, the number of transitions
\[
N
\;=\;
\widetilde{O}\!\left(
\frac{d\,t_{\mathrm{mix}}^3}{\varepsilon_{\mathrm{stat}}^2}+\frac{L^2}{\mu}\,t_{\mathrm{mix}}\Bigl(1+\frac{\varepsilon_{\mathrm{app}}^2}{\varepsilon_{\mathrm{stat}}^2}\Bigr)
\right)
\]
suffices to guarantee $\|w_t-w_t^\star\|_{\Sigma_t} \le \varepsilon_{\mathrm{stat}}$ with probability at least $1-\delta$.
\end{lemma}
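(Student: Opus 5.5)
The plan is to reduce the statement to a known excess-risk bound for tail-averaged SGD on least squares \citep{jain_parallelizing_2018}, after controlling two perturbations: the Markovian dependence of the sampled pairs, and the bias of the labels. A median-of-means style selection then boosts the expected bound to high probability.

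\textbf{Step 1 (idealized problem).} I would first define the idealized least-squares problem with i.i.d.\ pairs $(s,a)\sim\nu^t$ and labels $Y$ such that $\E[Y\mid s,a]=Q^t(s,a)$. Its population minimizer is $w_t^\star$, and its excess risk is $\tfrac12\|w-w_t^\star\|_{\Sigma_t}^2$. The residual $Y-\phi^\top w_t^\star$ splits into a label-noise part and an approximation part. The label-noise part has conditional second moment $\widetilde O(t_{\mathrm{mix}}^2)$, using the clipping and the centered-return bounds of Lem.~\ref{lem:ap-Q_estimation_burnin}. The approximation part has second moment at most $\varepsilon_{\mathrm{app}}^2$ by Asm.~\ref{as:approx-error}. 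Call the combined noise level $\sigma^2=\widetilde O(t_{\mathrm{mix}}^2+\varepsilon_{\mathrm{app}}^2)$.

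Tail-averaged constant-step SGD with $\alpha=1/(4L^2)$ then gives
\[
\E\|\bar w-w_t^\star\|_{\Sigma_t}^2\lesssim \frac{d\sigma^2}{K}+\exp\!\bigl(-\alpha\mu K/2\bigr)\,\|w_0-w_t^\star\|^2 .
\]
Since $\|w_t^\star\|^2\lesssim t_{\mathrm{mix}}^2/\mu$ is polynomial, the second term is negligible once $K\gtrsim (L^2/\mu)\log(\cdot)$. Taking $K=\widetilde O\bigl(d\,t_{\mathrm{mix}}^2/\varepsilon_{\mathrm{stat}}^2+L^2/\mu\bigr)$ makes the bound at most $\varepsilon_{\mathrm{stat}}^2/C$. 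Each labeled pair costs $m+H=\widetilde O(t_{\mathrm{mix}})$ transitions, which produces both terms of $N$. The $\varepsilon_{\mathrm{app}}^2/\varepsilon_{\mathrm{stat}}^2$ factor on the second term arises because the noise level enters the variance term. I would reconcile the exact form of the bound with the lemma's statement when carrying out the calculation.

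\textbf{Step 2 (Markov coupling).} Next I would handle the dependence between pairs. By \eqref{eq:tmix-decay}, the law of $(s_k,a_k)$ given the past is within $2^{-\lfloor m/t_{\mathrm{mix}}\rfloor}$ of $\nu^t$ in total variation. A maximal coupling then builds an i.i.d.\ sequence that agrees with the actual pairs except on an event of probability at most $K2^{-\lfloor m/t_{\mathrm{mix}}\rfloor}$. On the agreement event, each label is a rollout started from that pair, so it is conditionally an idealized label up to the truncation bias $b_H$. That bias decays like $t_{\mathrm{mix}}2^{-H/t_{\mathrm{mix}}}$, together with the $H(\hat\rho-\rho)$ term, which is controlled by the burn-in cutoff $\tau$. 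Because the bias is deterministic given $(s,a)$, it moves the SGD fixed point by at most $b_H\sqrt{d}/\sqrt{\mu}$-type amounts. It is therefore absorbed by choosing $H,m=\widetilde O(t_{\mathrm{mix}})$ with logarithms in $K,L,\mu,1/\varepsilon_{\mathrm{stat}}$. On the failure event, I would use boundedness: clipped labels and bounded iterates under tail averaging.

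\textbf{Step 3 (boosting).} Markov's inequality turns the expected bound into a constant-probability bound, say $\Pr(\|\bar w-w_t^\star\|_{\Sigma_t}\le\varepsilon_{\mathrm{stat}}/3)\ge 3/4$. Algorithm~\ref{alg:boost} then selects among $O(\log(2/\delta))$ runs, each made independent up to coupling by its burn-ins. I would apply the selection analysis of \citet{hsu_loss_2016}: with probability at least $1-\delta$ a majority of runs are good, and the selected point is within $3\times$ the radius. One subtlety is that the selection uses the empirical covariance $\widehat\Sigma$ rather than $\Sigma_t$. Matrix Bernstein with $\|\phi\|\le L$ and $\Sigma_t\succeq\mu I$ gives $\tfrac12\Sigma_t\preceq\widehat\Sigma\preceq 2\Sigma_t$ after $\widetilde O(L^2/\mu)$ samples, again drawn via burn-ins. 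This contributes the $(L^2/\mu)\,t_{\mathrm{mix}}$ term.

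\textbf{Main obstacle.} I expect the hardest part to be Step 2, namely making the coupling rigorous inside SGD. The labels and the pairs share trajectory segments, and the next burn-in starts from the rollout's endpoint. I would address this by conditioning on the filtration at the end of each rollout and using the uniform bound \eqref{eq:tmix-decay}, which holds over all initial distributions. That bound makes the per-step TV error independent of history, so a union bound over the $K$ steps, paid only logarithmically through $m$, suffices.
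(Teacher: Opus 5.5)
Your plan has the same architecture as the paper's proof. It uses an in-expectation bound for tail-averaged SGD on i.i.d.\ pairs, a total-variation coupling with error $K2^{-\lfloor m/t_{\mathrm{mix}}\rfloor}$ that is uniform over the history by \eqref{eq:tmix-decay}, and an almost-sure bound on the coupling-failure event. It adds a truncation-bias term, then Markov's inequality followed by the selection rule of \citet{hsu_loss_2016} under $\tfrac12\Sigma_t\preceq\widehat\Sigma_t\preceq 2\Sigma_t$.

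The SGD bound in Step~1, however, is false as written. You fold the approximation residual $r:=Q^t-\phi^\top w_t^\star$ into a scalar noise level $\sigma^2$ and claim a variance term $d\sigma^2/K$. The variance term is governed by $\mathrm{Tr}(\Sigma_t^{-1}\E[\xi\xi^\top])$. For the label noise this is at most $dG^2$, because $\mathrm{Var}(Y\mid s,a)\le G^2$ gives $\E[\mathrm{Var}(Y\mid s,a)\,\phi\phi^\top]\preceq G^2\Sigma_t$. For the residual, $r$ is a fixed function whose $L^2(\nu^t)$ mass can sit on pairs with large $\phi^\top\Sigma_t^{-1}\phi$. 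Only $\mathrm{Tr}(\Sigma_t^{-1}\E[r^2\phi\phi^\top])\le(L^2/\mu)\,\varepsilon_{\mathrm{app}}^2$ holds in general. For example, take $d=1$ and $\phi=L$ with probability $p$, with $r$ supported there and of conditional mean zero. The averaged iterate's variance is then of order $\varepsilon_{\mathrm{app}}^2/(pK)=L^2\varepsilon_{\mathrm{app}}^2/(\mu K)$, not $\varepsilon_{\mathrm{app}}^2/K$.

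The correct variance term is $\bigl(dG^2+(L^2/\mu)\varepsilon_{\mathrm{app}}^2\bigr)/K$. This is the paper's split into $S_1\preceq G^2\Sigma_t$ and $\mathrm{Tr}(S_2)\le L^2\varepsilon_{\mathrm{app}}^2$ in Lemma~\ref{cor:avg_reward_qnpg_sample_complexity}; equivalently, it is the agnostic form of \citet{jain_parallelizing_2018}, not the well-specified $d\sigma^2$. It is exactly the source of the $\frac{L^2}{\mu}t_{\mathrm{mix}}\varepsilon_{\mathrm{app}}^2/\varepsilon_{\mathrm{stat}}^2$ term, which your choice of $K$ omits. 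Once you make this correction, and bound the misspecification relative to the biased target $\widehat Q$ by $2\varepsilon_{\mathrm{app}}^2+2\|\widehat Q-Q^t\|_\infty^2$, Step~1 goes through.

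The rest is sound, with minor corrections.
\begin{itemize}
\item The LFA labels are not clipped. Use the range bound $|Y|\le 2H$ and the second-moment bound $O(t_{\mathrm{mix}}^2\log(2t_{\mathrm{mix}}/\varepsilon_{\mathrm{stat}}))$ from the proof of Lemma~\ref{lem:ap-Q_estimation_burnin}.
\item The bias enters more cleanly as $\|\tilde w_t-w_t^\star\|_{\Sigma_t}\le\|\widehat Q-Q^t\|_\infty$ than through a $\sqrt{d/\mu}$ factor, because projection onto the feature span in $L^2(\nu^t)$ is non-expansive.
\item With $\widehat\Sigma_t$ in the selection rule, the Hsu--Sabato constant becomes $5$ rather than $3$, so each run must be accurate to $\varepsilon_{\mathrm{stat}}/5$.
\end{itemize}

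For the dependence across runs and within $\widehat\Sigma_t$, you couple everything to independent copies and apply i.i.d.\ matrix Bernstein. This works, but it charges $RK2^{-\lfloor m/t_{\mathrm{mix}}\rfloor}$ to the failure probability and so puts $\log(1/\delta)$ inside $m$. The paper avoids coupling there in two ways. It applies the single-run bound conditionally on the past, giving $\Pr(\text{run } i \text{ inaccurate}\mid\mathcal G_{i-1})\le 1/16$, and bounds the count of bad runs with an exponential-moment supermartingale. It proves the covariance comparison by the Ahlswede--Winter/Golden--Thompson iteration with conditional expectations, which are within $\frac{L^2}{\mu}2^{-\lfloor m/t_{\mathrm{mix}}\rfloor}\le\frac14$ of the target. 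As a result, $m$ has no dependence on $\delta$.
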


\begin{remark} \label{rmk:covariance-condition}
The covariance condition $\Sigma_t \succeq \mu I_d$ of Lem.~\ref{lem:w-sample-complexity} can be guaranteed by the features alone. It holds with a single $\mu > 0$ for all policies if $\operatorname{span}\{\phi_{s,a_s} : s \in \mathcal S\} = \mathbb{R}^d$ for every choice of one action $a_s$ per state. When $d \le |\mathcal S|$, this is true for almost every choice of features. If the features lack this property, the $\varepsilon$-exploration of Rem.~\ref{rmk:eps-exploration} enforces the condition whenever $\operatorname{span}\{\phi_{s,a} : (s,a) \in \mathcal S \times \mathcal A\} = \mathbb{R}^d$. Appendix~\ref{ap:covariance-condition} gives the details.
\end{remark}

The $d\,t_{\mathrm{mix}}^3/\varepsilon_{\mathrm{stat}}^2$ term of Lemma~\ref{lem:w-sample-complexity} is information-theoretically tight. With one action and $2d$ states, the construction of Proposition~\ref{prop:tabular_lower_bound} is linearly realizable in dimension $d$, and the same Assouad reduction in the covariance norm yields the dimension factor:

\begin{proposition}[LFA critic lower bound]
\label{prop:nontabular_lower_bound}
For every $d \geq 1$, $p \in (0, 1/4)$, $\varepsilon_{\mathrm{stat}} \in (0, 1/(256\,p)]$, and every (possibly adaptive) generative-model algorithm, there is a one-action average-reward MDP on $2d$ states satisfying Assumption~\ref{as:ergodic} with $t_{\mathrm{mix}} = \Theta(1/p)$ and exactly realizable linear differential $Q$ on which outputting $\widehat w \in \mathbb{R}^d$ with $\Pr(\|\widehat w - w^\star\|_\Sigma \leq \varepsilon_{\mathrm{stat}}) \geq 2/3$ requires
\[
N \;=\; \Omega\bigl(d\,t_{\mathrm{mix}}^3/\varepsilon_{\mathrm{stat}}^2\bigr)
\]
generative-model transitions. The explicit construction and constants are given in Appendix~\ref{ap:proofs-non-tabular}.
\end{proposition}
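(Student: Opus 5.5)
The plan is to reduce to the tabular construction of Proposition~\ref{prop:tabular_lower_bound}, specialized to one action, so that only the first term $|\mathcal S|\,t_{\mathrm{mix}}^3/\varepsilon^2$ survives, and then transfer it to the covariance norm. I would take $2d$ states arranged as $d$ disjoint ``blocks'' of two states $\{u_j,v_j\}$, $j=1,\dots,d$. Within each block the chain is a two-state lazy switch with switching probability $p$, perturbed by a sign $\sigma_j\in\{\pm1\}$: for instance, $P(v_j\mid u_j)=p(1+\sigma_j\beta)$ with a small $\beta$. The blocks are joined by a weak uniform restart to a fixed distribution at rate $\Theta(p)$, so that the chain is ergodic with $t_{\mathrm{mix}}=\Theta(1/p)$ and the stationary mass is $\Theta(1/d)$ per block. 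Costs are $c(u_j)=1$ and $c(v_j)=0$. I would choose features $\phi_{u_j}=e_j$ and $\phi_{v_j}=-e_j$, or more precisely whatever two-dimensional-per-block pattern makes $Q^\sigma$ lie in the span. The natural fix is to use the antisymmetry of $Q$ within a block after centering, check it explicitly, and if needed add the restart symmetrically so that $Q^\sigma(u_j)+Q^\sigma(v_j)$ is constant, absorbing that constant into one shared coordinate. Exact realizability with $w^\star_j\approx Q^\sigma(u_j)$ then holds.

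The next step is the separation. For the two-state switch, $Q(u_j)-Q(v_j)\approx 1/(2p(1+\sigma_j\beta))$, so flipping $\sigma_j$ moves $w^\star_j$ by $\Theta(\beta/p)$. Since $\Sigma$ is diagonal with entries $\Theta(1/d)$, the Hamming-separable form follows:
\[
\|w^\star(\sigma)-w^\star(\sigma')\|_\Sigma^2 \;\gtrsim\; \tfrac{1}{d}\cdot\tfrac{\beta^2}{p^2}\,\mathrm{Ham}(\sigma,\sigma').
\]
Taking $\beta\asymp p\,\varepsilon_{\mathrm{stat}}$, with a constant absorbed into the $1/(256p)$ range so that $\beta<1/2$, each coordinate contributes $\gtrsim\varepsilon_{\mathrm{stat}}^2/d$. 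Accuracy $\varepsilon_{\mathrm{stat}}$ in $\|\cdot\|_\Sigma$ therefore forces correct recovery of a constant fraction of the signs.

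For the information side, a generative-model query at $u_j$ is a Bernoulli$(p(1\pm\beta))$ draw, whose KL divergence between the two hypotheses is $\Theta(p\beta^2)=\Theta(p^3\varepsilon_{\mathrm{stat}}^2)$. Queries at other states carry no information about $\sigma_j$, and adaptivity is handled by the chain rule for KL as in Assouad's lemma. Assouad then requires the expected number of queries at block $j$ to be $\Omega(1/(p^3\varepsilon_{\mathrm{stat}}^2))$ for a constant fraction of $j$, which totals $\Omega(d/(p^3\varepsilon_{\mathrm{stat}}^2))=\Omega(d\,t_{\mathrm{mix}}^3/\varepsilon_{\mathrm{stat}}^2)$. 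The $2/3$ success probability is converted to an expected-Hamming bound in the standard way.

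I expect the main obstacle to be the realizability and separation bookkeeping. The restart coupling between blocks perturbs $Q$, $\rho$, and $d^\pi$ nonlocally in $\sigma$, so I would need to verify both that the features span $Q^\sigma$ for every $\sigma$ and that the block-$j$ coordinate of $w^\star$ depends on $\sigma_j$ with a $\Theta(\beta/p)$ gap uniformly over the other signs. If the coupling is problematic, my first fallback would be to reuse the exact block construction from Proposition~\ref{prop:tabular_lower_bound} (one action, so $|\mathcal S|=2d$) and take $\phi$ to be the indicator of the block, adjusted by the sign of state within the block, reading off realizability from that construction's closed-form $Q$.
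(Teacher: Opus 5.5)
Your approach is the paper's, and your fallback is exactly its construction. The paper uses $d$ two-state sign-flip blocks with a uniform reset at rate $p$, cost $(1+\sigma)/2$, and features $\phi(j,\sigma)=\sigma e_j$, so $\Sigma=I_d/d$. It perturbs the flip rate by $\Delta=p\beta=128p^2\varepsilon_{\mathrm{stat}}$, which gives your constraint $\beta\le 1/2\iff\varepsilon_{\mathrm{stat}}\le 1/(256p)$, and then uses per-query KL $O(\Delta^2/p)=O(p\beta^2)$ and Assouad.

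The one thing to correct is your primary ``for instance'' kernel, which perturbs only the $u_j\to v_j$ rate. Write $x_j,y_j$ for $h(u_j),h(v_j)$, let $r$ be the reset rate, and let $a_j=p(1+\sigma_j\beta)$ and $b_j=p$ be the two flip rates. Summing the two Poisson equations of block $j$ gives
\[
r\,(x_j+y_j) \;=\; 1-2\rho+2r\,(u h)-(1-r)(a_j-b_j)(x_j-y_j).
\]
Here $(a_j-b_j)(x_j-y_j)=\Theta(\sigma_j\beta)$, so $x_j+y_j$ moves with $\sigma_j$ by $\Theta(\beta/p)$, the same order as the signal in $x_j-y_j$. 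A shared coordinate absorbs only a block-independent offset, and the features cannot depend on the hidden signs. Exact realizability in dimension $d$ therefore fails, and your proposed repair does not restore it.

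The fix is to perturb both directions symmetrically, with flip probability $q_j=p+\theta_j\Delta$ out of both states of block $j$. Then $u$ is stationary for every member and $\rho=1/2$. Moreover $h^\theta(j,\sigma)=\beta_j\sigma$ with $\beta_j=1/(2(p+2q_j(1-p)))$, which depends only on $q_j$. So $Q$ is exactly realized by $\sigma e_j$, and the cross-block coupling you flagged as the main obstacle disappears.

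One smaller point about $t_{\mathrm{mix}}$. The reset gives $t_{\mathrm{mix}}<3/p$, which is the direction needed to turn $d/(p^3\varepsilon_{\mathrm{stat}}^2)$ into $d\,t_{\mathrm{mix}}^3/\varepsilon_{\mathrm{stat}}^2$. The claim $t_{\mathrm{mix}}=\Theta(1/p)$ also needs the matching lower bound. The paper gets it from the eigenfunction $\mathbf 1\{k=j\}\,\sigma$, whose eigenvalue is $(1-p)(1-2q_j)$, yielding $t_{\mathrm{mix}}\ge 1/(16p)$.
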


The upper bound (Lem.~\ref{lem:w-sample-complexity}) and the lower bound (Prop.~\ref{prop:nontabular_lower_bound}) match up to logarithmic factors when $\varepsilon_{\mathrm{app}} \le \varepsilon_{\mathrm{stat}} \le t_{\mathrm{mix}}\sqrt{d\mu}/L$, for accuracies covered by both and confidence $\delta\le1/3$. In this regime, the critic's worst-case sample complexity is $\smash{\widetilde\Theta}(d\,t_{\mathrm{mix}}^3/\varepsilon_{\mathrm{stat}}^2)$, since the lower bound extends to every $L$ and $\mu$ allowed by Lem.~\ref{lem:w-sample-complexity} and to a single trajectory (App.~\ref{ap:covariance-condition}). The ratio $L^2/\mu$, which bounds the condition number of $\Sigma_t$, enters the upper bound only through a lower-order term.

Composing Lem.~\ref{lem:w-sample-complexity}'s critic with Thm.~\ref{thm:non-tabular-linear-convergence}'s actor yields the following end-to-end sample complexity:

\begin{theorem}[LFA sample complexity]
\label{thm:non_tabular_pmd_sample_complexity_generative}
Suppose Assumption~\ref{as:L2-mismatch-bound} and the hypotheses of Lemma~\ref{lem:w-sample-complexity} hold, and $D_0^\star < \infty$. For any target suboptimality $\varepsilon \in (0,1)$ with $\varepsilon \ge 8(\bar\chi\sqrt{C_+}+\sqrt{C_\star})\varepsilon_{\mathrm{app}}$ and confidence $\delta \in (0,1)$, using the critic of Lemma~\ref{lem:w-sample-complexity} at accuracy $\varepsilon_{\mathrm{stat}} = \varepsilon/(8(\bar\chi\sqrt{C_+}+\sqrt{C_\star}))$ together with the actor of Theorem~\ref{thm:non-tabular-linear-convergence} in~\eqref{eq:lfa-nonasymptotic-linear} guarantees $\Delta_T \le \varepsilon$ with probability at least $1-\delta$ after $T = \smash{\widetilde\Theta}(\bar\chi)$ outer iterations, using a total of
\[
N
\;=\;
\widetilde O\!\left(
\frac{d\,t_{\mathrm{mix}}^3\,\bar\chi\,(\bar\chi^2C_{+}+C_{\star})}{\varepsilon^{2}}+\frac{L^2}{\mu}\,t_{\mathrm{mix}}\,\bar\chi
\right)
\]
transitions of a single trajectory.
\end{theorem}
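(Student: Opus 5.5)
The plan is to compose Theorem~\ref{thm:non-tabular-linear-convergence}, in its non-asymptotic form~\eqref{eq:lfa-nonasymptotic-linear}, with the critic guarantee of Lemma~\ref{lem:w-sample-complexity}. The events from all critic calls are combined by a union bound over the $T$ outer iterations.

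\textbf{Step 1: the error floor.} With $\varepsilon_{\mathrm{stat}} = \varepsilon/(8(\bar\chi\sqrt{C_+}+\sqrt{C_\star}))$ and the hypothesis $\varepsilon_{\mathrm{app}} \le \varepsilon_{\mathrm{stat}}$, we have $\varepsilon_{\mathrm{stat}}+\varepsilon_{\mathrm{app}} \le 2\varepsilon_{\mathrm{stat}}$. Hence
\[
\varepsilon_{\mathrm{floor}}(\bar\chi) \le 4(\bar\chi\sqrt{C_+}+\sqrt{C_\star})\varepsilon_{\mathrm{stat}} = \varepsilon/2 .
\]

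\textbf{Step 2: the iteration count.} Choose $T$ so that the transient term in~\eqref{eq:lfa-nonasymptotic-linear} is at most $\varepsilon/2$. Since $(1-1/\bar\chi)^T \le e^{-T/\bar\chi}$, it suffices to take
\[
T = \bigl\lceil \bar\chi \log\bigl(2(\Delta_0 + D_0^\star/(\eta_{-1}\bar\chi))/\varepsilon\bigr)\bigr\rceil = \widetilde\Theta(\bar\chi).
\]
This is finite because $D_0^\star<\infty$ and $\Delta_0\le 1$. On the event that Assumption~\ref{as:stat-error} holds at every $t<T$, this gives $\Delta_T \le \varepsilon$. Assumption~\ref{as:L2-mismatch-bound} is assumed, and Assumption~\ref{as:approx-error} comes from the hypotheses of Lemma~\ref{lem:w-sample-complexity}.

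\textbf{Step 3: probability and sample count.} Run each critic call at confidence $\delta/T$. Lemma~\ref{lem:w-sample-complexity} then gives $\|w_t-w_t^\star\|_{\Sigma_t}\le\varepsilon_{\mathrm{stat}}$ with probability at least $1-\delta/T$, so the union bound makes all $T$ calls succeed with probability at least $1-\delta$. The per-call cost is
\[
\widetilde O\bigl(d\,t_{\mathrm{mix}}^3/\varepsilon_{\mathrm{stat}}^2 + (L^2/\mu)\,t_{\mathrm{mix}}\bigr),
\]
where the factor $(1+\varepsilon_{\mathrm{app}}^2/\varepsilon_{\mathrm{stat}}^2)\le 2$ has been absorbed. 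Since $(\bar\chi\sqrt{C_+}+\sqrt{C_\star})^2 \le 2(\bar\chi^2C_+ + C_\star)$, we get $1/\varepsilon_{\mathrm{stat}}^2 = O\bigl((\bar\chi^2C_++C_\star)/\varepsilon^2\bigr)$. Multiplying the per-call cost by $T=\widetilde\Theta(\bar\chi)$ yields the stated $N$. The $\log(T/\delta)$ factor is hidden in $\widetilde O$.

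\textbf{Main obstacle.} The subtle point is that the critic calls are not independent: they share one trajectory, and $\pi^t$ is random because it depends on earlier critic outputs. The union bound therefore has to be applied to conditional guarantees. I would argue that Lemma~\ref{lem:w-sample-complexity} holds conditionally on the history up to the start of call $t$, for any fixed $\pi^t$ and any initial state. Its proof uses burn-ins whose mixing bound~\eqref{eq:tmix-decay} is uniform over initial distributions $\mu_0$, and this is exactly what makes the lemma hold for any starting state. Taking expectations of the conditional failure probabilities then gives failure probability at most $\delta/T$ per call unconditionally, after which the union bound applies.

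Assumption~\ref{as:L2-mismatch-bound} is stated almost surely with deterministic constants, so it causes no further conditioning issues. The only remaining point is that $\bar\chi$, $C_+$ and $C_\star$ enter only as deterministic upper bounds, so $T$ and $\varepsilon_{\mathrm{stat}}$ can be fixed in advance.
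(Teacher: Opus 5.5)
Your proposal is correct and follows the paper's proof essentially step for step. The shared steps are the floor bound $\varepsilon_{\mathrm{floor}}(\bar\chi)\le\varepsilon/2$, the choice $T=\lceil\bar\chi\log(2\Psi_0/\varepsilon)\rceil$, a union bound over the $T$ critic calls at confidence $\delta/T$ using the conditional-on-$\mathcal F_{t-1}$ form of the critic guarantee, and a per-call cost multiplied by $T$ with $\log(2T/\delta)$ absorbed into $\widetilde O$; the one small difference is that the paper writes $\log\max\{e,2\Psi_0/\varepsilon\}$, which keeps $T\ge\bar\chi$ (so $T=\widetilde\Theta(\bar\chi)$ holds literally) even when $\Psi_0$ is already below $\varepsilon/2$.
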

\begin{remark}
With the same single-trajectory oracle as \citet[Corollary~3]{li_stochastic_2024}, Theorem~\ref{thm:non_tabular_pmd_sample_complexity_generative} improves their $|\mathcal S||\mathcal A|\,t_{\mathrm{mix}}^5$ dependence to $d\,t_{\mathrm{mix}}^3$ through the trade-off of Obstacle~2: a linear-rate actor with few accurate critic calls in place of a sublinear actor with many bias-controlled ones. By Proposition~\ref{prop:nontabular_lower_bound}, the $d\,t_{\mathrm{mix}}^3/\varepsilon^2$ dependence of the critic's sample complexity in Theorem~\ref{thm:non_tabular_pmd_sample_complexity_generative} is information-theoretically tight for any policy-gradient method whose critic satisfies Assumption~\ref{as:stat-error}, even one with a generative model. The remaining factor $\bar\chi(\bar\chi^2C_++C_\star)$ collects standard PMD-path and distribution-mismatch quantities outside the critic's sample complexity.
\end{remark}

\section{Conclusion}
We developed a finite-time, finite-sample theory of policy mirror descent for ergodic average-reward MDPs around a master recursion (Lemma~\ref{lem:master}) whose two-part structure separates a transient $\bar\chi$-regime from an asymptotic $\chi$-regime, the latter empirically reached after a brief warm-up. Our tabular and LFA sample complexities (Theorems~\ref{thm:pmd_sample_complexity}, \ref{thm:non_tabular_pmd_sample_complexity_generative}) improve the prior best policy-gradient dependence, with guarantees that hold with high probability for the last iterate. The tabular critic uses a generative model and pays $t_{\mathrm{mix}}^3$ per state and $t_{\mathrm{mix}}^2$ per state--action pair. The LFA critic uses a single trajectory and pays $d\,t_{\mathrm{mix}}^3$ in its leading term, free of $|\mathcal S|$ and $|\mathcal A|$. Matching lower bounds (Propositions~\ref{prop:tabular_lower_bound},~\ref{prop:nontabular_lower_bound}) establish that both critics are optimal up to logarithmic factors. Because the master recursion holds for any critic, neural critics are a natural next step, as are multichain MDPs and regret guarantees.

\begin{ack}
We thank the anonymous reviewers and the area chair for their thoughtful comments and helpful suggestions, which improved the paper. This work was supported in part by the NSF AI Institute (AI-EDGE) 2112471 and the ARO Grant W911NF-24-1-0103.
\end{ack}

\bibliographystyle{plainnat}
\bibliography{references}

\begin{thebibliography}{49}
\providecommand{\natexlab}[1]{#1}
\providecommand{\url}[1]{\texttt{#1}}
\expandafter\ifx\csname urlstyle\endcsname\relax
  \providecommand{\doi}[1]{doi: #1}\else
  \providecommand{\doi}{doi: \begingroup \urlstyle{rm}\Url}\fi

\bibitem[Ada et~al.(2022)Ada, Ugur, and
  Akin]{adaGeneralizationTransferLearning2022}
Suzan~Ece Ada, Emre Ugur, and H.~Levent Akin.
\newblock Generalization in {Transfer} {Learning}: {Robust} {Control} of
  {Robot} {Locomotion}.
\newblock \emph{Robotica}, 40\penalty0 (11):\penalty0 3811--3836, 2022.
\newblock \doi{10.1017/S0263574722000625}.

\bibitem[Agarwal et~al.(2020)Agarwal, Kakade, Lee, and
  Mahajan]{agarwal_theory_2020}
Alekh Agarwal, Sham~M. Kakade, Jason~D. Lee, and Gaurav Mahajan.
\newblock On the {Theory} of {Policy} {Gradient} {Methods}: {Optimality},
  {Approximation}, and {Distribution} {Shift}, 2020.
\newblock arXiv:1908.00261.

\bibitem[Alfano et~al.(2023)Alfano, Yuan, and Rebeschini]{alfano_novel_2023}
Carlo Alfano, Rui Yuan, and Patrick Rebeschini.
\newblock A {Novel} {Framework} for {Policy} {Mirror} {Descent} with {General}
  {Parameterization} and {Linear} {Convergence}.
\newblock In \emph{Advances in {Neural} {Information} {Processing} {Systems}},
  2023.

\bibitem[Bach and Moulines(2013)]{bach_non-strongly-convex_2013}
Francis Bach and Eric Moulines.
\newblock Non-{Strongly}-{Convex} {Smooth} {Stochastic} {Approximation} with
  {Convergence} {Rate} {O}(1/n).
\newblock In \emph{Advances in {Neural} {Information} {Processing} {Systems}},
  2013.

\bibitem[Bai et~al.(2024)Bai, Mondal, and Aggarwal]{bai_regret_2024}
Qinbo Bai, Washim~Uddin Mondal, and Vaneet Aggarwal.
\newblock Regret {Analysis} of {Policy} {Gradient} {Algorithm} for {Infinite}
  {Horizon} {Average} {Reward} {Markov} {Decision} {Processes}, 2024.
\newblock arXiv:2309.01922.

\bibitem[Bertsekas(2017)]{bertsekas_dynamic_2017}
Dimitri~P. Bertsekas.
\newblock \emph{Dynamic {Programming} and {Optimal} {Control}}, volume~1.
\newblock Athena Scientific, fourth edition, 2017.

\bibitem[Cayci et~al.(2024)Cayci, He, and Srikant]{cayci_linear_2024}
Semih Cayci, Niao He, and R.~Srikant.
\newblock Linear {Convergence} of {Entropy}-{Regularized} {Natural} {Policy}
  {Gradient} with {Linear} {Function} {Approximation}, 2024.
\newblock arXiv:2106.04096.

\bibitem[Cen et~al.(2021)Cen, Cheng, Chen, Wei, and Chi]{cen_fast_2021}
Shicong Cen, Chen Cheng, Yuxin Chen, Yuting Wei, and Yuejie Chi.
\newblock Fast {Global} {Convergence} of {Natural} {Policy} {Gradient}
  {Methods} with {Entropy} {Regularization}, 2021.
\newblock arXiv:2007.06558.

\bibitem[Daley(2021)]{daley2021gym}
Brett Daley.
\newblock Gym {Classics}.
\newblock \url{https://github.com/brett-daley/gym-classics}, 2021.

\bibitem[Ganesh et~al.(2025{\natexlab{a}})Ganesh, Mondal, and
  Aggarwal]{ganesh_order-optimal_2025}
Swetha Ganesh, Washim~Uddin Mondal, and Vaneet Aggarwal.
\newblock Order-{Optimal} {Regret} with {Novel} {Policy} {Gradient}
  {Approaches} in {Infinite}-{Horizon} {Average} {Reward} {MDPs}.
\newblock In \emph{International {Conference} on {Artificial} {Intelligence}
  and {Statistics}}, 2025{\natexlab{a}}.

\bibitem[Ganesh et~al.(2025{\natexlab{b}})Ganesh, Mondal, and
  Aggarwal]{ganesh_sharper_2025}
Swetha Ganesh, Washim~Uddin Mondal, and Vaneet Aggarwal.
\newblock A {Sharper} {Global} {Convergence} {Analysis} for {Average} {Reward}
  {Reinforcement} {Learning} via an {Actor}-{Critic} {Approach},
  2025{\natexlab{b}}.
\newblock arXiv:2407.18878.

\bibitem[Gosavi(2004)]{gosavi_reinforcement_2004}
Abhijit Gosavi.
\newblock A {Reinforcement} {Learning} {Algorithm} {Based} on {Policy}
  {Iteration} for {Average} {Reward}: {Empirical} {Results} with {Yield}
  {Management} and {Convergence} {Analysis}.
\newblock \emph{Machine Learning}, 55\penalty0 (1):\penalty0 5--29, 2004.
\newblock \doi{10.1023/B:MACH.0000019802.64038.6c}.

\bibitem[Grosof et~al.(2025)Grosof, Maguluri, and
  Srikant]{grosof_convergence_2025}
Isaac Grosof, Siva~Theja Maguluri, and R.~Srikant.
\newblock Convergence of {Natural} {Policy} {Gradient} for a {Family} of
  {Infinite}-{State} {Queueing} {MDPs}, 2025.
\newblock arXiv:2402.05274.

\bibitem[Hsu and Sabato(2016)]{hsu_loss_2016}
Daniel Hsu and Sivan Sabato.
\newblock Loss {Minimization} and {Parameter} {Estimation} with {Heavy}
  {Tails}.
\newblock \emph{Journal of Machine Learning Research}, 2016.

\bibitem[Jain et~al.(2018)Jain, Kakade, Kidambi, Netrapalli, and
  Sidford]{jain_parallelizing_2018}
Prateek Jain, Sham~M. Kakade, Rahul Kidambi, Praneeth Netrapalli, and Aaron
  Sidford.
\newblock Parallelizing {Stochastic} {Gradient} {Descent} for {Least} {Squares}
  {Regression}: {Mini}-{Batching}, {Averaging}, and {Model} {Misspecification}.
\newblock \emph{Journal of Machine Learning Research}, 2018.

\bibitem[Johnson et~al.(2023)Johnson, Pike-Burke, and
  Rebeschini]{johnson_optimal_2023}
Emmeran Johnson, Ciara Pike-Burke, and Patrick Rebeschini.
\newblock Optimal {Convergence} {Rate} for {Exact} {Policy} {Mirror} {Descent}
  in {Discounted} {Markov} {Decision} {Processes}, 2023.
\newblock arXiv:2302.11381.

\bibitem[Kakade(2001)]{kakade_natural_2001}
Sham~M. Kakade.
\newblock A {Natural} {Policy} {Gradient}.
\newblock In \emph{Advances in {Neural} {Information} {Processing} {Systems}},
  2001.

\bibitem[Khodadadian et~al.(2021)Khodadadian, Jhunjhunwala, Varma, and
  Maguluri]{khodadadian_linear_2021}
Sajad Khodadadian, Prakirt~Raj Jhunjhunwala, Sushil~Mahavir Varma, and
  Siva~Theja Maguluri.
\newblock On the {Linear} {Convergence} of {Natural} {Policy} {Gradient}
  {Algorithm}, 2021.
\newblock arXiv:2105.01424.

\bibitem[Kumar et~al.(2024)Kumar, Murthy, Shufaro, Levy, Srikant, and
  Mannor]{kumar_global_2024}
Navdeep Kumar, Yashaswini Murthy, Itai Shufaro, Kfir~Y. Levy, R.~Srikant, and
  Shie Mannor.
\newblock On the {Global} {Convergence} of {Policy} {Gradient} in {Average}
  {Reward} {Markov} {Decision} {Processes}, 2024.
\newblock arXiv:2403.06806.

\bibitem[Lan(2022)]{lan_policy_2022}
Guanghui Lan.
\newblock Policy {Mirror} {Descent} for {Reinforcement} {Learning}: {Linear}
  {Convergence}, {New} {Sampling} {Complexity}, and {Generalized} {Problem}
  {Classes}, 2022.
\newblock arXiv:2102.00135.

\bibitem[Lattimore and Szepesvári(2020)]{lattimore_bandit_2020}
Tor Lattimore and Csaba Szepesvári.
\newblock \emph{Bandit {Algorithms}}.
\newblock Cambridge University Press, 2020.
\newblock \doi{10.1017/9781108571401}.

\bibitem[Lee and Ryu(2026)]{lee_policy_2026}
Jongmin Lee and Ernest~K. Ryu.
\newblock Policy {Gradient} {Algorithms} in {Average}-{Reward} {Multichain}
  {MDPs}, 2026.
\newblock arXiv:2602.18003.

\bibitem[Lee et~al.(2026)Lee, Ryu, and Aggarwal]{lee_learning_2026}
Jongmin Lee, Ernest~K. Ryu, and Vaneet Aggarwal.
\newblock Learning {Policy} from a {Single} {Trajectory} in {Average}-{Reward}
  {Markov} {Decision} {Process}, 2026.
\newblock arXiv:2606.16729.

\bibitem[Levin and Peres(2017)]{levin_markov_2017}
David~A. Levin and Yuval Peres.
\newblock \emph{Markov {Chains} and {Mixing} {Times}}.
\newblock American Mathematical Society, second edition, 2017.

\bibitem[Li et~al.(2024)Li, Wu, and Lan]{li_stochastic_2024}
Tianjiao Li, Feiyang Wu, and Guanghui Lan.
\newblock Stochastic {First}-{Order} {Methods} for {Average}-{Reward} {Markov}
  {Decision} {Processes}, 2024.
\newblock arXiv:2205.05800.

\bibitem[Lin and Zhang(2024)]{lin_convergence_2024}
Dachao Lin and Zhihua Zhang.
\newblock On the {Convergence} of {Policy} in {Unregularized} {Policy} {Mirror}
  {Descent}, 2024.
\newblock arXiv:2205.08176.

\bibitem[Liu et~al.(2025)Liu, Li, Lin, Wei, and Zhang]{liu_convergence_2025}
Jiacai Liu, Wenye Li, Dachao Lin, Ke~Wei, and Zhihua Zhang.
\newblock On the {Convergence} of {Projected} {Policy} {Gradient} for {Any}
  {Constant} {Step} {Sizes}.
\newblock \emph{Journal of Machine Learning Research}, 2025.

\bibitem[Mahadevan(1996)]{mahadevan_average_1996}
Sridhar Mahadevan.
\newblock Average {Reward} {Reinforcement} {Learning}: {Foundations},
  {Algorithms}, and {Empirical} {Results}.
\newblock \emph{Machine Learning}, 22\penalty0 (1-3):\penalty0 159--195, 1996.
\newblock \doi{10.1007/BF00114727}.

\bibitem[Murthy and Srikant(2023)]{murthy_convergence_2023}
Yashaswini Murthy and R.~Srikant.
\newblock On the {Convergence} of {Natural} {Policy} {Gradient} and {Mirror}
  {Descent}-{Like} {Policy} {Methods} for {Average}-{Reward} {MDPs}.
\newblock In \emph{2023 62nd {IEEE} {Conference} on {Decision} and {Control}
  ({CDC})}, pages 1979--1984, 2023.
\newblock \doi{10.1109/CDC49753.2023.10383691}.

\bibitem[Murthy et~al.(2023)Murthy, Moharrami, and
  Srikant]{murthy_performance_2023}
Yashaswini Murthy, Mehrdad Moharrami, and R.~Srikant.
\newblock Performance {Bounds} for {Policy}-{Based} {Average} {Reward}
  {Reinforcement} {Learning} {Algorithms}.
\newblock In \emph{Advances in {Neural} {Information} {Processing} {Systems}},
  2023.

\bibitem[Murthy et~al.(2024)Murthy, Grosof, Maguluri, and
  Srikant]{murthy_performance_2024}
Yashaswini Murthy, Isaac Grosof, Siva~Theja Maguluri, and R.~Srikant.
\newblock Performance of {NPG} in {Countable} {State}-{Space} {Average}-{Cost}
  {RL}, 2024.
\newblock arXiv:2405.20467.

\bibitem[Neely(2010)]{neely_stochastic_2010}
Michael~J. Neely.
\newblock \emph{Stochastic {Network} {Optimization} with {Application} to
  {Communication} and {Queueing} {Systems}}.
\newblock Morgan \& Claypool Publishers, 2010.

\bibitem[Nemirovsky and Yudin(1983)]{nemirovskij_problem_1983}
A.~S. Nemirovsky and D.~B. Yudin.
\newblock \emph{Problem {Complexity} and {Method} {Efficiency} in
  {Optimization}}.
\newblock Wiley, 1983.

\bibitem[Puterman(1994)]{puterman_markov_1994}
Martin~L. Puterman.
\newblock \emph{Markov {Decision} {Processes}: {Discrete} {Stochastic}
  {Dynamic} {Programming}}.
\newblock Wiley, 1994.
\newblock \doi{10.1002/9780470316887}.

\bibitem[Ross(1983)]{ross_introduction_1983}
Sheldon~M. Ross.
\newblock \emph{Introduction to {Stochastic} {Dynamic} {Programming}}.
\newblock Academic Press, 1983.

\bibitem[Schulman et~al.(2017{\natexlab{a}})Schulman, Levine, Moritz, Jordan,
  and Abbeel]{schulman_trust_2017}
John Schulman, Sergey Levine, Philipp Moritz, Michael~I. Jordan, and Pieter
  Abbeel.
\newblock Trust {Region} {Policy} {Optimization}, 2017{\natexlab{a}}.
\newblock arXiv:1502.05477.

\bibitem[Schulman et~al.(2017{\natexlab{b}})Schulman, Wolski, Dhariwal,
  Radford, and Klimov]{schulman_proximal_2017}
John Schulman, Filip Wolski, Prafulla Dhariwal, Alec Radford, and Oleg Klimov.
\newblock Proximal {Policy} {Optimization} {Algorithms}, 2017{\natexlab{b}}.
\newblock arXiv:1707.06347.

\bibitem[Shani et~al.(2020)Shani, Efroni, and Mannor]{shani_adaptive_2020}
Lior Shani, Yonathan Efroni, and Shie Mannor.
\newblock Adaptive {Trust} {Region} {Policy} {Optimization}: {Global}
  {Convergence} and {Faster} {Rates} for {Regularized} {MDPs}.
\newblock \emph{Proceedings of the AAAI Conference on Artificial Intelligence},
  34\penalty0 (04):\penalty0 5668--5675, 2020.
\newblock \doi{10.1609/aaai.v34i04.6021}.

\bibitem[Srikant and Ying(2014)]{srikant_communication_2014}
R.~Srikant and Lei Ying.
\newblock \emph{Communication {Networks}: {An} {Optimization}, {Control}, and
  {Stochastic} {Networks} {Perspective}}.
\newblock Cambridge University Press, 2014.

\bibitem[Sun et~al.(2024)Sun, He, Miao, and Zou]{sun_policy_2024}
Zhongchang Sun, Sihong He, Fei Miao, and Shaofeng Zou.
\newblock Policy {Optimization} for {Robust} {Average} {Cost} {MDPs}.
\newblock In \emph{Advances in {Neural} {Information} {Processing} {Systems}},
  2024.

\bibitem[Sutton and Barto(2018)]{sutton_reinforcement_2020}
Richard~S. Sutton and Andrew~G. Barto.
\newblock \emph{Reinforcement {Learning}: {An} {Introduction}}.
\newblock The MIT Press, second edition, 2018.

\bibitem[Sutton et~al.(1999)Sutton, McAllester, Singh, and
  Mansour]{sutton_policy_1999}
Richard~S. Sutton, David McAllester, Satinder Singh, and Yishay Mansour.
\newblock Policy {Gradient} {Methods} for {Reinforcement} {Learning} with
  {Function} {Approximation}.
\newblock In \emph{Advances in {Neural} {Information} {Processing} {Systems}},
  1999.

\bibitem[Tropp(2012)]{tropp_user-friendly_2012}
Joel~A. Tropp.
\newblock User-{Friendly} {Tail} {Bounds} for {Sums} of {Random} {Matrices}.
\newblock \emph{Foundations of Computational Mathematics}, 12\penalty0
  (4):\penalty0 389--434, 2012.
\newblock \doi{10.1007/s10208-011-9099-z}.

\bibitem[Wang et~al.(2024)Wang, Blanchet, and Glynn]{wang_optimal_2024}
Shengbo Wang, Jose Blanchet, and Peter Glynn.
\newblock Optimal {Sample} {Complexity} for {Average} {Reward} {Markov}
  {Decision} {Processes}, 2024.
\newblock arXiv:2310.08833.

\bibitem[Xiao(2022)]{xiao_convergence_2022}
Lin Xiao.
\newblock On the {Convergence} {Rates} of {Policy} {Gradient} {Methods}.
\newblock \emph{Journal of Machine Learning Research}, 2022.

\bibitem[Yu(1997)]{yu_assouad_1997}
Bin Yu.
\newblock Assouad, {Fano}, and {Le} {Cam}.
\newblock In David Pollard, Erik Torgersen, and Grace~L. Yang, editors,
  \emph{Festschrift for {Lucien} {Le} {Cam}}, pages 423--435. Springer, 1997.

\bibitem[Yuan et~al.(2023)Yuan, Du, Gower, Lazaric, and Xiao]{yuan_linear_2023}
Rui Yuan, Simon~S. Du, Robert~M. Gower, Alessandro Lazaric, and Lin Xiao.
\newblock Linear {Convergence} of {Natural} {Policy} {Gradient} {Methods} with
  {Log}-{Linear} {Policies}, 2023.
\newblock arXiv:2210.01400.

\bibitem[Zamir and Hou(2024)]{zamir_deep_2024}
Nida Zamir and I.-Hong Hou.
\newblock Deep {Index} {Policy} for {Multi}-{Resource} {Restless} {Matching}
  {Bandit} and {Its} {Application} in {Multi}-{Channel} {Scheduling}, 2024.
\newblock arXiv:2408.07205.

\bibitem[Zhan et~al.(2023)Zhan, Cen, Huang, Chen, Lee, and
  Chi]{zhan_policy_2023}
Wenhao Zhan, Shicong Cen, Baihe Huang, Yuxin Chen, Jason~D. Lee, and Yuejie
  Chi.
\newblock Policy {Mirror} {Descent} for {Regularized} {Reinforcement}
  {Learning}: {A} {Generalized} {Framework} with {Linear} {Convergence}, 2023.
\newblock arXiv:2105.11066.

\end{thebibliography}

\appendix
\setcounter{theorem}{0}

\section*{Supplementary Material}
\paragraph{Organization.} This appendix contains detailed related-work comparisons, deferred proofs, and additional technical details. Appendix~\ref{ap:related-work} provides detailed comparisons to closely related work. Appendix~\ref{ap:preliminaries} collects the background facts of Section~\ref{sec:preliminaries}, with references, and proves Lemma~\ref{lem:master}. Appendix~\ref{ap:proofs-tabular} contains the convergence and sample complexity proofs for the tabular case. Appendix~\ref{ap:proofs-non-tabular} contains the convergence and sample complexity proofs under linear function approximation. Appendix~\ref{ap:mixing-assumption-details} proves Lemma~\ref{lem:uniform-mix} and provides additional details on mixing times. Appendix~\ref{ap:numerical-experiments} presents numerical experiments.

\section{Detailed related work} \label{ap:related-work}

This appendix expands on the landscape summarized in Section~\ref{sec:related-work}. We discuss three closest comparators in Appendix~\ref{ap:rw-closest} (Li et al., Sun et al., Lee and Ryu) and broader policy-optimization lines in Appendix~\ref{ap:rw-broader}. Each entry uses the same two-part format: a statement of the work's results, followed by a comparison to ours.

\subsection{Closest comparators} \label{ap:rw-closest}

\paragraph{Li, Wu, Lan \citep{li_stochastic_2024}.}
\emph{Results.} \citet{li_stochastic_2024} analyze stochastic policy mirror descent for average-reward MDPs. Their Theorem~1 gives an $O(1/\sqrt{K})$ rate for a uniformly random iterate of the \emph{unregularized} objective with a constant step size. Under strongly convex regularization, their Theorem~2 gives $O(1/K)$ with decaying step sizes, and their Theorem~3, with a constant step size, bounds the expected divergence of the last iterate from the regularized optimum by a geometrically decaying term, an $\widetilde O(1/K)$ stochastic term, and a bias floor. With a generative model, Corollary~1 translates Theorem~1 into a tabular sample-complexity guarantee of $\widetilde O(|\mathcal S||\mathcal A|\,t_{\mathrm{mix}}^3/\varepsilon^2)$ in expectation. In the unregularized LFA case, Corollary~3 uses a single trajectory and runs $\widetilde O(\kappa^2/\varepsilon^2)$ actor iterations, where $\kappa^2$ bounds the second moment of the critic outputs (their Assumption~3), each with an EVRTD critic call whose bias control costs $\widetilde O(t_{\mathrm{mix}}^3/\bar\mu)$ transitions ($\bar\mu$ is their lower bound, over all policies, on the smallest eigenvalue of the feature covariance), for $\widetilde O(t_{\mathrm{mix}}^3\kappa^2/(\bar\mu\,\varepsilon^2))$ in total; their bound $\kappa^2 \lesssim t_{\mathrm{mix}}^2$, up to conditioning and approximation terms, gives an effective $t_{\mathrm{mix}}^5$ mixing dependence. Its conditioning term divides by their smallest exploratory state--action occupancy over all policies (their (4.22) and Corollary~3), which is at most $1/(|\mathcal S||\mathcal A|)$, so their stated bound contains a factor of at least $|\mathcal S||\mathcal A|$, the value Table~\ref{tab:comparison} lists.

\emph{Comparison.} Our analysis is complementary along three axes. First, we obtain linear convergence to the optimum up to critic error, without external regularization, using geometric step-size growth rather than constant steps (Theorem~\ref{thm:avg-pmd-linear-inexact-limsup}, Corollary~\ref{cor:exact-pmd}, Theorem~\ref{thm:non-tabular-linear-convergence}); the geometric term of their Theorem~3 concerns the regularized objective. Second, with the same generative-model oracle as their Corollary~1, our tabular sample complexity improves their $|\mathcal S||\mathcal A|\,t_{\mathrm{mix}}^3$ dependence to $|\mathcal S|\,t_{\mathrm{mix}}^3+|\mathcal S||\mathcal A|\,t_{\mathrm{mix}}^2$ (Theorem~\ref{thm:pmd_sample_complexity}): our critic pays $t_{\mathrm{mix}}^3$ per state, for the rollouts that estimate $h^\pi$, and only $t_{\mathrm{mix}}^2$ per state--action pair (Section~\ref{sec:sample-complexity-tabular}). Our LFA sample complexity, also on a single trajectory, has $t_{\mathrm{mix}}^3$ rather than their $t_{\mathrm{mix}}^5$ dependence, through the trade-off described in Obstacle~2 of Section~\ref{sec:sample-complexity-non-tabular} (Theorem~\ref{thm:non_tabular_pmd_sample_complexity_generative}): our linear-rate actor needs only $\widetilde O(\bar\chi)$ critic calls, each made accurate with high probability by tail-averaged SGD with confidence boosting on $\widetilde O(d\,t_{\mathrm{mix}}^2/\varepsilon_{\mathrm{stat}}^2)$ rollout labels when $\varepsilon_{\mathrm{app}}\le\varepsilon_{\mathrm{stat}}$, up to a lower-order term (Lemma~\ref{lem:ap-w-sample-complexity}), costing $\widetilde O(t_{\mathrm{mix}})$ transitions each, whereas their sublinear actor needs $\widetilde O(\kappa^2/\varepsilon^2)$ bias-controlled calls costing $\widetilde O(t_{\mathrm{mix}}^3/\bar\mu)$ transitions each. Our labels pay the same quadratic second moment as their $\kappa^2$, but inside the critic. Our bound carries no such occupancy factor. Both our sample complexities, with the same oracles as theirs, are also high-probability and last-iterate rather than in expectation for a random iterate (Theorems~\ref{thm:pmd_sample_complexity}, \ref{thm:non_tabular_pmd_sample_complexity_generative}). Third, Propositions~\ref{prop:tabular_lower_bound} and~\ref{prop:nontabular_lower_bound} establish that both critics are optimal up to logarithmic factors, the LFA one in its leading term.

\paragraph{Sun, He, Miao, Zou \citep{sun_policy_2024}.}
\emph{Results.} \citet{sun_policy_2024} study robust PMD for ergodic average-cost MDPs under rectangular transition uncertainty, with Bregman divergence specialized to squared Euclidean. Their Theorem~4.6 establishes linear convergence at rate $(1 - 1/M_0)^t$ for an increasing step-size schedule $\eta_t/\eta_{t-1} \geq M_0^2/(M_0 - 1)$, where $M_0 := \sup_{\pi \in \Pi} \|d^{\pi^\star}/d^\pi\|_\infty$ is the global worst-case distribution mismatch. Specialized to robustness radius $R = 0$, this becomes a linear-convergence result for standard (non-robust) ergodic exact PMD.

\emph{Comparison.} At $R = 0$ their result partially overlaps with the exact case ($\varepsilon_Q = 0$) of our Theorem~\ref{thm:avg-pmd-linear-inexact-limsup}. Our analysis admits any upper bound $\bar\chi$ on $\sup_t \chi_t$ (with $\sup_t \chi_t \leq M_0 \leq \underline d^{-1}$), and tighter $\bar\chi$ yield strictly sharper rates than Sun's worst-case $M_0$ in path-dependent regimes. Our analysis holds for general Legendre-Bregman divergences, not only squared Euclidean. Taking $\bar\chi = M_0$ matches Sun's rate $1-1/M_0$ at growth ratio $M_0/(M_0-1)$, a factor $M_0$ milder than Sun's $M_0^2/(M_0-1)$; tighter $\bar\chi$ trade larger growth ratios for faster path-dependent rates, a trade-off Sun's worst-case analysis cannot expose. Our Corollary~\ref{cor:exact-pmd} additionally provides an asymptotic rate $\limsup_t \Delta_t^{1/t} \leq 1 - 1/\chi$ with a superlinear regime when $\chi = 1$. Finally, we extend the analysis to inexact critics (Theorem~\ref{thm:avg-pmd-linear-inexact-limsup}), linear function approximation (Theorem~\ref{thm:non-tabular-linear-convergence}), and end-to-end sample complexity with matching information-theoretic lower bounds on the critic's sample complexity (Propositions~\ref{prop:tabular_lower_bound},~\ref{prop:nontabular_lower_bound}). Sun's framework in turn handles rectangular robust uncertainty, which ours does not.

\paragraph{Lee and Ryu \citep{lee_policy_2026}.}
\emph{Results.} Concurrent work by \citet{lee_policy_2026} extends policy gradient analysis to multichain average-reward MDPs, a strict generalization of our ergodic assumption. They introduce recurrent and transient visitation measures, establish a multichain policy gradient theorem, and analyze an $\alpha$-clipped policy mirror ascent algorithm restricted to the set $\Pi_\alpha := \{\pi : \pi(a\mid s) \geq \alpha\}$. Their Theorem~10 establishes linear convergence of the optimality gap with respect to the best $\alpha$-clipped policy. For any fixed $\alpha>0$ this leaves a clipping gap to the positive-policy optimum $J^\star_{+,\mu} := \sup_{\pi \in \Pi_+} \mathbb E_{s \sim \mu}[J^\pi(s)]$, which is nonzero in general and vanishes as $\alpha \to 0$; here $\Pi_+$ is the simplex interior. This restriction is justified by the discontinuity of $\pi \mapsto J^\pi$ on the simplex boundary in multichain MDPs. Their Theorem~13 gives sample complexity $\widetilde O(t_{\mathrm{tar}}^3\, \|Q\|_\infty^4\, R^3\, C_\alpha^6\, B_\alpha^6\, |\mathcal S||\mathcal A|/\varepsilon^6)$ under an adaptive step-size schedule (the constant step-size variant has a worse $\varepsilon^{-12}$ dependence).

\emph{Comparison.} Their axis of generalization (ergodic $\to$ multichain) and ours (sharper ergodic analysis) are complementary. Their sample complexity is $\widetilde O(\varepsilon^{-6})$ versus our $\widetilde O(\varepsilon^{-2})$ (with the critic's sample complexity tight by Proposition~\ref{prop:tabular_lower_bound}), and their optimality notion is positive-policy rather than global. Within the ergodic setting, we additionally obtain asymptotic and superlinear rates (Corollary~\ref{cor:exact-pmd}) and linear function approximation results (Section~\ref{sec:non-tabular}), neither of which the multichain analysis addresses. Concurrent work by \citet{lee_learning_2026} extends this $\alpha$-clipped scheme to a single trajectory in weakly communicating MDPs, with high-probability tabular sample complexities of $\widetilde O(\varepsilon^{-4})$ for that policy-based method and $\widetilde O(\varepsilon^{-2})$ for a value-based one. Their bounds are stated through hitting, cover, and target times; on a single trajectory, ours give a policy-based $\widetilde O(\varepsilon^{-2})$ sample complexity with high probability under linear function approximation, with the leading term of the critic's sample complexity tight by Proposition~\ref{prop:nontabular_lower_bound}.

\subsection{Broader policy-optimization lines} \label{ap:rw-broader}

The remaining works address average-reward policy optimization in parallel directions (sublinear rates, general parameterization, or infinite-state extensions). We use a uniform single-paragraph format below.

\paragraph{Kumar et al.\ \citep{kumar_global_2024}.}
\emph{Results.} \citet{kumar_global_2024} establish a global $O(1/T)$ suboptimality rate for projected policy gradient in ergodic tabular average-reward MDPs. \emph{Comparison.} Their algorithm (projected PG in Euclidean geometry) and sublinear rate differ from our linear PMD result (Theorem~\ref{thm:avg-pmd-linear-inexact-limsup} with $\varepsilon_Q=0$) under general Bregman geometry.

\paragraph{Murthy and Srikant \citep{murthy_convergence_2023}, Murthy et al.\ \citep{murthy_performance_2023}.}
\emph{Results.} \citet{murthy_convergence_2023} establish $O(1/T)$ convergence for natural policy gradient and mirror-descent-like methods in ergodic average-reward MDPs under irreducibility, with discussion of non-tabular settings. \citet{murthy_performance_2023} develop approximate-policy-iteration performance bounds with step-size-dependent tradeoffs (their Remark~4.6 notes that larger steps reduce some terms while increasing others, making the overall effect non-transparent). For a mirror-descent update with a TD critic and linear function approximation (their Corollary~4.5), the bound keeps a term proportional to $\log(1/\omega)/\beta$, where $\beta$ is the fixed step size and $\omega$ the smallest probability the iterates assign to an optimal action, so it does not vanish as the critic becomes exact. \emph{Comparison.} Our master-lemma analysis (Lemma~\ref{lem:master}) makes the step-size effect explicit and upgrades the $O(1/T)$ rate of \citet{murthy_convergence_2023} to linear via geometric step-size growth. Our bound has no such term: with growing step sizes, the error floor of Theorem~\ref{thm:non-tabular-linear-convergence} vanishes with the critic error.

\paragraph{Bai et al.\ \citep{bai_regret_2024}.}
\emph{Results.} \citet{bai_regret_2024} analyze general-parameterized policy gradient for average-reward MDPs, obtaining $\widetilde O(T^{-1/4})$ optimality error and $\widetilde O(T^{3/4})$ regret. \emph{Comparison.} Their work is broader in parameterization but weaker in rate; our PMD analysis is restricted to tabular and linear function approximation but yields linear convergence.

\paragraph{Ganesh et al.\ \citep{ganesh_order-optimal_2025, ganesh_sharper_2025}.}
\emph{Results.} \citet{ganesh_order-optimal_2025} improve regret for general-parameterized average-reward policy gradient via variance-reduction techniques, achieving near order-optimal $\widetilde O(\sqrt{T})$ rates. \citet{ganesh_sharper_2025} develop a multi-level-Monte-Carlo natural actor-critic attaining $\widetilde O(T^{-1/2})$ global convergence without requiring knowledge of mixing or hitting times. \emph{Comparison.} Both works are broader in parameterization than ours but do not establish linear rates or the $t_{\mathrm{mix}}^3$ mixing dependence we obtain.

\paragraph{Grosof et al.\ \citep{grosof_convergence_2025}.}
\emph{Results.} \citet{grosof_convergence_2025} establish $O(1/\sqrt{T})$ global convergence for natural policy gradient in a family of countably-infinite-state queueing MDPs, initialized from the MaxWeight policy. \emph{Comparison.} Extends NPG beyond finite state spaces; complementary to our finite-state PMD analysis.

\section{Preliminaries} \label{ap:preliminaries}

\subsection{Existence of the Poisson equation and differential values} \label{ap:poisson-existence}

\begin{fact}[Poisson equation and differential values]
\label{fact:ap-poisson}
For each policy $\pi$, there exist a scalar $\rho(\pi)$
and a function $h^\pi:\mathcal S\to\mathbb R$ such that
\begin{equation}
\label{eq:ap-poisson}
\rho(\pi) + h^\pi \;=\; c_\pi + P_\pi h^\pi.
\end{equation}
The pair $(\rho(\pi),h^\pi)$ is unique up to adding a constant to $h^\pi$. Throughout the paper, we fix the \emph{centered} convention $d^\pi h^\pi = 0$, under which $h^\pi(s) = \lim_{H\to\infty} \mathbb{E}[\sum_{k=0}^{H-1}(c(s_k, a_k) - \rho(\pi)) \mid s_0 = s]$ and $Q^\pi(s,a) = c(s,a) - \rho(\pi) + \mathbb{E}[h^\pi(s_1) \mid s_0 = s, a_0 = a]$.
\end{fact}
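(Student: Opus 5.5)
We prove Fact~\ref{fact:ap-poisson} with standard finite-state Markov chain arguments, using Assumption~\ref{as:ergodic} and the geometric convergence $P_\pi^k\to \mathbf 1 d^\pi$. It provides the uniform decay~\eqref{eq:tmix-decay}; equivalently, aperiodicity and irreducibility give $\|P_\pi^k-\mathbf 1 d^\pi\|_\infty\le C\beta^k$ for some $\beta<1$.

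\textbf{Step 1: definition and Poisson equation.} Set $\rho(\pi):=d^\pi c_\pi$, where $c_\pi(s):=\sum_a\pi(a\mid s)c(s,a)$; this agrees with the definition of $\rho$ in Section~\ref{sec:preliminaries}. Define
\[
h^\pi:=\sum_{k\ge0}\bigl(P_\pi^k-\mathbf 1 d^\pi\bigr)c_\pi ,
\]
which converges absolutely by geometric mixing. Since $(P_\pi^k-\mathbf 1d^\pi)c_\pi=P_\pi^kc_\pi-\rho(\pi)\mathbf 1$, the partial sums are exactly $\mathbb E[\sum_{k<H}(c(s_k,a_k)-\rho(\pi))\mid s_0=s]$, so the limit representation of $h^\pi$ holds. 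Using $d^\pi P_\pi=d^\pi$ gives $d^\pi P_\pi^k=d^\pi$, hence $d^\pi(P_\pi^k-\mathbf 1d^\pi)=0$ and therefore $d^\pi h^\pi=0$. Next, $P_\pi\mathbf 1 d^\pi=\mathbf 1 d^\pi$, so $P_\pi h^\pi=\sum_{k\ge1}(P_\pi^k-\mathbf 1d^\pi)c_\pi$. Subtracting, $h^\pi-P_\pi h^\pi$ reduces to the $k=0$ term $(I-\mathbf 1d^\pi)c_\pi=c_\pi-\rho(\pi)\mathbf 1$, which is \eqref{eq:ap-poisson}.

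\textbf{Step 2: uniqueness.} Suppose $(\rho',h')$ also solves \eqref{eq:ap-poisson}. Multiplying on the left by $d^\pi$ and using $d^\pi P_\pi = d^\pi$ forces $\rho'=d^\pi c_\pi=\rho(\pi)$. The difference $g:=h'-h^\pi$ then satisfies $g=P_\pi g$, so $g=P_\pi^kg\to\mathbf 1(d^\pi g)$; hence $g$ is constant. This constant is $0$ under the centering $d^\pi h'=0$. This step is the main (mild) point, and it relies only on ergodicity.

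\textbf{Step 3: the $Q$ identity.} Condition on $(s_0,a_0)=(s,a)$ in \eqref{eq:diff-Q-rolling}. The $k=0$ term contributes $c(s,a)-\rho(\pi)$. The remaining sum, by the Markov property, equals $\sum_{s'}P(s'\mid s,a)\,\mathbb E[\sum_{k<H-1}(c_k-\rho(\pi))\mid s_0=s']$. This is a finite sum of terms that each converge to $h^\pi(s')$ by Step~1, so the limit interchange is trivial. This yields $Q^\pi(s,a)=c(s,a)-\rho(\pi)+\mathbb E[h^\pi(s_1)\mid s_0=s,a_0=a]$.
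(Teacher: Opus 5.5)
Your proposal is correct and follows essentially the paper's argument: the same series $\sum_{k\ge0}P_\pi^k\bigl(c_\pi-\rho(\pi)\mathbf 1\bigr)$, the same telescoping and centering checks, and the same first-transition conditioning for $Q^\pi$. The only difference is that you prove existence and uniqueness directly, with uniqueness via $g=P_\pi^k g\to\mathbf 1(d^\pi g)$, where the paper cites Puterman and Li et al. for them, so your version is self-contained.
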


\begin{proof}
Fix a policy $\pi$, let $\mathbf 1 \in \mathbb{R}^{|\St|}$ denote the all-ones vector, and write $g \coloneqq c_\pi - \rho(\pi)\mathbf 1$. For an ergodic chain,~\eqref{eq:ap-poisson} has a solution, unique up to adding a constant to $h^\pi$; see, e.g., \citet{puterman_markov_1994} and \citet[Section~2]{li_stochastic_2024}. Applying $d^\pi$ to~\eqref{eq:ap-poisson} and using $d^\pi P_\pi = d^\pi$ gives $\rho(\pi) = d^\pi c_\pi$, so $d^\pi g = 0$, and the centered convention $d^\pi h^\pi = 0$ selects a unique $h^\pi$.

Finally, we verify the limit representation. Under Assumption~\ref{as:ergodic}, $P_\pi^k \to \mathbf 1 d^\pi$ geometrically fast (Lemma~\ref{lem:per-policy-mixing}), and $d^\pi g = 0$, so $P_\pi^k g = (P_\pi^k - \mathbf 1 d^\pi)\,g \to 0$ geometrically and $\tilde h \coloneqq \sum_{k\ge 0} P_\pi^k g$ converges. Telescoping gives $(I-P_\pi)\tilde h = g$, and $d^\pi\tilde h = \sum_{k\ge 0} d^\pi P_\pi^k g = \sum_{k\ge 0} d^\pi g = 0$, so $\tilde h = h^\pi$ by uniqueness under the centered convention. Since $(P_\pi^k g)(s) = \mathbb{E}[c(s_k,a_k)-\rho(\pi)\mid s_0=s]$, this is the stated limit for $h^\pi$; the expression for $Q^\pi$ follows by conditioning on the first transition in~\eqref{eq:diff-Q-rolling}.
\end{proof}

\subsection{Fact~\ref{fact:avg-pdl} (Performance difference lemma)}

Fact~\ref{fact:avg-pdl} is the unregularized case of \citet[Lemma~2]{li_stochastic_2024}. In our conventions it reads, for any policies $\pi,\pi'$ under Assumption~\ref{as:ergodic},
\begin{equation}
\label{eq:ap-avg-pdl-qform}
    \rho(\pi') - \rho(\pi)
    \;=\;
    \E_{s\sim d^{\pi'}}
    \Bigl[
        \bigl\langle Q^\pi_s,\,\pi'_s - \pi_s\bigr\rangle
    \Bigr].
\end{equation}

\begin{proof}
By the definition of $Q^\pi$ and~\eqref{eq:ap-poisson}, $\langle Q^\pi_s, \pi_s\rangle = h^\pi(s)$ and $\langle Q^\pi_s, \pi'_s\rangle = \langle c_s, \pi'_s\rangle - \rho(\pi) + \E[h^\pi(s_1) \mid s_0 = s, a_0 \sim \pi'_s]$. Subtracting, averaging over $s \sim d^{\pi'}$, and using the stationarity of $d^{\pi'}$ under $P_{\pi'}$ cancels the $h^\pi$ terms; $\rho(\pi') = \E_{s\sim d^{\pi'}}[\langle c_s, \pi'_s\rangle]$ then gives~\eqref{eq:ap-avg-pdl-qform}.
\end{proof}

\subsection{Fact~\ref{fact:avg-policy-gradient} (Policy gradient theorem)}

Fact~\ref{fact:avg-policy-gradient} is the average-reward policy gradient theorem \citep[Theorem~1]{sutton_policy_1999} for the direct parameterization; see also the unregularized case of \citet[Lemma~1]{li_stochastic_2024}. It also follows from Fact~\ref{fact:avg-pdl}: perturbing $\pi_s$ alone to $\pi_s + \varepsilon u \in \mathcal P(\mathcal A)$ with $\sum_a u(a) = 0$ gives a policy $\pi_\varepsilon$ with $\rho(\pi_\varepsilon) - \rho(\pi) = \varepsilon\, d^{\pi_\varepsilon}(s)\, \langle u, Q^\pi_s\rangle$, and $d^{\pi_\varepsilon}(s) \to d^\pi(s)$ as $\varepsilon \to 0$ by the continuity of $\pi \mapsto d^\pi$ shown in the proof of Lemma~\ref{lem:uniform-d-bound}.

\subsection{Fact~\ref{fact:three-point} (Three-point inequality for PMD)}
Fact~\ref{fact:three-point} is the three-point descent lemma of \citet[Lemma~6]{xiao_convergence_2022}, applied on $\mathcal P(\mathcal A)$ with the linear function $\varphi(q) = \eta_t\langle \tilde Q^t_s, q\rangle$ and reference point $\pi^t_s$: for every $p \in \mathcal P(\mathcal A)$ it gives $\varphi(\pi^{t+1}_s) + D(\pi^{t+1}_s, \pi^t_s) \le \varphi(p) + D(p, \pi^t_s) - D(p, \pi^{t+1}_s)$, which rearranges to the stated inequality. The lemma also keeps $\pi^{t+1}_s$ in the relative interior of the domain of $F$ whenever $\pi^t_s$ is, so it applies at every iteration from an interior $\pi^0$.

\subsection{Proof of Lemma~\ref{lem:uniform-d-bound} (Uniform boundedness of stationary distributions)}

\begin{proof}
Fix $\pi$. By irreducibility, $d^\pi(s) > 0$ for every $s$: if $d^\pi(s_0) = 0$, stationarity, $0 = \sum_s d^\pi(s) P_\pi(s_0 \mid s)$, forces $d^\pi(s) = 0$ wherever $P_\pi(s_0 \mid s) > 0$, and iterating along the chain gives $d^\pi \equiv 0$, a contradiction. Next, $\pi \mapsto d^\pi$ is continuous: $\pi \mapsto P_\pi$ is continuous, and $d^\pi$ is the unique solution of the linear system obtained from $d(P_\pi - I) = 0$ by replacing its last equation with $d\,\mathbf 1 = 1$; since $1$ is a simple eigenvalue of the ergodic matrix $P_\pi$, this system is nonsingular, and matrix inversion is continuous. Finally, $\Pi$ is compact, so the continuous function $(\pi, s) \mapsto d^\pi(s)$ attains its minimum on $\Pi \times \mathcal S$, which is positive; this minimum is $\underline d$. Appendix~\ref{ap:mixing-assumption-details} derives the explicit lower bound under a uniform minorization condition.
\end{proof}

\subsection{Proof of Lemma~\ref{lem:master} (Master convergence for inexact PMD)} \label{ap:master-lemma}

\begin{proof}
The proof has two parts. We first derive a one-step Lyapunov recursion that holds whenever the step-size and mismatch conditions are satisfied at time $t$ with some parameter $\bar\chi > 1$; then we iterate this recursion from $t = 0$ to obtain~(i), and iterate it on a tail with $\bar\chi = \chi + \zeta$ (for any fixed $\zeta > 0$) to obtain~(ii).

\paragraph{One-step recursion.}
Fix $t \geq 0$ and suppose $\chi_{t+1} \leq \bar\chi$ and $\eta_t/\eta_{t-1} \geq \bar\chi/(\bar\chi - 1)$ for some $\bar\chi > 1$. Applying Fact~\ref{fact:three-point} with $p = \pi^\star_s$ to the generalized PMD update~\eqref{eq:pmd-update} and dropping the nonnegative term $D(\pi^{t+1}_s, \pi^t_s) \geq 0$ gives
$\eta_t \langle \tilde Q^t_s, \pi^{t+1}_s - \pi^\star_s\rangle \leq D(\pi^\star_s, \pi^t_s) - D(\pi^\star_s, \pi^{t+1}_s)$.
Decompose $\pi^{t+1}_s - \pi^\star_s = (\pi^{t+1}_s - \pi^t_s) + (\pi^t_s - \pi^\star_s)$ and take expectation over $s \sim d^\star$:
\begin{equation} \label{eq:ap-master-step1}
\eta_t\,\mathbb{E}_{s\sim d^\star}\!\bigl[\langle \tilde Q^t_s, \pi^t_s - \pi^\star_s\rangle\bigr] + \eta_t\,\mathbb{E}_{s\sim d^\star}\!\bigl[\langle \tilde Q^t_s, \pi^{t+1}_s - \pi^t_s\rangle\bigr] \leq D_t^\star - D_{t+1}^\star.
\end{equation}
By Fact~\ref{fact:avg-pdl} and the definition of $B_t$,
$\mathbb{E}_{s\sim d^\star}[\langle \tilde Q^t_s, \pi^t_s - \pi^\star_s\rangle] \geq \Delta_t - B_t$.
By~\eqref{eq:pmd-sign} the integrand $\langle \tilde Q^t_s, \pi^{t+1}_s - \pi^t_s\rangle$ is pointwise nonpositive, so the change-of-measure ratio $d^\star(s)/d^{t+1}(s) \leq \bar\chi$ gives $\mathbb{E}_{s\sim d^\star}[\langle \tilde Q^t_s, \pi^{t+1}_s - \pi^t_s\rangle] \geq \bar\chi\,\mathbb{E}_{s\sim d^{t+1}}[\langle \tilde Q^t_s, \pi^{t+1}_s - \pi^t_s\rangle] \geq \bar\chi\bigl((\Delta_{t+1} - \Delta_t) - A_t\bigr)$, where the second inequality uses Fact~\ref{fact:avg-pdl} and the definition of $A_t$. Substituting both into~\eqref{eq:ap-master-step1} and rearranging,
\[
\Delta_{t+1} + \frac{D_{t+1}^\star}{\eta_t \bar\chi} \leq (1 - 1/\bar\chi)\Delta_t + \frac{D_t^\star}{\eta_t \bar\chi} + \frac{\bar\chi A_t + B_t}{\bar\chi}.
\]
The step-size growth condition gives $1/(\eta_t \bar\chi) \leq (1 - 1/\bar\chi) \cdot 1/(\eta_{t-1}\bar\chi)$, so defining the Lyapunov sequence $\Psi_t := \Delta_t + D_t^\star/(\eta_{t-1}\bar\chi)$ yields the one-step recursion
\begin{equation} \label{eq:ap-master-onestep}
\Psi_{t+1} \leq (1 - 1/\bar\chi)\,\Psi_t + \frac{\bar\chi A_t + B_t}{\bar\chi}.
\end{equation}

\paragraph{Part (i): non-asymptotic rate.}
Under the hypotheses of~(i), the one-step recursion~\eqref{eq:ap-master-onestep} holds at every $t \geq 0$ with $\bar\chi$ the global uniform bound. Using $A_t \leq \bar A$, $B_t \leq \bar B$ and iterating from $t = 0$,
\[
\Psi_t \leq (1 - 1/\bar\chi)^t \Psi_0 + \frac{\bar\chi \bar A + \bar B}{\bar\chi}\sum_{i=0}^{t-1}(1 - 1/\bar\chi)^i = (1 - 1/\bar\chi)^t \Psi_0 + (\bar\chi \bar A + \bar B)\bigl(1 - (1 - 1/\bar\chi)^t\bigr).
\]
Since $\Delta_t \leq \Psi_t$, subtracting $\bar\chi \bar A + \bar B$ and taking the positive part yields~\eqref{eq:master-nonasymp}.

\paragraph{Part (ii): asymptotic rate.}
Let $\chi := \limsup_t \chi_t$ and fix $\zeta > 0$, setting $\hat\chi := \chi + \zeta$. By the $\limsup$ definition, $\chi_{t+1} \leq \hat\chi$ for all $t$ beyond some $t_0(\zeta)$. By $\eta_t/\eta_{t-1} \to \infty$, there exists $t_1(\zeta)$ such that $\eta_t/\eta_{t-1} \geq \hat\chi/(\hat\chi - 1)$ for $t \geq t_1(\zeta)$. Set $t_\zeta := \max\{t_0(\zeta), t_1(\zeta)\}$; both hypotheses of the one-step recursion hold with $\bar\chi = \hat\chi$ for all $t \geq t_\zeta$. Iterating~\eqref{eq:ap-master-onestep} from $t_\zeta$ gives
\[
\Psi_t \leq (1 - 1/\hat\chi)^{t - t_\zeta}\, \Psi_{t_\zeta} + (\hat\chi \bar A + \bar B)\bigl(1 - (1 - 1/\hat\chi)^{t - t_\zeta}\bigr) \qquad \forall\, t \geq t_\zeta.
\]
Hence $(\Delta_t - (\hat\chi \bar A + \bar B))_+ \leq (1 - 1/\hat\chi)^{t - t_\zeta}\, \Psi_{t_\zeta}$. Here $\Psi_{t_\zeta} < \infty$: $D_0^\star < \infty$, and the one-step inequality, which holds before the growth condition is invoked, keeps every $D_t^\star$ finite. Taking $t$-th roots, $\limsup$, and noting $\Psi_{t_\zeta}^{1/t} \to 1$ yields~\eqref{eq:master-asymp}: $\limsup_{t\to\infty}(\Delta_t - ((\chi+\zeta) \bar A + \bar B))_+^{1/t} \leq 1 - 1/(\chi+\zeta)$. The same bound also gives $\limsup_t \Delta_t \leq \hat\chi \bar A + \bar B$; letting $\zeta \downarrow 0$ yields $\limsup_t \Delta_t \leq \chi \bar A + \bar B$. When $\bar A = 0$ the floor $\hat\chi \bar A + \bar B$ reduces to $\bar B$ independently of $\zeta$, so~\eqref{eq:master-asymp} becomes $\limsup_{t\to\infty}(\Delta_t - \bar B)_+^{1/t} \leq 1 - 1/(\chi+\zeta)$; letting $\zeta \downarrow 0$ gives the sharper rate $1 - 1/\chi$.
\end{proof}

\subsection{Constant step sizes} \label{ap:constant-step}
The one-step inequality in the proof of Lemma~\ref{lem:master} holds before the growth condition is invoked: if $\sup_t \chi_t \leq \bar\chi$, then for every $t \geq 0$,
\[
\Delta_{t+1} + \frac{D_{t+1}^\star}{\eta_t \bar\chi} \;\leq\; \Bigl(1 - \frac{1}{\bar\chi}\Bigr)\Delta_t + \frac{D_t^\star}{\eta_t \bar\chi} + \frac{\bar\chi A_t + B_t}{\bar\chi}.
\]
With a constant step size $\eta_t \equiv \eta$, multiplying by $\bar\chi$ and summing over $t = 0, \dots, T-1$ telescopes the divergence terms. Dropping $D_T^\star \geq 0$ and using $A_t \leq \bar A$, $B_t \leq \bar B$ gives
\[
\bar\chi \sum_{t=1}^{T} \Delta_t - (\bar\chi - 1) \sum_{t=0}^{T-1} \Delta_t \;\leq\; \frac{D_0^\star}{\eta} + T\,(\bar\chi \bar A + \bar B).
\]
The left side equals $\sum_{t=1}^{T} \Delta_t + (\bar\chi - 1)(\Delta_T - \Delta_0) \geq \sum_{t=1}^{T} \Delta_t - (\bar\chi - 1)\Delta_0$, so
\[
\frac{1}{T}\sum_{t=1}^{T} \Delta_t \;\leq\; \frac{D_0^\star}{\eta T} + \frac{(\bar\chi - 1)\,\Delta_0}{T} + \bar\chi \bar A + \bar B,
\]
an $O(1/T)$ rate for the average suboptimality of the iterates, and hence for the best iterate, to the same error floor as Lemma~\ref{lem:master}(i). In the exact case ($\bar A = \bar B = 0$), the descent property $\rho(\pi^{t+1}) \leq \rho(\pi^t)$ of exact PMD (Section~\ref{sec:preliminaries}) makes $\Delta_t$ nonincreasing, so the same bound holds for the last iterate $\Delta_T$.

\paragraph{What the growth condition buys.} The growth condition is what makes the contraction of Lemma~\ref{lem:master}(i) uniform: rate $1 - 1/\bar\chi$ from $t = 0$ on every instance, with no dependence on advantage gaps. In the proof of Lemma~\ref{lem:master}, this requires the term $D_t^\star/(\eta_{t-1}\bar\chi)$ to contract along with $\Delta_t$, which the growth condition provides. Constant steps can still converge linearly, but at an instance-dependent rate: in a two-armed bandit, constant-step KL-PMD shrinks the odds of the suboptimal action by the same factor at every step, a factor set by $\eta$ and the gap between the two actions' costs. On every instance, the cumulative bound of Proposition~\ref{prop:ap-exact-geometries}(i) below gives a linear rate for constant step sizes, with an instance-dependent constant. The $\chi$-adaptive rate of Lemma~\ref{lem:master}(ii) and the superlinear regime of Corollary~\ref{cor:exact-pmd} also rest on growing step sizes.

\subsection{Exact PMD without a convergence assumption} \label{ap:exact-geometries}
Corollary~\ref{cor:exact-pmd} obtains a superlinear rate when $\{\pi^t\}$ converges to an optimal policy. This subsection removes that assumption for the KL divergence and the squared Euclidean distance. Both arguments rest on the optimal advantage $g^\star(s,a) := Q^{\pi^\star}(s,a) - h^{\pi^\star}(s)$, where $\pi^\star$ is any optimal policy.

\begin{lemma}[Optimal advantage] \label{lem:ap-optimal-advantage}
The optimal advantage $g^\star$ is nonnegative and the same for every optimal $\pi^\star$, and for every policy $\pi$,
\begin{equation} \label{eq:ap-gap-identity}
\rho(\pi) - \rho^\star \;=\; \mathbb{E}_{s\sim d^\pi}\bigl[\langle g^\star_s,\, \pi_s\rangle\bigr].
\end{equation}
In particular, $\pi$ is optimal if and only if each $\pi_s$ is supported on $\mathcal A^\star_s := \{a : g^\star(s,a) = 0\}$.
\end{lemma}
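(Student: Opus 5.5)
The plan is to obtain the identity~\eqref{eq:ap-gap-identity} directly from the performance difference lemma (Fact~\ref{fact:avg-pdl}), and then derive nonnegativity, invariance, and the support characterization from it.

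\textbf{The identity.} Apply Fact~\ref{fact:avg-pdl} with $\pi' = \pi$ and base policy $\pi^\star$:
\[
\rho(\pi) - \rho^\star = \mathbb{E}_{s\sim d^\pi}\bigl[\langle Q^{\pi^\star}_s, \pi_s - \pi^\star_s\rangle\bigr].
\]
By Fact~\ref{fact:ap-poisson}, $\langle Q^{\pi^\star}_s, \pi^\star_s\rangle = h^{\pi^\star}(s)$. Since $\pi_s$ is a probability vector, $h^{\pi^\star}(s) = \langle h^{\pi^\star}(s)\mathbf 1, \pi_s\rangle$. Substituting gives $\langle Q^{\pi^\star}_s - h^{\pi^\star}(s)\mathbf 1, \pi_s\rangle = \langle g^\star_s, \pi_s\rangle$, which is~\eqref{eq:ap-gap-identity}.

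\textbf{Nonnegativity.} Suppose $g^\star(s_0,a_0) < 0$ for some pair. Define $\pi$ to agree with $\pi^\star$ except at $s_0$, where it plays $a_0$ deterministically. For $s \ne s_0$ we have $\langle g^\star_s, \pi^\star_s\rangle = 0$ by definition of $h^{\pi^\star}$. The identity therefore gives $\rho(\pi) - \rho^\star = d^\pi(s_0)\, g^\star(s_0,a_0)$. By Lemma~\ref{lem:uniform-d-bound}, $d^\pi(s_0) > 0$, so $\rho(\pi) < \rho^\star$, contradicting optimality of $\pi^\star$.

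\textbf{Support characterization.} Since $g^\star \ge 0$, every term $\langle g^\star_s, \pi_s\rangle$ is nonnegative. Since $d^\pi(s) > 0$ for every $s$, $\rho(\pi) = \rho^\star$ holds if and only if $\langle g^\star_s, \pi_s\rangle = 0$ for all $s$. This in turn holds if and only if $\pi_s$ is supported on $\mathcal A^\star_s$.

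\textbf{Independence of the choice of $\pi^\star$.} This is the step needing the most care. Let $\pi^\star_1, \pi^\star_2$ be two optimal policies with advantages $g_1, g_2$. By the support characterization applied with $\pi^\star_1$ as the base, $\pi^\star_2$ is supported on $\{g_1 = 0\}$, so $\langle g_{1,s}, \pi^\star_{2,s}\rangle = 0$.

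Define $v(s) := h^{\pi^\star_1}(s) - h^{\pi^\star_2}(s)$. Using $Q^\pi(s,a) = c(s,a) - \rho(\pi) + P(\cdot\mid s,a)h^\pi$ and $\rho(\pi^\star_1) = \rho(\pi^\star_2)$,
\[
g_1(s,a) - g_2(s,a) = P(\cdot\mid s,a)\,v - v(s).
\]
Average this over $a \sim \pi^\star_{2,s}$. The left side is $0 - 0 = 0$, because $\langle g_{2,s}, \pi^\star_{2,s}\rangle = 0$ by definition. Hence $P_{\pi^\star_2} v = v$, so $v$ is harmonic for an irreducible chain and therefore constant. Then $P(\cdot\mid s,a)v - v(s) = 0$ for all $(s,a)$, which gives $g_1 = g_2$.

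The only subtle point is that $h$ is defined under the centered convention, so $h^{\pi^\star_1}$ and $h^{\pi^\star_2}$ may differ by a nonzero constant. This causes no problem, because $g$ depends on $h$ only through $P(\cdot\mid s,a)h - h(s)$, which is invariant to adding constants.
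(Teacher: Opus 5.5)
Your proof is correct and follows the paper's argument step for step. You derive the identity from the performance difference lemma with base policy $\pi^\star$, nonnegativity from a single-state deviation, the support characterization from $d^\pi > 0$, and invariance by showing that the difference of the two differential value functions is constant. Your step that $P_{\pi^\star_2} v = v$ forces $v$ to be constant is the paper's observation that $h^{\pi^\star}$ solves the other optimal policy's Poisson equation, combined with uniqueness up to constants, written out explicitly.
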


\begin{proof}
By~\eqref{eq:ap-poisson}, $\langle Q^{\pi^\star}_s, \pi^\star_s\rangle = h^{\pi^\star}(s)$, so $\langle g^\star_s, \pi^\star_s\rangle = 0$, and Fact~\ref{fact:avg-pdl} with $(\pi, \pi')$ replaced by $(\pi^\star, \pi)$ gives~\eqref{eq:ap-gap-identity}. If $g^\star(s_0, a_0) < 0$, the policy $\pi$ that plays $a_0$ at $s_0$ and agrees with $\pi^\star$ elsewhere has $\rho(\pi) - \rho^\star = d^\pi(s_0)\, g^\star(s_0, a_0) < 0$ by~\eqref{eq:ap-gap-identity} and Lemma~\ref{lem:uniform-d-bound}, contradicting the optimality of $\pi^\star$. Hence $g^\star \geq 0$, and since $d^\pi \geq \underline d > 0$, the right side of~\eqref{eq:ap-gap-identity} vanishes exactly when each $\pi_s$ is supported on $\mathcal A^\star_s$. Finally, if $\pi'$ is another optimal policy, then each $\pi'_s$ is supported on $\mathcal A^\star_s$, so $\langle Q^{\pi^\star}_s, \pi'_s\rangle = h^{\pi^\star}(s)$ for all $s$. As $\rho(\pi') = \rho^\star$, this says that $h^{\pi^\star}$ solves the Poisson equation~\eqref{eq:ap-poisson} of $\pi'$, so $h^{\pi^\star} - h^{\pi'}$ is a constant by Fact~\ref{fact:ap-poisson}. Then $Q^{\pi^\star} - Q^{\pi'}$ is the same constant, and $\pi'$ defines the same $g^\star$.
\end{proof}

\begin{proposition}[Exact PMD in the KL and squared Euclidean geometries] \label{prop:ap-exact-geometries}
Suppose not every policy is optimal. Let $\{\pi^t\}$ be generated by~\eqref{eq:pmd-update} with $\tilde Q^t = Q^t$, $\pi^0_s \in \operatorname{rint}(\mathcal P(\mathcal A))$ for all $s$, and nondecreasing step sizes $\eta_t > 0$. Let $g_{\min} := \min\{g^\star(s,a) : g^\star(s,a) > 0\}$ and $g_{\max} := \max_{s,a} g^\star(s,a)$.

\emph{(i) KL divergence.} For all sufficiently large $t$,
\[
\Delta_{t+1} \;\leq\; \frac{2 g_{\max}}{\underline d\, g_{\min}}\, e^{-\eta_t g_{\min}/2}\, \Delta_t,
\]
and there is a constant $C < \infty$ such that $\Delta_t \leq C \exp\bigl(-\tfrac{g_{\min}}{2}\sum_{j=0}^{t-1}\eta_j\bigr)$ for all $t \geq 0$. In particular, if $\eta_t \to \infty$, then $\Delta_{t+1}/\Delta_t \to 0$.

\emph{(ii) Squared Euclidean distance.} There exists $t_0$ such that $\pi^t = \pi^{t_0}$ for all $t \geq t_0$, and $\pi^{t_0}$ is optimal.
\end{proposition}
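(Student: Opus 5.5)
The plan rests on the gap identity~\eqref{eq:ap-gap-identity} of Lemma~\ref{lem:ap-optimal-advantage}. With $m_t(s) := \pi^t_s(\mathcal A\setminus\mathcal A^\star_s)$ it gives the two-sided bound
\[
\underline d\, g_{\min}\, m_t(s) \le \Delta_t \le g_{\max}\, \mathbb{E}_{s\sim d^t}[m_t(s)].
\]
By Corollary~\ref{cor:exact-pmd} and Lemma~\ref{lem:master}, $\Delta_t\to0$, so the suboptimal mass $m_t(s)\to0$ at every state. In both parts, the task is to show that the critic $Q^t$ eventually separates optimal from suboptimal actions by a margin of order $g_{\min}$, and then to read off the effect of one PMD step in each geometry.

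\emph{Step 1 (eventual gap separation).} I would write $Q^t_s = Q^{\pi^\star}_s + (Q^t_s - Q^{\pi^\star}_s)$ and bound the perturbation. The differential values are $h^t = \sum_k P_{\pi^t}^k(c_{\pi^t} - \rho(\pi^t))$, and uniform mixing (Lemma~\ref{lem:uniform-mix}) makes $\pi\mapsto Q^\pi$ Lipschitz up to constants. The case of multiple optimal policies is awkward, because $\pi^t$ need not converge. I would handle it by comparing with the optimal policy $\bar\pi^t$ obtained by projecting $\pi^t$ onto $\mathcal A^\star_s$. This gives $\|\pi^t-\bar\pi^t\|_1 \le 2\max_s m_t(s)\to0$. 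By Lemma~\ref{lem:ap-optimal-advantage}, $Q^{\bar\pi^t}$ equals $Q^{\pi^\star}$ up to an additive constant, and such a constant does not affect the PMD update on the simplex. Hence, modulo constants, $\|Q^t - Q^{\pi^\star}\|_\infty \to 0$. So for $t$ large, $Q^t(s,b)-Q^t(s,a)\ge g_{\min}/2$ whenever $a\in\mathcal A^\star_s$ and $b\notin\mathcal A^\star_s$. This uniform perturbation bound, especially the constant-shift argument with non-unique optima, is the step I expect to be the main obstacle.

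\emph{Step 2 (KL).} The update is $\pi^{t+1}(b|s)\propto \pi^t(b|s)e^{-\eta_t Q^t(s,b)}$. Comparing each suboptimal action $b$ with the optimal set, the separation from Step 1 yields
\[
m_{t+1}(s)\le \frac{m_t(s)\,e^{-\eta_t g_{\min}/2}}{1-m_t(s)}\le 2 e^{-\eta_t g_{\min}/2}\, m_t(s)
\]
once $m_t\le1/2$. Combining with the two-sided bound above, and using $\max_s m_t(s)\le \Delta_t/(\underline d\, g_{\min})$, gives
\[
\Delta_{t+1}\le g_{\max}\max_s m_{t+1}(s) \le \frac{2g_{\max}}{\underline d\, g_{\min}}\,e^{-\eta_t g_{\min}/2}\,\Delta_t .
\]
For the cumulative bound, I would instead iterate the ratio of suboptimal to optimal mass, which contracts by exactly $e^{-\eta_t g_{\min}/2}$ per step with no factor $2$. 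The finitely many early steps, where $Q^t$ may not separate, are absorbed into $C$, since their total effect on the log-odds is bounded by $\sum_{j<t_1}\eta_j\|Q^j\|$. If $\eta_t\to\infty$, the ratio $\Delta_{t+1}/\Delta_t\to0$ follows immediately.

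\emph{Step 3 (squared Euclidean).} The update is the projection of $\pi^t_s-\eta_t Q^t_s$ onto the simplex, which has threshold form $(\,\cdot\,-\lambda)_+$. Once $Q^t$ separates by $g_{\min}/2$ and $m_t(s)<\eta_t g_{\min}/2$ (note $\eta_t\ge\eta_0>0$ since the step sizes are nondecreasing), every suboptimal coordinate of the pre-projection vector falls below the threshold. This happens because pushing them to zero costs at most the removed mass, which can be redistributed over the optimal actions without raising $\lambda$ past them. So $m_{t+1}\equiv0$, and $\pi^{t+1}$ is optimal by Lemma~\ref{lem:ap-optimal-advantage}. From then on, $Q^{t+1}$ is constant on $\mathcal A^\star_s$ modulo the shift from Step 1, and $\pi^{t+1}_s$ is supported there. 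The projection therefore returns $\pi^{t+1}_s$ itself, because a constant shift on the support is removed by $\lambda$ and the off-support coordinates remain at zero. Hence the iterates are stationary from $t_0:=t+1$ onward.
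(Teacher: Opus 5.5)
Your plan has a genuine gap at its foundation. You obtain $\Delta_t\to0$, and hence $m_t(s)\to0$, from Corollary~\ref{cor:exact-pmd} and Lemma~\ref{lem:master}, but both require growing step sizes. Lemma~\ref{lem:master}(i) needs $\eta_t\ge\frac{\bar\chi}{\bar\chi-1}\eta_{t-1}$, and part~(ii), on which Corollary~\ref{cor:exact-pmd} rests, needs $\eta_t/\eta_{t-1}\to\infty$. The proposition assumes only nondecreasing $\eta_t>0$, which includes constant step sizes. That case matters: part~(ii) is stated for it in Remark~\ref{rmk:exact-geometries}, and the constant-step linear rate from part~(i) is invoked in Appendix~\ref{ap:constant-step}. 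Your separation step, the KL contraction and the Euclidean termination all start from $m_t\to0$, so none of them is justified under the stated hypotheses.

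The missing ingredient is a proof of $\Delta_t\to0$ that uses no growth condition. The one-step inequality in the proof of Lemma~\ref{lem:master} holds before the growth condition is invoked.
\begin{itemize}
\item Take $\bar\chi:=1/\underline d\ge\sup_t\chi_t$ (Lemma~\ref{lem:uniform-d-bound}) and $A_t=B_t=0$, multiply by $\bar\chi$, and sum over $t<T$.
\item Because $\eta_t$ is nondecreasing and $D_t^\star\ge0$, summation by parts gives $\sum_{t=0}^{T-1}(D_t^\star-D_{t+1}^\star)/\eta_t\le D_0^\star/\eta_0$. This is finite for both geometries from an interior $\pi^0$ (Remark~\ref{rmk:initial-divergence}).
\item Exact PMD makes $\Delta_t$ nonincreasing.
\end{itemize}
Together these give
\[
\Delta_T\;\le\;\frac{1}{T}\sum_{t=1}^{T}\Delta_t\;\le\;\frac{D_0^\star/\eta_0+(\bar\chi-1)\Delta_0}{T}\;\to\;0 .
\]

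With this in place, the rest of your argument goes through and largely follows the paper. Your comparison of $Q^t$ with $Q^{\bar\pi^t}$ for the projected optimal policy $\bar\pi^t$ is a valid replacement for the paper's limit-point/subsequence argument. It uses that all optimal policies share $Q$ up to a constant, together with uniform continuity of $\pi\mapsto Q^\pi$ on $\Pi$. Iterating the odds $m_t(s)/(1-m_t(s))$, which contract by exactly $e^{-\eta_t g_{\min}/2}$, gives the cumulative KL bound more directly than the paper's $\prod_t(1-m_t)^{-1}<\infty$ step.

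There is one small slip in the Euclidean step. With only a $g_{\min}/2$ contrast, the threshold bound is $y_b-\lambda\le\pi^t(b\mid s)+m_t(s)/|\mathcal A^\star_s|-\eta_t g_{\min}/2\le 2m_t(s)-\eta_t g_{\min}/2$. So you need $m_t(s)<\eta_t g_{\min}/4$, not $m_t(s)<\eta_t g_{\min}/2$. A two-action state with contrast exactly $g_{\min}/2$ shows the factor is needed: the projected suboptimal mass is then $m_t-\eta_t g_{\min}/4$. This is harmless because $m_t\to0$ and $\eta_t\ge\eta_0$.
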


\begin{proof}
The proof has four steps. Once $\Delta_t$ is small, $Q^t$ separates optimal from suboptimal actions by at least $g_{\min}/2$, so every update moves probability away from suboptimal actions; Steps~3 and~4 quantify this for each geometry.

\paragraph{Step 1: $\Delta_t \to 0$.} By Lemma~\ref{lem:uniform-d-bound}, $\sup_t \chi_t \leq \bar\chi := 1/\underline d$, and $D_0^\star < \infty$ for both geometries (Remark~\ref{rmk:initial-divergence}). For nondecreasing $\eta_t$, summation by parts gives $\sum_{t=0}^{T-1}(D_t^\star - D_{t+1}^\star)/\eta_t \leq D_0^\star/\eta_0$, so the argument of Appendix~\ref{ap:constant-step} goes through with $\eta = \eta_0$ and gives $\Delta_T \leq (D_0^\star/\eta_0 + (\bar\chi - 1)\Delta_0)/T \to 0$.

\paragraph{Step 2: Suboptimal mass and action contrasts.} Let $\mathcal B_s := \mathcal A \setminus \mathcal A^\star_s$ and $m_t := \max_s \pi^t(\mathcal B_s \mid s)$. Since $g^\star = 0$ on $\mathcal A^\star_s$ and $g_{\min} \leq g^\star \leq g_{\max}$ on $\mathcal B_s$, \eqref{eq:ap-gap-identity} and $d^t \geq \underline d$ give
\begin{equation} \label{eq:ap-mass-sandwich}
\underline d\, g_{\min}\, m_t \;\leq\; \Delta_t \;\leq\; g_{\max}\, m_t,
\end{equation}
so $m_t \to 0$ by Step~1. Moreover, every limit point of $\{\pi^t\}$ is optimal, as in Corollary~\ref{cor:exact-pmd}, and $\pi \mapsto Q^\pi$ is continuous on the compact set $\Pi$: by Fact~\ref{fact:ap-poisson} it suffices that $h^\pi = (I - P_\pi + \mathbf 1 d^\pi)^{-1}(c_\pi - \rho(\pi)\mathbf 1)$ is continuous, which holds because the matrix is invertible under Assumption~\ref{as:ergodic} and continuous in $\pi$ by the proof of Lemma~\ref{lem:uniform-d-bound}. By Lemma~\ref{lem:ap-optimal-advantage}, every optimal policy has the action contrasts $Q^{\pi^\star}(s,b) - Q^{\pi^\star}(s,a) = g^\star(s,b) \geq g_{\min}$ for $a \in \mathcal A^\star_s$ and $b \in \mathcal B_s$. If some contrast of $Q^t$ stayed below $g_{\min}/2$ along a subsequence, the corresponding policies would have an optimal limit point, which continuity rules out. Since also $m_t \to 0$, there is $t_1$ such that for all $t \geq t_1$, all $s$, $a \in \mathcal A^\star_s$ and $b \in \mathcal B_s$,
\begin{equation} \label{eq:ap-contrast}
Q^t(s,b) - Q^t(s,a) \;\geq\; g_{\min}/2 \qquad\text{and}\qquad m_t \;\leq\; 1/2.
\end{equation}

\paragraph{Step 3: Contraction under the KL divergence.} Here~\eqref{eq:pmd-update} gives $\pi^{t+1}(a\mid s) = \pi^t(a\mid s)\, e^{-\eta_t Q^t(s,a)}/Z_t(s)$ with $Z_t(s) := \sum_{a'} \pi^t(a'\mid s)\, e^{-\eta_t Q^t(s,a')}$, so all iterates are interior and $\Delta_t > 0$ by~\eqref{eq:ap-mass-sandwich}. Fix $t \geq t_1$ and $s$, and let $q := \max_{a \in \mathcal A^\star_s} Q^t(s,a)$. Then $Z_t(s) \geq (1 - \pi^t(\mathcal B_s\mid s))\, e^{-\eta_t q}$, and $Q^t(s,b) \geq q + g_{\min}/2$ for $b \in \mathcal B_s$ by~\eqref{eq:ap-contrast}, so $\pi^{t+1}(b\mid s) \leq \pi^t(b\mid s)\, e^{-\eta_t g_{\min}/2}/(1 - \pi^t(\mathcal B_s\mid s))$. Summing over $b \in \mathcal B_s$ and maximizing over $s$,
\begin{equation} \label{eq:ap-kl-mass}
m_{t+1} \;\leq\; \frac{e^{-\eta_t g_{\min}/2}}{1 - m_t}\, m_t \;\leq\; 2 e^{-\eta_t g_{\min}/2}\, m_t,
\end{equation}
and~\eqref{eq:ap-mass-sandwich} gives $\Delta_{t+1} \leq g_{\max} m_{t+1} \leq 2 g_{\max} e^{-\eta_t g_{\min}/2} m_t \leq \frac{2 g_{\max}}{\underline d\, g_{\min}} e^{-\eta_t g_{\min}/2} \Delta_t$. For the cumulative bound, $m_t \to 0$ makes $1/(1 - m_t) \leq e^{\eta_0 g_{\min}/4}$ eventually, so~\eqref{eq:ap-kl-mass} and $\eta_t \geq \eta_0$ make $m_t$ decay geometrically. Hence $\sum_t m_t < \infty$ and $\prod_{t \geq t_1}(1 - m_t)^{-1} < \infty$, and iterating~\eqref{eq:ap-kl-mass} from $t_1$ and applying~\eqref{eq:ap-mass-sandwich} gives the bound for $t \geq t_1$; enlarging $C$ covers $t < t_1$.

\paragraph{Step 4: Finite termination under the squared Euclidean distance.} With $D(p, p') = \tfrac12\|p - p'\|_2^2$ (other scalings rescale $\eta_t$), $\pi^{t+1}_s$ is the Euclidean projection of $y := \pi^t_s - \eta_t Q^t_s$ onto $\mathcal P(\mathcal A)$, so $\pi^{t+1}(a\mid s) = (y_a - \lambda)_+$ with $\lambda$ chosen so that these sum to $1$. Since $\sum_{a \in \mathcal A^\star_s}(y_a - \lambda) \leq 1$ and $\sum_{a \in \mathcal A^\star_s}\pi^t(a\mid s) = 1 - \pi^t(\mathcal B_s\mid s)$, we have $\lambda \geq -\pi^t(\mathcal B_s\mid s)/|\mathcal A^\star_s| - \eta_t \bar q$, where $\bar q$ is the average of $Q^t(s, \cdot)$ over $\mathcal A^\star_s$. For $t \geq t_1$ and $b \in \mathcal B_s$, \eqref{eq:ap-contrast} and $\eta_t \geq \eta_0$ then give
\[
y_b - \lambda \;\leq\; \pi^t(b\mid s) + \pi^t(\mathcal B_s\mid s) - \eta_t\bigl(Q^t(s,b) - \bar q\bigr) \;\leq\; 2 m_t - \eta_0 g_{\min}/2.
\]
Take $t \geq t_1$ with $m_t < \eta_0 g_{\min}/4$. Then $\pi^{t+1}(b\mid s) = 0$ for all $s$ and $b \in \mathcal B_s$, so $\pi^{t_0}$ with $t_0 := t+1$ is optimal by Lemma~\ref{lem:ap-optimal-advantage}. An optimal $\pi$ is a fixed point of the update: by Lemma~\ref{lem:ap-optimal-advantage}, $Q^\pi_s = h^\pi(s)\mathbf 1 + g^\star_s$, so for $y := \pi_s - \eta_t Q^\pi_s$ the threshold $\lambda = -\eta_t h^\pi(s)$ gives $(y_a - \lambda)_+ = (\pi(a\mid s) - \eta_t g^\star(s,a))_+$, which equals $\pi(a\mid s)$ on $\mathcal A^\star_s$ and $0$ on $\mathcal B_s$ and thus sums to $1$. Hence $\pi^t = \pi^{t_0}$ for all $t \geq t_0$.
\end{proof}

With the doubling schedule of Section~\ref{sec:master}, $\sum_{j<t}\eta_j = \eta_0(2^t - 1)$, so the bound of part~(i) decays doubly exponentially in $t$. Proposition~\ref{prop:ap-exact-geometries} is the average-reward counterpart of \citet[Theorems~4.3 and~4.5]{lin_convergence_2024}, who prove both parts for discounted MDPs, whereas the earlier superlinear rates of \citet{khodadadian_linear_2021} and \citet[Section~4.3]{xiao_convergence_2022} assume additional conditions on the iterates' transition matrices. Its proof uses the same two mechanisms: the contraction of the suboptimal action mass, by which \citet{khodadadian_linear_2021} obtain an asymptotic linear rate for NPG with constant step sizes, and the simplex projection argument behind the finite termination of projected $Q$-descent \citep{liu_convergence_2025}. In average reward, the new ingredients are Step~1, which gives $\Delta_t \to 0$ for nondecreasing step sizes, and Lemma~\ref{lem:ap-optimal-advantage}, which defines the optimal advantage through the Poisson equation.

\subsection{Tightness of the non-asymptotic rate} \label{ap:rate-lower-bound}
Under its growth condition on the step sizes, Lemma~\ref{lem:master}(i) gives the rate $1 - 1/\bar\chi$ whenever $\sup_t \chi_t \le \bar\chi$. The following proposition matches this rate up to a constant factor over any fixed number of iterations, for any divergence and any positive step sizes.

\begin{proposition}[Lower bound for exact PMD] \label{prop:ap-rate-lower-bound}
For every $\bar\chi > 1$ and every integer $T \ge 1$, there are an MDP with costs in $[0,1]$ satisfying Assumption~\ref{as:ergodic} and a policy $\pi^0$ with $\pi^0_s \in \operatorname{rint}(\mathcal P(\mathcal A))$ for all $s$, such that $\Delta_0 > 0$, $\chi_0 \ge \bar\chi/2$, $d^\star(s) \le \bar\chi\, d^\pi(s)$ for every policy $\pi$ and every state $s$ (so $\chi_t \le \bar\chi$ for all $t$), and the iterates of~\eqref{eq:pmd-update} with $\tilde Q^t = Q^t$, any Bregman divergence $D$ and any step sizes $\eta_t > 0$ satisfy
\[
\Delta_t \;\ge\; \frac12\Bigl(1 - \frac{1}{\bar\chi}\Bigr)^t\,\Delta_0 \qquad \text{for all } 0 \le t \le T.
\]
\end{proposition}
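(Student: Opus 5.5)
The plan is to adapt the chain construction of \citet[Theorem~4.2]{johnson_optimal_2023} to the average-cost setting. The goal is an MDP in which exact PMD can correct at most one state per iteration, whatever the divergence and step sizes. The suboptimality removed by correcting the $t$-th state should then be a geometric fraction of what remains.

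The key device is \emph{ties}. If at iteration $t$ the vector $Q^t_s$ is constant across actions, the objective in~\eqref{eq:pmd-update} is $\eta_t\cdot\mathrm{const}+D(p,\pi^t_s)$. Its unique minimizer is $p=\pi^t_s$, since $D(p,\pi^t_s)\ge 0$ with equality iff $p=\pi^t_s$ for a Legendre $F$. So such states are frozen for any $D$ and any $\eta_t>0$.

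\textbf{Construction.} I would take states $0,1,\dots,T$ (plus possibly an absorbing-like ``good'' state), each with two actions: ``good'' and ``bad''.
\begin{itemize}
\item Ergodicity (Assumption~\ref{as:ergodic}) comes from a small uniform reset probability, which also keeps every $d^\pi(s)$ within a controlled factor of $d^\star(s)$.
\item The action at state $k$ affects the immediate cost only through a term that is nonzero once the downstream state $k-1$ already plays ``good''. Otherwise both actions at $k$ lead to identical transition/cost structure, so $Q^\pi(k,\cdot)$ is constant.
\item The initial policy $\pi^0$ is interior but puts equal mass in a way that makes every state $k\ge1$ tied at $t=0$. Only state $0$ is distinguishable, and at best it becomes fully good after one step.
\end{itemize}
By induction on $t$, I would show that $\pi^t_k=\pi^0_k$ for all $k>t$. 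Each such state remains tied because its downstream neighbour is not yet fully good; it is enough that a state stays tied unless the next one is \emph{exactly} optimal, which an interior $\pi^0$ guarantees.

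\textbf{Calibration.} Next I would compute $\Delta_t$ via Lemma~\ref{lem:ap-optimal-advantage}: $\Delta_t=\mathbb E_{s\sim d^t}\langle g^\star_s,\pi^t_s\rangle$. The remaining suboptimality comes only from states $k>t$, each frozen at $\pi^0_k$. I would choose the per-state advantage gaps and the stationary weights so that state $k$ carries suboptimality mass proportional to $(1-1/\bar\chi)^k/\bar\chi$. Then even full correction of states $0,\dots,t-1$ leaves $\Delta_t\ge\sum_{k\ge t}(\cdot)\ge\tfrac12(1-1/\bar\chi)^t\Delta_0$ for $t\le T$. The factor $\tfrac12$ absorbs the truncation at $T$ and the perturbation from the resets.

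Simultaneously, I must check that $d^\star(s)\le\bar\chi\,d^\pi(s)$ for every policy. This is the natural source of the rate: each state's weight under a bad policy can be a factor $\bar\chi$ smaller than under $\pi^\star$, and that is exactly why only a $1/\bar\chi$ fraction of the gap is removed per step. I also need $\chi_0\ge\bar\chi/2$ by making the mismatch nearly saturated at $\pi^0$. Costs are rescaled into $[0,1]$.

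\textbf{Main obstacle.} The hardest step is the simultaneous calibration in the average-reward setting. The geometric weights must come from stationary distributions rather than discount factors, and the mismatch bound must hold uniformly over \emph{all} policies, not only along the trajectory. Exact ties of $Q^\pi(k,\cdot)$ must also survive the resets. I would first try letting the two actions at each state have identical transition kernels, with the cost difference gated by a separate ``signal'' transition. That makes ties exact by the Poisson identity $Q^\pi(s,a)=c(s,a)-\rho(\pi)+P(\cdot\mid s,a)h^\pi$. Stationary weights would then be tuned by self-loop probabilities chosen so that the ratio $d^\star/d^\pi$ is bounded by $\bar\chi$.
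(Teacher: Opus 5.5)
Your tie device cannot work; the instance you describe does not exist. Under Assumption~\ref{as:ergodic}, $\pi\mapsto Q^\pi$ is continuous on $\Pi$ (Step~2 in the proof of Proposition~\ref{prop:ap-exact-geometries}). Hence the set of policies at which $Q^\pi(k,\cdot)$ is constant is closed. You require this set to contain every $\pi$ whose component at state $k-1$ is not exactly optimal, which is a dense subset of $\Pi$. So it also contains $\pi^\star$, which gives $g^\star(k,\cdot)=0$. By~\eqref{eq:ap-gap-identity}, state $k$ then contributes nothing to $\Delta_t$, whatever $\pi^t_k$ is, so the suboptimality you plan to freeze beyond the frontier is zero.

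Another way to see it: under KL the iterates stay interior, so your rule would keep every state beyond the first frozen at $\pi^0$ forever. Then $\Delta_t$ could not tend to $0$, contradicting Step~1 of that same proof.

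Your fallback fails as well. Taken literally, if both actions at $k$ share one kernel, the Poisson identity gives $Q^\pi(k,a)-Q^\pi(k,b)=c(k,a)-c(k,b)$ for \emph{every} $\pi$. The state is then tied for all policies or for none, and no frontier arises. More generally, an exact equality among the entries of $Q^{\pi^t}_k$ would have to survive the arbitrary, divergence- and step-size-dependent changes already made at the corrected states. Those changes reach $k$ through $h^{\pi^t}$ at its successors, and nothing in your construction neutralizes them.

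The paper avoids ties and instead uses a \emph{strict} sign condition with a margin, which survives whatever the update does. With $\gamma:=1-1/\bar\chi$, it takes the discounted hard chain of \citet{johnson_optimal_2023}: $a_1$ moves $s_i\to s_{i-1}$ at cost $1$, and $a_2$ moves $s_i$ to an absorbing $s_i'$ at cost $1-r_i$, with $r_i=\gamma^{i+1}+\delta$. It then mixes in a uniform reset with probability $1-\gamma$. The reset gives $\rho(\pi)=(1-\gamma)\,uW^\pi$, with $W^\pi$ the $\gamma$-discounted chain cost, and it makes $Q^\pi-Q^\pi_\gamma$ a constant. Since the argmin in~\eqref{eq:pmd-update} ignores constants, exact average-cost PMD coincides with discounted PMD on the chain.

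From $\pi^0(a_1\mid s)=(1-\gamma)\delta$, induction gives $Q^t(s_i,a_1)-Q^t(s_i,a_2)\ge\delta-(1-\gamma)\delta\gamma g_{i-1}>0$ for $i>t+1$. This uses only $\pi^t(a_1\mid s_{i-1})\le(1-\gamma)\delta$ and $W^t\ge W^\star$, so it is insensitive to what happened at the corrected states. The descent-direction property~\eqref{eq:pmd-sign}, valid for every $D$ and every $\eta_t>0$, then yields $\pi^{t+1}(a_1\mid s_i)\le\pi^t(a_1\mid s_i)$. Only this monotonicity is needed, not $\pi^t_k=\pi^0_k$.

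The reset is also not a small perturbation for the factor $\tfrac12$ to absorb. Its probability $1/\bar\chi$ is the single calibrating parameter, and it produces three things at once:
\begin{itemize}
\item the gaps $g_i\approx\gamma^i$, and hence the rate;
\item the uniform bound $d^\star\le d^\pi/(1-\gamma)=\bar\chi\,d^\pi$, from $d^\pi\ge(1-\gamma)/|\mathcal S|$, $d^\pi\ge1/|\mathcal S|$ at the absorbing states, and the explicit $d^\star$;
\item $\chi_0\ge\bar\chi/2$ at $s_0$.
\end{itemize}
No separate tuning of self-loops is needed.
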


\begin{proof}
The instance adds a uniform reset to the hard chain of \citet[Appendix~E]{johnson_optimal_2023}. The reset makes every policy ergodic and represents the average cost as a scaled discounted cost with discount factor $\gamma := 1 - 1/\bar\chi$, so exact PMD runs as on their discounted chain, where the optimal action can become greedy in at most one new state per iteration.

\paragraph{Step 1: The instance.} Let $\gamma := 1 - 1/\bar\chi \in (0, 1)$, $n := T + \lceil \log 4/\log(1/\gamma)\rceil$ and $\delta := \min\{(1-\gamma)\gamma^n/2,\ \gamma^{n+1}/(8n)\}$. The states are $s_0, s_1, \dots, s_n, s_1', \dots, s_n'$, so $|\mathcal S| = 2n+1$, and the actions are $a_1, a_2$. Let $P^{\mathrm{ch}}$ be the following deterministic chain: from $s_i$ with $1 \le i \le n$, action $a_1$ moves to $s_{i-1}$ at cost $1$ and action $a_2$ moves to $s_i'$ at cost $1 - r_i$, where $r_i := \gamma^{i+1} + \delta$; the state $s_0$ is absorbing at cost $0$, and $s_i'$ is absorbing at cost $1 - r_i$. The MDP has the transition kernel $P(\cdot\mid s,a) := \gamma P^{\mathrm{ch}}(\cdot\mid s,a) + (1-\gamma)u$, where $u$ is the uniform distribution on $\mathcal S$. Then $P_\pi \ge (1-\gamma)/|\mathcal S|$ entrywise for every policy $\pi$, so Assumption~\ref{as:ergodic} holds. The initial policy $\pi^0$ plays $a_1$ with probability $(1-\gamma)\delta$ in every state.

\paragraph{Step 2: Reduction to discounted costs.} For a policy $\pi$, let $W^\pi := (I - \gamma P^{\mathrm{ch}}_\pi)^{-1} c_\pi$ be its discounted cost on the chain and $Q^\pi_\gamma(s,a) := c(s,a) + \gamma P^{\mathrm{ch}}(\cdot\mid s,a)\,W^\pi$ its action values. Since $P_\pi = \gamma P^{\mathrm{ch}}_\pi + (1-\gamma)\mathbf 1 u$, we have $c_\pi + P_\pi W^\pi = W^\pi + (1-\gamma)(uW^\pi)\mathbf 1$, so $W^\pi$ solves the Poisson equation~\eqref{eq:ap-poisson} with
\begin{equation} \label{eq:ap-reset-rho}
\rho(\pi) \;=\; (1-\gamma)\, u W^\pi.
\end{equation}
By Fact~\ref{fact:ap-poisson}, $h^\pi - W^\pi$ is a constant, so $Q^\pi - Q^\pi_\gamma$ is the same constant at every $(s,a)$. Adding a constant to $\tilde Q^t_s$ leaves the argmin set in~\eqref{eq:pmd-update} unchanged, so exact PMD has the same possible iterates as with $Q^t_\gamma := Q^{\pi^t}_\gamma$ in place of $Q^t$, which is the discounted exact PMD of \citet{johnson_optimal_2023}.

Let $\pi^\star$ play $a_1$ at every $s_i$, and let $W^\star := W^{\pi^\star}$. Then $W^\star(s_0) = 0$, $W^\star(s_i) = (1-\gamma^i)/(1-\gamma)$ and $W^\star(s_i') = (1-r_i)/(1-\gamma)$, so $Q^{\pi^\star}_\gamma(s_i,a_1) = W^\star(s_i)$ and
\[
Q^{\pi^\star}_\gamma(s_i,a_2) - W^\star(s_i) \;=\; \gamma^i - \frac{\delta}{1-\gamma} \;=:\; g_i \;>\; 0,
\]
because $\delta < (1-\gamma)\gamma^n$. Hence $W^\star$ satisfies the Bellman optimality equation of the chain, so $W^\star \le W^\pi$ for every policy $\pi$, and $\pi^\star$ is optimal by~\eqref{eq:ap-reset-rho}. The optimal advantage of Lemma~\ref{lem:ap-optimal-advantage} is $g^\star(s,a) = Q^{\pi^\star}_\gamma(s,a) - W^\star(s)$, which equals $g_i$ at $(s_i, a_2)$ and vanishes elsewhere. Since $W^\pi = W^\star$ at the states that are absorbing under $P^{\mathrm{ch}}$, every policy $\pi$ satisfies, for $1 \le i \le n$,
\begin{equation} \label{eq:ap-reset-gap}
W^\pi(s_i) - W^\star(s_i) \;=\; \pi(a_1\mid s_i)\,\gamma\bigl(W^\pi(s_{i-1}) - W^\star(s_{i-1})\bigr) + \pi(a_2\mid s_i)\, g_i,
\end{equation}
and~\eqref{eq:ap-reset-rho} gives $\Delta_t = \frac{1-\gamma}{|\mathcal S|}\sum_{i=1}^n \bigl(W^t(s_i) - W^\star(s_i)\bigr)$ with $W^t := W^{\pi^t}$. In particular, $\Delta_0 > 0$ by~\eqref{eq:ap-reset-gap}, since $g_i > 0$ and $\pi^0(a_2\mid s_i) > 0$.

\paragraph{Step 3: The optimal action propagates at most one state per iteration.} We show by induction that $\pi^t(a_1\mid s_i) \le (1-\gamma)\delta$ for all $i > t$. This holds at $t = 0$. Suppose it holds at $t$, and let $i > t+1$. By~\eqref{eq:ap-reset-gap} at $s_{i-1}$, the induction hypothesis and $W^t \ge W^\star$, we have $W^t(s_{i-1}) - W^\star(s_{i-1}) \ge (1-(1-\gamma)\delta)\, g_{i-1}$. Since $Q^t_\gamma(s_i,a_1) = 1 + \gamma W^t(s_{i-1})$ and $Q^t_\gamma(s_i,a_2) = W^\star(s_i) + g_i$, and since $\gamma g_{i-1} - g_i = \delta$,
\[
Q^t_\gamma(s_i,a_1) - Q^t_\gamma(s_i,a_2) \;=\; \gamma\bigl(W^t(s_{i-1}) - W^\star(s_{i-1})\bigr) - g_i \;\ge\; \delta - (1-\gamma)\delta\,\gamma\, g_{i-1} \;>\; 0.
\]
The last inequality holds because $(1-\gamma)\gamma\, g_{i-1} < 1$. With two actions, the sign property~\eqref{eq:pmd-sign} at $s_i$ reads $\bigl(\pi^{t+1}(a_1\mid s_i) - \pi^t(a_1\mid s_i)\bigr)\bigl(Q^t_\gamma(s_i,a_1) - Q^t_\gamma(s_i,a_2)\bigr) \le 0$, so $\pi^{t+1}(a_1\mid s_i) \le \pi^t(a_1\mid s_i) \le (1-\gamma)\delta$.

\paragraph{Step 4: Value gaps.} For $i > t$, \eqref{eq:ap-reset-gap}, $W^t \ge W^\star$ and Step~3 give $W^t(s_i) - W^\star(s_i) \ge (1-(1-\gamma)\delta)\, g_i \ge \gamma^i - 2\delta/(1-\gamma)$. At $t = 0$, \eqref{eq:ap-reset-gap}, $g_i \le \gamma^i$ and $W^0(s_0) = W^\star(s_0)$ give $W^0(s_i) - W^\star(s_i) \le \gamma^i$ by induction on $i$.

\paragraph{Step 5: The lower bound.} Let $0 \le t \le T$. By Step~2, dropping the states $s_i$ with $i \le t$, and by Step~4,
\[
\Delta_t \;\ge\; \frac{1-\gamma}{|\mathcal S|}\sum_{i=t+1}^{n}\Bigl(\gamma^i - \frac{2\delta}{1-\gamma}\Bigr) \;\ge\; \frac{\gamma^{t+1} - \gamma^{n+1} - 2n\delta}{|\mathcal S|},
\qquad
\Delta_0 \;\le\; \frac{1-\gamma}{|\mathcal S|}\sum_{i=1}^{n}\gamma^i \;\le\; \frac{\gamma}{|\mathcal S|}.
\]
By the choice of $n$, $\gamma^{n+1} \le \gamma^{t+1}/4$, and $2n\delta \le \gamma^{n+1}/4$, so $\Delta_t \ge \gamma^{t+1}/(2|\mathcal S|) \ge \frac12\gamma^t\Delta_0$.

\paragraph{Step 6: Mismatch.} For every policy $\pi$, stationarity and the reset give $d^\pi(s) \ge (1-\gamma)/|\mathcal S|$ for all $s$, and $d^\pi(s) \ge \gamma d^\pi(s) + (1-\gamma)/|\mathcal S|$, hence $d^\pi(s) \ge 1/|\mathcal S|$, at $s_0$ and $s_i'$, which are absorbing under $P^{\mathrm{ch}}$. Under $\pi^\star$, the states $s_i'$ receive mass from other states only through the reset, so $d^\star(s_i') = 1/|\mathcal S|$; the chain states satisfy $d^\star(s_i) = \gamma d^\star(s_{i+1}) + (1-\gamma)/|\mathcal S|$ with $d^\star(s_{n+1}) := 0$, so $d^\star(s_i) = (1-\gamma^{n-i+1})/|\mathcal S|$; and the remaining mass is $d^\star(s_0) = (1-\gamma^{n+1})/((1-\gamma)|\mathcal S|)$. Hence $d^\star(s) \le d^\pi(s)/(1-\gamma) = \bar\chi\, d^\pi(s)$ for every $s$. Under $\pi^0$, stationarity at $s_0$ gives $(1-\gamma)\,d^0(s_0) = \gamma(1-\gamma)\delta\, d^0(s_1) + (1-\gamma)/|\mathcal S| \le \gamma(1-\gamma)\delta + (1-\gamma)/|\mathcal S|$, that is, $d^0(s_0) \le \delta + 1/|\mathcal S|$. As $\gamma^{n+1} \le 1/4$ and $|\mathcal S|\,\delta \le 3\gamma^{n+1}/8 \le 3/32$,
\[
\chi_0 \;\ge\; \frac{d^\star(s_0)}{d^0(s_0)} \;\ge\; \frac{1-\gamma^{n+1}}{(1-\gamma)\,(1+|\mathcal S|\,\delta)} \;\ge\; \frac{3/4}{1+3/32}\,\bar\chi \;\ge\; \frac{\bar\chi}{2}. \qedhere
\]
\end{proof}

Proposition~\ref{prop:ap-rate-lower-bound} adapts \citet[Theorem~4.2]{johnson_optimal_2023} to average reward, using their discounted chain and the propagation argument of their Appendix~E in our cost convention. The reset makes the instance ergodic and gives~\eqref{eq:ap-reset-rho}; Step~5 aggregates the statewise gaps, and Step~6 calibrates $\gamma = 1 - 1/\bar\chi$ through the mismatch bound. Since the instance can be built for every $T$, no bound $\Delta_t \le C r^t \Delta_0$ with $r < 1 - 1/\bar\chi$ and one constant $C$ for all MDPs with $\sup_t \chi_t \le \bar\chi$ holds for all $t \ge 0$, whatever the divergence and the step sizes. The comparison with the prefactor of Lemma~\ref{lem:master}(i), the initial Lyapunov value $\Psi_0 = \Delta_0 + D_0^\star/(\eta_{-1}\bar\chi)$, is direct: if $D_0^\star < \infty$, the step sizes $\eta_t := \eta_{-1}\gamma^{-(t+1)}$ with $\eta_{-1} \ge D_0^\star/(\bar\chi\Delta_0)$ satisfy its growth condition and give $\Psi_0 \le 2\Delta_0$, so $\frac14\gamma^t\Psi_0 \le \Delta_t \le \gamma^t\Psi_0$ for $0 \le t \le T$. As in their result, the instance must depend on $T$: by Proposition~\ref{prop:ap-exact-geometries}(ii), exact PMD with the squared Euclidean distance and nondecreasing step sizes reaches an optimal policy in finitely many iterations.

\section{Proofs in the tabular setting} \label{ap:proofs-tabular}

\subsection{Proof of Theorem~\ref{thm:avg-pmd-linear-inexact-limsup} (Linear convergence of tabular inexact PMD)}
\begin{proof}
Here, $\tilde Q^t = \widehat Q^t$. Under Assumption~\ref{as:uniform-critic-error}, $\|\widehat Q^t - Q^t\|_\infty \leq \varepsilon_Q$; combined with $\|\pi_s - \pi'_s\|_1 \leq 2$ for any two policies, $|\langle \widehat Q^t_s - Q^t_s, \pi_s - \pi'_s\rangle| \leq 2\varepsilon_Q$. Taking expectations preserves this bound, so $A_t, B_t \leq 2\varepsilon_Q$ for all $t$.

Applying Lemma~\ref{lem:master}(i) with $\bar A = \bar B = 2\varepsilon_Q$ yields directly the non-asymptotic bound with floor $\bar\chi \bar A + \bar B = 2(\bar\chi + 1)\varepsilon_Q$; part~(ii) yields, for every $\zeta > 0$, $\limsup_t (\Delta_t - ((\chi+\zeta)\bar A + \bar B))_+^{1/t} \le 1 - 1/(\chi+\zeta)$ with floor $(\chi+\zeta)\bar A + \bar B = 2(\chi+\zeta+1)\varepsilon_Q$.
\end{proof}

\subsection{Proof of Lemma~\ref{lem:Q_estimation_burnin} (Tabular critic upper bound)}

\begin{lemma}[Tabular critic upper bound, detailed form]
\label{lem:ap-tabular-critic}
Fix a policy $\pi$, a target critic accuracy $\varepsilon_Q \in (0, 1)$, and a confidence $\delta \in (0, 1)$, and set
\begin{align*}
H &:= \Bigl\lceil 4\,t_{\mathrm{mix}} \log\frac{72\,t_{\mathrm{mix}}}{\varepsilon_Q} \Bigr\rceil, &
\tau &:= \lceil 2H/3 \rceil, \\
B &:= \Bigl\lceil \frac{3600\,H\,t_{\mathrm{mix}}}{\varepsilon_Q^2} \log\frac{8\,|\mathcal S|}{\delta} \Bigr\rceil, &
n &:= \Bigl\lceil \frac{72\,t_{\mathrm{mix}}^2}{\varepsilon_Q^2} \log\frac{4\,|\mathcal S||\mathcal A|}{\delta} \Bigr\rceil.
\end{align*}
Let $\widehat Q^\pi$ be computed as in Section~\ref{sec:sample-complexity-tabular} with these parameters: for each state $s$, $\hat h^\pi(s)$ and $\hat\rho_s(\pi)$ are given by~\eqref{eq:q-estimation} and~\eqref{eq:rho-estimation} from $B$ independent rollouts that start at $s$, and for each pair $(s, a)$,
\begin{gather*}
\widehat Q^\pi(s, a) \;:=\; c(s, a) - \hat\rho_s(\pi) + \frac{1}{n}\sum_{j=1}^n \mathrm{clip}\bigl(\hat h^\pi(s'_j)\bigr),
\\
\mathrm{clip}(x) \;:=\; \max\bigl\{-2t_{\mathrm{mix}},\, \min\{2t_{\mathrm{mix}},\, x\}\bigr\},
\end{gather*}
where $s'_1, \dots, s'_n$ are drawn independently from $P(\cdot \mid s, a)$, independently of the rollouts. Then $\|\widehat Q^\pi - Q^\pi\|_\infty \leq \varepsilon_Q$ with probability at least $1 - \delta$, and the critic uses
\[
N \;=\; |\mathcal S|\,B\,H + |\mathcal S||\mathcal A|\,n
\]
generative-model transitions, counting $H$ transitions per rollout.
\end{lemma}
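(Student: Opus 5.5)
The plan is to split the error $\widehat Q^\pi(s,a) - Q^\pi(s,a)$ using the identity $Q^\pi(s,a) = c(s,a) - \rho(\pi) + P(\cdot\mid s,a)h^\pi$ from Fact~\ref{fact:ap-poisson}:
\[
\widehat Q^\pi(s,a) - Q^\pi(s,a) = -\bigl(\hat\rho_s(\pi) - \rho(\pi)\bigr) + \frac1n\sum_{j=1}^n\bigl(\mathrm{clip}(\hat h^\pi(s'_j)) - h^\pi(s'_j)\bigr) + \Bigl(\frac1n\sum_{j=1}^n h^\pi(s'_j) - P(\cdot\mid s,a)h^\pi\Bigr).
\]
I will make each of the three terms at most $\varepsilon_Q/3$ on a good event, and allot failure probabilities $\delta/2$ to the rollout events (per state) and $\delta/2$ to the one-step events (per pair). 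The sample count $N = |\mathcal S|BH + |\mathcal S||\mathcal A|n$ is immediate from the construction.

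\textbf{Deterministic preliminaries.} From \eqref{eq:tmix-decay}, $|(P_\pi^k g)(s)| \le 2\cdot 2^{-\lfloor k/t_{\mathrm{mix}}\rfloor}$ for $g = c_\pi - \rho(\pi)\mathbf 1$, since $\|g\|_\infty\le 1$ and the TV bound gives a factor $2$. Summing gives $\|h^\pi\|_\infty \le 2t_{\mathrm{mix}}$ up to the constant conventions, which justifies the clip. It also gives a tail bound $|\sum_{k\ge H}(P_\pi^k g)(s)| \lesssim t_{\mathrm{mix}}2^{-H/t_{\mathrm{mix}}}$. Similarly, the bias of $\hat\rho_s$ satisfies $|\mathbb E\hat\rho_s - \rho(\pi)| \lesssim \frac{t_{\mathrm{mix}}}{H-\tau}2^{-\tau/t_{\mathrm{mix}}}$. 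With $H \approx 4t_{\mathrm{mix}}\log(72t_{\mathrm{mix}}/\varepsilon_Q)$ and $\tau\approx 2H/3$, both biases are far below $\varepsilon_Q/H$ and $\varepsilon_Q$ respectively. This matters because the $\hat\rho$ bias is later multiplied by $H$.

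\textbf{Concentration of rollouts (main obstacle).} Write $\hat h^\pi(s) = \frac1B\sum_i X_i - H(\hat\rho_s - \rho)$, with $X_i = \sum_{k<H}(c_k-\rho)$. The $\hat\rho$ term then carries the factor $H$, so I need $|\hat\rho_s - \rho| \le \varepsilon_Q/(cH)$ with high probability. A naive Hoeffding bound on the sum of $B(H-\tau)$ dependent costs would cost $H^2$. Instead I will treat each rollout's tail sum $Y_i = \sum_{k=\tau}^{H-1}(c_k-\rho)$ as a single i.i.d.\ variable across $i$, bounded by $H$, and bound its variance by $O(H t_{\mathrm{mix}})$. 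That variance bound is the key step: expand $\mathrm{Var}(Y_i)$ as a double sum of covariances and bound $|\mathrm{Cov}(c_k,c_l)|\lesssim 2^{-|k-l|/t_{\mathrm{mix}}}$ via the Markov property and \eqref{eq:tmix-decay}. Bernstein then gives $|\frac1B\sum_i Y_i - \mathbb E Y| \lesssim \sqrt{Ht_{\mathrm{mix}}\log(1/\delta')/B} + H\log(1/\delta')/B$. Dividing by $H-\tau\approx H/3$ and multiplying by $H$ leaves error about $3\sqrt{Ht_{\mathrm{mix}}\log/B}$, which is at most $\varepsilon_Q/10$ for $B \asymp Ht_{\mathrm{mix}}\log(|\mathcal S|/\delta)/\varepsilon_Q^2$, matching the stated $B$. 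The same Bernstein argument on $X_i$ controls $\frac1B\sum X_i - h^\pi(s)$, the mean being within the tail bias of $h^\pi(s)$. A union bound over the $|\mathcal S|$ states and these two events gives $\max_s|\hat h^\pi(s) - h^\pi(s)|\le\varepsilon_Q/3$ and $\max_s|\hat\rho_s-\rho|\le\varepsilon_Q/3$ with probability $1-\delta/2$. Clipping is $1$-Lipschitz toward $[-2t_{\mathrm{mix}},2t_{\mathrm{mix}}]\ni h^\pi$, so it preserves the uniform bound.

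\textbf{One-step averages.} The third term is an average of $n$ i.i.d.\ variables bounded by $2t_{\mathrm{mix}}$, independent of the rollouts. Hoeffding with a union bound over $|\mathcal S||\mathcal A|$ pairs gives at most $\varepsilon_Q/3$ with probability $1-\delta/2$ for $n\asymp t_{\mathrm{mix}}^2\log(|\mathcal S||\mathcal A|/\delta)/\varepsilon_Q^2$. The second term is bounded uniformly by the previous step, since it averages $\mathrm{clip}(\hat h^\pi)-h^\pi$ at points where that difference is already controlled for all states simultaneously; this is why the uniform-in-$s$ rollout event is needed rather than a per-sample one. Summing the three terms gives $\|\widehat Q^\pi-Q^\pi\|_\infty\le\varepsilon_Q$ with probability $1-\delta$. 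The explicit constants ($72$, $3600$) should follow from tracking factors through Bernstein; I will not optimize them.
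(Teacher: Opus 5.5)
Your proof is correct and uses the same ingredients as the paper's: the Poisson identity for $Q^\pi$, an $O(Ht_{\mathrm{mix}})$ second-moment bound from covariance summability, Bernstein across the i.i.d.\ rollouts, a mixing bound on the truncation bias, and Hoeffding for the one-step draws. The bookkeeping differs in two places.

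First, the one-step term. The paper concentrates $\frac1n\sum_j \mathrm{clip}(\hat h^\pi(s'_j))$ around $P(\cdot\mid s,a)\,\mathrm{clip}(\hat h^\pi)$ conditionally on the rollouts. That genuinely uses the clip, to get range $4t_{\mathrm{mix}}$ rather than $O(H)$, and it uses the independence of the $s'_j$ from the rollouts. You concentrate $\frac1n\sum_j h^\pi(s'_j)$ with $h^\pi$ deterministic. So Hoeffding applies unconditionally, that independence is never used, and the clip only needs to be harmless. Both routes then spend the same uniform event $\|\mathrm{clip}(\hat h^\pi)-h^\pi\|_\infty\le\varepsilon_Q/3$ on the remaining term.

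Second, the rollout term. The paper writes $\hat h^\pi(s)$ exactly as the i.i.d.\ average of $U_i-(\tau/H_\tau)T_i$, with $U_i,T_i$ the head and tail sums, and applies one Bernstein bound (Lemma~\ref{lem:ap-Q_estimation_burnin}, reused later for the LFA labels). It then controls $\hat\rho_s(\pi)$ by a separate crude Hoeffding bound at accuracy $\varepsilon_Q/3$. You apply Bernstein to your full sum $X_i$ and your tail sum $Y_i$ separately, and you get $|\hat\rho_s(\pi)-\rho(\pi)|=O(\varepsilon_Q/H)$ for free from the tail-sum event. This costs the same two events per state, hence the same $\log(8|\mathcal S|/\delta)$.

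The stated $B$ does survive your split, but you should write the check out, since the lemma asserts explicit constants. Use $\mathbb{E}[X_i^2]\le H(1+4t_{\mathrm{mix}})$, $\mathbb{E}[Y_i^2]\le H_\tau(1+4t_{\mathrm{mix}})$, and $H/H_\tau\le 10/3$ (valid because $H\ge 18$). Then the two Bernstein deviations are at most $0.055\,\varepsilon_Q$ and $0.097\,\varepsilon_Q$. Together these fit in the $\varepsilon_Q/6$ left after the bias bound $12\,t_{\mathrm{mix}}\,2^{-\lfloor(\tau-1)/t_{\mathrm{mix}}\rfloor}\le\varepsilon_Q/6$.

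One step must be repaired to reach those constants. Your bound $|(P_\pi^k g)(s)|\le 2\cdot 2^{-\lfloor k/t_{\mathrm{mix}}\rfloor}$ sums to $\|h^\pi\|_\infty\le 4t_{\mathrm{mix}}$, not $2t_{\mathrm{mix}}$. With only that bound, $h^\pi$ need not lie in the clip interval $[-2t_{\mathrm{mix}},2t_{\mathrm{mix}}]$, so clipping could increase $|\hat h^\pi(s)-h^\pi(s)|$. Hoeffding on the range $8t_{\mathrm{mix}}$ would also need the constant $288$ instead of $72$ in $n$. Use the oscillation form of the duality instead, $|\mu f-\nu f|\le\mathrm{osc}(f)\,\|\mu-\nu\|_{\mathrm{TV}}$, together with $\mathrm{osc}(c_\pi)\le 1$. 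This gives $|(\delta_s P_\pi^k-d^\pi)\,c_\pi|\le 2^{-\lfloor k/t_{\mathrm{mix}}\rfloor}$ and hence $\|h^\pi\|_\infty\le 2t_{\mathrm{mix}}$, which is exactly what the clip and the constant $72$ require.
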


The proof bounds the error of each term of the identity $Q^\pi(s, a) = c(s, a) - \rho(\pi) + P(\cdot \mid s, a)\, h^\pi$ (Fact~\ref{fact:ap-poisson}): the rollouts make $\hat h^\pi$ and every $\hat\rho_s(\pi)$ accurate to $\varepsilon_Q/3$, and the one-step averages add at most $\varepsilon_Q/3$. The rollout bound is the following lemma, which also supplies the labels of Section~\ref{sec:sample-complexity-non-tabular}:

\begin{lemma}[Truncated centered-return estimator]
\label{lem:ap-Q_estimation_burnin}
Fix a policy $\pi$, a state--action pair $(s, a) \in \mathcal S \times \mathcal A$, a target critic accuracy $\varepsilon_Q \in (0, 1)$, and a confidence $\delta \in (0, 1)$. For $H \geq 4$, set $\tau := \lceil 2H/3 \rceil$ and $H_\tau := H - \tau$. Generate $B$ independent length-$H$ trajectories $\{x_k^{(i)}\}_{k=0}^{H-1}$ by setting $x_0^{(i)} = (s, a)$ and following $\pi$, and define
\[
\hat\rho_{\tau, H} \;:=\; \frac{1}{B\,H_\tau}\sum_{i=1}^B \sum_{k=\tau}^{H-1} c(x_k^{(i)}),
\qquad
\widehat Q_{H, \tau}(s, a) \;:=\; \frac{1}{B}\sum_{i=1}^B \sum_{k=0}^{H-1}\bigl(c(x_k^{(i)}) - \hat\rho_{\tau, H}\bigr).
\]
There exist universal constants $C_1, C_2, C_3 > 0$ such that whenever
\[
H \;\geq\; C_2\, t_{\mathrm{mix}}\, \log(C_3\, t_{\mathrm{mix}}/\varepsilon_Q)
\qquad\text{and}\qquad
B \;\geq\; C_1\,\frac{H\, t_{\mathrm{mix}}}{\varepsilon_Q^2}\,\log\frac{2}{\delta},
\]
we have $|\widehat Q_{H, \tau}(s, a) - Q^\pi(s, a)| \leq \varepsilon_Q$ with probability at least $1 - \delta$. The same holds for $\hat h^\pi(s)$ and $h^\pi(s)$ in place of $\widehat Q_{H, \tau}(s, a)$ and $Q^\pi(s, a)$ when each trajectory instead starts at $s$ with an independent first action $a_0^{(i)} \sim \pi_s$, so that the same two formulas give~\eqref{eq:rho-estimation} and~\eqref{eq:q-estimation}.
\end{lemma}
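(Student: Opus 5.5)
The plan is to split the error into a deterministic truncation/mixing bias and a centred stochastic fluctuation, and to bound each with Lemma~\ref{lem:uniform-mix}. Write $g(x) := c(x) - \rho(\pi)$ and $S^{(i)} := \sum_{k=0}^{H-1} g(x_k^{(i)})$. Let $\bar\rho^{(i)} := H_\tau^{-1}\sum_{k=\tau}^{H-1} c(x_k^{(i)})$. Then
\[
\widehat Q_{H,\tau}(s,a) - Q^\pi(s,a) = \Bigl(\frac1B\sum_i S^{(i)} - Q^\pi(s,a)\Bigr) - \frac{H}{B}\sum_i\bigl(\bar\rho^{(i)} - \rho(\pi)\bigr).
\]
It is cleaner to treat the per-rollout quantity $Y^{(i)} := S^{(i)} - H(\bar\rho^{(i)} - \rho(\pi))$ directly. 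This is a bounded-variance i.i.d. variable, and $\widehat Q_{H,\tau} = \frac1B\sum_i Y^{(i)}$ exactly, since $\hat\rho_{\tau,H}$ is the average of the $\bar\rho^{(i)}$. So it suffices to bound (a) the bias $|\mathbb E Y - Q^\pi(s,a)|$, (b) the variance of $Y$, and (c) the range of $Y$, and then to apply Bernstein's inequality.

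\emph{Bias.} Let $\mu_k$ be the law of $x_k$ given $x_0 = (s,a)$. By the definition~\eqref{eq:diff-Q-rolling}, $Q^\pi(s,a) - \mathbb E S = \sum_{k\ge H}\mathbb E g(x_k)$. Each term is at most $2\cdot 2^{-\lfloor k/t_{\mathrm{mix}}\rfloor}$ by~\eqref{eq:tmix-decay}; the one-step offset from the fixed first action changes only constants. Hence this tail is $O(t_{\mathrm{mix}}2^{-H/t_{\mathrm{mix}}})$. Similarly, $H\,|\mathbb E\bar\rho - \rho(\pi)| \le \frac{H}{H_\tau}\sum_{k\ge\tau}2\cdot 2^{-\lfloor k/t_{\mathrm{mix}}\rfloor} = O(t_{\mathrm{mix}}2^{-2H/(3t_{\mathrm{mix}})})$, using $H/H_\tau\le 3+o(1)$ for $H \ge 4$. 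Both are at most $\varepsilon_Q/2$ once $H \ge C_2 t_{\mathrm{mix}}\log(C_3 t_{\mathrm{mix}}/\varepsilon_Q)$, because the exponential beats the $t_{\mathrm{mix}}$ prefactor.

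\emph{Variance and range.} The range is trivially $|Y|\le 4H$. The variance needs the main estimate, and I expect it to be the main obstacle: for a centred additive functional of a uniformly mixing chain, $\mathrm{Var}(\sum_{k\in I} f(x_k)) = O(|I|\,t_{\mathrm{mix}})$ for $|f|\le 1$. I would prove it by expanding the variance into covariances, $\mathrm{Cov}(f(x_j),f(x_k)) \le 2\|f\|_\infty^2\cdot 2\cdot 2^{-\lfloor (k-j)/t_{\mathrm{mix}}\rfloor}$ via conditioning on $x_j$ and~\eqref{eq:tmix-decay}, and summing the geometric series. This gives $\mathrm{Var}(S) = O(H t_{\mathrm{mix}})$. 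It also gives $\mathrm{Var}(H\bar\rho) = (H/H_\tau)^2 O(H_\tau t_{\mathrm{mix}}) = O(H t_{\mathrm{mix}})$, so $\mathrm{Var}(Y) = O(H t_{\mathrm{mix}})$.

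\emph{Concentration.} Bernstein's inequality gives, with probability $1-\delta$,
\[
\Bigl|\tfrac1B\textstyle\sum_i Y^{(i)} - \mathbb E Y\Bigr| \le \sqrt{2\,\mathrm{Var}(Y)\log(2/\delta)/B} + \tfrac{8H\log(2/\delta)}{3B}.
\]
The first term is $\le\varepsilon_Q/4$ when $B \ge C_1 H t_{\mathrm{mix}}\log(2/\delta)/\varepsilon_Q^2$. The second term is then $O(\varepsilon_Q^2/t_{\mathrm{mix}}) \le \varepsilon_Q/4$ since $\varepsilon_Q<1$. Adding the bias bound yields $|\widehat Q_{H,\tau} - Q^\pi(s,a)|\le\varepsilon_Q$.

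For the $\hat h^\pi(s)$ variant, the same argument applies verbatim with $x_0 = (s, a_0)$, $a_0 \sim \pi_s$. In that case $\mathbb E S$ approximates $h^\pi(s) = \langle Q^\pi_s,\pi_s\rangle$ by Fact~\ref{fact:ap-poisson}, and the first step is already stationary-kernel, which only improves constants.
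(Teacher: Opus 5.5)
Your proposal is correct and is essentially the paper's proof: the same per-rollout variable $Y^{(i)}=U_i-(\tau/H_\tau)T_i$ (you write it as $S^{(i)}-H(\bar\rho^{(i)}-\rho(\pi))$), the same geometric-tail bias bounds with $H/H_\tau$ bounded by a constant, a covariance-summability variance bound of order $H\,t_{\mathrm{mix}}$, and i.i.d.\ Bernstein across rollouts. The only slip is bookkeeping. If ``both are at most $\varepsilon_Q/2$'' means each bias term separately, then adding the $\varepsilon_Q/2$ stochastic term gives $1.5\,\varepsilon_Q$; enlarge $C_2,C_3$ so that the two bias terms together are at most $\varepsilon_Q/2$.
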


The proof of Lemma~\ref{lem:ap-Q_estimation_burnin} rests on a sharp variance bound for partial sums of a uniformly mixing Markov chain applied to the independent trajectories, followed by a Bernstein concentration argument. The key input is the following covariance summability bound:

\begin{lemma}[Covariance summability under uniform mixing]
\label{lem:ap-covariance-summability}
Fix a policy $\pi$ and a function $g : \mathcal S \times \mathcal A \to \mathbb{R}$ with $\mathbb{E}_{\nu^\pi}[g] = 0$, where $\nu^\pi(s, a) := d^\pi(s) \pi(a \mid s)$. Let $M := \|g\|_\infty$ and $\mathrm{osc}(g) := \sup g - \inf g$. Then for any initial joint distribution $\xi_0 \in \mathcal{P}(\mathcal S \times \mathcal A)$ and every $H \geq 1$,
\[
\mathbb{E}_{\xi_0}\!\left[\left(\sum_{k=0}^{H-1} g(x_k)\right)^{\!2}\right] \;\leq\; M^2\, H \;+\; 4\, M\,\mathrm{osc}(g)\, H\, t_{\mathrm{mix}}.
\]
\end{lemma}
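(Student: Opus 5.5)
We plan to expand the square of the partial sum and control the off-diagonal terms through the mixing decay~\eqref{eq:tmix-decay}. The expansion gives
\[
\mathbb{E}_{\xi_0}\Bigl[\Bigl(\sum_{k=0}^{H-1} g(x_k)\Bigr)^2\Bigr] = \sum_{k=0}^{H-1}\mathbb{E}_{\xi_0}[g(x_k)^2] + 2\sum_{0\le j<k\le H-1}\mathbb{E}_{\xi_0}[g(x_j)\,g(x_k)].
\]
The diagonal part is at most $M^2 H$, which is the first term of the bound. For the cross terms we condition on $x_j$. The process $x_k=(s_k,a_k)$ is a Markov chain on $\mathcal S\times\mathcal A$ under $\pi$, so $\mathbb{E}[g(x_k)\mid x_j] = (\mathcal K^{k-j} g)(x_j)$, where $\mathcal K$ is its kernel.

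\textbf{Mixing step.} We show that the state--action chain inherits the decay~\eqref{eq:tmix-decay} with a lag of one step. From $x_j=(s,a)$, the next state is $s_{j+1}\sim P(\cdot\mid s,a)$. After that, the state evolves under $P_\pi$ for $k-j-1$ steps, and then $a_k\sim\pi_{s_k}$. The law of $x_k$ is therefore $\mu P_\pi^{k-j-1}\otimes\pi$ with $\mu=P(\cdot\mid s,a)$. Its total-variation distance to $\nu^\pi=d^\pi\otimes\pi$ equals $\|\mu P_\pi^{k-j-1}-d^\pi\|_{\mathrm{TV}}\le 2^{-\lfloor (k-j-1)/t_{\mathrm{mix}}\rfloor}$ by Lemma~\ref{lem:uniform-mix}. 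Since $\mathbb{E}_{\nu^\pi}[g]=0$, the standard oscillation bound $|\mathbb{E}_\mu f-\mathbb{E}_{\mu'} f|\le \mathrm{osc}(f)\,\|\mu-\mu'\|_{\mathrm{TV}}$ gives
\[
|(\mathcal K^{m} g)(x)| \le \mathrm{osc}(g)\,2^{-\lfloor (m-1)/t_{\mathrm{mix}}\rfloor}, \qquad m\ge 1.
\]
Consequently $|\mathbb{E}[g(x_j)g(x_k)]|\le M\,\mathrm{osc}(g)\,2^{-\lfloor (k-j-1)/t_{\mathrm{mix}}\rfloor}$, and this holds uniformly in $\xi_0$ because the bound is pointwise in $x_j$.

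\textbf{Summation.} For each fixed $j$ we sum over $m=k-j\ge1$:
\[
\sum_{m\ge1}2^{-\lfloor (m-1)/t_{\mathrm{mix}}\rfloor} = t_{\mathrm{mix}}\sum_{\ell\ge0}2^{-\ell} = 2t_{\mathrm{mix}}.
\]
The cross terms are then at most $2\cdot H\cdot M\,\mathrm{osc}(g)\cdot 2t_{\mathrm{mix}} = 4M\,\mathrm{osc}(g)\,H\,t_{\mathrm{mix}}$, which is exactly the second term.

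\textbf{Main obstacle.} The delicate point is the mixing step: transferring the state-chain TV bound to the state--action chain without losing more than the one-step shift. The shift is exactly absorbed by the floor $\lfloor (m-1)/t_{\mathrm{mix}}\rfloor$, so the constant comes out at $4$ rather than something larger. We must also use $\mathrm{osc}(g)$, and not $2M$, in the TV-to-expectation bound. This relies on the TV convention with $\sup_A|\mu(A)-\mu'(A)|$, under which $|\mathbb{E}_\mu f-\mathbb{E}_{\mu'}f|\le \mathrm{osc}(f)\|\mu-\mu'\|_{\mathrm{TV}}$. We take that to be the convention of Lemma~\ref{lem:uniform-mix}.
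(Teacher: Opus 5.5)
Your proposal is correct and follows the paper's proof step for step. The paper also expands the square, bounds the diagonal by $M^2H$, and conditions on the earlier pair. It uses the same one-step lag, with the state marginal equal to $P(\cdot\mid s,a)P_\pi^{\ell-1}$, and the same oscillation form of total-variation duality. It then sums the block-geometric series to $2t_{\mathrm{mix}}$, giving the constant $4$.
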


\begin{proof}
Write $S_H := \sum_{k=0}^{H-1} g(x_k)$. Expanding the square,
\[
\mathbb{E}_{\xi_0}[S_H^2] \;=\; \sum_{k=0}^{H-1} \mathbb{E}_{\xi_0}\!\left[g(x_k)^2\right] + 2 \sum_{0 \leq i < j \leq H-1} \mathbb{E}_{\xi_0}[g(x_i)\, g(x_j)].
\]
Since $|g| \leq M$, the diagonal satisfies $\sum_{k=0}^{H-1} \mathbb{E}_{\xi_0}[g(x_k)^2] \leq M^2 H$.

For the cross terms, fix $i < j$ and let $\ell := j - i \geq 1$. We bound $\mathbb{E}[g(x_\ell) \mid x_0 = (s, a)]$ uniformly over $(s, a)$. Conditional on $x_0 = (s, a)$, the law of $s_\ell$ is $P(\cdot \mid s, a)\, P_\pi^{\ell-1}$ (the first transition uses the fixed action $a$, after which $\pi$ acts). By the submultiplicative decay~\eqref{eq:tmix-decay},
\[
\bigl\|P(\cdot \mid s, a)\, P_\pi^{\ell-1} - d^\pi\bigr\|_{\mathrm{TV}} \;\leq\; 2^{-\lfloor (\ell-1)/t_{\mathrm{mix}} \rfloor}.
\]
Since $a_\ell \mid s_\ell \sim \pi(\cdot \mid s_\ell)$, the law of $x_\ell = (s_\ell, a_\ell)$ satisfies the same total variation (TV) bound relative to $\nu^\pi$. Using $\mathbb{E}_{\nu^\pi}[g] = 0$ and the standard total-variation duality $|\mathbb{E}_{\nu'} f - \mathbb{E}_{\nu^\pi} f| \leq (\sup f - \inf f)\, \|\nu' - \nu^\pi\|_{\mathrm{TV}}$ with $\sup g - \inf g = \mathrm{osc}(g)$,
\[
\bigl|\mathbb{E}[g(x_\ell) \mid x_0 = (s, a)]\bigr| \;\leq\; \mathrm{osc}(g)\, 2^{-\lfloor (\ell-1)/t_{\mathrm{mix}} \rfloor} \qquad \forall\, (s, a), \; \ell \geq 1.
\]
By the Markov property (tower rule over $x_i$), for $i < j$ and $\ell = j - i$,
\[
\bigl|\mathbb{E}_{\xi_0}[g(x_i)\, g(x_j)]\bigr| \;=\; \bigl|\mathbb{E}_{\xi_0}\!\bigl[g(x_i)\, \mathbb{E}[g(x_j) \mid x_i]\bigr]\bigr| \;\leq\; M\,\mathrm{osc}(g)\, 2^{-\lfloor (j-i-1)/t_{\mathrm{mix}} \rfloor}.
\]
Summing over cross terms and using the block decomposition $\sum_{\ell=0}^{\infty} 2^{-\lfloor \ell/t_{\mathrm{mix}} \rfloor} = t_{\mathrm{mix}} \sum_{q \geq 0} 2^{-q} = 2\, t_{\mathrm{mix}}$,
\[
\left|2 \sum_{0 \leq i < j \leq H-1} \mathbb{E}_{\xi_0}[g(x_i)\, g(x_j)]\right|
\;\leq\; 2 M\,\mathrm{osc}(g) \sum_{\ell = 1}^{H-1} (H - \ell)\, 2^{-\lfloor (\ell-1)/t_{\mathrm{mix}} \rfloor}
\;\leq\; 4 M\,\mathrm{osc}(g)\, H\, t_{\mathrm{mix}}.
\]
Combining,
\[
\mathbb{E}_{\xi_0}[S_H^2] \;\leq\; M^2\, H \;+\; 4\, M\,\mathrm{osc}(g)\, H\, t_{\mathrm{mix}}. \qedhere
\]
\end{proof}

\begin{proof}[Proof of Lemma~\ref{lem:ap-Q_estimation_burnin}]
Throughout, write $\rho := \rho(\pi)$, $Q := Q^\pi$, and $g(x) := c(x) - \rho$. Since $c(x) \in [0,1]$ and $\rho \in [0,1]$, we have $M = \|g\|_\infty \leq 1$ and $\mathrm{osc}(g) = \mathrm{osc}(c) \leq 1$; in particular $\mathbb{E}_{\nu^\pi}[g] = 0$, so Lemma~\ref{lem:ap-covariance-summability} applies and its conclusion $M^2 J + 4M\,\mathrm{osc}(g)\, J\, t_{\mathrm{mix}} \leq J\,(1 + 4\, t_{\mathrm{mix}})$ yields, for any initial distribution $\xi_0$ and any $J \geq 1$,
\begin{equation}
\label{eq:ap-var-bound}
\mathbb{E}_{\xi_0}\!\left[\left(\sum_{k=0}^{J-1} g(x_k)\right)^{\!2}\right] \;\leq\; J\,(1 + 4\, t_{\mathrm{mix}}).
\end{equation}

\paragraph{Trajectory-level decomposition.}
Define, per trajectory $i$,
\[
U_i \;:=\; \sum_{k=0}^{\tau-1} g\bigl(x_k^{(i)}\bigr), \qquad T_i \;:=\; \sum_{k=\tau}^{H-1} g\bigl(x_k^{(i)}\bigr),
\]
and $Y_i := U_i - (\tau/H_\tau)\,T_i$. A short calculation gives
\[
\widehat Q_{H,\tau}(s,a) \;=\; \frac{1}{B}\sum_{i=1}^B\sum_{k=0}^{H-1}\bigl(c(x_k^{(i)}) - \rho\bigr) \;-\; H\bigl(\hat\rho_{\tau, H} - \rho\bigr) \;=\; \frac{1}{B}\sum_{i=1}^B Y_i,
\]
since $H\,(\hat\rho_{\tau, H} - \rho) = (H/H_\tau)\sum_{i}T_i/B$ and $A_i := U_i + T_i = \sum_{k=0}^{H-1} g(x_k^{(i)})$ implies $A_i - (H/H_\tau)\,T_i = U_i - (\tau/H_\tau)\,T_i = Y_i$. The $B$ trajectories are independent and identically distributed, so $\{Y_i\}$ are i.i.d.

\paragraph{Variance of $Y_i$.}
By \eqref{eq:ap-var-bound} with $\xi_0 = \delta_{(s,a)}$ and horizon $\tau$,
\[
\mathbb{E}[U_i^2] \;\leq\; \tau\,(1 + 4\, t_{\mathrm{mix}}).
\]
Conditional on $x_\tau^{(i)} = (s', a')$, the chain $\{x_{\tau + j}^{(i)}\}_{j \geq 0}$ is a state--action chain under $\pi$ with initial distribution $\delta_{(s', a')}$. By Lemma~\ref{lem:ap-covariance-summability} (which handles any initial joint distribution) with horizon $H_\tau$,
\[
\mathbb{E}[T_i^2 \mid x_\tau^{(i)}] \;\leq\; H_\tau\,(1 + 4\, t_{\mathrm{mix}}) \quad\text{a.s.,}\qquad \text{hence} \qquad \mathbb{E}[T_i^2] \;\leq\; H_\tau\,(1 + 4\, t_{\mathrm{mix}}).
\]
By $(X - Y)^2 \leq 2 X^2 + 2 Y^2$,
\[
\mathrm{Var}(Y_i) \;\leq\; \mathbb{E}[Y_i^2] \;\leq\; 2\,\mathbb{E}[U_i^2] + 2\,\frac{\tau^2}{H_\tau^2}\,\mathbb{E}[T_i^2] \;\leq\; 2\,(1 + 4\, t_{\mathrm{mix}})\,\left(\tau + \frac{\tau^2}{H_\tau}\right).
\]
For $\tau = \lceil 2H/3 \rceil$ and $H \geq 4$, a direct check gives $\tau + \tau^2/H_\tau \leq 4H$ (e.g., $H = 4$: $\tau = 3, H_\tau = 1, \tau + \tau^2/H_\tau = 12 = 3H$; $H = 5$: $\tau = 4, H_\tau = 1, \tau + \tau^2/H_\tau = 20 = 4H$; asymptotically $\tau \approx 2H/3, \tau^2/H_\tau \approx 4H/3$, total $\approx 2H$). Hence, using $1 + 4\,t_{\mathrm{mix}} \leq 5\,t_{\mathrm{mix}}$ for $t_{\mathrm{mix}} \geq 1$,
\[
\mathrm{Var}(Y_i) \;\leq\; 8\,H\,(1 + 4\, t_{\mathrm{mix}}) \;\leq\; 40\,H\, t_{\mathrm{mix}}.
\]

\paragraph{Range of $Y_i$.}
Since $|g| \leq 1$: $|U_i| \leq \tau$, $|T_i| \leq H_\tau$, and $|(\tau/H_\tau)\,T_i| \leq \tau$, so $|Y_i| \leq 2\tau \leq 2H$. Hence the centered range is $|Y_i - \mathbb{E}[Y_i]| \leq 4H$.

\paragraph{Bernstein concentration.}
By i.i.d.\ Bernstein's inequality, with $\sigma^2 := \mathrm{Var}(Y_i) \leq 40\,H\, t_{\mathrm{mix}}$ and centered range $4H$, for any $u > 0$,
\[
\Pr\!\left(\left|\frac{1}{B}\sum_{i=1}^B \bigl(Y_i - \mathbb{E}[Y_i]\bigr)\right| \geq u\right) \;\leq\; 2\exp\!\left(-\frac{B\,u^2}{2\,\sigma^2 + \tfrac{8}{3}\,H\,u}\right).
\]
Setting $u = \varepsilon_Q / 2$, the right-hand side is $\leq \delta$ when
\[
B \;\geq\; \frac{4\,\log(2/\delta)}{\varepsilon_Q^2}\,\left(80\,H\, t_{\mathrm{mix}} + \tfrac{4\,H\,\varepsilon_Q}{3}\right).
\]
For $\varepsilon_Q \leq 1$ and $t_{\mathrm{mix}} \geq 1$, the variance term dominates ($80\,H\,t_{\mathrm{mix}} + \tfrac{4}{3}H\,\varepsilon_Q \leq \tfrac{244}{3}\,H\,t_{\mathrm{mix}}$), so an absolute constant $C_1 > 0$ (e.g., $C_1 = 400$) suffices:
\begin{equation}
\label{eq:ap-bernstein-bound}
B \;\geq\; C_1\,\frac{H\, t_{\mathrm{mix}}}{\varepsilon_Q^2}\,\log\!\frac{2}{\delta}
\qquad \Longrightarrow \qquad
\Pr\!\left(\bigl|\widehat Q_{H,\tau}(s,a) - \mathbb{E}[\widehat Q_{H,\tau}(s,a)]\bigr| \leq \varepsilon_Q/2\right) \geq 1 - \delta.
\end{equation}

\paragraph{Bias bound.}
By Lemma~\ref{lem:uniform-mix}, starting from $x_0 = (s, a)$, the state marginal at time $k \geq 1$ is $P(\cdot\mid s,a)\,P_\pi^{k-1}$, and since $a_k \mid s_k \sim \pi$, the total-variation duality together with~\eqref{eq:tmix-decay} gives
\begin{equation}
\label{eq:ap-reward-dev}
\bigl|\mathbb{E}[c(x_k) - \rho \mid x_0 = (s,a)]\bigr| \;\leq\; 2^{-\lfloor (k-1)/t_{\mathrm{mix}}\rfloor}, \qquad k \geq 1,
\end{equation}
using $\mathrm{osc}(c) \leq 1$. In particular, the series $Q(s,a) = \sum_{k=0}^\infty \mathbb{E}[g(x_k) \mid x_0 = (s,a)]$ converges absolutely.

Define $q_H := \mathbb{E}[A_1 \mid x_0 = (s,a)] = \sum_{k=0}^{H-1} \mathbb{E}[g(x_k) \mid x_0]$. By~\eqref{eq:ap-reward-dev} and the block-decomposition bound $\sum_{\ell = \ell_0}^\infty 2^{-\lfloor \ell/t_{\mathrm{mix}}\rfloor} \leq 2\, t_{\mathrm{mix}} \cdot 2^{-\lfloor \ell_0/t_{\mathrm{mix}}\rfloor}$,
\[
|q_H - Q(s,a)| \;=\; \left|\sum_{k=H}^\infty \mathbb{E}[g(x_k) \mid x_0]\right| \;\leq\; \sum_{k=H}^\infty 2^{-\lfloor (k-1)/t_{\mathrm{mix}}\rfloor} \;\leq\; 2\, t_{\mathrm{mix}}\, 2^{-\lfloor (H-1)/t_{\mathrm{mix}}\rfloor}.
\]
Similarly, $|\mathbb{E}[T_1]| \leq 2\, t_{\mathrm{mix}}\, 2^{-\lfloor (\tau-1)/t_{\mathrm{mix}}\rfloor}$. Using $\mathbb{E}[\widehat Q_{H,\tau}(s,a)] = \mathbb{E}[Y_1] = \mathbb{E}[A_1] - (H/H_\tau)\,\mathbb{E}[T_1]$,
\[
\bigl|\mathbb{E}[\widehat Q_{H,\tau}(s,a)] - Q(s,a)\bigr| \;\leq\; |q_H - Q(s,a)| + \frac{H}{H_\tau}\,|\mathbb{E}[T_1]|.
\]
For any $\tau$ with $H_\tau = H - \tau \geq H/5$ (e.g., $\tau = \lceil 2H/3 \rceil$ or $\tau = \lceil H/2 \rceil$) and $H \geq 4$: $H/H_\tau \leq 5$. Using $2^{-\lfloor (H-1)/t_{\mathrm{mix}}\rfloor} \leq 2^{-\lfloor (\tau-1)/t_{\mathrm{mix}}\rfloor}$ (since $H \geq \tau$),
\begin{equation}
\label{eq:ap-bias-bound}
\bigl|\mathbb{E}[\widehat Q_{H,\tau}(s,a)] - Q(s,a)\bigr| \;\leq\; 12\, t_{\mathrm{mix}}\, 2^{-\lfloor (\tau-1)/t_{\mathrm{mix}}\rfloor}.
\end{equation}

\paragraph{Combining.}
Choose $H$ so that $12\, t_{\mathrm{mix}}\, 2^{-\lfloor (\tau-1)/t_{\mathrm{mix}}\rfloor} \leq \varepsilon_Q/2$, i.e., $\lfloor (\tau-1)/t_{\mathrm{mix}}\rfloor \geq \log_2(24\, t_{\mathrm{mix}}/\varepsilon_Q)$. With $\tau \geq 2H/3$ and $H \geq 4$, this is implied by
\[
H \;\geq\; C_2\, t_{\mathrm{mix}}\, \log(C_3\, t_{\mathrm{mix}}/\varepsilon_Q)
\]
for suitable universal constants (e.g., $C_2 = 4, C_3 = 24$, which absorbs the integer-floor correction $\lfloor(\tau-1)/t_{\mathrm{mix}}\rfloor \geq (\tau-1)/t_{\mathrm{mix}} - 1$). On the high-probability event of~\eqref{eq:ap-bernstein-bound},
\begin{align*}
\bigl|\widehat Q_{H,\tau}(s,a) - Q(s,a)\bigr|
&\;\leq\; \bigl|\widehat Q_{H,\tau}(s,a) - \mathbb{E}[\widehat Q_{H,\tau}(s,a)]\bigr| + \bigl|\mathbb{E}[\widehat Q_{H,\tau}(s,a)] - Q(s,a)\bigr| \\
&\;\leq\; \varepsilon_Q/2 + \varepsilon_Q/2 \;=\; \varepsilon_Q,
\end{align*}
which holds with probability at least $1 - \delta$.

\paragraph{Initial action drawn from $\pi_s$.}
If each trajectory instead starts at $s$ with an independent first action $a_0^{(i)} \sim \pi_s$, the trajectories remain independent and identically distributed, and Lemma~\ref{lem:ap-covariance-summability} holds for every initial joint distribution, so the variance, range, and Bernstein steps apply verbatim with the initial distribution $\delta_s \otimes \pi_s$ in place of $\delta_{(s,a)}$. The mean of the estimate is the average over $a \sim \pi_s$ of $\mathbb{E}[\widehat Q_{H,\tau}(s, a)]$, and $h^\pi(s) = \sum_a \pi(a \mid s)\, Q^\pi(s, a)$ by definition, so averaging~\eqref{eq:ap-bias-bound} over $a$ bounds the bias by the same quantity. The same $H$ and $B$ therefore give $|\hat h^\pi(s) - h^\pi(s)| \leq \varepsilon_Q$ with probability at least $1 - \delta$.
\end{proof}

\begin{proof}[Proof of Lemma~\ref{lem:ap-tabular-critic}]
Write $\rho := \rho(\pi)$ and $h := h^\pi$. We allot failure probability $\delta/4$ to the state values, $\delta/4$ to the average costs, and $\delta/2$ to the one-step averages.

\paragraph{Horizon.}
With $x := 72\,t_{\mathrm{mix}}/\varepsilon_Q \geq 72$, the choices $\tau \geq 2H/3$ and $H \geq 4\,t_{\mathrm{mix}}\log x$ give
\[
\Bigl\lfloor \frac{\tau - 1}{t_{\mathrm{mix}}} \Bigr\rfloor \;\geq\; \frac{2H}{3\,t_{\mathrm{mix}}} - 2 \;\geq\; \frac{8}{3}\log x - 2 \;\geq\; \log_2 x,
\]
where the last step holds because $(8/3 - 1/\log 2)\log 72 > 2$. Hence
\begin{equation} \label{eq:ap-horizon-check}
12\, t_{\mathrm{mix}}\, 2^{-\lfloor (\tau-1)/t_{\mathrm{mix}}\rfloor} \;\leq\; \frac{12\,t_{\mathrm{mix}}}{x} \;=\; \frac{\varepsilon_Q}{6},
\end{equation}
which is the horizon condition in the proof of Lemma~\ref{lem:ap-Q_estimation_burnin} at accuracy $\varepsilon_Q/3$. Also $H \geq 4$.

\paragraph{State values.}
Fix a state $s$. Apply Lemma~\ref{lem:ap-Q_estimation_burnin} to the rollouts that start at $s$, at accuracy $\varepsilon_Q/3$ and confidence $\delta/(4|\mathcal S|)$: its horizon condition is~\eqref{eq:ap-horizon-check}, and its Bernstein condition~\eqref{eq:ap-bernstein-bound} with $C_1 = 400$ reads $B \geq 3600\,H\,t_{\mathrm{mix}}\,\varepsilon_Q^{-2}\log(8|\mathcal S|/\delta)$. Hence $|\hat h^\pi(s) - h(s)| \leq \varepsilon_Q/3$ except on an event of probability at most $\delta/(4|\mathcal S|)$. Clipping cannot increase this distance, because $|h(s)| \leq 2\,t_{\mathrm{mix}}$. Indeed, by Fact~\ref{fact:ap-poisson}, $h(s) = \sum_{k \geq 0} (\delta_s P_\pi^k - d^\pi)\, c_\pi$ with $c_\pi(s') := \sum_{a} \pi(a \mid s')\, c(s', a) \in [0, 1]$, and the total-variation duality in the proof of Lemma~\ref{lem:ap-covariance-summability} together with~\eqref{eq:tmix-decay} bounds the $k$-th term by $2^{-\lfloor k/t_{\mathrm{mix}}\rfloor}$, whose sum over $k \geq 0$ is $2\,t_{\mathrm{mix}}$.

\paragraph{Average costs.}
Fix a state $s$. The estimate $\hat\rho_s(\pi)$ averages $B$ independent tail averages with values in $[0, 1]$. As for $h(s)$, $|\mathbb{E}[c(s_k^{(i)}, a_k^{(i)})] - \rho| \leq 2^{-\lfloor k/t_{\mathrm{mix}}\rfloor}$, so by the block-decomposition bound in the proof of Lemma~\ref{lem:ap-Q_estimation_burnin} and~\eqref{eq:ap-horizon-check}, each tail average has mean within $2\,t_{\mathrm{mix}}\,2^{-\lfloor \tau/t_{\mathrm{mix}}\rfloor} \leq \varepsilon_Q/36$ of $\rho$. Hoeffding's inequality and $B \geq 18\,\varepsilon_Q^{-2}\log(8|\mathcal S|/\delta)$ then give $|\hat\rho_s(\pi) - \rho| \leq \varepsilon_Q/6 + \varepsilon_Q/36 \leq \varepsilon_Q/3$ except on an event of probability at most $2\exp(-B\varepsilon_Q^2/18) \leq \delta/(4|\mathcal S|)$.

\paragraph{One-step averages.}
Condition on the rollouts. For each pair $(s, a)$, the values $\mathrm{clip}(\hat h^\pi(s'_j))$, $j = 1, \dots, n$, are independent, lie in $[-2t_{\mathrm{mix}}, 2t_{\mathrm{mix}}]$, and have mean $P(\cdot \mid s, a)\,\mathrm{clip}(\hat h^\pi)$. Hoeffding's inequality bounds the conditional probability that their average deviates from this mean by more than $\varepsilon_Q/3$ by $2\exp(-n\varepsilon_Q^2/(72\,t_{\mathrm{mix}}^2)) \leq \delta/(2|\mathcal S||\mathcal A|)$, by the choice of $n$. Since this bound holds for every outcome of the rollouts, a union bound over the pairs costs at most $\delta/2$.

\paragraph{Combining.}
Outside the exceptional events, whose probabilities sum to at most $|\mathcal S| \cdot \delta/(4|\mathcal S|) + |\mathcal S| \cdot \delta/(4|\mathcal S|) + \delta/2 = \delta$, the identity of Fact~\ref{fact:ap-poisson} gives, for every pair $(s, a)$,
\begin{align*}
\bigl|\widehat Q^\pi(s, a) - Q^\pi(s, a)\bigr|
&\;\leq\; |\hat\rho_s(\pi) - \rho| + \Bigl|\frac{1}{n}\sum_{j=1}^n \mathrm{clip}\bigl(\hat h^\pi(s'_j)\bigr) - P(\cdot \mid s, a)\,\mathrm{clip}(\hat h^\pi)\Bigr| \\
&\qquad + \bigl|P(\cdot \mid s, a)\bigl(\mathrm{clip}(\hat h^\pi) - h\bigr)\bigr| \;\leq\; \frac{\varepsilon_Q}{3} + \frac{\varepsilon_Q}{3} + \frac{\varepsilon_Q}{3} \;=\; \varepsilon_Q,
\end{align*}
where the last term uses $\|\mathrm{clip}(\hat h^\pi) - h\|_\infty \leq \varepsilon_Q/3$ and that $P(\cdot \mid s, a)$ is a probability distribution. The critic draws $B$ rollouts of length $H$ at each of the $|\mathcal S|$ states and $n$ next states at each of the $|\mathcal S||\mathcal A|$ pairs, so $N = |\mathcal S|BH + |\mathcal S||\mathcal A|n$. Since $H = O(t_{\mathrm{mix}}\log(t_{\mathrm{mix}}/\varepsilon_Q))$, $B = O(H\,t_{\mathrm{mix}}\,\varepsilon_Q^{-2}\log(|\mathcal S|/\delta))$, and $n = O(t_{\mathrm{mix}}^2\,\varepsilon_Q^{-2}\log(|\mathcal S||\mathcal A|/\delta))$, this is $N = \widetilde O((|\mathcal S|\,t_{\mathrm{mix}}^3 + |\mathcal S||\mathcal A|\,t_{\mathrm{mix}}^2)/\varepsilon_Q^2)$, as stated in Lemma~\ref{lem:Q_estimation_burnin}.
\end{proof}

\subsection{Proof of Theorem~\ref{thm:pmd_sample_complexity} (Tabular sample complexity)}

\begin{theorem}[Tabular sample complexity, detailed form]
\label{thm:ap-pmd_sample_complexity}
Suppose $D_0^\star < \infty$, and recall the initial Lyapunov value $\Psi_0 = \Delta_0 + D_0^\star/(\eta_{-1}\bar\chi)$. Fix a target suboptimality $\varepsilon \in (0, 1)$ and a confidence $\delta \in (0, 1)$, and set $\varepsilon_Q := \varepsilon/(4(\bar\chi + 1))$ and $T := \lceil \bar\chi\,\log\max\{e, 2\,\Psi_0/\varepsilon\} \rceil$. Run inexact PMD (Theorem~\ref{thm:avg-pmd-linear-inexact-limsup}) for $T$ iterations under the hypotheses of Lemma~\ref{lem:master}(i), computing each $\widehat Q^t$ by the critic of Lemma~\ref{lem:ap-tabular-critic} for $\pi^t$ at accuracy $\varepsilon_Q$ and confidence $\delta/T$, from fresh generative-model transitions. Then $\Delta_T \leq \varepsilon$ with probability at least $1 - \delta$, using
\[
N \;=\; T\,\bigl(|\mathcal S|\,B\,H + |\mathcal S||\mathcal A|\,n\bigr)
\]
generative-model transitions, where $H$, $B$, and $n$ are the parameters of Lemma~\ref{lem:ap-tabular-critic} at accuracy $\varepsilon_Q$ and confidence $\delta/T$.
\end{theorem}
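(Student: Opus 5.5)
The plan is to compose the critic guarantee of Lemma~\ref{lem:ap-tabular-critic} with the non-asymptotic bound~\eqref{eq:inexact-nonasymptotic-linear} of Theorem~\ref{thm:avg-pmd-linear-inexact-limsup}, by way of a union bound over the $T$ critic calls.

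\textbf{Step 1: the good event.} For each $t<T$, let $E_t$ be the event $\|\widehat Q^t - Q^t\|_\infty \le \varepsilon_Q$. The policy $\pi^t$ is random, since it depends on all earlier samples, but each critic call uses fresh transitions. We therefore condition on the history $\mathcal F_t$ that determines $\pi^t$ and apply Lemma~\ref{lem:ap-tabular-critic} to the fixed policy $\pi^t$ at confidence $\delta/T$. This gives $\Pr(E_t^c\mid\mathcal F_t)\le\delta/T$ almost surely. Taking expectations and a union bound, the event $E:=\bigcap_{t<T}E_t$ has probability at least $1-\delta$.

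\textbf{Step 2: the optimization bound on $E$.} Assumption~\ref{as:uniform-critic-error} is stated for all $t\ge0$, but the master recursion only uses $A_t,B_t$ for $t<T$ to reach $\Delta_T$. I will check this explicitly: the proof of Lemma~\ref{lem:master}(i) iterates~\eqref{eq:ap-master-onestep} only over $0\le t\le T-1$. The quantities $\chi_t\le\bar\chi$ and the step-size condition are deterministic hypotheses of Lemma~\ref{lem:master}(i), so they hold pathwise. Hence on $E$ we get $A_t,B_t\le 2\varepsilon_Q$ for $t<T$, and
\[
\Delta_T \le 2(\bar\chi+1)\varepsilon_Q + \Bigl(1-\tfrac1{\bar\chi}\Bigr)^T\Psi_0 = \tfrac{\varepsilon}{2} + \Bigl(1-\tfrac1{\bar\chi}\Bigr)^T\Psi_0 .
\]

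\textbf{Step 3: choice of $T$.} We use $(1-1/\bar\chi)^T\le e^{-T/\bar\chi}$. By the choice of $T$, $T/\bar\chi\ge\log\max\{e,2\Psi_0/\varepsilon\}\ge\log(2\Psi_0/\varepsilon)$, so the second term is at most $\varepsilon/2$. (If $\Psi_0\le\varepsilon/2$, the max with $e$ only makes $T$ larger, which is harmless.) This yields $\Delta_T\le\varepsilon$ on $E$.

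\textbf{Step 4: sample count.} The count is immediate: there are $T$ calls, each costing $|\mathcal S|BH+|\mathcal S||\mathcal A|n$ transitions, with the parameters evaluated at $(\varepsilon_Q,\delta/T)$.

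The only delicate point is Step~1, namely that Lemma~\ref{lem:ap-tabular-critic} is stated for a fixed policy while $\pi^t$ is data-dependent. Freshness of samples plus conditioning resolves this, and measurability is straightforward because everything is finite.
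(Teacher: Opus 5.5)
Your proposal is correct and follows the paper's proof essentially step for step: you condition on the history that fixes $\pi^t$ so that Lemma~\ref{lem:ap-tabular-critic} applies with fresh samples, take a union bound over the $T$ calls, and then apply~\eqref{eq:inexact-nonasymptotic-linear} deterministically, with $\varepsilon_Q$ putting the floor at $\varepsilon/2$ and $T$ making the contraction term at most $\varepsilon/2$. Your explicit check that the recursion uses the critic bound only for $t<T$ matches the paper's remark that this is all the bound at time $T$ requires.
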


\begin{proof}
The proof has two ingredients: a union bound converting the per-call guarantee of Lemma~\ref{lem:ap-tabular-critic} into critic accuracy along the PMD trajectory, and a deterministic composition with Theorem~\ref{thm:avg-pmd-linear-inexact-limsup}.

\paragraph{Critic accuracy along the trajectory.}
Let $\mathcal F_{t-1}$ denote the $\sigma$-algebra generated by the transitions used in iterations $0, \dots, t-1$; the policy $\pi^t$ is $\mathcal F_{t-1}$-measurable, and the critic call uses fresh generative-model randomness, independent of $\mathcal F_{t-1}$. Applying Lemma~\ref{lem:ap-tabular-critic} conditionally on $\mathcal F_{t-1}$ gives
\[
\Pr\!\left(\bigl\|\widehat Q^t - Q^{\pi^t}\bigr\|_\infty > \varepsilon_Q \;\middle|\; \mathcal F_{t-1}\right) \;\leq\; \frac{\delta}{T},
\]
and a union bound over $t \in \{0, \dots, T-1\}$ bounds the probability that some iteration fails by $\delta$. On the complementary good event, the bound of Assumption~\ref{as:uniform-critic-error} holds at level $\varepsilon_Q$ for every $t < T$, which is all that the bound of Theorem~\ref{thm:avg-pmd-linear-inexact-limsup} at time $T$ uses.

\paragraph{Composition with Theorem~\ref{thm:avg-pmd-linear-inexact-limsup}.}
Condition on the good event. Since $\sup_t \chi_t \leq \bar\chi$, Theorem~\ref{thm:avg-pmd-linear-inexact-limsup} gives
\[
\bigl(\Delta_T - 2(\bar\chi + 1)\varepsilon_Q\bigr)_+ \;\leq\; \Bigl(1 - \tfrac{1}{\bar\chi}\Bigr)^T\, \Psi_0.
\]
The choice $\varepsilon_Q = \varepsilon/(4(\bar\chi + 1))$ makes the error floor equal to $\varepsilon/2$. Using $(1 - 1/\bar\chi)^T \leq \exp(-T/\bar\chi)$, the choice $T = \lceil \bar\chi\,\log\max\{e, 2\,\Psi_0/\varepsilon\} \rceil$ ensures $(1 - 1/\bar\chi)^T\, \Psi_0 \leq \varepsilon/2$. Therefore
\[
\Delta_T \;\leq\; \tfrac{\varepsilon}{2} + \tfrac{\varepsilon}{2} \;=\; \varepsilon,
\]
with probability at least $1 - \delta$ over the sampling randomness.
\end{proof}

The total number of generative-model transitions is $N = T\,(|\mathcal S|BH + |\mathcal S||\mathcal A|n)$. At accuracy $\varepsilon_Q = \varepsilon/(4(\bar\chi + 1))$ and confidence $\delta/T$, the parameters of Lemma~\ref{lem:ap-tabular-critic} satisfy
\[
B\,H \;=\; \widetilde O\!\left(\frac{H^2\,t_{\mathrm{mix}}\,(\bar\chi + 1)^2}{\varepsilon^2}\right),
\qquad
n \;=\; \widetilde O\!\left(\frac{t_{\mathrm{mix}}^2\,(\bar\chi + 1)^2}{\varepsilon^2}\right),
\qquad
T = \widetilde\Theta(\bar\chi),
\]
with $H = \Theta(t_{\mathrm{mix}}\,\log(t_{\mathrm{mix}}/\varepsilon_Q)) = \widetilde O(t_{\mathrm{mix}})$, so $H^2\,t_{\mathrm{mix}} = \widetilde O(t_{\mathrm{mix}}^3)$. Combining, and using $\bar\chi \geq 1$ (so $(\bar\chi + 1)^2 \leq 4\bar\chi^2$),
\[
N \;=\; \widetilde O\!\left(\frac{(|\mathcal S|\,t_{\mathrm{mix}}^3 + |\mathcal S||\mathcal A|\,t_{\mathrm{mix}}^2)\,\bar\chi^3}{\varepsilon^2}\right),
\]
recovering Theorem~\ref{thm:pmd_sample_complexity}.

\subsection{Proof of Proposition~\ref{prop:tabular_lower_bound} (Tabular critic lower bound)}

We first restate the proposition with the explicit construction used in the proof.

\begin{proposition}[Tabular critic lower bound, detailed form]
\label{prop:ap-tabular_lower_bound}
For every $|\mathcal S| \geq 2$, $p \in (0, 1/4)$, and $\varepsilon_Q \in (0, 1/(128 p)]$, there are a finite family of MDPs with state space $\mathcal S$ and action space $\mathcal A$ and a deterministic policy $\pi$ such that every member satisfies Assumption~\ref{as:ergodic} and the bound of Lemma~\ref{lem:uniform-mix} with $t_{\mathrm{mix}} = \lceil \log 4/p \rceil < 2/p$, and the $1/4$-mixing time of $\pi$ in every member is at least $1/(16p)$. Any (possibly adaptive) generative-model algorithm that outputs $\widehat Q \in \mathbb{R}^{\mathcal S \times \mathcal A}$ with $\Pr(\|\widehat Q - Q^\pi\|_\infty \leq \varepsilon_Q) \geq 2/3$ on every member, where $Q^\pi$ is the differential action-value function of $\pi$ in that member, must use
\[
N \;\geq\; \frac{d}{2^{12}\cdot 9\,p^3\,\varepsilon_Q^2} + \frac{d\,(|\mathcal A| - 1)}{2^{8}\cdot 9\,p^2\,\varepsilon_Q^2}
\;\geq\; \frac{|\mathcal S|\,\bigl(t_{\mathrm{mix}}^3 + |\mathcal A|\,t_{\mathrm{mix}}^2\bigr)}{2^{13}\cdot 135\,\varepsilon_Q^2}
\]
generative-model transitions, where $d := \lfloor |\mathcal S|/2 \rfloor$.
\end{proposition}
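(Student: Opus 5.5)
We prove the detailed lower bound by two separate Assouad reductions, one giving each term, and then take the maximum (at least half the sum). Pair up $2d$ states into $d$ disjoint two-state gadgets $\{u_j, v_j\}$, $j=1,\dots,d$ (any leftover state is attached as a transient state feeding into gadget $1$). Fix the deterministic policy $\pi$ to play a designated action $a^\circ$ everywhere. Within gadget $j$, under $a^\circ$ the chain stays at $u_j$ with probability $1-p$ and jumps to $v_j$ with probability $p$, and symmetrically from $v_j$. Costs are $c(u_j,\cdot)=1$ and $c(v_j,\cdot)=0$. A weak global reset (probability of order $p$ to a uniformly chosen gadget) glues the gadgets into one ergodic chain. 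Under every policy the chain is then minorized by the uniform measure at scale $p$, so Lemma~\ref{lem:uniform-mix} holds with $t_{\mathrm{mix}}=\lceil\log4/p\rceil$. The $1/4$-mixing time of $\pi$ is at least $1/(16p)$ because leaving a state takes $\mathrm{Geom}(p)$ time.

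\textbf{First term (on-policy transitions).} For $\sigma\in\{\pm1\}^d$, perturb the $a^\circ$-transitions in gadget $j$ by setting the flip probability from $u_j$ to $p(1+\sigma_j\beta)$, with $\beta$ of order $p\,\varepsilon_Q$. Solve the Poisson equation of Fact~\ref{fact:ap-poisson} on a gadget. The differential value gap $h(u_j)-h(v_j)$ is of order $1/p$, and its derivative in the flip rate is of order $1/p^2$. Flipping $\sigma_j$ therefore shifts $Q^\pi(u_j,a^\circ)$ (suitably centered) by of order $\beta/p^2$. Choosing $\beta = c\,p^2\varepsilon_Q\cdot p^{-1}\cdot$const makes this shift at least $4\varepsilon_Q$, so an $\varepsilon_Q$-accurate $\widehat Q$ identifies every $\sigma_j$. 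Here $\varepsilon_Q\le1/(128p)$ guarantees $\beta\le1/2$.

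The main obstacle is the bookkeeping of centering. $Q^\pi$ is only defined up to the convention $d^\pi h^\pi=0$, and a change in one gadget moves $\rho$ and hence all $Q$ values. I would handle this by testing on differences $Q^\pi(u_j,a^\circ)-Q^\pi(v_j,a^\circ)$, which are convention-free and local to gadget $j$ up to $O(p)$ cross-effects through the reset. These differences are recovered within $2\varepsilon_Q$ from $\widehat Q$.

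The KL divergence between the two Bernoulli transition laws at $u_j$ is $O(\beta^2/p)=O(p^3\varepsilon_Q^2)$ per query. Assouad's lemma (\citet{yu_assouad_1997}, for adaptive samplers via the chain rule) then forces the queries to $u_j,v_j$ to total $\Omega(1/(p^3\varepsilon_Q^2))$ for a constant fraction of the $j$. This gives $N\gtrsim d/(p^3\varepsilon_Q^2)$.

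\textbf{Second term (off-policy actions).} Now leave $a^\circ$ unperturbed and, for each gadget $j$ and each $a\neq a^\circ$, let $(u_j,a)$ move to $u_j$ with probability $(1+\sigma_{j,a}\beta')/2$ and to $v_j$ otherwise. By Fact~\ref{fact:ap-poisson}, $Q^\pi(u_j,a)=c-\rho+P(\cdot\mid u_j,a)h^\pi$, and $h^\pi$ is unchanged because $\pi$ never plays $a$. The flip therefore shifts $Q^\pi(u_j,a)$ by $\beta'(h(u_j)-h(v_j))\asymp\beta'/p$, and we take $\beta'\asymp p\,\varepsilon_Q$. Each query costs KL $O(\beta'^2)=O(p^2\varepsilon_Q^2)$. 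Assouad over $d(|\mathcal A|-1)$ coordinates gives $N\gtrsim d(|\mathcal A|-1)/(p^2\varepsilon_Q^2)$.

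Finally, we convert the success probability $2/3$ into the Assouad average-risk bound by Markov's inequality on the number of misidentified coordinates. We then substitute $p\ge \log 4/(2t_{\mathrm{mix}})$ and $d\ge|\mathcal S|/3$, and use $|\mathcal A|-1\ge|\mathcal A|/2$, to obtain the displayed bound in terms of $t_{\mathrm{mix}}$.
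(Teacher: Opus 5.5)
Your strategy matches the paper's: two-state pairs with slow flips at rate $p$, a uniform reset at rate $p$ for minorization, on-policy flip rates perturbed at absolute scale $\asymp p^2\varepsilon_Q$ (per-query KL $\asymp p^3\varepsilon_Q^2$), off-policy rows perturbed at scale $\asymp p\,\varepsilon_Q$ (KL $\asymp p^2\varepsilon_Q^2$), and decoding via $Q^\pi=c-\rho+Ph^\pi$. As written, however, the proposal does not prove the displayed inequality.

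\textbf{The main gap is the combination step.} The statement concerns one family and asserts the \emph{sum} of the two terms with explicit constants. Two families merged into a union only give $N\ge\max$, which is half the sum. You also perturb only the rows $(u_j,a)$, which halves the number of action bits, and you never fix the constants. The paper gets the sum directly by putting both kinds of bits in a single family. It then uses a per-bit two-point test, with no Assouad/Markov step:
\begin{itemize}
\item an $\varepsilon_Q$-accurate $\widehat Q$ decodes \emph{every} bit at once, so each bit is correct with probability at least $2/3$ under every member;
\item flipping that bit changes only its own rows, so Pinsker gives $\mathrm{KL}\ge 2/9$, hence $\mathbb E[N_{\mathrm{bit}}]\ge (2/9)/(\text{per-query KL})$;
\item different bits touch disjoint rows, so these bounds add.
\end{itemize}
The single family works because the design is symmetric. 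Costs are $(1+\sigma)/2$ and \emph{both} flip directions of a pair are perturbed equally, so every member has $d^\pi=u$, $\rho(\pi)=1/2$ and $h^\pi(j,\sigma)=\beta_j\sigma$. Then $Q^\pi((j,+1),a_1)=\beta_j$ decodes the state bit. Also $\sigma Q^\pi((j,\sigma),a)=\tfrac12-2(1-p)\beta_j\Delta'\vartheta_{j,\sigma,a}$ has the bit-independent threshold $1/2$, so the action decoders ignore the state bits and your centering bookkeeping disappears. In your one-sided design, $\rho$ and $u\,h^\pi$ move with the state bits. A combined family would then need action decoders built from differences such as $\widehat Q(u_j,a)-\widehat Q(u_j,a^\circ)$, with thresholds that depend on the decoded state bit of the same pair.

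\textbf{Local errors.}
\begin{itemize}
\item The final substitution goes the wrong way. Your bounds are in $1/p$, and $p\ge\log 4/(2t_{\mathrm{mix}})$ is an \emph{upper} bound on $1/p$, so it cannot turn $N\gtrsim d/(p^3\varepsilon_Q^2)$ into $N\gtrsim d\,t_{\mathrm{mix}}^3/\varepsilon_Q^2$. You need $p<2/t_{\mathrm{mix}}$, which follows from $t_{\mathrm{mix}}\le\log 4/p+1<2/p$ for $p<1/4$.
\item $|\mathcal A|-1\ge|\mathcal A|/2$ fails for $|\mathcal A|=1$. There the action term vanishes and $|\mathcal A|\,t_{\mathrm{mix}}^2$ must be absorbed into the $t_{\mathrm{mix}}^3$ term; the paper uses $1+16(|\mathcal A|-1)p\ge\frac45(1+|\mathcal A|p)$ for $p\le 1/4$.
\item A transient leftover state violates the irreducibility required by Assumption~\ref{as:ergodic}. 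The reset must charge it, as in the paper, where $o$ is absorbing under the local kernel and recurrent through the reset. The reset must also appear in the off-policy rows for your minorization claim to hold under every policy.
\item The perturbation scales are mixed up. With flip probability $p(1+\sigma_j\beta)$ and $\beta\asymp p\,\varepsilon_Q$, the shift is $\asymp\beta/p$ and the KL is $\asymp p\beta^2$. Your $\beta/p^2$ and $\beta^2/p$ treat $\beta$ as the absolute perturbation; the end orders are nonetheless right.
\item The $1/(16p)$ mixing-time lower bound needs an actual total-variation estimate, not only the holding-time heuristic. The paper uses the eigenfunction $f_j(k,\sigma)=\mathbf 1\{k=j\}\,\sigma$ with eigenvalue $(1-p)(1-2q_j)$.
\end{itemize}
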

Here, an algorithm is adaptive if it chooses each query based on the samples seen so far.

\begin{proof}
The family hides one bit in the transitions of the action that $\pi$ takes at each pair of states and one bit in the transitions of every other action at each paired state. A critic that is accurate to $\varepsilon_Q$ in sup norm reveals every bit, the bits change disjoint groups of transition rows, and a Le Cam two-point test for each bit then forces enough queries to each group. This is Assouad's hypercube reduction in coordinatewise form. With one action and $2d$ states, the construction is that of Proposition~\ref{prop:ap-nontabular_lower_bound}.

\paragraph{Construction.}
Write $\mathcal A = \{a_1, \dots, a_{|\mathcal A|}\}$, and identify $\mathcal S$ with $[d] \times \{-1, +1\}$ if $|\mathcal S| = 2d$, and with $([d] \times \{-1, +1\}) \cup \{o\}$, for one more state $o$, if $|\mathcal S| = 2d + 1$. Let $u$ be the uniform distribution on $\mathcal S$, set $c((j, \sigma), a) := (1 + \sigma)/2$ and $c(o, a) := 1/2$ for every action $a$, and let $\pi$ always take $a_1$. The members are indexed by state bits $\theta \in \{0, 1\}^d$ and action bits $\vartheta_{j,\sigma,a} \in \{-1, +1\}$ for $(j, \sigma) \in [d] \times \{-1, +1\}$ and $a \neq a_1$, of which there are none if $|\mathcal A| = 1$. With
\[
\Delta \;:=\; 64\,p^2\,\varepsilon_Q, \qquad \Delta' \;:=\; 8\,p\,\varepsilon_Q, \qquad q_j \;:=\; p + \theta_j\,\Delta,
\]
the local kernel $R_{\theta,\vartheta}(\cdot \mid (j, \sigma), a)$ moves $(j, \sigma)$ to $(j, -\sigma)$ with probability $q_j$ if $a = a_1$ and with probability $1/2 + \vartheta_{j,\sigma,a}\Delta'$ if $a \neq a_1$, and otherwise leaves it in place. Set $R_{\theta,\vartheta}(\cdot \mid o, a) := \delta_o$. The transition kernel is
\[
P_{\theta,\vartheta}(\cdot \mid s, a) \;:=\; (1 - p)\, R_{\theta,\vartheta}(\cdot \mid s, a) + p\, u(\cdot).
\]
The restriction $\varepsilon_Q \leq 1/(128 p)$ gives $\Delta \leq p/2$ and $\Delta' \leq 1/16$.

\paragraph{Mixing for every policy.}
Every row satisfies $P_{\theta,\vartheta}(\cdot \mid s, a) \geq p\, u(\cdot)$, which is the minorization condition~\eqref{eq:doeblin-controlled} with $\alpha = p$ and $\nu = u$. Every policy therefore induces an irreducible and aperiodic chain, so Assumption~\ref{as:ergodic} holds, and Appendix~\ref{ap:mixing-assumption-details} gives $\sup_{\mu_0}\|\mu_0 P_{\pi'}^k - d^{\pi'}\|_{\mathrm{TV}} \leq (1 - p)^k \leq e^{-pk}$ for every policy $\pi'$. At $k = \lceil \log 4/p \rceil$ this is at most $1/4$, which gives the bound of Lemma~\ref{lem:uniform-mix} with $t_{\mathrm{mix}} = \lceil \log 4/p \rceil \leq \log 4/p + 1 < 2/p$, where the last step uses $1 < 1/(4p)$.

\paragraph{The evaluated policy.}
Under $\pi$, each pair $\{(j, +1), (j, -1)\}$ moves as in the construction of Proposition~\ref{prop:ap-nontabular_lower_bound}, and $o$ moves only by the reset. Steps~1--4 of its proof therefore apply: the uniform distribution is stationary, so $d^\pi = u$ and $\rho(\pi) = 1/2$; the eigenfunction argument shows that the $1/4$-mixing time of $\pi$ is at least $1/(16p)$; the centered Poisson solution is $h^\pi(j, \sigma) = \beta_j\,\sigma$ with $\beta_j := \beta(q_j) \in [1/(8p), 1/(5p)]$, where $\beta(q) := 1/(2p + 4q(1 - p))$; and $\beta(p) - \beta(p + \Delta) \geq \Delta/(16p^2) = 4\,\varepsilon_Q$. The state $o$ does not change these steps: $u$ remains stationary because the local kernel of $\pi$ is symmetric on each pair and fixes $o$, the eigenfunctions vanish at $o$, and $h^\pi(o) := 0$ solves the Poisson equation at $o$ because $c(o, a_1) = \rho(\pi)$ and $u\, h^\pi = 0$.

\paragraph{Decoding.}
By Fact~\ref{fact:ap-poisson}, $Q^\pi(s, a) = c(s, a) - \rho(\pi) + P_{\theta,\vartheta}(\cdot \mid s, a)\, h^\pi$, in which the reset contributes $p\, u\, h^\pi = 0$. For a row at $(j, \sigma)$ that flips the sign with probability $f$, the local part of $P_{\theta,\vartheta}(\cdot \mid s, a)\, h^\pi$ is $(1 - p)(1 - 2f)\,\beta_j\sigma$, so
\begin{align*}
Q^\pi((j, \sigma), a_1) &= \beta_j\,\sigma, \qquad Q^\pi(o, a) = 0, \\
\sigma\, Q^\pi((j, \sigma), a) &= \frac12 - 2(1 - p)\,\beta_j\,\Delta'\,\vartheta_{j,\sigma,a} \qquad (a \neq a_1),
\end{align*}
where the first identity is the Poisson equation at $(j, \sigma)$. Since $(1 - p)\,\beta_j\,\Delta' \geq \frac34 \cdot \frac{1}{8p} \cdot 8p\,\varepsilon_Q$, an action bit moves $\sigma Q^\pi((j, \sigma), a)$ away from $1/2$ by at least $\frac32\varepsilon_Q$, and a state bit moves $Q^\pi((j, +1), a_1)$ between $\beta(p)$ and $\beta(p + \Delta)$, which differ by at least $4\varepsilon_Q$. Hence, whenever $\|\widehat Q - Q^\pi\|_\infty \leq \varepsilon_Q$, the decoders
\[
\widehat\theta_j \;:=\; \mathbf{1}\Bigl\{\widehat Q((j, +1), a_1) < \frac{\beta(p) + \beta(p + \Delta)}{2}\Bigr\},
\qquad
\hat\vartheta_{j,\sigma,a} \;:=\; \operatorname{sign}\Bigl(\frac12 - \sigma\, \widehat Q((j, \sigma), a)\Bigr)
\]
return every bit of the member.

\paragraph{Per-query KL divergence.}
Changing $\theta_j$ changes only the rows $((j, \pm 1), a_1)$, and changing $\vartheta_{j,\sigma,a}$ only the row $((j, \sigma), a)$, so different bits change disjoint groups of rows. For a row of $\theta_j$, Step~5 in the proof of Proposition~\ref{prop:ap-nontabular_lower_bound} bounds the KL divergence between the two members' next-state laws by $2\Delta^2/p$ in either direction, as in~\eqref{eq:nlb-kl-bound}; its lower bounds on the two changing atoms ignore the reset mass, so they hold here too. For a row of $\vartheta_{j,\sigma,a}$, revealing whether the reset occurred, which has the same law under both members, can only increase the KL divergence, which is then at most
\[
\mathrm{KL}\bigl(\mathrm{Bern}(\tfrac12 + \Delta') \,\big\|\, \mathrm{Bern}(\tfrac12 - \Delta')\bigr)
\;=\; 2\Delta' \log\frac{1 + 2\Delta'}{1 - 2\Delta'}
\;\leq\; \frac{8\,\Delta'^2}{1 - 2\Delta'}
\;\leq\; 16\,\Delta'^2,
\]
using $\log((1 + x)/(1 - x)) \leq 2x/(1 - x)$ for $x \in [0, 1)$.

\paragraph{Assembly.}
Fix a member $(\theta, \vartheta)$ and one of its bits, and let $\mathbb{P}$ and $\mathbb{P}'$ be the laws of the transcript under this member and under the member in which only that bit is flipped. As in Step~6 of the proof of Proposition~\ref{prop:ap-nontabular_lower_bound}, the query rule has the same conditional law under both members, so the KL chain rule gives $\mathrm{KL}(\mathbb{P} \,\|\, \mathbb{P}') \leq (2\Delta^2/p)\,\mathbb{E}[N_{\mathrm{bit}}]$ for a state bit and $\mathrm{KL}(\mathbb{P} \,\|\, \mathbb{P}') \leq 16\Delta'^2\,\mathbb{E}[N_{\mathrm{bit}}]$ for an action bit, where $N_{\mathrm{bit}}$ is the number of queries to the rows that the bit changes and the expectation is under $\mathbb{P}$. The decoder of this bit returns the member's bit with probability at least $2/3$ under $\mathbb{P}$ and at most $1/3$ under $\mathbb{P}'$, so $\|\mathbb{P} - \mathbb{P}'\|_{\mathrm{TV}} \geq 1/3$, and Pinsker's inequality $\|\mathbb{P} - \mathbb{P}'\|_{\mathrm{TV}} \leq \sqrt{\mathrm{KL}(\mathbb{P} \,\|\, \mathbb{P}')/2}$ gives $\mathrm{KL}(\mathbb{P} \,\|\, \mathbb{P}') \geq 2/9$. Hence $\mathbb{E}[N_{\mathrm{bit}}] \geq p/(9\Delta^2)$ for a state bit and $\mathbb{E}[N_{\mathrm{bit}}] \geq 1/(72\Delta'^2)$ for an action bit. The groups are disjoint, so summing over the $d$ state bits and the $2d(|\mathcal A| - 1)$ action bits of the fixed member gives
\[
N \;\geq\; d\,\frac{p}{9\,\Delta^2} + 2d\,(|\mathcal A| - 1)\,\frac{1}{72\,\Delta'^2}
\;=\; \frac{d}{2^{12}\cdot 9\,p^3\,\varepsilon_Q^2} + \frac{d\,(|\mathcal A| - 1)}{2^{8}\cdot 9\,p^2\,\varepsilon_Q^2}.
\]

\paragraph{Final form.}
We have $d \geq |\mathcal S|/3$ for $|\mathcal S| \geq 2$, and the right-hand side equals $d\,(1 + 16(|\mathcal A| - 1)p)/(2^{12}\cdot 9\,p^3\varepsilon_Q^2)$ with $1 + 16(|\mathcal A| - 1)p \geq \frac45(1 + |\mathcal A|\,p)$ for $p \leq 1/4$, so $N \geq |\mathcal S|\,(p^{-3} + |\mathcal A|\,p^{-2})/(2^{10}\cdot 135\,\varepsilon_Q^2)$. Since $t_{\mathrm{mix}} < 2/p$, we have $p^{-3} > t_{\mathrm{mix}}^3/8$ and $p^{-2} > t_{\mathrm{mix}}^2/4$, which gives the second inequality of the proposition.
\end{proof}

The assumption $|\mathcal S| \geq 2$ is necessary: with a single state, $Q^\pi(s, a) = c(s, a) - \sum_{a'} \pi(a' \mid s)\, c(s, a')$ is determined by the costs, without any transitions.

\section{Proofs under linear function approximation} \label{ap:proofs-non-tabular}
\subsection{Proof of Theorem~\ref{thm:non-tabular-linear-convergence} (Linear convergence of inexact PMD under LFA)}
\begin{proof}
PMD is the update~\eqref{eq:pmd-update} with the LFA critic $\tilde Q^t(s,a) = \phi_{s,a}^\top w_t$. We apply Lemma~\ref{lem:master} after bounding the critic-error quantities $A_t, B_t$; all bounds below hold on the probability-one event of Assumption~\ref{as:L2-mismatch-bound}.

\paragraph{Bounding $A_t$ and $B_t$ via state--action concentrability.}
Define the scalar error function $e_t(s,a) := w_t^\top \phi_{s,a} - Q^{\pi^t}(s,a)$. Expanding the inner products against $\pi^{t+1} - \pi^t$ and $\pi^t - \pi^\star$ into expectations over state--action pairs,
\[
A_t = \Bigl|\mathbb{E}_{\nu^{t+1}}[e_t] - \mathbb{E}_{d^{t+1}\times\pi^t}[e_t]\Bigr|, \qquad B_t = \Bigl|\mathbb{E}_{d^\star\times\pi^t}[e_t] - \mathbb{E}_{\nu^\star}[e_t]\Bigr|,
\]
where $\nu^t := d^t \times \pi^t$ denotes the stationary state--action distribution of $\pi^t$.

We use the generic change-of-measure bound: for any two distributions $\nu,\nu'$ on $\mathcal S\times\mathcal A$
with $\nu' \ll \nu$ and any measurable $f$,
\begin{equation}\label{eq:com_bound}
\big|\mathbb{E}_{(s,a)\sim \nu'}[f(s,a)]\big|
\;\le\;
\sqrt{\mathbb{E}_{(s,a)\sim \nu}\!\left[\left(\frac{\nu'(s,a)}{\nu(s,a)}\right)^2\right]}\;
\sqrt{\mathbb{E}_{(s,a)\sim \nu}[f(s,a)^2]}.
\end{equation}

First, apply \eqref{eq:com_bound} with $\nu=\nu^t$ and $\nu'=\nu^{t+1}$, and use
Assumption~\ref{as:L2-mismatch-bound} to get
\[
\big|\mathbb{E}_{\nu^{t+1}}[e_t]\big|
\;\le\;
\sqrt{C_+}\,\sqrt{\mathbb{E}_{\nu^t}[e_t^2]}.
\]
Next, note that the mixed distribution $d^{t+1}\times \pi^t(\cdot\mid s)$ satisfies
\[
\frac{d^{t+1}(s)\pi^t(a\mid s)}{\nu^t(s,a)}
=
\frac{d^{t+1}(s)}{d^t(s)}
=
\sum_{a'} \pi^t(a'\mid s)\,\frac{\nu^{t+1}(s,a')}{\nu^t(s,a')},
\]
so by Jensen's inequality,
\[
\left(\frac{d^{t+1}(s)}{d^t(s)}\right)^2
\le
\sum_{a'} \pi^t(a'\mid s)\left(\frac{\nu^{t+1}(s,a')}{\nu^t(s,a')}\right)^2.
\]
Taking expectation over $s\sim d^t$ yields
\[
\mathbb{E}_{(s,a)\sim \nu^t}\!\left[\left(\frac{d^{t+1}(s)\pi^t(a\mid s)}{\nu^t(s,a)}\right)^2\right]
=
\mathbb{E}_{s\sim d^t}\!\left[\left(\frac{d^{t+1}(s)}{d^t(s)}\right)^2\right]
\le
\mathbb{E}_{(s,a)\sim \nu^t}\!\left[\left(\frac{\nu^{t+1}(s,a)}{\nu^t(s,a)}\right)^2\right]
\le C_+.
\]
Applying \eqref{eq:com_bound} with $\nu=\nu^t$ and $\nu'=d^{t+1}\times \pi^t(\cdot\mid s)$ gives
\[
\big|\mathbb{E}_{d^{t+1}\times \pi^t}[e_t]\big|
\;\le\;
\sqrt{C_+}\,\sqrt{\mathbb{E}_{\nu^t}[e_t^2]}.
\]
Therefore,
\begin{equation}\label{eq:E1_bound_sa}
A_t \le 2\sqrt{C_+}\,\sqrt{\mathbb{E}_{\nu^t}[e_t^2]}.
\end{equation}
An analogous argument using Assumption~\ref{as:L2-mismatch-bound} (with $\nu^\star$ in place of $\nu^{t+1}$) yields
\begin{equation}\label{eq:E2_bound_sa}
B_t \le 2\sqrt{C_\star}\,\sqrt{\mathbb{E}_{\nu^t}[e_t^2]}.
\end{equation}

Finally, decompose
\[
e_t(s,a)=\phi_{s,a}^\top(w_t-w_t^\star) + \big(\phi_{s,a}^\top w_t^\star - Q^{\pi^t}(s,a)\big).
\]
By the triangle inequality in $L_2(\nu^t)$,
\begin{align}
\sqrt{\mathbb{E}_{\nu^t}[e_t^2]}
\;&\le\;
\sqrt{\mathbb{E}_{\nu^t}\!\left[\big(\phi_{s,a}^\top(w_t-w_t^\star)\big)^2\right]}
+
\sqrt{\mathbb{E}_{\nu^t}\!\left[\big(\phi_{s,a}^\top w_t^\star - Q^{\pi^t}(s,a)\big)^2\right]} \\
&\leq
\|w_t-w_t^\star\|_{\Sigma_t}
+
\varepsilon_{\mathrm{app}}
\;\le\;
\varepsilon_{\mathrm{stat}}
+
\varepsilon_{\mathrm{app}},
\label{eq:e2_bound_sa}
\end{align}
where the last two inequalities follow from Assumptions~\ref{as:approx-error} and~\ref{as:stat-error}, respectively.
\paragraph{Applying the master lemma.}
Combining \eqref{eq:E1_bound_sa}, \eqref{eq:E2_bound_sa}, and \eqref{eq:e2_bound_sa} yields, for all $t \ge 0$,
\[
A_t \leq \bar A := 2\sqrt{C_+}\bigl(\varepsilon_{\mathrm{stat}} + \varepsilon_{\mathrm{app}}\bigr), \qquad B_t \leq \bar B := 2\sqrt{C_\star}\bigl(\varepsilon_{\mathrm{stat}} + \varepsilon_{\mathrm{app}}\bigr).
\]
Lemma~\ref{lem:master}(i) then gives the claimed non-asymptotic bound with the error floor $\bar\chi\bar A + \bar B = \varepsilon_{\mathrm{floor}}(\bar\chi)$, and Lemma~\ref{lem:master}(ii) gives the claimed asymptotic bound with the floor $(\chi+\zeta)\bar A + \bar B = \varepsilon_{\mathrm{floor}}(\chi+\zeta)$.
\end{proof}

\begin{algorithm}[tb]
\caption{Regression for $w_t^\star$}
\label{alg:end2end_w_oracle}
\begin{algorithmic}[1]
\REQUIRE Policy $\pi^t$; features $\phi_{s,a}\in\mathbb{R}^d$; current state of the trajectory;
burn-in length $m$; rollout horizon $H$; tail index $\tau\in\{0,\dots,H-1\}$;
number of rollouts $B=1$; number of SGD steps $K$; step size $\alpha$.
\ENSURE Parameter $\widehat w\in\mathbb{R}^d$.

\STATE Initialize $w_{t,0}\gets 0$
\FOR{$k=1,2,\dots,K$}
    \STATE Set $s_{k,0}$ to the current state of the trajectory
    \FOR{$j=0,1,\dots,m-1$}
        \STATE Sample $a_{k,j}\sim \pi^t(\cdot\mid s_{k,j})$
        \STATE Sample $s_{k,j+1}\sim P(\cdot\mid s_{k,j},a_{k,j})$
    \ENDFOR
    \STATE Sample $a_{k,m}\sim \pi^t(\cdot\mid s_{k,m})$
    \STATE Set $(s_k,a_k)\gets (s_{k,m},a_{k,m})$
    \STATE Continue the trajectory for a rollout of $H$ transitions from $(s_k,a_k)$ and compute $\widehat Q_k \gets \widehat Q^{\pi^t}_{H,\tau}(s_k,a_k)$ from it by the formula of Lemma~\ref{lem:ap-Q_estimation_burnin} with $B=1$
    \STATE $g_k \gets (w_{t,k-1}^\top \phi_{s_k,a_k}-\widehat Q_k)\,\phi_{s_k,a_k}$
    \STATE $w_{t,k} \gets w_{t,k-1}-\alpha g_k$
\ENDFOR
\STATE \textbf{Return} $\widehat w \gets \frac{1}{\lceil K/2\rceil}\sum_{k=\lfloor K/2\rfloor+1}^{K} w_{t,k}$
\end{algorithmic}
\end{algorithm}

\begin{algorithm}[tb]
\caption{Confidence boosting for $w_t$}
\label{alg:boost}
\begin{algorithmic}[1]
\REQUIRE Policy $\pi^t$; current state of the trajectory; number of runs $R$; inputs $m$, $H$, $\tau$, $B=1$, $K$, $\alpha$ of Algorithm~\ref{alg:end2end_w_oracle}.
\ENSURE Parameter $w_t\in\mathbb{R}^d$.

\FOR{$i=1,2,\dots,R$}
    \STATE Run Algorithm~\ref{alg:end2end_w_oracle}, continuing the trajectory, and let $w^{(i)}$ be its output
\ENDFOR
\STATE Let $\widehat\Sigma_t$ be the average of $\phi_{s,a}\phi_{s,a}^\top$ over the $RK$ pairs $(s,a)$ of the $R$ runs
\FOR{$i=1,2,\dots,R$}
    \STATE $r_i \gets \min\bigl\{r\ge0 : |\{j : \|w^{(j)}-w^{(i)}\|_{\widehat\Sigma_t}\le r\}| > R/2\bigr\}$
\ENDFOR
\STATE \textbf{Return} $w_t \gets w^{(i^\star)}$ for the smallest $i^\star\in\argmin_i r_i$
\end{algorithmic}
\end{algorithm}

\subsection{Proof of a helper lemma (tail-averaged SGD)}
\begin{lemma}[Estimating $w_t^\star$ with an unbiased label oracle]
\label{cor:avg_reward_qnpg_sample_complexity}
Fix $t$ and consider the regression target
\[
w_t^\star \in \argmin_{w\in\mathbb{R}^d}\;
\tfrac12\,\mathbb{E}_{(s,a)\sim \nu^t}\bigl[(w^\top\phi_{s,a} - Q^t(s,a))^2\bigr]
\]
with $|Q^t(s,a)| \leq G$ and approximation error $\mathbb{E}_{(s,a)\sim\nu^t}[(w_t^{\star\top}\phi_{s,a}-Q^t(s,a))^2]\le\varepsilon_{\mathrm{app}}^2$. Additionally suppose that:
\begin{enumerate}
\item[(i)] (Bounded features) $\|\phi_{s,a}\|_2\le L$ for all $(s,a)$.
\item[(ii)] (Uniform covariance lower bound) For all $t$,
$\Sigma_t := \mathbb{E}_{(s,a)\sim \nu^{t}}\bigl[\phi_{s,a}\phi_{s,a}^\top\bigr]
\succeq \mu I_d$ for some $\mu>0$.
\item[(iii)] (Unbiased label oracle) The label oracle returns $\widehat Q^t(s,a)$ with
\[
\mathbb{E}\!\left[\widehat Q^{t}(s,a)\mid s,a\right]=Q^{t}(s,a),
\qquad
\mathbb{E}\!\left[(\widehat Q^{t}(s,a))^2\mid s,a\right]\le G^2.
\]
\end{enumerate}

At each outer iteration $t$, compute $w_t$ by running $K$ steps of SGD on the quadratic loss
\[
f_t(w)
:=
\frac12\,
\mathbb{E}_{(s,a)\sim \nu^{t}}
\bigl[(w^\top \phi_{s,a}-Q^t(s,a))^2\bigr],
\]
with independent and identically distributed labeled pairs $(s,a,\widehat Q^t(s,a))$, where $(s,a)\sim\nu^t$ and the label satisfies (iii), constant step size $\alpha=\frac{1}{4L^2}$, initialization $w_{t,0}=0$, and tail-averaged output
$w_t:=\frac{1}{\lceil K/2\rceil}\sum_{k=\lfloor K/2\rfloor+1}^{K} w_{t,k}$.

Then, the SGD output satisfies the statistical-error bound
\[
\mathbb{E}\bigl[\|w_t-w_t^\star\|_{\Sigma_t}^2\bigr]
\;\le\;
\frac{256\,L^4G^2}{\mu^2K^2}\,\exp\!\Bigl(-\frac{\mu\,\lfloor K/2\rfloor}{4L^2}\Bigr)
+\frac{8}{K}\Bigl(dG^2+\frac{L^2}{\mu}\,\varepsilon_{\mathrm{app}}^2\Bigr).
\]
Consequently, for any desired level $\varepsilon_{\mathrm{stat}}\in(0,G]$, it suffices to choose
\[
K \;\ge\; \frac{16}{\varepsilon_{\mathrm{stat}}^2}\Bigl(dG^2+\frac{L^2}{\mu}\,\varepsilon_{\mathrm{app}}^2\Bigr)+\frac{8L^2}{\mu}\log\frac{16G^2}{\varepsilon_{\mathrm{stat}}^2}
\]
to guarantee $\mathbb{E}\bigl[\|w_t-w_t^\star\|_{\Sigma_t}^2\bigr]\le \varepsilon_{\mathrm{stat}}^2$.
\end{lemma}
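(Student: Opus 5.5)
The plan is to reduce the statement to the tail-averaged constant-step SGD analysis for least squares of \citet{jain_parallelizing_2018}, in its agnostic form with model misspecification. Fix $t$ and write $x=\phi_{s,a}$ and $y=\widehat Q^t(s,a)$. The population loss $f_t$ has Hessian $\Sigma_t$, and its minimizer satisfies the normal equation $\mathbb{E}[x(x^\top w_t^\star - Q^t)]=0$. By (iii), the label noise is $\xi:=y-x^\top w_t^\star=(y-Q^t)+(Q^t-x^\top w_t^\star)$, so $\mathbb{E}[\xi x]=0$ by the tower rule.

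The first key quantity is the noise covariance $\Xi:=\mathbb{E}[\xi^2 xx^\top]$. I would split $\xi^2\le 2(y-Q^t)^2+2(Q^t-x^\top w_t^\star)^2$. The first part contributes $\preceq 2G^2\Sigma_t$, because $\mathbb{E}[(y-Q^t)^2\mid s,a]\le \mathbb{E}[y^2\mid s,a]\le G^2$. The second part contributes $\preceq 2L^2\,\mathbb{E}[(Q^t-x^\top w_t^\star)^2]\,I\preceq (2L^2\varepsilon_{\mathrm{app}}^2/\mu)\Sigma_t$. This gives $\Xi\preceq \sigma^2\Sigma_t$ with $\sigma^2=2(G^2+L^2\varepsilon_{\mathrm{app}}^2/\mu)$. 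The variance term of tail averaging is then of order $\operatorname{tr}(\Sigma_t^{-1}\Xi)/K$, which is bounded by $(2dG^2+2L^2\varepsilon_{\mathrm{app}}^2/\mu)/K$, matching the $8(\cdot)/K$ term up to constants. I would aim for the sharper $\operatorname{tr}$ bound on the misspecified part, $\operatorname{tr}(\Sigma_t^{-1}\mathbb{E}[e^2xx^\top])\le (L^2/\mu)\varepsilon_{\mathrm{app}}^2$, rather than a factor of $d$. The fourth-moment condition $\mathbb{E}[\|x\|^2xx^\top]\preceq L^2\Sigma_t$ holds because $\|x\|\le L$, so the step size $\alpha=1/(4L^2)$ is within the admissible range $\alpha<1/R^2$.

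For the bias term, I would write $w_{t,k}-w_t^\star$ as a linear recursion, split it into bias and variance parts, and bound them separately. The bias iterate contracts in expectation as $\mathbb{E}\|w_{t,k}-w_t^\star\|^2\le(1-\alpha\mu)^k\|w_t^\star\|^2$; this uses $\alpha\le 1/L^2$, via $\mathbb{E}[(I-\alpha xx^\top)^2]\preceq I-\alpha\Sigma_t$. At the start of the tail window this gives a factor $\exp(-\mu\lfloor K/2\rfloor/(4L^2))$. Averaging the last $\lceil K/2\rceil$ iterates then converts distance into excess risk at rate $1/(\alpha^2K^2)$, in the $\Sigma_t$-norm; here I would use the Jain et al.\ bound that the averaged bias is at most $\|w_{\lfloor K/2\rfloor}-w^\star\|^2/(\alpha^2\mu K^2)$-type quantities. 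For the starting distance, $\|w_t^\star\|^2\le \|w_t^\star\|_{\Sigma_t}^2/\mu\le G^2/\mu$, since $\|x^\top w^\star\|_{L_2}\le\|Q^t\|_{L_2}\le G$ by projection. Collecting constants with $\alpha=1/(4L^2)$ gives the $256L^4G^2/(\mu^2K^2)$ prefactor.

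For the consequence, I require each term to be at most $\varepsilon_{\mathrm{stat}}^2/2$. The variance term needs $K\ge 16(dG^2+L^2\varepsilon_{\mathrm{app}}^2/\mu)/\varepsilon_{\mathrm{stat}}^2$. The bias term needs $K\ge (8L^2/\mu)\log(16G^2/\varepsilon_{\mathrm{stat}}^2)$; at that value, $K\ge 16L^2/\mu$ (using $\varepsilon_{\mathrm{stat}}\le G$), which makes the prefactor at most $G^2$. The main obstacle I expect is the misspecified-noise variance: the noise is not independent of $x$, so the well-specified Bach–Moulines argument does not apply directly. The cleanest route is to verify the domination $\Xi\preceq\sigma^2\Sigma_t$ as above and then quote the agnostic theorem of \citet{jain_parallelizing_2018}, while tracking constants carefully.
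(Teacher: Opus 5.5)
Your bias analysis and your derivation of the sufficient $K$ from the bound are fine. The variance step, however, has a genuine gap in how $\varepsilon_{\mathrm{app}}$ enters. The route you call cleanest is to verify $\Xi\preceq\sigma^2\Sigma_t$ with $\sigma^2=2(G^2+L^2\varepsilon_{\mathrm{app}}^2/\mu)$ and then quote a tail-averaging theorem. That returns a variance of order $d\sigma^2/K$, which is what domination-type noise conditions give, as in Bach--Moulines. The misspecification part then becomes $d\,L^2\varepsilon_{\mathrm{app}}^2/(\mu K)$, a factor $d$ larger than the statement allows.

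The sharper variance $\operatorname{tr}(\Sigma_t^{-1}\Xi)/K$ that you would ``aim for'' is asserted, not derived. It also does not come for free at a constant step $\alpha\asymp 1/L^2$ when the noise is correlated with $x$. The stationary noise covariance solves $\Sigma_tC+C\Sigma_t=\alpha\bigl(\Xi+\mathbb{E}[(x^\top Cx)\,xx^\top]\bigr)$, hence $2\operatorname{Tr}(C)/\alpha=\operatorname{tr}(\Sigma_t^{-1}\Xi)+\mathbb{E}[(x^\top\Sigma_t^{-1}x)(x^\top Cx)]$. The fourth-moment term is exactly what separates the agnostic case; Jain et al.\ point out that there the step size must depend on the noise to reach the MLE-type rate. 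If you bound that term crudely via $x^\top\Sigma_t^{-1}x\le L^2/\mu$, you get an $(L^2/\mu)G^2$ term, which is not in the stated bound either. Incidentally, the correlation of the noise with $x$ is not itself the difficulty: cross terms vanish because the noise iterate has mean zero and is independent of the next pair.

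The missing idea is to treat the two noise sources by different arguments inside the same linear recursion, rather than through one noise level $\sigma^2$. Write the noise part as $v_k=(I-\alpha x_kx_k^\top)v_{k-1}+\alpha\xi_kx_k$, $v_0=0$. Then $C_k:=\mathbb{E}[v_kv_k^\top]=\mathcal T(C_{k-1})+\alpha^2\Xi$, with $\mathcal T(A):=\mathbb{E}[(I-\alpha xx^\top)A(I-\alpha xx^\top)]$.

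\begin{itemize}
\item \textbf{Exact split of the noise.} By (iii) the cross term in $\Xi$ vanishes, so $\Xi=S_1+S_2$ exactly. Here $S_1=\mathbb{E}[\operatorname{Var}(y\mid s,a)\,xx^\top]\preceq G^2\Sigma_t$ and $\operatorname{Tr}S_2\le L^2\varepsilon_{\mathrm{app}}^2$.
\item \textbf{Label-noise part.} Split $C_k=C_k^{(1)}+C_k^{(2)}$ accordingly. Since $\mathcal T(I)\preceq I-\alpha\Sigma_t$, PSD induction gives $C_k^{(1)}\preceq\alpha G^2I$.
\item \textbf{Approximation part.} Use only the trace contraction $\operatorname{Tr}\mathcal T(A)\le(1-\alpha\mu)\operatorname{Tr}A$, which gives $\operatorname{Tr}C_k^{(2)}\le\alpha L^2\varepsilon_{\mathrm{app}}^2/\mu$.
\item \textbf{Why traces suffice.} Let $u_k$ be either the bias part or the noise part, $\mathcal K=\{\lfloor K/2\rfloor+1,\dots,K\}$ and $q=\lceil K/2\rceil$. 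For $j\ge i$, $\mathbb{E}[u_ju_i^\top]=(I-\alpha\Sigma_t)^{j-i}\mathbb{E}[u_iu_i^\top]$, and $\sum_{l\ge0}\Sigma_t(I-\alpha\Sigma_t)^l=I/\alpha$. Together these give $\mathbb{E}\|q^{-1}\sum_{k\in\mathcal K}u_k\|_{\Sigma_t}^2\le\frac{2}{\alpha q^2}\sum_{k\in\mathcal K}\mathbb{E}\|u_k\|^2$.
\end{itemize}

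This yields $dG^2+(L^2/\mu)\varepsilon_{\mathrm{app}}^2$ with no spurious factor. Combined with your bias contraction $(1-\alpha\mu)^kG^2/\mu$ for the bias part, it gives the explicit constants $256$ and $8$. Since the statement fixes these constants, ``up to constants'' is not enough. Note also that the $8$ needs the exact split $\Xi=S_1+S_2$: your bound $\xi^2\le2(\cdot)^2+2(\cdot)^2$ doubles the variance term.
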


\begin{proof}
With a constant step size, averaged SGD \citep{bach_non-strongly-convex_2013} forgets its initialization only at rate $1/K$; averaging only the second half of the iterates \citep{jain_parallelizing_2018} makes this error decay geometrically. We prove the bound directly by splitting the error of SGD into an initialization part and a noise part; bounding the noise separately for the label noise and for the approximation error keeps $L^2/\mu$ off the label-noise term.

Write $w^\star=w_t^\star$. For a labeled pair $(s,a,y)$ with $y=\widehat Q^t(s,a)$, let $x:=\phi_{s,a}$ and let $\xi:=(y-w^{\star\top}x)\,x$ be the residual; $(x_k,y_k,\xi_k)$ denotes the $k$-th pair. Since $\mathbb{E}[y\mid s,a]=Q^t(s,a)$, first-order optimality of $w^\star$ gives $\mathbb{E}[\xi]=\mathbb{E}[(Q^t(s,a)-w^{\star\top}x)\,x]=0$.

\paragraph{Projection bound.}
The minimizer satisfies the normal equation $\Sigma_t w^\star=\mathbb{E}[Q^t(s,a)\,x]$, so by the Cauchy--Schwarz inequality and $|Q^t(s,a)|\le G$,
\[
\|w^\star\|_{\Sigma_t}^2
=
\mathbb{E}\bigl[Q^t(s,a)\,x^\top w^\star\bigr]
\le G\,\|w^\star\|_{\Sigma_t},
\qquad\text{hence}\qquad
\|w^\star\|\le\frac{\|w^\star\|_{\Sigma_t}}{\sqrt{\mu}}\le\frac{G}{\sqrt{\mu}}.
\]

\paragraph{Initialization and noise parts.}
The update $w_{t,k}=w_{t,k-1}-\alpha(w_{t,k-1}^\top x_k-y_k)\,x_k$ gives $w_{t,k}-w^\star=(I_d-\alpha x_kx_k^\top)(w_{t,k-1}-w^\star)+\alpha\xi_k$. Hence $w_{t,k}-w^\star=b_k+v_k$, where
\[
b_k=(I_d-\alpha x_kx_k^\top)\,b_{k-1},\quad b_0=-w^\star,
\qquad\qquad
v_k=(I_d-\alpha x_kx_k^\top)\,v_{k-1}+\alpha\xi_k,\quad v_0=0.
\]
Let $\mathcal T(A):=\mathbb{E}[(I_d-\alpha xx^\top)A(I_d-\alpha xx^\top)]$. This linear map preserves the positive semidefinite order and is self-adjoint for the trace inner product. Since $\|x\|\le L$ and $\alpha L^2\le1$,
\[
\mathcal T(I_d)=I_d-2\alpha\Sigma_t+\alpha^2\,\mathbb{E}[\|x\|^2xx^\top]\preceq I_d-\alpha\Sigma_t\preceq(1-\alpha\mu)I_d,
\]
so $\mathrm{Tr}\,\mathcal T(A)=\mathrm{Tr}(A\,\mathcal T(I_d))\le(1-\alpha\mu)\,\mathrm{Tr}(A)$ for every $A\succeq0$. Since $b_{k-1}$ is independent of the $k$-th pair, $\mathbb{E}[b_kb_k^\top]=\mathcal T(\mathbb{E}[b_{k-1}b_{k-1}^\top])$, and therefore
\[
\mathbb{E}\|b_k\|^2\le(1-\alpha\mu)^k\,\|w^\star\|^2\le(1-\alpha\mu)^k\,G^2/\mu .
\]
For the noise part, $\mathbb{E}[v_k]=0$ by induction, so the cross terms vanish and $C_k:=\mathbb{E}[v_kv_k^\top]=\mathcal T(C_{k-1})+\alpha^2\,\mathbb{E}[\xi\xi^\top]$. Conditioning on $(s,a)$ separates the label noise from the approximation error:
\[
\mathbb{E}[\xi\xi^\top]
=
\underbrace{\mathbb{E}\bigl[\mathrm{Var}(y\mid s,a)\,xx^\top\bigr]}_{=:S_1}
+
\underbrace{\mathbb{E}\bigl[(Q^t(s,a)-w^{\star\top}x)^2\,xx^\top\bigr]}_{=:S_2},
\]
where $S_1\preceq G^2\Sigma_t$ because $\mathrm{Var}(y\mid s,a)\le\mathbb{E}[y^2\mid s,a]\le G^2$, and $\mathrm{Tr}(S_2)\le L^2\varepsilon_{\mathrm{app}}^2$ because $\|x\|\le L$.
By linearity, $C_k=C_k^{(1)}+C_k^{(2)}$ with $C_k^{(i)}=\mathcal T(C_{k-1}^{(i)})+\alpha^2S_i$ and $C_0^{(i)}=0$. Induction gives $C_k^{(1)}\preceq\alpha G^2I_d$, because $\mathcal T(\alpha G^2I_d)+\alpha^2G^2\Sigma_t\preceq\alpha G^2I_d$. The trace contraction gives $\mathrm{Tr}\,C_k^{(2)}\le(1-\alpha\mu)\,\mathrm{Tr}\,C_{k-1}^{(2)}+\alpha^2L^2\varepsilon_{\mathrm{app}}^2$, hence $\mathrm{Tr}\,C_k^{(2)}\le\alpha(L^2/\mu)\,\varepsilon_{\mathrm{app}}^2$. Together,
\[
\mathbb{E}\|v_k\|^2=\mathrm{Tr}(C_k)\le\alpha\Bigl(dG^2+\frac{L^2}{\mu}\,\varepsilon_{\mathrm{app}}^2\Bigr).
\]

\paragraph{Tail averaging.}
Let $u_k$ be either $b_k$ or $v_k$, and let $\mathcal K:=\{\lfloor K/2\rfloor+1,\dots,K\}$, with $q:=|\mathcal K|=\lceil K/2\rceil$. For $j\ge i$, the pairs after the $i$-th are independent of $u_i$ and their residuals have mean zero, so $\mathbb{E}[u_ju_i^\top]=(I_d-\alpha\Sigma_t)^{j-i}\,\mathbb{E}[u_iu_i^\top]$. Expanding the square, the indices $i\le j$ therefore contribute at most $2\,\mathrm{Tr}\bigl(\Sigma_t(I_d-\alpha\Sigma_t)^{j-i}\,\mathbb{E}[u_iu_i^\top]\bigr)$, which is nonnegative since $\Sigma_t(I_d-\alpha\Sigma_t)^{l}\succeq0$ for every $l\ge0$ (as $\alpha\Sigma_t\preceq I_d$). Summing over $j\ge i$ and using $\sum_{l\ge0}\Sigma_t(I_d-\alpha\Sigma_t)^l=I_d/\alpha$,
\[
\mathbb{E}\Bigl\|\frac1q\sum_{k\in\mathcal K}u_k\Bigr\|_{\Sigma_t}^2
\le
\frac{2}{q^2}\sum_{i\in\mathcal K}\sum_{l\ge0}\mathrm{Tr}\bigl(\Sigma_t(I_d-\alpha\Sigma_t)^l\,\mathbb{E}[u_iu_i^\top]\bigr)
=
\frac{2}{\alpha q^2}\sum_{i\in\mathcal K}\mathbb{E}\|u_i\|^2 .
\]

\paragraph{Conclusion.}
Summing the geometric series over the tail gives $\sum_{k\in\mathcal K}\mathbb{E}\|b_k\|^2\le(1-\alpha\mu)^{\lfloor K/2\rfloor}G^2/(\alpha\mu^2)$, and $\sum_{k\in\mathcal K}\mathbb{E}\|v_k\|^2\le q\,\alpha\,(dG^2+(L^2/\mu)\varepsilon_{\mathrm{app}}^2)$. Since $w_t-w^\star$ is the tail average of $b_k+v_k$ and $\|u+v\|_{\Sigma_t}^2\le2\|u\|_{\Sigma_t}^2+2\|v\|_{\Sigma_t}^2$,
\[
\mathbb{E}\|w_t-w^\star\|_{\Sigma_t}^2
\le
\frac{4G^2(1-\alpha\mu)^{\lfloor K/2\rfloor}}{\alpha^2\mu^2q^2}
+\frac{4}{q}\Bigl(dG^2+\frac{L^2}{\mu}\,\varepsilon_{\mathrm{app}}^2\Bigr),
\]
and $\alpha=1/(4L^2)$, $q\ge K/2$ and $1-\alpha\mu\le e^{-\alpha\mu}$ give the stated bound. For the choice of $K$, the first summand of the lower bound on $K$ makes the noise term at most $\varepsilon_{\mathrm{stat}}^2/2$. For the initialization term, let $\ell:=\log(16G^2/\varepsilon_{\mathrm{stat}}^2)\ge1$. The second summand gives $K\ge8(L^2/\mu)\,\ell$, so $256L^4/(\mu^2K^2)\le4$, and $\mu\lfloor K/2\rfloor/(4L^2)\ge\ell-\frac18$ by $\lfloor K/2\rfloor\ge(K-1)/2$ and $\mu\le L^2$. The initialization term is therefore at most $4e^{1/8}G^2e^{-\ell}<\varepsilon_{\mathrm{stat}}^2/2$.
\end{proof}

\subsection{Proof of Lemma~\ref{lem:w-sample-complexity} (LFA critic upper bound)}
\begin{lemma}[LFA critic upper bound, detailed form]
\label{lem:ap-w-sample-complexity}
Suppose Assumption~\ref{as:approx-error} holds, $\|\phi_{s,a}\|_2\le L$ for all $(s,a)$, and $\Sigma_t \succeq \mu I_d$ for all $t$, for some $\mu>0$. Fix an outer iteration $t$, let $\mathcal F_{t-1}$ denote the $\sigma$-algebra generated by the trajectory up to the start of this critic call, and let $w_t$ be the output of Algorithm~\ref{alg:boost}, whose runs of Algorithm~\ref{alg:end2end_w_oracle} use step size $\alpha=\frac{1}{4L^2}$. For any target critic accuracy $\varepsilon_{\mathrm{stat}}\in(0,1)$ and confidence $\delta\in(0,1)$, choose $H$, $\tau$, $B$, $m$ and $K$ as in Lemma~\ref{lem:ap-single-run} at accuracy $\varepsilon_{\mathrm{stat}}/20$, and $R=\lceil 2\log(2/\delta)\rceil$. Then
\[
\Pr\bigl(\|w_t-w_t^\star\|_{\Sigma_t}\le\varepsilon_{\mathrm{stat}} \,\big|\, \mathcal F_{t-1}\bigr) \;\ge\; 1-\delta \quad\text{almost surely.}
\]
Moreover, Algorithm~\ref{alg:boost} uses at most $RK(m+BH)$ transitions, so under the above choices the total number of transitions is
\[
RK(m+BH)
\;=\;
\widetilde{O}\!\left(
\Bigl(\frac{d\,t_{\mathrm{mix}}^3}{\varepsilon_{\mathrm{stat}}^2}+\frac{L^2}{\mu}\,t_{\mathrm{mix}}\Bigl(1+\frac{\varepsilon_{\mathrm{app}}^2}{\varepsilon_{\mathrm{stat}}^2}\Bigr)\Bigr)\log\frac{2}{\delta}
\right),
\]
where $\widetilde{O}(\cdot)$ hides absolute constants and polylogarithmic factors in
$t_{\mathrm{mix}},L,\mu,d,1/\varepsilon_{\mathrm{stat}}$.
\end{lemma}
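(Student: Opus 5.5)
The proof has three stages: an in-expectation bound for a single run of Algorithm~\ref{alg:end2end_w_oracle}, a confidence boost via Algorithm~\ref{alg:boost}, and a transition count. The lemma refers to an intermediate result, Lemma~\ref{lem:ap-single-run}; I would establish it first.

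\textbf{Single run (Lemma~\ref{lem:ap-single-run}).} Conditional on $\mathcal F_{t-1}$ and on the state at the start of the run, I aim to show $\mathbb{E}\|\widehat w-w_t^\star\|_{\Sigma_t}^2\le(\varepsilon_{\mathrm{stat}}/20)^2$.

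The plan is to compare the actual run with an idealized run, driven by the same step size and initialization, that sees i.i.d.\ pairs $(s,a)\sim\nu^t$ and unbiased labels. Lemma~\ref{cor:avg_reward_qnpg_sample_complexity} applies to the idealized run with $G=\widetilde O(t_{\mathrm{mix}})$. For the second moment this comes from~\eqref{eq:ap-var-bound}: with $B=1$ the label $Y$ satisfies $\mathbb{E}Y^2\lesssim Ht_{\mathrm{mix}}+Q^2=\widetilde O(t_{\mathrm{mix}}^2)$. The bound $|Q^t|\le 2t_{\mathrm{mix}}+1$ comes from~\eqref{eq:tmix-decay}.

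The two discrepancies are handled separately.
\begin{itemize}
\item \emph{Label bias.} By~\eqref{eq:ap-bias-bound}, $|\mathbb{E}[\widehat Q_k\mid s_k,a_k]-Q^t(s_k,a_k)|\le b:=12t_{\mathrm{mix}}2^{-\lfloor(\tau-1)/t_{\mathrm{mix}}\rfloor}$. I would write $\widehat Q_k=\bar Q_k+\beta_k$, where $\bar Q_k$ has conditional mean $Q^t$ and $|\mathbb{E}[\beta_k\mid s_k,a_k]|\le b$. The bias then enters the noise recursion as an extra mean term of size at most $\alpha Lb$ per step. Since $\alpha\Sigma_t\succeq\alpha\mu I_d$, the mean error is contracted, giving a stationary offset of order $Lb/\mu$. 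This in turn contributes $O(L^2b^2/\mu)$ to the $\Sigma_t$-norm error, which is harmless once $H=\widetilde O(t_{\mathrm{mix}})$ makes $b$ polynomially small.
\item \emph{Dependence.} Given the past, the law of $(s_k,a_k)$ is within $2^{-\lfloor m/t_{\mathrm{mix}}\rfloor}$ of $\nu^t$ in TV. By maximal coupling, applied sequentially, I can construct the idealized i.i.d.\ pairs, together with fresh rollouts from them, on the same space so that they coincide with the real pairs for all $k\le K$ except on an event of probability at most $K2^{-\lfloor m/t_{\mathrm{mix}}\rfloor}$. On that bad event the iterates stay bounded. Indeed, with $\alpha=1/(4L^2)$ and labels bounded by $2H$, a deterministic bound $\|w_{t,k}\|\le 2HK\alpha L$ holds, so the bad event costs at most $(\text{poly})\cdot K2^{-m/t_{\mathrm{mix}}}$. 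Taking $m=\widetilde O(t_{\mathrm{mix}})$ makes this smaller than $\varepsilon_{\mathrm{stat}}^2/800$.
\end{itemize}
Finally, $K$ is chosen as in Lemma~\ref{cor:avg_reward_qnpg_sample_complexity} at accuracy $\varepsilon_{\mathrm{stat}}/40$, namely $K=\widetilde O\bigl(dt_{\mathrm{mix}}^2/\varepsilon_{\mathrm{stat}}^2+(L^2/\mu)(1+\varepsilon_{\mathrm{app}}^2/\varepsilon_{\mathrm{stat}}^2)\bigr)$.

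\textbf{Boosting.} By Markov's inequality, each run satisfies $\|w^{(i)}-w_t^\star\|_{\Sigma_t}\le\varepsilon_{\mathrm{stat}}/6$ with conditional probability at least $1-(6/20)^2>0.9$, given everything before that run. Because this holds conditionally on the past, the number of good runs stochastically dominates a Binomial$(R,0.9)$. Hence more than half of the runs are good with probability at least $1-e^{-cR}\ge1-\delta/2$ for $R=\lceil2\log(2/\delta)\rceil$; the constants would need checking here and I would adjust them if necessary.

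The selection rule, following \citet{hsu_loss_2016}, works in the $\widehat\Sigma_t$-norm, so I need $\widehat\Sigma_t$ and $\Sigma_t$ to be comparable: $\frac12\Sigma_t\preceq\widehat\Sigma_t\preceq2\Sigma_t$ with probability at least $1-\delta/2$. I would get this from matrix Bernstein \citep{tropp_user-friendly_2012} applied to the coupled i.i.d.\ pairs. It needs $RK\gtrsim (L^2/\mu)\log(d/\delta)$, which is already covered by the second term of $K$, plus the coupling error.

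On both events, the majority of good outputs lie in a $\widehat\Sigma_t$-ball of radius $2\sqrt2\,\varepsilon_{\mathrm{stat}}/6$ around $w_t^\star$. The standard median-of-means geometric argument then gives that the selected $w^{(i^\star)}$ is within $3\times$ that radius in the $\widehat\Sigma_t$-norm, and converting back to the $\Sigma_t$-norm yields $\|w_t-w_t^\star\|_{\Sigma_t}\le\varepsilon_{\mathrm{stat}}$.

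\textbf{Counting.} Each SGD step uses $m+H=\widetilde O(t_{\mathrm{mix}})$ transitions, so the total is $RK(m+BH)$, which gives the stated bound. I expect the main obstacle to be the coupling step: making the idealized i.i.d.\ comparison rigorous while the rollout labels are reused along the same trajectory as the next burn-in.
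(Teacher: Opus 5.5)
Your single-run plan is essentially the paper's: a sequential maximal coupling to i.i.d.\ pairs, a deterministic bound on the bad event, and a label-bias offset. For the offset, recentering at the minimizer $\tilde w_t$ for the conditional-mean label, as the paper does, is the clean way to realize your ``stationary offset'' without reproving the SGD lemma for biased residuals. Your majority step is also correct. Per-run success probability above $0.9$ gives failure at most $e^{-R\log(5/3)}\le\delta/2$ for $R=\lceil 2\log(2/\delta)\rceil$, so your constants work there.

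The gap is in the covariance step. You apply matrix Bernstein to the coupled i.i.d.\ pairs ``plus the coupling error''. That gives a failure probability of the form $2d\,e^{-c\mu RK/L^2}+RK\,2^{-\lfloor m/t_{\mathrm{mix}}\rfloor}$. The lemma fixes $m$ as in Lemma~\ref{lem:ap-single-run}, so $m$ depends only on $t_{\mathrm{mix}}$, $L^2/\mu$ and $\varepsilon_{\mathrm{stat}}$, never on $\delta$. The additive term $RK\,2^{-\lfloor m/t_{\mathrm{mix}}\rfloor}$ therefore cannot be made at most $\delta/2$ for all $\delta\in(0,1)$; it fails once $\delta$ is much smaller than it. Enlarging $m$ to order $t_{\mathrm{mix}}\log(RK/\delta)$ would fix the probability. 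It would also change the prescribed parameters and make the total $RK(m+BH)$ scale as $\log^2(1/\delta)$ rather than the stated $\log(2/\delta)$.

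The missing idea is to skip coupling for this step and concentrate directly along the dependent trajectory. Conditionally on the history $\mathcal H_{l-1}$ before each burn-in, the pair's law is within $2^{-\lfloor m/t_{\mathrm{mix}}\rfloor}$ of $\nu^t$ in total variation. Hence the normalized matrices $\Phi_l=(\mu/L^2)\Sigma_t^{-1/2}x_lx_l^\top\Sigma_t^{-1/2}$ have conditional means deterministically between $\tfrac{3\mu}{4L^2}I_d$ and $\tfrac{5\mu}{4L^2}I_d$. This needs only $(L^2/\mu)\,2^{-\lfloor m/t_{\mathrm{mix}}\rfloor}\le 1/4$, which already follows from $2^{-\lfloor m/t_{\mathrm{mix}}\rfloor}\le 1/(2K)$ and $K\ge 2L^2/\mu$. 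The Ahlswede--Winter iteration of Golden--Thompson, or matrix Freedman, tolerates the dependence because each conditional bound is a deterministic multiple of $I_d$. It gives failure probability $2d\,e^{-\mu RK/(32L^2)}\le\delta/2$ with no additive mixing term.

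Separately, fix a constant slip in the selection step. The good runs lie within $\sqrt2\,\varepsilon_{\mathrm{stat}}/6$ of $w_t^\star$ in $\|\cdot\|_{\widehat\Sigma_t}$; $2\sqrt2\,\varepsilon_{\mathrm{stat}}/6$ is their pairwise diameter. With the radius you wrote, tripling and converting back to $\|\cdot\|_{\Sigma_t}$ gives $2\varepsilon_{\mathrm{stat}}$, not $\varepsilon_{\mathrm{stat}}$.
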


The proof combines a bound in expectation for a single run of Algorithm~\ref{alg:end2end_w_oracle} with the selection step of Algorithm~\ref{alg:boost}. We first prove the former.

\begin{lemma}[One run of Algorithm~\ref{alg:end2end_w_oracle}]
\label{lem:ap-single-run}
Suppose Assumption~\ref{as:approx-error} holds, $\|\phi_{s,a}\|_2\le L$ for all $(s,a)$, and $\Sigma_t \succeq \mu I_d$ for all $t$, for some $\mu>0$. Fix an outer iteration $t$ and let $\widehat w$ be the output of a run of Algorithm~\ref{alg:end2end_w_oracle} with policy $\pi^t$ and step size
$\alpha=\frac{1}{4L^2}$. Let $\mathcal G$ denote the $\sigma$-algebra generated by the trajectory up to the start of this run; the policy $\pi^t$ and the current state of the trajectory are $\mathcal G$-measurable.

Then for any target critic accuracy $\varepsilon_{\mathrm{stat}}\in(0,1)$, the integer parameter choices, with $H\ge 4$,
\[
H \;=\; \Theta\!\left(t_{\mathrm{mix}}\,\log\!\frac{2t_{\mathrm{mix}}}{\varepsilon_{\mathrm{stat}}}\right),
\qquad
\tau \;=\; \left\lceil\frac{H}{2}\right\rceil,
\qquad
B \;=\; 1,
\]
\[
m \;=\; \Theta\!\left(t_{\mathrm{mix}}\,\log\!\frac{t_{\mathrm{mix}}\,(1+L^2/\mu)}{\varepsilon_{\mathrm{stat}}}\right),
\]
\[
K \;=\; \Theta\!\left(
\frac{d\,t_{\mathrm{mix}}^2\,\log(2t_{\mathrm{mix}}/\varepsilon_{\mathrm{stat}})}{\varepsilon_{\mathrm{stat}}^2}
+\frac{L^2}{\mu}\Bigl(\log\frac{2d\,t_{\mathrm{mix}}}{\varepsilon_{\mathrm{stat}}}+\frac{\min\{\varepsilon_{\mathrm{app}}^2,(1+2t_{\mathrm{mix}})^2\}}{\varepsilon_{\mathrm{stat}}^2}\Bigr)
\right)
\]
suffice to guarantee
\[
\mathbb{E}\!\left[\|\widehat w-w_t^\star\|_{\Sigma_t}^2 \,\middle|\, \mathcal G\right] \;\le\; \varepsilon_{\mathrm{stat}}^2 \quad\text{almost surely.}
\]

Moreover, Algorithm~\ref{alg:end2end_w_oracle} uses at most $K(m+BH)$ transitions, so
under the above choices (with $BH = H = \widetilde O(t_{\mathrm{mix}})$) the total number of transitions is
\[
K(m+BH)
\;=\;
\widetilde{O}\!\left(
\frac{d\,t_{\mathrm{mix}}^3}{\varepsilon_{\mathrm{stat}}^2}+\frac{L^2}{\mu}\,t_{\mathrm{mix}}\Bigl(1+\frac{\varepsilon_{\mathrm{app}}^2}{\varepsilon_{\mathrm{stat}}^2}\Bigr)
\right),
\]
where $\widetilde{O}(\cdot)$ hides absolute constants and polylogarithmic factors in
$t_{\mathrm{mix}},L,\mu,1/\varepsilon_{\mathrm{stat}}$.
\end{lemma}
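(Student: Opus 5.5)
The plan is to reduce one run of Algorithm~\ref{alg:end2end_w_oracle} to the idealized setting of Lemma~\ref{cor:avg_reward_qnpg_sample_complexity} and then pay for the two deviations from that setting. The first deviation is that the pairs are only approximately distributed as $\nu^t$ and are dependent. The second is that the labels are only approximately unbiased. Throughout I condition on $\mathcal G$, so $\pi^t$ is fixed.

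\textbf{Step 1 (labels).} Take $G:=1+2t_{\mathrm{mix}}$, which bounds $|Q^t|$ because $|h^\pi|\le 2t_{\mathrm{mix}}$. Apply Lemma~\ref{lem:ap-Q_estimation_burnin} with $B=1$ and $\tau=\lceil H/2\rceil$, so that $H_\tau\ge H/5$ and the bias bound~\eqref{eq:ap-bias-bound} applies. This gives:
\begin{itemize}
\item bias $\beta:=|\mathbb{E}[\widehat Q_k\mid s_k,a_k]-Q^t(s_k,a_k)|\le 12t_{\mathrm{mix}}2^{-\lfloor(\tau-1)/t_{\mathrm{mix}}\rfloor}$, which is made $\le\varepsilon_{\mathrm{stat}}/(c\sqrt{L^2/\mu})$-small, or at least small relative to $\varepsilon_{\mathrm{stat}}$, by the choice of $H$;
\item second moment $\mathbb{E}[\widehat Q_k^2\mid s_k,a_k]\le \mathrm{Var}(Y)+(\mathrm{mean})^2=O(Ht_{\mathrm{mix}})+O(t_{\mathrm{mix}}^2)=\widetilde O(t_{\mathrm{mix}}^2)$, from the variance bound $\mathrm{Var}(Y_i)\le 40Ht_{\mathrm{mix}}$.
\end{itemize}
I write the label as $Q^t+b(s,a)+\text{noise}$, where $b$ is the bias function. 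The bias is absorbed into the approximation error: replace $Q^t$ by $\tilde Q:=Q^t+b$ and note that the best fit $\tilde w^\star$ of $\tilde Q$ satisfies $\|\tilde w^\star-w_t^\star\|_{\Sigma_t}\le\|b\|_{L_2(\nu^t)}\le\beta$, since $L_2$ projection is a contraction. Lemma~\ref{cor:avg_reward_qnpg_sample_complexity} is then used with label second moment $G'^2=\widetilde O(t_{\mathrm{mix}}^2)$ and approximation error $\varepsilon_{\mathrm{app}}+\beta$. The $\min\{\varepsilon_{\mathrm{app}}^2,G^2\}$ in $K$ reflects that the approximation error never exceeds $\|Q^t\|_{L_2}\le G$, since $w=0$ is a feasible fit.

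\textbf{Step 2 (coupling, the main obstacle).} By~\eqref{eq:tmix-decay} and the Markov property, the conditional law of $(s_k,a_k)$ given everything before the $k$-th burn-in is within TV distance $\epsilon_m:=2^{-\lfloor m/t_{\mathrm{mix}}\rfloor}$ of $\nu^t$. By the maximal coupling, built sequentially, one can construct an idealized sequence of i.i.d.\ $\nu^t$ pairs, each followed by a fresh rollout. This sequence agrees with the actual one except on an event $E$ with $\Pr(E)\le K\epsilon_m$. The rollout after a pair depends on the past only through that pair, so on the coupled copy the labels are conditionally independent given their pairs, as Lemma~\ref{cor:avg_reward_qnpg_sample_complexity} requires.

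On $E^c$ the output equals the idealized output. On $E$ I use a crude deterministic bound on the SGD iterates. With $\alpha=1/(4L^2)$, we have $\|I-\alpha xx^\top\|\le1$, so $\|w_{t,k}\|\le k\alpha L\max|\widehat Q|\le KH/(4L)$. This gives $\|\widehat w-w_t^\star\|_{\Sigma_t}^2\le \mathrm{poly}(K,H,L,G/\sqrt\mu)$, and hence the contribution $\Pr(E)\cdot\mathrm{poly}$. Because $m$ enters only logarithmically, taking $m=\Theta(t_{\mathrm{mix}}\log(\mathrm{poly}/\varepsilon_{\mathrm{stat}}))$ makes this $\le\varepsilon_{\mathrm{stat}}^2/4$; this is why $m$ carries the $L^2/\mu$ factor inside its logarithm. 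Getting this decomposition clean is where care is needed. One must measure the error on $E$ using unconditional second moments, not conditional ones. The point of the construction is that dependence between successive samples is entirely handled by the coupling, so no mixing-SGD analysis is required.

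\textbf{Step 3 (assembly and counting).} Combine the three pieces via $\|a+b+c\|^2\le 3(\cdots)$:
\begin{itemize}
\item the idealized SGD error, which Lemma~\ref{cor:avg_reward_qnpg_sample_complexity} bounds by $\varepsilon_{\mathrm{stat}}^2/12$ once $K\gtrsim dG'^2/\varepsilon_{\mathrm{stat}}^2+(L^2/\mu)(\log+(\varepsilon_{\mathrm{app}}+\beta)^2/\varepsilon_{\mathrm{stat}}^2)$;
\item the bias shift $\beta^2$;
\item the coupling failure term from Step~2.
\end{itemize}
The resulting sum is $\le\varepsilon_{\mathrm{stat}}^2$ after rescaling the constants. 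With $G'^2=\widetilde O(t_{\mathrm{mix}}^2)$, this choice of $K$ matches the stated one. Each step of the run uses $m+H=\widetilde O(t_{\mathrm{mix}})$ transitions, so the total is $K(m+H)=\widetilde O(dt_{\mathrm{mix}}^3/\varepsilon_{\mathrm{stat}}^2+(L^2/\mu)t_{\mathrm{mix}}(1+\varepsilon_{\mathrm{app}}^2/\varepsilon_{\mathrm{stat}}^2))$.
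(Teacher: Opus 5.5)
Your proposal is correct and follows the paper's proof essentially step for step. The shared steps are: coupling the burn-in pairs and their rollouts to i.i.d.\ $\nu^t$ draws up to an event of probability at most $K2^{-\lfloor m/t_{\mathrm{mix}}\rfloor}$; a crude almost-sure bound on the SGD iterates on that event; replacing $Q^t$ by the conditional-mean label $Q^t+b$, whose best fit lies within $\|b\|_\infty$ of $w_t^\star$ by projection contraction; and invoking Lemma~\ref{cor:avg_reward_qnpg_sample_complexity} with the inflated approximation error. Two small points: with the stated $H$ you only get $\beta=O(\varepsilon_{\mathrm{stat}})$, which suffices, whereas your alternative $\beta\le\varepsilon_{\mathrm{stat}}/\sqrt{L^2/\mu}$ would need $L^2/\mu$ inside the logarithm of $H$; and to keep $d$ out of the logarithm in $m$ you should use $d\le L^2/\mu$, from $d\mu\le\operatorname{Tr}\Sigma_t\le L^2$, as the paper does.
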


\begin{proof}
Fix $t$.

\paragraph{Coupling with independent pairs.}
Throughout, we condition on $\mathcal G$, which fixes $\pi^t$ and the current state of the trajectory. Write $Z_k := (s_k, a_k, \widehat Q_k)$ for the $k$-th pair of Algorithm~\ref{alg:end2end_w_oracle} and its label, so that $\widehat w$ is a function of $Z_1,\dots,Z_K$. Let $Z_1',\dots,Z_K'$ be independent, where $Z_k' := (s_k', a_k', \widehat Q_k')$ has $(s_k', a_k') \sim \nu^t$ and a label $\widehat Q_k'$ computed in the same way from a rollout that starts at $(s_k', a_k')$; this is the sampling of Lemma~\ref{cor:avg_reward_qnpg_sample_complexity}. Conditional on the trajectory up to the start of the $k$-th burn-in, the law of $s_k$ is $P_{\pi^t}^m(\cdot\mid s_{k,0})$, which by~\eqref{eq:tmix-decay} is within $\delta_m := 2^{-\lfloor m/t_{\mathrm{mix}}\rfloor}$ of $d^{\pi^t}$ in total variation. Since $a_k \mid s_k \sim \pi^t(\cdot\mid s_k)$ and $\widehat Q_k$ is computed from the rollout that starts at $(s_k, a_k)$, the law of $Z_k$ satisfies the same TV bound relative to the law of $Z_k'$, and, by convexity of the total variation distance, so does its conditional law given only $Z_1,\dots,Z_{k-1}$, which averages these laws. Now replace $Z_K, Z_{K-1},\dots,Z_1$ in turn by independent draws distributed as $Z_K', Z_{K-1}',\dots,Z_1'$. The step that replaces $Z_k$ changes only the conditional law of the $k$-th coordinate given the first $k-1$, from that of $Z_k$ to the law of $Z_k'$, so the joint law moves in total variation by the average distance between these two laws, which is at most $\delta_m$. Hence the joint laws $\mathbb{P}$ of $(Z_1,\dots,Z_K)$ and $\mathbb{P}'$ of $(Z_1',\dots,Z_K')$ satisfy $\|\mathbb{P}-\mathbb{P}'\|_{\mathrm{TV}}\le K\delta_m$. Since the $Z_k$ take finitely many values, an optimal coupling of $\mathbb{P}$ and $\mathbb{P}'$ \citep[Proposition~4.7 and Remark~4.8]{levin_markov_2017} lets us realize $Z_1',\dots,Z_K'$ on an extended probability space so that $Z_k'\neq Z_k$ for some $k$ with probability at most $K\delta_m$. Let $\widehat w'$ be the output of the same SGD run on $Z_1',\dots,Z_K'$; it equals $\widehat w$ whenever $Z_k'=Z_k$ for all $k$.

\paragraph{Almost-sure bound on $\|\widehat w-w_t^\star\|_{\Sigma_t}$.}
With $B=1$, each label equals the quantity $Y_1$ in the proof of Lemma~\ref{lem:ap-Q_estimation_burnin}, so $|\widehat Q_k|\le 2H$ by the range bound there. Each SGD step is $w_{t,k}=(I_d-\alpha\phi\phi^\top)w_{t,k-1}+\alpha\widehat Q_k\phi$ with $\phi=\phi_{s_k,a_k}$, and $\|I_d-\alpha\phi\phi^\top\|_{\mathrm{op}}\le 1$ since $\alpha\|\phi\|_2^2\le 1/4$, so $\|w_{t,k}\|_2\le\|w_{t,k-1}\|_2+2\alpha LH=\|w_{t,k-1}\|_2+H/(2L)$. Hence $\|w_{t,k}\|_2\le kH/(2L)$, and the tail average satisfies $\|\widehat w\|_2\le KH/(2L)$. Moreover, $\|w_t^\star\|_2\le \|Q^{\pi^t}\|_\infty/\sqrt{\mu}\le L\,\|Q^{\pi^t}\|_\infty/\mu$ by the bound on $\|w^\star\|$ in the proof of Lemma~\ref{cor:avg_reward_qnpg_sample_complexity} and $\mu\le L^2$. For the Poisson bound $\|Q^{\pi^t}\|_\infty\le 1+2\,t_{\mathrm{mix}}$, the series for $Q^{\pi^t}$ in the proof of Lemma~\ref{lem:ap-Q_estimation_burnin} has a first term of absolute value at most $1$, and by~\eqref{eq:ap-reward-dev} the absolute values of its remaining terms sum to at most $\sum_{\ell\ge0}2^{-\lfloor \ell/t_{\mathrm{mix}}\rfloor}=2\,t_{\mathrm{mix}}$. Since $\Sigma_t\preceq L^2I_d$, almost surely
\[
\|\widehat w-w_t^\star\|_{\Sigma_t}
\;\le\;
L\bigl(\|\widehat w\|_2+\|w_t^\star\|_2\bigr)
\;\le\;
\frac{KH}{2}+\frac{(1+2\,t_{\mathrm{mix}})L^2}{\mu}.
\]

\paragraph{Decomposition.}
Define the conditional-mean label
\[
\widehat Q(s,a) \;:=\; \mathbb{E}\!\left[\widehat Q_{H,\tau}^{\pi^t}(s,a)\mid s,a\right],
\]
and the population minimizer
\[
\tilde w_t \in \argmin_w \mathbb{E}_{(s,a)\sim\nu^t}\!\left[(w^\top\phi_{s,a}-\widehat Q(s,a))^2\right].
\]
Split the expectation according to whether $Z_k'=Z_k$ for all $k$. If so, $\widehat w=\widehat w'$, and we write $\widehat w'-w_t^\star=(\widehat w'-\tilde w_t)+(\tilde w_t-w_t^\star)$ and use $\|u+v\|_{\Sigma_t}^2\le 2\|u\|_{\Sigma_t}^2+2\|v\|_{\Sigma_t}^2$; if not, which has probability at most $K\delta_m$, we use the almost-sure bound. This gives
\begin{equation}
\label{eq:three_term_decomp}
\mathbb{E}\!\left[\|\widehat w-w_t^\star\|_{\Sigma_t}^2\right]
\le
2\,\mathbb{E}\!\left[\|\widehat w'-\tilde w_t\|_{\Sigma_t}^2\right]
+2\,\|\tilde w_t-w_t^\star\|_{\Sigma_t}^2
+K\delta_m\Bigl(\frac{KH}{2}+\frac{(1+2\,t_{\mathrm{mix}})L^2}{\mu}\Bigr)^2.
\end{equation}

\paragraph{Bias term $\|\tilde w_t-w_t^\star\|_{\Sigma_t}^2$.}
Since $\phi^\top w_t^\star$ and $\phi^\top\tilde w_t$ are the $L^2(\nu^t)$ projections of $Q^{\pi^t}$ and $\widehat Q$ onto the span of the features, and projections do not increase distances,
\[
\|\tilde w_t-w_t^\star\|_{\Sigma_t}
\le
\sqrt{\mathbb{E}_{(s,a)\sim\nu^t}\!\left[(\widehat Q(s,a)-Q^{\pi^t}(s,a))^2\right]}
\le
\|\widehat Q-Q^{\pi^t}\|_\infty.
\]
By the proof of Lemma~\ref{lem:ap-Q_estimation_burnin} (bias bound~\eqref{eq:ap-bias-bound}),
\[
\|\widehat Q-Q^{\pi^t}\|_\infty
\;\le\;
12\,t_{\mathrm{mix}}\,2^{-\lfloor (\tau-1)/t_{\mathrm{mix}}\rfloor}.
\]
With the choice $\tau=\lceil H/2\rceil$ and $H=\Theta(t_{\mathrm{mix}}\,\log(2t_{\mathrm{mix}}/\varepsilon_{\mathrm{stat}}))$,
the right-hand side is $O(\varepsilon_{\mathrm{stat}})$, hence
$\|\widehat Q-Q^{\pi^t}\|_\infty\le\varepsilon_{\mathrm{stat}}/3$ and $\|\tilde w_t-w_t^\star\|_{\Sigma_t}^2 \le \varepsilon_{\mathrm{stat}}^2/9$ for a suitable constant in the
$\Theta(\cdot)$ choice of $H$.

\paragraph{SGD term $\mathbb{E}[\|\widehat w'-\tilde w_t\|_{\Sigma_t}^2]$.}
By definition of $\widehat Q$, the labels of Algorithm~\ref{alg:end2end_w_oracle} ($B = 1$) satisfy $\mathbb{E}[\widehat Q_{H,\tau}^{\pi^t}(s,a)\mid s,a]=\widehat Q(s,a)$. The SGD rate of Lemma~\ref{cor:avg_reward_qnpg_sample_complexity} depends on the label second moment, which we bound directly (rather than via the high-probability guarantee of Lemma~\ref{lem:ap-Q_estimation_burnin}):
\begin{align*}
\mathbb{E}[(\widehat Q_{H,\tau}^{\pi^t}(s,a))^2\mid s,a]
&\;\leq\;
\|\widehat Q\|_\infty^2 + \mathrm{Var}(\widehat Q_{H,\tau}^{\pi^t}(s,a)\mid s,a)
\;=\;
O(t_{\mathrm{mix}}^2) + O(H\,t_{\mathrm{mix}})\\
&\;=\;
O\bigl(t_{\mathrm{mix}}^2\log(2t_{\mathrm{mix}}/\varepsilon_{\mathrm{stat}})\bigr),
\end{align*}
where the last step uses $H=\Theta(t_{\mathrm{mix}}\log(2t_{\mathrm{mix}}/\varepsilon_{\mathrm{stat}}))$, the mean bound uses the bias and Poisson bounds above, and the variance bound follows from $\mathbb{E}[Y_i^2]\le 2(1+4\,t_{\mathrm{mix}})(\tau+\tau^2/(H-\tau))$ in the proof of Lemma~\ref{lem:ap-Q_estimation_burnin}, with $\tau=\lceil H/2\rceil$. Therefore, Lemma~\ref{cor:avg_reward_qnpg_sample_complexity} applies to $Z_1',\dots,Z_K'$, with target $\widehat Q$ in place of $Q^t$ (so that $w_t^\star$ there is $\tilde w_t$, and $|\widehat Q|\le G$ by Jensen's inequality), label second moment $G^2=O(t_{\mathrm{mix}}^2\log(2t_{\mathrm{mix}}/\varepsilon_{\mathrm{stat}}))$ with $G\ge1$, and $2\bar\varepsilon_{\mathrm{app}}^2+2\varepsilon_{\mathrm{stat}}^2/9$ in place of $\varepsilon_{\mathrm{app}}^2$. Here $\bar\varepsilon_{\mathrm{app}}:=\min\{\varepsilon_{\mathrm{app}},1+2\,t_{\mathrm{mix}}\}$ satisfies Assumption~\ref{as:approx-error} as well, since $w=0$ gives $\mathbb{E}_{\nu^t}[(Q^{\pi^t})^2]\le(1+2\,t_{\mathrm{mix}})^2$ by the Poisson bound above, and the minimality of $\tilde w_t$ together with $\|\widehat Q-Q^{\pi^t}\|_\infty\le\varepsilon_{\mathrm{stat}}/3$ gives
\begin{align*}
\mathbb{E}_{\nu^t}\bigl[(\tilde w_t^\top\phi-\widehat Q)^2\bigr]
&\le
\mathbb{E}_{\nu^t}\bigl[(w_t^{\star\top}\phi-\widehat Q)^2\bigr]\\
&\le
2\,\mathbb{E}_{\nu^t}\bigl[(w_t^{\star\top}\phi-Q^{\pi^t})^2\bigr]+2\|\widehat Q-Q^{\pi^t}\|_\infty^2
\le
2\bar\varepsilon_{\mathrm{app}}^2+\tfrac{2}{9}\varepsilon_{\mathrm{stat}}^2.
\end{align*}
At accuracy $\varepsilon_{\mathrm{stat}}/3\le G$, the choice of $K$ in Lemma~\ref{cor:avg_reward_qnpg_sample_complexity} is then of the order stated, which ensures
$\mathbb{E}[\|\widehat w'-\tilde w_t\|_{\Sigma_t}^2]\le \varepsilon_{\mathrm{stat}}^2/9$.

\paragraph{Burn-in term.}
Since $K$ and $H$ are bounded by polynomials in $t_{\mathrm{mix}}$, $1+L^2/\mu$ and $1/\varepsilon_{\mathrm{stat}}$ (using $d\le L^2/\mu$, from $d\mu\le\mathrm{Tr}(\Sigma_t)\le L^2$, and $\bar\varepsilon_{\mathrm{app}}\le 1+2\,t_{\mathrm{mix}}$), a suitable constant in the $\Theta(\cdot)$ choice of $m$ gives $m\ge t_{\mathrm{mix}}\bigl\lceil\log_2\!\bigl(2K(KH/2+(1+2\,t_{\mathrm{mix}})L^2/\mu)^2/\varepsilon_{\mathrm{stat}}^2\bigr)\bigr\rceil$. Then $\delta_m=2^{-\lfloor m/t_{\mathrm{mix}}\rfloor}$ makes the last term of~\eqref{eq:three_term_decomp} at most $\varepsilon_{\mathrm{stat}}^2/2$. In particular, $\delta_m\le1/(2K)$, since $\varepsilon_{\mathrm{stat}}<1$.

\paragraph{Sample complexity.}
Plugging the three bounds into \eqref{eq:three_term_decomp} gives
$\mathbb{E}[\|\widehat w-w_t^\star\|_{\Sigma_t}^2]\le 4\varepsilon_{\mathrm{stat}}^2/9+\varepsilon_{\mathrm{stat}}^2/2<\varepsilon_{\mathrm{stat}}^2$, conditionally on $\mathcal G$.
Finally, each SGD step uses $m$ transitions for its burn-in and $H$ transitions for its rollout ($B = 1$), so the per-step cost is $m + H = \widetilde O(t_{\mathrm{mix}})$ and the total number of transitions is $K(m+BH) = \widetilde O\bigl(d\,t_{\mathrm{mix}}^3/\varepsilon_{\mathrm{stat}}^2+(L^2/\mu)\,t_{\mathrm{mix}}(1+\varepsilon_{\mathrm{app}}^2/\varepsilon_{\mathrm{stat}}^2)\bigr)$.
\end{proof}

\begin{proof}[Proof of Lemma~\ref{lem:ap-w-sample-complexity}]
Let $w^{(1)},\dots,w^{(R)}$ be the outputs of the $R$ runs of Algorithm~\ref{alg:end2end_w_oracle} in Algorithm~\ref{alg:boost}, and let $\mathcal G_{i-1}$ be the $\sigma$-algebra generated by the trajectory up to the start of run $i$, so that $\mathcal G_0=\mathcal F_{t-1}$. All runs follow $\pi^t$. Call run $i$ \emph{accurate} if $\|w^{(i)}-w_t^\star\|_{\Sigma_t}\le\varepsilon_{\mathrm{stat}}/5$, and write $\|u\|_{\widehat\Sigma_t}:=\sqrt{u^\top\widehat\Sigma_t u}$.

\paragraph{Most runs are accurate.}
Let $\mathbf 1_i$ indicate that run $i$ is not accurate; it is $\mathcal G_i$-measurable. Lemma~\ref{lem:ap-single-run} at accuracy $\varepsilon_{\mathrm{stat}}/20$, applied with $\mathcal G=\mathcal G_{i-1}$, and Markov's inequality give $\mathbb{E}[\mathbf 1_i\mid\mathcal G_{i-1}]\le(\varepsilon_{\mathrm{stat}}/20)^2/(\varepsilon_{\mathrm{stat}}/5)^2=1/16$. Hence $\mathbb{E}[4^{\mathbf 1_i}\mid\mathcal G_{i-1}]\le1+3/16\le e^{3/16}$, and iterating conditional expectations gives $\mathbb{E}[4^{\sum_i\mathbf 1_i}\mid\mathcal F_{t-1}]\le e^{3R/16}$. By Markov's inequality,
\[
\Pr\Bigl(\textstyle\sum_{i=1}^R\mathbf 1_i\ge R/2\,\Big|\,\mathcal F_{t-1}\Bigr)\;\le\;e^{3R/16}\,4^{-R/2}\;=\;e^{-R(\log2-3/16)}\;\le\;e^{-R/2}\;\le\;\delta/2.
\]
So, outside an event of conditional probability at most $\delta/2$, more than half of the runs are accurate. The runs need not be independent.

\paragraph{The empirical covariance is accurate.}
List the $RK$ pairs of all runs in the order in which they are visited, with features $x_1,\dots,x_{RK}$, and let $\mathcal H_{l-1}$ be the $\sigma$-algebra generated by the trajectory up to the start of the burn-in that precedes the $l$-th pair. Set $\Phi_l:=(\mu/L^2)\,\Sigma_t^{-1/2}x_lx_l^\top\Sigma_t^{-1/2}$ and $\Phi_{\le l}:=\Phi_1+\dots+\Phi_l$. Then $0\preceq\Phi_l\preceq I_d$, because $\|\Sigma_t^{-1/2}x_l\|_2^2\le L^2/\mu$, and $\Phi_{\le RK}=(\mu RK/L^2)\,\Sigma_t^{-1/2}\widehat\Sigma_t\Sigma_t^{-1/2}$. As in the proof of Lemma~\ref{lem:ap-single-run}, the conditional law of the $l$-th pair given $\mathcal H_{l-1}$ is within $2^{-\lfloor m/t_{\mathrm{mix}}\rfloor}$ of $\nu^t$ in total variation. For every unit vector $u$, the function $(s,a)\mapsto(u^\top\Sigma_t^{-1/2}\phi_{s,a})^2$ takes values in $[0,L^2/\mu]$ and has mean $1$ under $\nu^t$, so
\[
\Bigl\|\frac{L^2}{\mu}\,\mathbb{E}[\Phi_l\mid\mathcal H_{l-1}]-I_d\Bigr\|_{\mathrm{op}}\;\le\;\frac{L^2}{\mu}\,2^{-\lfloor m/t_{\mathrm{mix}}\rfloor}\;\le\;\frac14,
\]
where the last step uses $2^{-\lfloor m/t_{\mathrm{mix}}\rfloor}\le1/(2K)$, from the choice of $m$ in that proof, and $K\ge32(L^2/\mu)\log(4d)\ge2L^2/\mu$, which a suitable constant in the choice of $K$ there guarantees. Hence $\frac{3\mu}{4L^2}I_d\preceq\mathbb{E}[\Phi_l\mid\mathcal H_{l-1}]\preceq\frac{5\mu}{4L^2}I_d$.

We now follow the iterative Golden--Thompson argument of Ahlswede and Winter \citep[Section~1.3.1]{tropp_user-friendly_2012}, which applies to our dependent sequence because each conditional bound below is a deterministic multiple of $I_d$. Convexity of $z\mapsto e^{\pm z/2}$ on $[0,1]$ gives $e^{\pm\Phi_l/2}\preceq I_d+(e^{\pm1/2}-1)\,\Phi_l$. With the bounds on $\mathbb{E}[\Phi_l\mid\mathcal H_{l-1}]$ above, $1+z\le e^z$, $e^{1/2}\le17/10$ and $e^{-1/2}\le5/8$, this gives
\[
\mathbb{E}\bigl[e^{\Phi_l/2}\bigm|\mathcal H_{l-1}\bigr]\preceq e^{7\mu/(8L^2)}\,I_d,
\qquad
\mathbb{E}\bigl[e^{-\Phi_l/2}\bigm|\mathcal H_{l-1}\bigr]\preceq e^{-9\mu/(32L^2)}\,I_d.
\]
The Golden--Thompson inequality gives $\operatorname{Tr}e^{\pm\Phi_{\le l}/2}\le\operatorname{Tr}\bigl(e^{\pm\Phi_{\le l-1}/2}\,e^{\pm\Phi_l/2}\bigr)$, and $e^{\pm\Phi_{\le l-1}/2}\succeq0$ is $\mathcal H_{l-1}$-measurable, so taking conditional expectations and iterating over $l=RK,\dots,1$ yields
\[
\mathbb{E}\bigl[\operatorname{Tr}e^{\Phi_{\le RK}/2}\bigm|\mathcal F_{t-1}\bigr]\le d\,e^{7\mu RK/(8L^2)},
\qquad
\mathbb{E}\bigl[\operatorname{Tr}e^{-\Phi_{\le RK}/2}\bigm|\mathcal F_{t-1}\bigr]\le d\,e^{-9\mu RK/(32L^2)}.
\]
By Markov's inequality, the event $\lambda_{\max}(\Phi_{\le RK})\ge2\mu RK/L^2$ has conditional probability at most $d\,e^{-\mu RK/(8L^2)}$, and the event $\lambda_{\min}(\Phi_{\le RK})\le\mu RK/(2L^2)$ has conditional probability at most $d\,e^{-\mu RK/(32L^2)}$. Hence $\frac12\Sigma_t\preceq\widehat\Sigma_t\preceq2\Sigma_t$ fails with conditional probability at most $2d\,e^{-\mu RK/(32L^2)}\le\delta/2$. Here we used $K\ge32(L^2/\mu)\log(4d)$, together with $R\ge2\log(2/\delta)\ge1+\log(1/\delta)$ and $\log(4d)\ge1$, which give $R\log(4d)\ge\log(4d/\delta)$.

\paragraph{The selected run has error at most $\varepsilon_{\mathrm{stat}}$.}
On the intersection of the two events above, which has conditional probability at least $1-\delta$, more than half of the runs are accurate and $\frac12\|u\|_{\Sigma_t}^2\le\|u\|_{\widehat\Sigma_t}^2\le2\|u\|_{\Sigma_t}^2$ for every $u\in\mathbb{R}^d$. The last bound holds for all $u$ at once, so it applies to the data-dependent differences $w^{(i)}-w^{(j)}$. Any two accurate runs satisfy $\|w^{(i)}-w^{(j)}\|_{\widehat\Sigma_t}\le\sqrt2\cdot2\varepsilon_{\mathrm{stat}}/5$, so every accurate run has $r_i\le2\sqrt2\,\varepsilon_{\mathrm{stat}}/5$, and hence $r_{i^\star}\le2\sqrt2\,\varepsilon_{\mathrm{stat}}/5$. The ball of radius $r_{i^\star}$ around $w^{(i^\star)}$ in $\|\cdot\|_{\widehat\Sigma_t}$ contains more than half of the runs, so it contains an accurate run $w^{(j)}$, and
\[
\|w_t-w_t^\star\|_{\Sigma_t}\le\|w^{(i^\star)}-w^{(j)}\|_{\Sigma_t}+\|w^{(j)}-w_t^\star\|_{\Sigma_t}\le\sqrt2\cdot\frac{2\sqrt2\,\varepsilon_{\mathrm{stat}}}{5}+\frac{\varepsilon_{\mathrm{stat}}}{5}=\varepsilon_{\mathrm{stat}}.
\]
This is the deterministic selection argument of \citet[Proposition~8]{hsu_loss_2016}; the two previous steps remove the independence that their probability bounds require, through conditional arguments along a single trajectory.

\paragraph{Sample complexity.}
Each run uses $K(m+BH)$ transitions, so Algorithm~\ref{alg:boost} uses $RK(m+BH)$. At accuracy $\varepsilon_{\mathrm{stat}}/20$, the parameters of Lemma~\ref{lem:ap-single-run} change only by constant factors, so with $R=\lceil2\log(2/\delta)\rceil$ the total is as stated.
\end{proof}

\subsection{Proof of Theorem~\ref{thm:non_tabular_pmd_sample_complexity_generative} (LFA sample complexity)}

\begin{theorem}[LFA sample complexity, detailed form]
\label{thm:ap-non_tabular_pmd_sample_complexity_generative}
Suppose Assumption~\ref{as:L2-mismatch-bound} and the hypotheses of Lemma~\ref{lem:w-sample-complexity} hold, and $D_0^\star<\infty$. Recall the initial Lyapunov value $\Psi_0 = \Delta_0 + D_0^\star/(\eta_{-1}\bar\chi)$. Fix a target suboptimality $\varepsilon\in(0,1)$ with $\varepsilon \ge 8(\bar\chi\sqrt{C_+}+\sqrt{C_\star})\varepsilon_{\mathrm{app}}$ and a confidence $\delta\in(0,1)$, and set $\varepsilon_{\mathrm{stat}} := \varepsilon/(8(\bar\chi\sqrt{C_+}+\sqrt{C_\star}))$ and $T := \lceil \bar\chi\log\max\{e, 2\Psi_0/\varepsilon\}\rceil$. Run the update~\eqref{eq:pmd-update} with $\tilde Q^t(s,a) = \phi_{s,a}^\top w_t$ for $T$ steps under the hypotheses of Lemma~\ref{lem:master}(i), where $w_t$ is the output of Algorithm~\ref{alg:boost} with the parameters of Lemma~\ref{lem:ap-w-sample-complexity} at accuracy $\varepsilon_{\mathrm{stat}}$ and confidence $\delta/T$. Then, with probability at least $1-\delta$,
\[
\Delta_T \;\le\; \varepsilon.
\]
Moreover, each outer iteration uses at most $RK(m+BH)$ transitions, with $R=\lceil 2\log(2T/\delta)\rceil$, hence the total number of transitions is at most $T\,RK(m+BH)$, which under these choices is
\[
T\,RK(m+BH)
\;=\;
\widetilde{O}\!\left(
\Bigl(\frac{d\,t_{\mathrm{mix}}^3\,(\bar\chi\sqrt{C_{+}}+\sqrt{C_{\star}})^2\;\bar\chi}{\varepsilon^{2}}+\frac{L^2}{\mu}\,t_{\mathrm{mix}}\,\bar\chi\Bigr)\log\frac{2T}{\delta}
\right),
\]
where $\widetilde{O}(\cdot)$ hides polylogarithmic factors in $t_{\mathrm{mix}}, L, \mu, d, \bar\chi, C_+, C_\star, 1/\varepsilon, \Psi_0$.
\end{theorem}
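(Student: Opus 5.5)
The plan mirrors the proof of Theorem~\ref{thm:ap-pmd_sample_complexity}: a union bound over the $T$ critic calls, followed by a deterministic composition with Theorem~\ref{thm:non-tabular-linear-convergence}.

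\emph{Critic accuracy along the trajectory.} Let $\mathcal F_{t-1}$ be the $\sigma$-algebra generated by the trajectory up to the start of the $t$-th critic call. Then $\pi^t$ and the current state are $\mathcal F_{t-1}$-measurable. Lemma~\ref{lem:ap-w-sample-complexity}, at accuracy $\varepsilon_{\mathrm{stat}}$ and confidence $\delta/T$, gives
\[
\Pr\bigl(\|w_t-w_t^\star\|_{\Sigma_t}>\varepsilon_{\mathrm{stat}}\mid\mathcal F_{t-1}\bigr)\le\delta/T \quad\text{almost surely.}
\]
Taking expectations and a union bound over $t<T$, the good event $\mathcal E$ on which Assumption~\ref{as:stat-error} holds for all $t<T$ has probability at least $1-\delta$.

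The one subtlety is that Assumption~\ref{as:stat-error} is stated for all $t\ge0$, whereas we only control $t<T$. I will note that the recursion \eqref{eq:ap-master-onestep} behind Lemma~\ref{lem:master}(i) uses only $A_s,B_s$ for $s<T$ to bound $\Psi_T$, and hence $\Delta_T$. The same holds for the bounds on $A_t,B_t$ in the proof of Theorem~\ref{thm:non-tabular-linear-convergence}, which use Assumptions~\ref{as:approx-error}, \ref{as:L2-mismatch-bound} and the iteration-$t$ statistical error. Assumption~\ref{as:L2-mismatch-bound} holds almost surely by hypothesis, so intersecting with its probability-one event costs nothing.

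\emph{Composition.} On $\mathcal E$, the proof of Theorem~\ref{thm:non-tabular-linear-convergence} gives
\[
\bar\chi\bar A+\bar B=2(\bar\chi\sqrt{C_+}+\sqrt{C_\star})(\varepsilon_{\mathrm{stat}}+\varepsilon_{\mathrm{app}}).
\]
The choice of $\varepsilon_{\mathrm{stat}}$ makes the $\varepsilon_{\mathrm{stat}}$ part equal to $\varepsilon/4$. The hypothesis $\varepsilon\ge 8(\bar\chi\sqrt{C_+}+\sqrt{C_\star})\varepsilon_{\mathrm{app}}$ bounds the $\varepsilon_{\mathrm{app}}$ part by $\varepsilon/4$. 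The floor is therefore at most $\varepsilon/2$.

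For the contraction, \eqref{eq:lfa-nonasymptotic-linear} with $(1-1/\bar\chi)^T\le e^{-T/\bar\chi}$ and $T=\lceil\bar\chi\log\max\{e,2\Psi_0/\varepsilon\}\rceil$ bounds the contraction term by $\varepsilon/2$. Hence $\Delta_T\le\varepsilon$ on $\mathcal E$.

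\emph{Counting.} Each outer call uses $RK(m+BH)$ transitions with $R=\lceil2\log(2T/\delta)\rceil$, and there are $T=\widetilde\Theta(\bar\chi)$ calls. Substituting $\varepsilon_{\mathrm{stat}}^{-2}=64(\bar\chi\sqrt{C_+}+\sqrt{C_\star})^2/\varepsilon^2$ into the bound of Lemma~\ref{lem:ap-w-sample-complexity} gives the leading term.

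The term $\frac{L^2}{\mu}t_{\mathrm{mix}}(1+\varepsilon_{\mathrm{app}}^2/\varepsilon_{\mathrm{stat}}^2)$ needs care. The condition on $\varepsilon$ is exactly $\varepsilon_{\mathrm{app}}\le\varepsilon_{\mathrm{stat}}$, so this term becomes $O(\frac{L^2}{\mu}t_{\mathrm{mix}})$ per call. Multiplying by $T$ yields the stated second term, and logs in $\Psi_0$ and the other parameters are absorbed into $\widetilde O$.

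Using $(\bar\chi\sqrt{C_+}+\sqrt{C_\star})^2\le2(\bar\chi^2C_++C_\star)$ recovers the form of Theorem~\ref{thm:non_tabular_pmd_sample_complexity_generative}. The main care point is this absorption, together with checking that conditioning on the single continuing trajectory is legitimate; the conditional, almost-sure form of Lemma~\ref{lem:ap-w-sample-complexity} handles the latter directly.
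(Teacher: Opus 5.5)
Your proposal is correct and follows the paper's proof essentially step for step. The three steps match: a conditional per-call guarantee from Lemma~\ref{lem:ap-w-sample-complexity} at confidence $\delta/T$ with a union bound over $t<T$, noting that only $t<T$ matters for $\Delta_T$; composition with Theorem~\ref{thm:non-tabular-linear-convergence}, where $\varepsilon_{\mathrm{app}}\le\varepsilon_{\mathrm{stat}}$ caps the floor at $\varepsilon/2$ (your two $\varepsilon/4$ pieces are the paper's bound $4(\bar\chi\sqrt{C_+}+\sqrt{C_\star})\varepsilon_{\mathrm{stat}}=\varepsilon/2$) and the choice of $T$ caps the contraction term at $\varepsilon/2$; and the same transition count with the approximation term absorbed.
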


\begin{proof}
As for Theorem~\ref{thm:ap-pmd_sample_complexity}, the proof has two ingredients: a union bound converting the per-call guarantee of Lemma~\ref{lem:ap-w-sample-complexity} into critic accuracy along the PMD trajectory, and a deterministic composition with Theorem~\ref{thm:non-tabular-linear-convergence}.

\paragraph{Critic accuracy along the trajectory.}
For each $t=0,1,\dots,T-1$, the policy $\pi^t$ and the state of the trajectory at the start of critic call $t$ are $\mathcal F_{t-1}$-measurable, so Lemma~\ref{lem:ap-w-sample-complexity} at confidence $\delta/T$ gives
\[
\Pr\bigl(\|w_t-w_t^\star\|_{\Sigma_t}>\varepsilon_{\mathrm{stat}} \,\big|\, \mathcal F_{t-1}\bigr)\;\le\;\delta/T \quad\text{almost surely.}
\]
Taking expectations and a union bound over $t\in\{0,\dots,T-1\}$ then gives
\[
\Pr\bigl(\exists\, t<T : \|w_t-w_t^\star\|_{\Sigma_t}>\varepsilon_{\mathrm{stat}}\bigr)\;\le\;\delta.
\]
The critic calls share one trajectory, but the union bound needs no independence between them. On the complementary good event, Assumption~\ref{as:stat-error} holds along the realized PMD trajectory up to time $T$, which is all that the bound of Theorem~\ref{thm:non-tabular-linear-convergence} at $t=T$ uses.

\paragraph{Composition with Theorem~\ref{thm:non-tabular-linear-convergence}.}
Condition on the good event, intersected with the probability-one event of Assumption~\ref{as:L2-mismatch-bound}. Theorem~\ref{thm:non-tabular-linear-convergence} gives $(\Delta_T-\varepsilon_{\mathrm{floor}}(\bar\chi))_+\le(1-1/\bar\chi)^T\Psi_0$. Since $\varepsilon\ge 8(\bar\chi\sqrt{C_+}+\sqrt{C_\star})\varepsilon_{\mathrm{app}}$ gives $\varepsilon_{\mathrm{app}}\le\varepsilon_{\mathrm{stat}}$, the error floor satisfies
\[
\varepsilon_{\mathrm{floor}}(\bar\chi)
=
2(\bar\chi\sqrt{C_+}+\sqrt{C_\star})\big(\varepsilon_{\mathrm{stat}}+\varepsilon_{\mathrm{app}}\big)
\;\le\;
4(\bar\chi\sqrt{C_+}+\sqrt{C_\star})\,\varepsilon_{\mathrm{stat}}
\;=\;
\frac{\varepsilon}{2}.
\]
Using $(1-1/\bar\chi)^T\le\exp(-T/\bar\chi)$, the choice of $T$ ensures $(1-1/\bar\chi)^T\Psi_0\le\varepsilon/2$. Therefore $\Delta_T\le\varepsilon$ with probability at least $1-\delta$.

\paragraph{Sample complexity.}
Each outer iteration runs Algorithm~\ref{alg:boost} once, at accuracy $\varepsilon_{\mathrm{stat}}$ and confidence $\delta/T$. By Lemma~\ref{lem:ap-w-sample-complexity}, this uses $RK(m+BH)=\widetilde O\bigl((d\,t_{\mathrm{mix}}^3/\varepsilon_{\mathrm{stat}}^2+(L^2/\mu)\,t_{\mathrm{mix}})\log(2T/\delta)\bigr)$ transitions, since $\varepsilon_{\mathrm{app}}\le\varepsilon_{\mathrm{stat}}$ absorbs the approximation term. Substituting $\varepsilon_{\mathrm{stat}}$ and multiplying by $T=\widetilde\Theta(\bar\chi)$ gives the stated total.
\end{proof}

\subsection{Proof of Proposition~\ref{prop:nontabular_lower_bound} (LFA critic lower bound)}

We first restate the proposition with the explicit construction and constants used in the proof.

\begin{proposition}[LFA critic lower bound, detailed form]
\label{prop:ap-nontabular_lower_bound}
For every $d \geq 1$, $p \in (0, 1/4)$, and $\varepsilon_{\mathrm{stat}} \in (0, 1/(256\,p)]$, there exists a family of one-action average-reward MDPs on $2d$ states with costs in $[0, 1]$ satisfying Assumption~\ref{as:ergodic}, whose $1/4$-mixing time satisfies $t_{\mathrm{mix}} \in [1/(16p),\, 3/p]$, and admitting an exactly realizable linear differential $Q$-function $Q^\theta(s,a) = w_\theta^{\star\top}\phi(s,a)$ with $\|\phi(s,a)\|_2 \leq 1$ and stationary covariance $\Sigma = \mathbb{E}_{\nu^\theta}[\phi\phi^\top] = (1/d)\,I_d$, such that any (possibly adaptive) generative-model algorithm that outputs $\widehat w \in \mathbb{R}^d$ with
\[
\Pr_\theta\!\left(\|\widehat w - w_\theta^\star\|_\Sigma \leq \varepsilon_{\mathrm{stat}}\right) \;\geq\; 2/3 \qquad \text{for every } \theta \in \{0, 1\}^d
\]
must use
\[
N \;\geq\; \frac{1}{2^{21}\cdot 27}\,\frac{d\, t_{\mathrm{mix}}^3}{\varepsilon_{\mathrm{stat}}^2} \;=\; \Omega\!\left(\frac{d\, t_{\mathrm{mix}}^3}{\varepsilon_{\mathrm{stat}}^2}\right)
\]
generative-model transitions.
\end{proposition}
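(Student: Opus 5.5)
We build a hard family from $d$ independent two-state blocks, each hiding one bit in a slow flipping rate, and apply Assouad's lemma in the covariance norm. The mixing time is $\Theta(1/p)$ and each bit costs $p/\Delta^2$ queries to learn.

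\textbf{Construction.} Take states $[d]\times\{-1,+1\}$, a single action, cost $c(j,\sigma)=(1+\sigma)/2$ and uniform reset distribution $u$. For $\theta\in\{0,1\}^d$ set $q_j:=p+\theta_j\Delta$ and let the local kernel $R_\theta$ move $(j,\sigma)$ to $(j,-\sigma)$ with probability $q_j$ and stay otherwise. The transition kernel is $P_\theta:=(1-p)R_\theta+p\,u$. We will take $\Delta:=c_0\,p^2\varepsilon_{\mathrm{stat}}$ for an absolute constant $c_0$; the range $\varepsilon_{\mathrm{stat}}\le 1/(256p)$ keeps $\Delta\le p/2$.

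\textbf{Steps 1--4: structural facts.}
\begin{itemize}
\item Step 1 (stationarity). $R_\theta$ is symmetric on each pair, so $u$ is stationary and $\rho=1/2$.
\item Step 2 (mixing). Every row dominates $p\,u$, so minorization gives total variation at most $(1-p)^k$, hence $t_{\mathrm{mix}}\le\lceil\log 4/p\rceil\le 3/p$. For the lower bound, $f(j,\sigma)=\sigma\mathbf 1\{j=j_0\}$ is an eigenfunction with eigenvalue $(1-p)(1-2q_j)\ge 1-4p$ and $u f=0$. Starting at $(j_0,+1)$, the chain therefore keeps total variation $\ge\frac12(1-4p)^k>1/4$ for $k<1/(16p)$.
\item Step 3 (Poisson solution). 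Plug $h=\beta_j\sigma$ into $\rho+h=c+P h$. The reset contributes $p\,u h=0$, which gives $\beta_j=\beta(q_j)$ with $\beta(q)=1/(2p+4q(1-p))\in[1/(8p),1/(5p)]$.
\item Step 4 (features and separation). With features $\phi(j,\sigma):=\sigma e_j$ we have $\|\phi\|_2=1$, exact realizability $w^\star_\theta=(\beta_{j})_j$, and $\Sigma=(1/d)I_d$. Differentiating $\beta$, which is convex in $q$, gives $\beta(p)-\beta(p+\Delta)\ge\Delta/(16p^2)=:\gamma$. Choose $c_0$ so that $\gamma=8\varepsilon_{\mathrm{stat}}$.
\end{itemize}

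\textbf{Step 5 (per-query KL).} Flipping $\theta_j$ changes only the two rows at $(j,\pm1)$. In each, two atoms move by $(1-p)\Delta$, and both atoms keep mass at least $(1-p)p\ge p/2$ (ignoring reset mass only lowers this bound). Bounding KL by $\chi^2$ gives a KL of at most $2\Delta^2/p$ per query in either direction. This is the bound~\eqref{eq:nlb-kl-bound} that the tabular proof reuses.

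\textbf{Step 6 (Assouad assembly).} Decode $\widehat\theta_j:=\mathbf 1\{\widehat w_j<(\beta(p)+\beta(p+\Delta))/2\}$. An error at coordinate $j$ forces $|\widehat w_j-w^\star_j|\ge\gamma/2$. On the success event $\|\widehat w-w^\star\|_\Sigma^2=\frac1d\sum_j(\widehat w_j-w^\star_j)^2\le\varepsilon_{\mathrm{stat}}^2$, so the number of decoding errors is at most $4d\varepsilon_{\mathrm{stat}}^2/\gamma^2=d/16$. Hence $\mathbb E_\theta[\text{Hamming}]\le d/16+d/3<d/2-d/20$.

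Assouad's lemma, in coordinatewise Le Cam form with $\theta$ uniform, gives
\[
\sum_j\bigl(1-\|\mathbb P_\theta-\mathbb P_{\theta^{(j)}}\|_{\mathrm{TV}}\bigr)\le 2\,\mathbb E[\text{Hamming}],
\]
so $\sum_j\mathrm{TV}_j\ge d/10$. The adaptive query rule has the same conditional law under both members, so the KL chain rule and Pinsker give $\mathrm{TV}_j\le\sqrt{(\Delta^2/p)\,\mathbb E[N_j]}$, where $N_j$ counts queries to block $j$. Cauchy--Schwarz across blocks (averaging over $\theta$ and using $\sum_j N_j=N$) then yields
\[
N\gtrsim \frac{d\,p}{\Delta^2}\asymp\frac{d}{p^3\varepsilon_{\mathrm{stat}}^2}\ge\frac{d\,t_{\mathrm{mix}}^3}{27\,\varepsilon_{\mathrm{stat}}^2},
\]
where the last step uses $t_{\mathrm{mix}}\le3/p$. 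Tracking the constants gives $2^{-21}/27$.

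\textbf{Main obstacle.} The delicate step is the Assouad averaging under adaptivity. The per-bit query counts $N_j$ are random and depend on $\theta$, so the chain rule must be applied with expectations under a fixed member, then averaged over $\theta$. The concavity of the square root must be used in the right order, so that the bound lands on the total $N$ rather than on each $N_j$ separately.
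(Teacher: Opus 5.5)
Your proposal is correct and follows the paper's proof essentially step for step. It uses the same reset-plus-sign-flip direct-sum family, the same Poisson solution $\beta(q)=1/(2p+4q(1-p))$, the gap $\Delta/(16p^2)$ with $\Delta=128p^2\varepsilon_{\mathrm{stat}}$, the per-query KL bound $2\Delta^2/p$, the threshold decoder, and the same Assouad assembly (your coordinatewise Le Cam plus Cauchy--Schwarz form is the paper's Pinsker plus Jensen form).

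One bookkeeping caveat: relaxing to $\sum_j\mathrm{TV}_j\ge d/10$ only yields $N\ge dp/(200\Delta^2)$, which falls short of the stated constant $2^{-21}$. Keep instead the sharper Hamming bound $\tfrac23\cdot\tfrac{d}{16}+\tfrac d3=\tfrac{3d}{8}$. It gives $\sum_j\mathrm{TV}_j\ge d/4$ and hence $N\ge dp/(32\Delta^2)$, comfortably above $dp/(128\Delta^2)=2^{-21}d/(p^3\varepsilon_{\mathrm{stat}}^2)$.
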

Here, an algorithm is adaptive if it chooses each query based on the samples seen so far.

\begin{proof}
The proof has six steps: construct the direct-sum reset family, verify its mixing properties, solve the centered Poisson equation, compute the per-coordinate gap and per-query KL, then assemble Assouad's inequality.

\paragraph{Construction.} Fix $d \geq 1$ and $p \in (0, 1/4)$. Let $\mathcal S = [d] \times \{-1, +1\}$ with one action $\mathcal A = \{a\}$, and parameterize the family by $\theta \in \{0, 1\}^d$ via local transition rates $q_j(\theta) := p + \theta_j\, \Delta$, where $\Delta \in (0, p/2]$ will be fixed in Step~6. The transition kernel is
\[
P_\theta\bigl((j, \sigma),\,\cdot\bigr) \;:=\; (1 - p)\,R_\theta\bigl((j, \sigma),\,\cdot\bigr) \;+\; p\, u(\cdot),
\]
where $u$ is the uniform distribution on $\mathcal S$ (so $u(j, \sigma) = 1/(2d)$) and $R_\theta$ is the local sign-flip kernel
\[
R_\theta\bigl((j, \sigma),\,(j, \sigma)\bigr) \;:=\; 1 - q_j, \qquad R_\theta\bigl((j, \sigma),\,(j, -\sigma)\bigr) \;:=\; q_j,
\]
with $R_\theta((j, \sigma), \cdot) = 0$ on all other states. Cost $c(j, \sigma) := (1 + \sigma)/2 \in \{0, 1\}$ and features $\phi(j, \sigma) := \sigma\, e_j \in \mathbb{R}^d$.

\paragraph{Step 1: Stationary distribution.} The uniform distribution $u$ is invariant under $P_\theta$. Indeed, the reset part moves every state to $u$, and for the local part,
\[
(u R_\theta)(k, \sigma) \;=\; u(k, \sigma)(1 - q_k) + u(k, -\sigma)\, q_k \;=\; \tfrac{1}{2d}(1 - q_k) + \tfrac{1}{2d}\, q_k \;=\; \tfrac{1}{2d} \;=\; u(k, \sigma).
\]
Hence $u P_\theta = (1 - p)\, u + p\, u = u$. Since $P_\theta(s, s') \geq p\, u(s') = p/(2d) > 0$ for all $s, s'$, the chain is irreducible and aperiodic, so $d^\theta = u$ is the unique stationary distribution.

\paragraph{Step 2: Mixing time.} Decomposing $P_\theta = (1 - p)\, R_\theta + p\,\mathbf 1u$, where the kernel $\mathbf 1u$ moves every state to $u$, and using $u\, R_\theta = u$,
\[
P_\theta^k \;=\; (1 - p)^k\, R_\theta^k \;+\; \bigl(1 - (1 - p)^k\bigr)\, \mathbf 1u,
\]
so $\mu_0 P_\theta^k - u = (1-p)^k(\mu_0 R_\theta^k - u)$ and $\sup_{\mu_0} \|\mu_0 P_\theta^k - u\|_{\mathrm{TV}} \leq (1-p)^k$. Setting this $\leq 1/4$ and using $\log(1/(1-p)) \geq p$ gives the upper bound
\begin{equation} \label{eq:nlb-tmix-upper}
t_{\mathrm{mix}} \;\leq\; \lceil \log 4/p \rceil \;\leq\; (\log 4 + 1)/p \;<\; 3/p.
\end{equation}

For the lower bound, define the coordinate-local function $f_j(k, \sigma) := \mathbf{1}\{k = j\}\, \sigma$, which has $\|f_j\|_\infty \leq 1$ and $\mathbb{E}_u[f_j] = 0$. Direct computation shows $f_j$ is an eigenfunction of $P_\theta$ with eigenvalue $\lambda_j := (1 - p)(1 - 2 q_j)$. Starting from $(j, +1)$,
\[
\bigl\|\delta_{(j, +1)}\, P_\theta^k - u\bigr\|_{\mathrm{TV}} \;\geq\; \tfrac{1}{2}\, \mathbb{E}_{(j, +1)}[f_j(s_k)] \;=\; \tfrac{1}{2}\,\lambda_j^k.
\]
With $q_j \leq p + \Delta \leq 3p/2$, $\lambda_j \geq (1-p)(1-3p) \geq 1 - 4p$ for $p \leq 1/16$. For $k \leq 1/(16p)$, $(1-4p)^k \geq \exp(-4pk/(1-4p)) \geq \exp(-1/3) > 1/2$. Hence for $p \leq 1/16$ and $k \leq 1/(16p)$,
\[
\bigl\|\delta_{(j, +1)}\, P_\theta^k - u\bigr\|_{\mathrm{TV}} \;>\; 1/4.
\]
For $p \in [1/16, 1/4)$, the trivial bound $t_{\mathrm{mix}} \geq 1$ implies $t_{\mathrm{mix}} \geq 1/(16p)$. Combining the upper and lower bounds,
\[
1/(16p) \;\leq\; t_{\mathrm{mix}} \;\leq\; 3/p.
\]

\paragraph{Step 3: Poisson equation and linear realizability.} Since $d^\theta = u$, the average cost is $\rho(\theta) = \mathbb{E}_u[c] = 1/2$. The centered Poisson equation $\rho + h^\theta = c + P_\theta h^\theta$ with $\sum_s u(s) h^\theta(s) = 0$ admits the ansatz $h^\theta(j, \sigma) = \beta_j(\theta)\, \sigma$ (the centering is automatic since $\sum_\sigma \sigma = 0$). Computing $(P_\theta h^\theta)(j, \sigma) = (1 - p)(1 - 2 q_j)\, \beta_j\, \sigma$ (the reset term vanishes since $\mathbb{E}_u[h^\theta] = 0$) and substituting,
\[
\tfrac{1}{2} + \beta_j\, \sigma \;=\; \tfrac{1 + \sigma}{2} \;+\; (1 - p)(1 - 2 q_j)\, \beta_j\, \sigma \quad\Longrightarrow\quad \beta_j(\theta) \;=\; \frac{1}{2\bigl(p + 2 q_j(1 - p)\bigr)}.
\]
With $q_j \in [p, 3p/2]$, the denominator lies in $[5p, 8p]$, so $\beta_j(\theta) \in [1/(8p),\, 1/(5p)] = \Theta(1/p)$.

Since there is one action, $Q^\theta(j, \sigma, a) = h^\theta(j, \sigma) = \beta_j(\theta)\, \sigma = w_\theta^{\star\top}\phi(j, \sigma)$ with $w_\theta^\star := (\beta_1(\theta), \dots, \beta_d(\theta))^\top$. The approximation error is exactly zero. The covariance matrix is
\[
\Sigma \;=\; \mathbb{E}_{\nu^\theta}[\phi\phi^\top] \;=\; \frac{1}{2d}\sum_{j=1}^d \sum_{\sigma}(\sigma e_j)(\sigma e_j)^\top \;=\; \frac{1}{d}\, I_d,
\]
giving $\|\phi\|_2 \leq 1$ ($L = 1$) and $\mu = 1/d$.

\paragraph{Step 4: Per-coordinate gap.} Write $\beta(q) := 1/(2p + 4q(1-p))$. Since $1/\beta(p) = p(6 - 4p) \in [5p, 6p]$ and $1/\beta(p + \Delta) \in [5p, 8p]$ for $\Delta \leq p/2$,
\[
|\beta(p + \Delta) - \beta(p)| \;=\; 4(1-p)\,\Delta\,\beta(p)\,\beta(p + \Delta) \;\geq\; \frac{3\,\Delta}{(6p)(8p)} \;=\; \frac{1}{16}\,\frac{\Delta}{p^2},
\]
where we used $4(1-p) \geq 3$ for $p \leq 1/4$. Define $\delta_w := |\beta(p + \Delta) - \beta(p)| \geq \Delta/(16 p^2)$.

\paragraph{Step 5: Per-query KL.} Fix $\theta, \theta'$ differing only in coordinate $j$, with $q_j = p$ and $q_j' = p + \Delta$. A query from any state $(j', \sigma')$ with $j' \neq j$ has identical next-state law under both hypotheses (only $q_j$ changes), so the per-query KL is zero.

For a query from $(j, \sigma)$, the only changing atoms are $(j, \sigma)$ and $(j, -\sigma)$:
\begin{align*}
P_q\bigl((j, \sigma),\,(j, \sigma)\bigr) &\;=\; P_{\mathrm{stay}}(q) := (1 - p)(1 - q) + p/(2d), \\
P_q\bigl((j, \sigma),\,(j, -\sigma)\bigr) &\;=\; P_{\mathrm{flip}}(q) := (1 - p)\, q + p/(2d),
\end{align*}
By the standard bound of the KL divergence by the chi-square divergence,
\[
\mathrm{KL}(P_q \,\|\, P_{q'}) \;\leq\; (1-p)^2\, \Delta^2\!\left(\frac{1}{P_{\mathrm{stay}}(q')} + \frac{1}{P_{\mathrm{flip}}(q')}\right).
\]
For $p \leq 1/4$ and $\Delta \leq p/2$, $P_{\mathrm{flip}}(q') \geq (1-p)p \geq 3p/4$ (so $1/P_{\mathrm{flip}}(q') \leq 4/(3p)$) and $P_{\mathrm{stay}}(q') \geq (1-p)(1 - p - \Delta) \geq (3/4)(5/8) = 15/32$, hence $1/P_{\mathrm{stay}}(q') \leq 32/15 \leq 8/(15p)$ (using $p \leq 1/4$, so $1/p \geq 4$). Combining,
\[
\frac{1}{P_{\mathrm{stay}}(q')} + \frac{1}{P_{\mathrm{flip}}(q')} \;\leq\; \left(\frac{8}{15} + \frac{4}{3}\right)\frac{1}{p} \;=\; \frac{28}{15\,p} \;<\; \frac{2}{p}.
\]
Since $(1-p)^2 \leq 1$,
\begin{equation} \label{eq:nlb-kl-bound}
\mathrm{KL}\bigl(P_\theta(\cdot \mid j, \sigma) \,\big\|\, P_{\theta'}(\cdot \mid j, \sigma)\bigr) \;\leq\; 2\,\frac{\Delta^2}{p}.
\end{equation}

\paragraph{Step 6: Assouad assembly.} Suppose, for contradiction, an algorithm using at most $N$ generative-model queries satisfies the success condition. Let $\mathbb{P}_\theta$ be the law under $\theta$ of the transcript $(U, S_1, Y_1, \dots, S_N, Y_N, \widehat w)$, where $U$ is the algorithm's internal randomness, $S_n$ is the queried state at round $n$, and $Y_n$ is the observed next state. The query rule $S_n$ is measurable with respect to $(U, S_1, Y_1, \dots, S_{n-1}, Y_{n-1})$ and has the same conditional law under every $\theta$; only the next-state law $P_\theta(\cdot \mid S_n)$ depends on $\theta$. Hence the KL chain rule gives
\[
\mathrm{KL}\bigl(\mathbb{P}_\theta \,\big\|\, \mathbb{P}_{\theta'}\bigr) \;=\; \sum_{n=1}^N \mathbb{E}_\theta\!\left[\mathrm{KL}\bigl(P_\theta(\cdot \mid S_n) \,\big\|\, P_{\theta'}(\cdot \mid S_n)\bigr)\right].
\]
Letting $N_j := \#\{n : S_n \in \{j\} \times \{-1, +1\}\}$ be the number of queries to coordinate $j$ (so $\sum_j N_j \leq N$) and using~\eqref{eq:nlb-kl-bound} with zero cross-coordinate KL,
\[
\mathrm{KL}(\mathbb{P}_\theta \,\|\, \mathbb{P}_{\theta^{(j)}}) \;\leq\; 2\,\frac{\Delta^2}{p}\,\mathbb{E}_\theta[N_j],
\]
where $\theta^{(j)}$ is $\theta$ with bit $j$ flipped. Since $\beta(q) = 1/(2p + 4q(1-p))$ is \emph{decreasing} in $q$, we have $w_{\theta, j}^\star = \beta(p) > \beta(p + \Delta) = w_{\theta^{(j)}, j}^\star$ when $\theta_j = 0$. Accordingly, define the bit decoder
\[
\widehat\theta_j \;:=\; \mathbf{1}\!\left\{\widehat w_j < (\beta(p) + \beta(p + \Delta))/2\right\}.
\]
If $\widehat\theta_j \neq \theta_j$, then $|\widehat w_j - w_{\theta, j}^\star| \geq \delta_w/2$, so each Hamming error contributes $\delta_w^2/(4d)$ to $\|\widehat w - w_\theta^\star\|_\Sigma^2 = (1/d)\|\widehat w - w_\theta^\star\|_2^2$:
\begin{equation} \label{eq:nlb-hamming-loss}
d_H(\widehat\theta, \theta) \;\leq\; \frac{4 d}{\delta_w^2}\, \|\widehat w - w_\theta^\star\|_\Sigma^2.
\end{equation}

Choose $\Delta := 128\, p^2 \varepsilon_{\mathrm{stat}}$. Then $\delta_w \geq \Delta/(16 p^2) = 8 \varepsilon_{\mathrm{stat}}$, so $\delta_w^2 \geq 64\, \varepsilon_{\mathrm{stat}}^2$. The constraint $\Delta \leq p/2$ becomes $\varepsilon_{\mathrm{stat}} \leq 1/(256\, p)$, which is the stated regime. Under the success hypothesis $\Pr_\theta(\|\widehat w - w_\theta^\star\|_\Sigma \leq \varepsilon_{\mathrm{stat}}) \geq 2/3$ and using \eqref{eq:nlb-hamming-loss},
\[
\mathbb{E}_\theta[d_H(\widehat\theta, \theta)] \;\leq\; \tfrac{2}{3} \cdot \tfrac{4 d\, \varepsilon_{\mathrm{stat}}^2}{64\, \varepsilon_{\mathrm{stat}}^2} + \tfrac{1}{3} \cdot d \;=\; \tfrac{2}{3} \cdot \tfrac{d}{16} + \tfrac{d}{3} \;=\; \tfrac{3 d}{8}.
\]
Averaging over $\theta \in \{0, 1\}^d$, the average Hamming risk satisfies $R_H(\widehat\theta) \leq 3 d/8$.

We use the standard Hamming-risk form of Assouad's inequality \citep{yu_assouad_1997}: pairing each $\theta$ with $\theta^{(j)}$ for each $j$ and applying Pinsker plus Jensen,
\[
R_H(\widehat\theta) \;\geq\; \frac{d}{2}\!\left[1 - \sqrt{\frac{1}{2 d\, 2^{d-1}}\sum_{j=1}^d \sum_{\theta : \theta_j = 0}\mathrm{KL}(\mathbb{P}_\theta \,\|\, \mathbb{P}_{\theta^{(j)}})}\right].
\]
By the per-query KL bound and $\sum_j N_j \leq N$, $\sum_{j} \sum_{\theta : \theta_j = 0} \mathbb{E}_\theta[N_j] \leq 2^d\, N$, so the expression under the square root is bounded by $2\, N\, \Delta^2/(d\, p)$, giving
\[
R_H(\widehat\theta) \;\geq\; \frac{d}{2}\!\left[1 - \sqrt{2\,\frac{N\, \Delta^2}{d\, p}}\right].
\]
If $N \leq d\, p/(128\, \Delta^2)$, the radicand is $\leq 1/64$, hence $R_H(\widehat\theta) \geq (d/2)(1 - 1/8) = 7 d/16$. But we showed $R_H(\widehat\theta) \leq 3 d/8 < 7 d/16$, a contradiction. Therefore
\[
N \;>\; \frac{d\, p}{128\, \Delta^2} \;=\; \frac{d\, p}{128 \cdot 128^2\, p^4\, \varepsilon_{\mathrm{stat}}^2} \;=\; \frac{1}{2^{21}}\,\frac{d}{p^3\, \varepsilon_{\mathrm{stat}}^2}.
\]
Using $1/p^3 \geq t_{\mathrm{mix}}^3/27$ from \eqref{eq:nlb-tmix-upper},
\[
N \;\geq\; \frac{1}{2^{21}\cdot 27}\,\frac{d\, t_{\mathrm{mix}}^3}{\varepsilon_{\mathrm{stat}}^2}. \qedhere
\]
\end{proof}

\subsection{Sufficient conditions for the covariance condition of Lemma~\ref{lem:w-sample-complexity}} \label{ap:covariance-condition}
For $\pi \in \Pi$, let $\Sigma_\pi := \mathbb{E}_{(s,a)\sim\nu^\pi}[\phi_{s,a}\phi_{s,a}^\top]$, so that $\Sigma_t = \Sigma_{\pi^t}$.

\paragraph{Spanning features.} Suppose $\operatorname{span}\{\phi_{s,a_s} : s \in \mathcal S\} = \mathbb{R}^d$ for every $(a_s)_{s\in\mathcal S} \in \mathcal A^{\mathcal S}$. If $u^\top \Sigma_\pi u = \sum_{s} d^\pi(s) \sum_{a} \pi(a\mid s)\,(u^\top \phi_{s,a})^2 = 0$, then, since $d^\pi > 0$ (Lemma~\ref{lem:uniform-d-bound}), $u^\top \phi_{s,a} = 0$ whenever $\pi(a \mid s) > 0$; choosing one such $a_s$ in every state gives $u \perp \operatorname{span}\{\phi_{s,a_s}\}_{s} = \mathbb{R}^d$, so $u = 0$. Hence $\Sigma_\pi \succ 0$ for every $\pi \in \Pi$. The map $\pi \mapsto \Sigma_\pi$ is continuous, as $\pi \mapsto d^\pi$ is (see the proof of Lemma~\ref{lem:uniform-d-bound}), and $\Pi$ is compact, so $\mu := \min_{\pi\in\Pi} \lambda_{\min}(\Sigma_\pi) > 0$ and $\Sigma_\pi \succeq \mu I_d$ for all $\pi$. When $d \le |\mathcal S|$, the spanning condition holds whenever the features are in general position, that is, whenever any $d$ of them are linearly independent, and hence for almost every choice of features.

\paragraph{$\varepsilon$-exploration.} Without the spanning condition, enforce $\pi^t(a \mid s) \ge \varepsilon/|\mathcal A|$ as in Remark~\ref{rmk:eps-exploration}. Then $\nu^t(s,a) \ge \underline d\,\varepsilon/|\mathcal A|$, so $\Sigma_t \succeq (\underline d\,\varepsilon/|\mathcal A|)\sum_{s,a}\phi_{s,a}\phi_{s,a}^\top$, and the condition holds with $\mu = (\underline d\,\varepsilon/|\mathcal A|)\,\lambda_{\min}\bigl(\sum_{s,a}\phi_{s,a}\phi_{s,a}^\top\bigr)$, which is positive whenever $\operatorname{span}\{\phi_{s,a} : (s,a) \in \mathcal S \times \mathcal A\} = \mathbb{R}^d$.

\paragraph{Hard instances.} The family of Proposition~\ref{prop:nontabular_lower_bound} has features $\phi(j,\sigma)=\sigma e_j$ of norm $1$ and covariance $\Sigma = I_d/d$ (see the proof of Proposition~\ref{prop:ap-nontabular_lower_bound}). Lemma~\ref{lem:w-sample-complexity} assumes $\|\phi_{s,a}\|_2\le L$ and $\Sigma_t\succeq\mu I_d$, which force $d\mu\le\mathrm{Tr}(\Sigma_t)\le L^2$. Multiplying the features of the family by $\sqrt{d\mu}$ and dividing $w_\theta^\star$ by $\sqrt{d\mu}$ gives features of norm $\sqrt{d\mu}\le L$ and covariance $\mu I_d$, and leaves the differential $Q$-function unchanged; an estimate $\widehat w$ for the rescaled family corresponds to $\sqrt{d\mu}\,\widehat w$ for the original one, with the same error in the covariance norm. The lower bound therefore holds for every $L$ and $\mu$ allowed by Lemma~\ref{lem:w-sample-complexity}, so the condition does not make the estimation problem easier. Finally, a generative model simulates a single trajectory with one query per transition, so the lower bound also applies to the single-trajectory oracle of Lemma~\ref{lem:w-sample-complexity}.

\section{Mixing time details} \label{ap:mixing-assumption-details}

Under Assumption~\ref{as:ergodic}, for each fixed policy $\pi$ the induced Markov chain with kernel $P_\pi$ is finite-state, irreducible, and aperiodic, and therefore admits a unique stationary distribution $d^\pi$ and converges to it at a geometric rate in total variation. In particular, every such $\pi$ has a finite mixing time
\begin{equation} \label{eq:per-policy-tmix}
t_{\mathrm{mix}}(\pi) \;:=\; \min\!\left\{k \geq 1 : \sup_{\mu_0 \in \mathcal{P}(\mathcal S)} \bigl\|\mu_0\, P_\pi^k - d^\pi\bigr\|_{\mathrm{TV}} \leq \tfrac{1}{4}\right\} \;<\; \infty.
\end{equation}
A self-contained proof is given in Lemma~\ref{lem:per-policy-mixing} below.

Lemma~\ref{lem:uniform-mix} upgrades \eqref{eq:per-policy-tmix} to a single constant $t_{\mathrm{mix}}$ that works \emph{uniformly} over all policies $\pi \in \Pi$; its proof is given at the end of this appendix. The argument is non-constructive, so we first record a standard condition under which $t_{\mathrm{mix}}$ is explicit.

This condition is uniform minorization (see, e.g., \citet[Assumption~3.1]{murthy_performance_2023}). One convenient MDP-level formulation is: there exist $\alpha \in (0, 1)$ and a reference distribution $\nu \in \mathcal{P}(\mathcal S)$ such that
\begin{equation} \label{eq:doeblin-controlled}
P(s' \mid s, a) \;\geq\; \alpha\, \nu(s') \qquad\text{for all } s, s' \in \mathcal S,\; a \in \mathcal A.
\end{equation}
Since $P_\pi(\cdot \mid s) = \sum_a \pi(a \mid s)\, P(\cdot \mid s, a)$, \eqref{eq:doeblin-controlled} implies $P_\pi(\cdot \mid s) \geq \alpha\, \nu$ for every $\pi$ and $s$, and consequently yields uniform mixing with an explicit bound on $t_{\mathrm{mix}}$. Define the residual kernel $R_\pi(\cdot \mid s) := (P_\pi(\cdot \mid s) - \alpha\, \nu)/(1 - \alpha)$; then $P_\pi = \alpha\, \mathbf{1}\, \nu + (1 - \alpha)\, R_\pi$, and for any distributions $\mu_0, \mu_0'$,
\[
\|\mu_0 P_\pi - \mu_0' P_\pi\|_{\mathrm{TV}} \;=\; (1 - \alpha)\, \|(\mu_0 - \mu_0')\, R_\pi\|_{\mathrm{TV}} \;\leq\; (1 - \alpha)\, \|\mu_0 - \mu_0'\|_{\mathrm{TV}}.
\]
Taking $\mu_0' = d^\pi$ and iterating yields $\|\mu_0\, P_\pi^k - d^\pi\|_{\mathrm{TV}} \leq (1 - \alpha)^k$. Requiring $(1 - \alpha)^k \leq 1/4$ gives $t_{\mathrm{mix}} \leq \lceil \log 4 / \log(1/(1-\alpha)) \rceil = O(1/\alpha)$ as $\alpha \to 0$.

The same condition also makes $\bar\chi$ explicit. Stationarity gives $d^\pi(s') = \sum_{s} d^\pi(s)\, P_\pi(s' \mid s) \geq \alpha\, \nu(s')$ for every $\pi$ and $s'$. If $\nu$ has full support, the constant of Lemma~\ref{lem:uniform-d-bound} therefore satisfies $\underline d \geq \alpha \min_{s} \nu(s)$, and since $d^\star \leq 1$,
\[
\sup_{t \geq 0} \chi_t \;\leq\; \frac{1}{\underline d} \;\leq\; \frac{1}{\alpha\, \min_{s} \nu(s)},
\]
so $\bar\chi = 1/(\alpha \min_s \nu(s))$ is a valid choice in Lemma~\ref{lem:master}(i). Under~\eqref{eq:doeblin-controlled}, both $t_{\mathrm{mix}}$ and $\bar\chi$ are thus explicit.

\begin{lemma}[Finite mixing time for a fixed policy]
\label{lem:per-policy-mixing}
Let $\pi$ be a policy such that $P_\pi$ is irreducible and aperiodic. Then $t_{\mathrm{mix}}(\pi)$ defined by~\eqref{eq:per-policy-tmix} is finite.
\end{lemma}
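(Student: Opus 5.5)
The plan is the standard Perron--Frobenius/coupling route for a single finite ergodic chain. Fix $\pi$ and write $P := P_\pi$ on the finite set $\mathcal S$. The first step is to show that some power $P^{k_0}$ is entrywise positive. By irreducibility, for every pair $(s,s')$ there is an $n(s,s')$ with $P^{n(s,s')}(s,s')>0$. By aperiodicity, the set of return times $\{n\ge1 : P^n(s,s)>0\}$ is closed under addition and has greatest common divisor $1$. An elementary number-theoretic lemma (a numerical semigroup with gcd $1$ contains all sufficiently large integers) then gives an $N_s$ with $P^n(s,s)>0$ for all $n\ge N_s$. Set $k_0 := \max_s N_s + \max_{s,s'} n(s,s')$. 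For every $(s,s')$, write $k_0 = m + n(s,s')$ with $m\ge N_s$. Then $P^{k_0}(s,s')\ge P^m(s,s)\,P^{n(s,s')}(s,s')>0$. Finiteness of $\mathcal S$ makes both maxima finite.

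The second step turns positivity into a Doeblin minorization. Let $\beta := \min_{s,s'} P^{k_0}(s,s') > 0$. Then $P^{k_0}(\cdot\mid s)\ge \beta|\mathcal S|\,u(\cdot)$ for every $s$, where $u$ is uniform, and $\alpha := \beta|\mathcal S|\in(0,1]$. Existence and uniqueness of $d^\pi$ are given by Assumption~\ref{as:ergodic}. I then repeat, for the kernel $P^{k_0}$, the residual-kernel contraction displayed just before this lemma under~\eqref{eq:doeblin-controlled}. This gives $\|\mu_0 P^{k_0} - \mu_0' P^{k_0}\|_{\mathrm{TV}}\le(1-\alpha)\|\mu_0-\mu_0'\|_{\mathrm{TV}}$. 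Taking $\mu_0' = d^\pi$, which is invariant under $P^{k_0}$, and iterating yields $\sup_{\mu_0}\|\mu_0 P^{jk_0}-d^\pi\|_{\mathrm{TV}}\le(1-\alpha)^j$.

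The final step handles intermediate times. The map $k\mapsto \sup_{\mu_0}\|\mu_0P^k-d^\pi\|_{\mathrm{TV}}$ is nonincreasing, because $\|(\mu_0 P - d^\pi P)\|_{\mathrm{TV}}\le \|\mu_0-d^\pi\|_{\mathrm{TV}}$ for any stochastic $P$. Choosing $j$ with $(1-\alpha)^j\le 1/4$ therefore shows $t_{\mathrm{mix}}(\pi)\le jk_0<\infty$. This also gives the geometric convergence $P^k\to\mathbf 1 d^\pi$ invoked in Fact~\ref{fact:ap-poisson}. If $\alpha=1$ the bound holds trivially at $j=1$.

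The main obstacle is the first step, namely the number-theoretic fact that aperiodicity yields eventual positivity of the return probabilities. I would prove it directly. Closure under addition together with gcd $1$ gives finitely many return times $n_1,\dots,n_r$ with gcd $1$. Bézout's identity then produces two elements of the set differing by $1$, say $a$ and $a+1$. Every $n\ge a^2$ can be written as a nonnegative combination of $a$ and $a+1$, so every such $n$ is a return time. Everything after that is routine.
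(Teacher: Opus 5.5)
Your proof is correct and follows the paper's argument: primitivity of $P_\pi$, a Doeblin minorization $P_\pi^{k_0}(\cdot\mid s)\ge\alpha\,u$ with $u$ uniform, and the residual-kernel contraction iterated until $(1-\alpha)^j\le 1/4$. The differences are minor. You prove eventual positivity via the B\'ezout/numerical-semigroup argument, where the paper cites Levin--Peres (Proposition~1.7). You treat $\alpha=1$ separately, where the paper caps $\alpha_\pi$ at $1/2$. Your monotonicity step is not needed for finiteness, since exhibiting the single time $k=jk_0$ suffices, though it does give the all-times geometric decay that the paper uses elsewhere.
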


\begin{proof}
Fix $\pi$. Since $P_\pi$ is finite-state, irreducible, and aperiodic, it is \emph{primitive}; hence there exists an integer $m_\pi \geq 1$ such that $P_\pi^{m_\pi}(s' \mid s) > 0$ for all $s, s' \in \mathcal S$ (see \citet[Proposition~1.7]{levin_markov_2017}). Let
\[
\varepsilon_\pi \;:=\; \min_{s, s' \in \mathcal S} P_\pi^{m_\pi}(s' \mid s) \;>\; 0, \qquad \alpha_\pi \;:=\; \min\{|\mathcal S|\, \varepsilon_\pi,\, 1/2\} \in (0, 1/2],
\]
and let $\nu$ be the uniform distribution on $\mathcal S$. Then $P_\pi^{m_\pi}(\cdot \mid s) \geq \alpha_\pi\, \nu$ for every $s$. Defining the residual kernel $R_\pi(\cdot \mid s) := (P_\pi^{m_\pi}(\cdot \mid s) - \alpha_\pi\, \nu)/(1 - \alpha_\pi)$, we obtain $P_\pi^{m_\pi} = \alpha_\pi\, \mathbf{1}\, \nu + (1 - \alpha_\pi)\, R_\pi$, and by the same non-expansive argument as above,
\[
\sup_{\mu_0}\, \bigl\|\mu_0\, P_\pi^{q\, m_\pi} - d^\pi\bigr\|_{\mathrm{TV}} \;\leq\; (1 - \alpha_\pi)^q \qquad \forall\, q \geq 0.
\]
Choosing $q$ large enough that $(1 - \alpha_\pi)^q \leq 1/4$, namely $q \geq \lceil \log 4 / \log(1/(1-\alpha_\pi)) \rceil$, yields $t_{\mathrm{mix}}(\pi) \leq q\, m_\pi < \infty$.
\end{proof}

\begin{proof}[Proof of Lemma~\ref{lem:uniform-mix}, first part]
For $\pi \in \Pi$ and $k \geq 1$, write $\mathrm{TV}_\pi(k) := \sup_{\mu_0 \in \mathcal{P}(\mathcal S)} \|\mu_0\, P_\pi^k - d^\pi\|_{\mathrm{TV}}$. We use three facts.
\emph{(i) Monotonicity.} Since $d^\pi$ is stationary and $P_\pi$ is non-expansive in total variation, $\mathrm{TV}_\pi(k+1) \leq \mathrm{TV}_\pi(k)$.
\emph{(ii) Continuity.} For fixed $k$, the map $\mu_0 \mapsto \|\mu_0\, P_\pi^k - d^\pi\|_{\mathrm{TV}}$ is convex, so its supremum over the simplex is attained at a vertex, $\mathrm{TV}_\pi(k) = \max_{s \in \mathcal S} \|P_\pi^k(\cdot \mid s) - d^\pi\|_{\mathrm{TV}}$. Both $\pi \mapsto P_\pi^k$ and $\pi \mapsto d^\pi$ are continuous on $\Pi$ (the latter is shown in the proof of Lemma~\ref{lem:uniform-d-bound}), so $\pi \mapsto \mathrm{TV}_\pi(k)$ is continuous as a finite maximum of continuous functions.
\emph{(iii) Pointwise decay.} For each $\pi$, the proof of Lemma~\ref{lem:per-policy-mixing} gives $\mathrm{TV}_\pi(q\, m_\pi) \leq (1 - \alpha_\pi)^q \to 0$, so there is $k_\pi \geq 1$ with $\mathrm{TV}_\pi(k_\pi) < 1/4$.

By (ii), $U_\pi := \{\pi' \in \Pi : \mathrm{TV}_{\pi'}(k_\pi) < 1/4\}$ is an open neighborhood of $\pi$. Since $\Pi = \mathcal{P}(\mathcal A)^{\mathcal S}$ is compact, finitely many $U_{\pi_1}, \dots, U_{\pi_n}$ cover it. Let $t_{\mathrm{mix}} := \max_i k_{\pi_i}$. Any $\pi' \in \Pi$ lies in some $U_{\pi_i}$, and (i) gives $\mathrm{TV}_{\pi'}(t_{\mathrm{mix}}) \leq \mathrm{TV}_{\pi'}(k_{\pi_i}) < 1/4$.
\end{proof}

\paragraph{Proof of~\eqref{eq:tmix-decay}.}
With $\mathrm{TV}_\pi(k)$ as in the proof above, let $\overline{\mathrm{TV}}_\pi(k) := \sup_{\mu_0, \mu_0' \in \mathcal{P}(\mathcal S)} \|\mu_0\, P_\pi^k - \mu_0'\, P_\pi^k\|_{\mathrm{TV}}$. Then $\mathrm{TV}_\pi \leq \overline{\mathrm{TV}}_\pi \leq 2\,\mathrm{TV}_\pi$, and $\overline{\mathrm{TV}}_\pi$ is submultiplicative, $\overline{\mathrm{TV}}_\pi(k + l) \leq \overline{\mathrm{TV}}_\pi(k)\, \overline{\mathrm{TV}}_\pi(l)$ \citep[Lemmas~4.10--4.11]{levin_markov_2017}. The first part of Lemma~\ref{lem:uniform-mix} gives $\overline{\mathrm{TV}}_\pi(t_{\mathrm{mix}}) \leq 2\,\mathrm{TV}_\pi(t_{\mathrm{mix}}) \leq 1/2$, so $\mathrm{TV}_\pi(\ell\, t_{\mathrm{mix}}) \leq \overline{\mathrm{TV}}_\pi(t_{\mathrm{mix}})^\ell \leq 2^{-\ell}$ for every integer $\ell \geq 0$. Since $\mathrm{TV}_\pi$ is nonincreasing by (i), $\mathrm{TV}_\pi(k) \leq 2^{-\lfloor k/t_{\mathrm{mix}} \rfloor}$ for all $k \geq 0$, which is~\eqref{eq:tmix-decay}. The factor $1/2$ per block, rather than the $1/4$ at $k = t_{\mathrm{mix}}$, is the price of a single bound valid for every $k$.

\paragraph{Extension to unichain MDPs.}
Assumption~\ref{as:ergodic} can be weakened to the unichain condition (each induced kernel $P_\pi$ has a unique recurrent class) while retaining aperiodicity, at the cost of additional constants, and the mechanism is visible in our analysis. \emph{For the convergence results}, whenever the iterates remain interior (automatic under KL, where the update is multiplicative), each path mismatch $\chi_t$ stays finite. What is lost is the a priori uniform bound $\chi_t \le \underline d^{-1}$ of Lemma~\ref{lem:uniform-d-bound}, so a uniform path bound $\bar\chi$ must be imposed or verified rather than inherited; once it is supplied, the recursion of Lemma~\ref{lem:master} applies verbatim. \emph{For the sample bounds}, uniform mixing is the assumption of the mixing unichain setting of \citet[Assumption~1]{li_stochastic_2024}, who note that aperiodicity ensures it. The proof of Lemma~\ref{lem:uniform-mix} indeed carries over when every $P_\pi$, $\pi\in\Pi$, is unichain with an aperiodic recurrent class, with two changes. For continuity, every $v$ with $vP_\pi = v$ satisfies $v = vP_\pi^k \to (v\mathbf 1)\,d^\pi$ by the pointwise decay, so the linear system in the proof of Lemma~\ref{lem:uniform-d-bound} still has the unique solution $d^\pi$ and is therefore nonsingular, although $d^\pi$ may now vanish on transient states. For the pointwise decay, every state reaches the recurrent class, which is aperiodic, so $P_\pi^{m_\pi}(s'\mid s) > 0$ for some $m_\pi\ge1$, every $s\in\mathcal S$ and every $s'$ in the recurrent class, and the proof of Lemma~\ref{lem:per-policy-mixing} applies with $\nu$ uniform on that class. With $t_{\mathrm{mix}}$ also absorbing the time to reach the recurrent class, the sample bounds are stated exactly as before. In both halves the change is in constants, not in the form of the results. By contrast, the multichain case (some induced kernel $P_\pi$ has more than one recurrent class) requires additional analysis, as in \citet{lee_policy_2026}.

\section{Numerical experiments} \label{ap:numerical-experiments}

\paragraph{Code availability.}
All code for reproducing the experiments in this appendix is available at
\begin{center}
\url{https://github.com/enesarda22/avg-reward-pmd-experiments}
\end{center}

We report three experiments. The first two (Sections~\ref{ap:exp-exact}
and~\ref{ap:exp-inexact}) are proof-of-concept diagnostics on a shared
three-state MDP, targeting Corollary~\ref{cor:exact-pmd} and
Theorem~\ref{thm:avg-pmd-linear-inexact-limsup} respectively: they show
that the asymptotic rate is governed by the tail mismatch $\chi$ rather
than the worst-case $\bar\chi$, and that the error floor is independent
of the step-size schedule, empirically realizing the decoupling argument
of Section~\ref{sec:preliminaries}. The third
(Section~\ref{ap:exp-rate-gap}) extends this story to four diverse MDPs,
showing that the $\chi \ll \bar\chi$ separation is broadly observed in
practice. All experiments run on a single CPU laptop in under five minutes total.

\subsection{Setup} \label{ap:exp-setup}

\paragraph{MDP family.}
We use a three-state ergodic average-\emph{cost} MDP with
$\mathcal S=\{0,1,2\}$ and $\mathcal A=\{0,1\}$. The transition kernel is
\begin{align*}
P(0,0)&=(1-p,\,0,\,p), &
P(0,1)&=(0,\,1-p,\,p),\\
P(1,0)&=(0,\,1-p,\,p), &
P(1,1)&=(1-p,\,0,\,p),\\
P(2,0)&=(0.19,\,0.80,\,0.01), &
P(2,1)&=(0.95,\,0.04,\,0.01),
\end{align*}
with $p=0.05$. The small self-loop at state $2$ makes the chain aperiodic
under all policies. Costs are $c(2,0)=0$, $c(2,1)=0.2$, and
$c(s,a)=0$ for $s\in\{0,1\}$ in the \emph{multiple-minimizer} instance; we
also consider a \emph{unique-minimizer} instance obtained by adding a
tie-breaker $\varepsilon>0$ via $c_\text{unique}(0,0)=\varepsilon$,
$c_\text{unique}(1,1)=\varepsilon$ with the other entries unchanged. The
initial policy is nearly deterministic and placed at high probability on the
costly action in state $2$:
$\pi^0(0\mid 0)=\pi^0(1\mid 1)=0.999$,
$\pi^0(1\mid 0)=\pi^0(0\mid 1)=0.001$,
$\pi^0(0\mid 2)=0.010$, $\pi^0(1\mid 2)=0.990$.

\paragraph{Algorithm.}
We run KL-PMD,
\[
\pi^{t+1}(a\mid s)\;\propto\;\pi^{t}(a\mid s)\exp\!\bigl(-\eta_t\,\widehat Q^t(s,a)\bigr),
\]
where $\widehat Q^t$ is either the exact differential action-value
$Q^{\pi^t}$ (Section~\ref{ap:exp-exact}) or a noisy approximation
(Section~\ref{ap:exp-inexact}). At each iteration we compute $Q^{\pi^t}$
exactly by solving the Poisson equation for $h^{\pi^t}$ in
\eqref{eq:ap-poisson}.

\paragraph{Diagnostics.}
We track the suboptimality gap $\Delta_t:=\rho(\pi^t)-\rho^\star$ and the
stationary-distribution mismatch
$\chi_t:=\|d^{\pi^\star}/d^{\pi^t}\|_\infty$ with $\pi^\star$ a fixed
reference optimal policy; the ``tail'' estimate of $\chi$ is the median of
$\chi_t$ over the last $20\%$ of the run. On suboptimality plots we overlay
the non-asymptotic upper bound $(1-1/\bar\chi)^t\Psi_0$ from
Lemma~\ref{lem:master}(i) and the asymptotic-rate reference
$(1-1/\chi)^t\Psi_0$ from Lemma~\ref{lem:master}(ii); the prefactor
$\Psi_0=\Delta_0+D_0^\star/(\eta_{-1}\bar\chi)$ is the initial Lyapunov
value, with $D_0^\star=D^{d^\star}(\pi^\star,\pi^0)$ the state-weighted KL
divergence and $\eta_{-1}=\eta_0/b$ the largest step size consistent with
the non-asymptotic hypothesis at $t=0$.

\begin{figure}[t]
  \centering
  \includegraphics[width=0.98\linewidth]{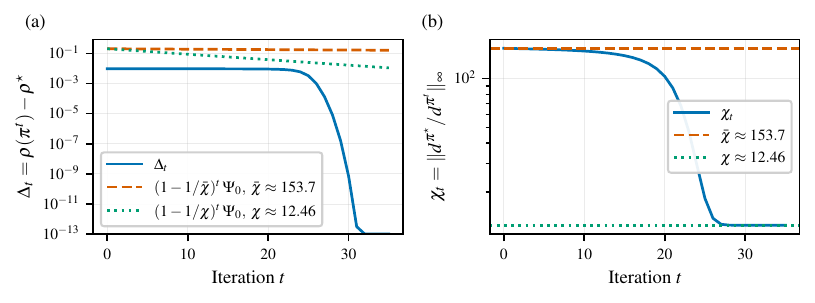}
  \caption{Exact KL-PMD on the multiple-minimizer instance.
  (a)~Suboptimality $\Delta_t$ with the non-asymptotic upper bound
  $(1-1/\bar\chi)^t\Psi_0$ (Lemma~\ref{lem:master}(i)) and asymptotic-rate
  reference $(1-1/\chi)^t\Psi_0$ (Lemma~\ref{lem:master}(ii)) overlaid.
  (b)~Mismatch $\chi_t$ with horizontal references at $\bar\chi\approx 153.7$
  and the tail estimate $\chi\approx 12.5$. The observed contraction is
  strictly tighter than both theoretical rates.}
  \label{fig:app:pmd-multi}
\end{figure}

\begin{figure}[t]
  \centering
  \includegraphics[width=0.98\linewidth]{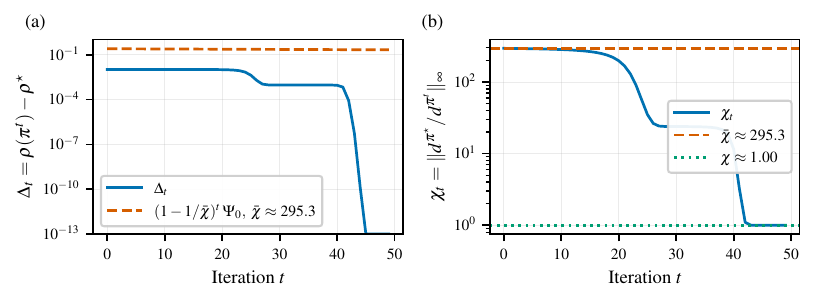}
  \caption{Exact KL-PMD on the unique-minimizer instance
  ($\varepsilon=10^{-3}$). Same diagnostics as
  Figure~\ref{fig:app:pmd-multi}. Here $\chi_t\to 1$ (tail $\chi\approx 1$)
  so the asymptotic reference collapses to zero and is omitted from
  panel~(a); the observed trajectory exhibits the two-phase convergence
  predicted by Corollary~\ref{cor:exact-pmd}, passing through an intermediate
  plateau at the tie-breaker scale before collapsing below $10^{-12}$.}
  \label{fig:app:pmd-unique}
\end{figure}

\subsection{Exact PMD: the asymptotic rate is governed by \texorpdfstring{$\chi$}{chi}, not \texorpdfstring{$\bar\chi$}{chi-bar}}
\label{ap:exp-exact}

The step-size schedule is the \emph{mixed super-geometric}
\begin{equation} \label{eq:app-schedule}
\eta_t \;=\; \eta_0\, b^{\,t}\, \alpha^{\,t^2},
\qquad \eta_0=0.1,\quad b=1.05,\quad \alpha=1.005,
\end{equation}
whose ratio $\eta_t/\eta_{t-1}=b\,\alpha^{2t-1}$ is bounded below by $b$ for
all $t\ge 1$, with the manual initialization $\eta_{-1} := \eta_0/b$ giving $\eta_0/\eta_{-1}=b$ at $t=0$ (satisfying the floor condition of
Lemma~\ref{lem:master}(i)) and diverges as $t\to\infty$ (satisfying
Lemma~\ref{lem:master}(ii)). A plain geometric schedule
$\eta_t=\eta_0 b^t$ has constant ratio and only satisfies the former.

Figures~\ref{fig:app:pmd-multi} and~\ref{fig:app:pmd-unique} illustrate the
two regimes of Corollary~\ref{cor:exact-pmd}. In the multiple-minimizer
instance (Figure~\ref{fig:app:pmd-multi}), PMD quickly collapses onto action $0$
in state $2$, but the action choices in states $0$ and $1$ are unconstrained
at optimality, so the iterates settle at some optimum that need not match
the reference $\pi^\star$ used to compute $\chi_t$. Consequently $\chi_t$
stabilizes at a tail value $\chi\approx 12.5$, much smaller than the
transient worst case $\bar\chi\approx 153.7$, and the observed $\Delta_t$
decays strictly below both the non-asymptotic and the asymptotic reference
lines. In the unique-minimizer instance (Figure~\ref{fig:app:pmd-unique}),
the tie-breaker forces $\pi^t\to\pi^\star$ and hence $\chi_t\to 1$, driving
the asymptotic rate $1-1/\chi\to 0$: Corollary~\ref{cor:exact-pmd}'s
superlinear regime. Panel~(a) exhibits the corresponding two-phase
convergence, with $\Delta_t$ first settling at the tie-breaker scale and
then collapsing superlinearly once $\chi_t$ approaches $1$. In both
instances the observed rate is at least as fast as the asymptotic rate
predicts.

\subsection{Inexact PMD: the error floor is independent of the step size}
\label{ap:exp-inexact}

We now replace the exact differential value with
$\widehat Q^t(s,a)=Q^{\pi^t}(s,a)+\xi^t(s,a)$, where
$\xi^t(s,a)\stackrel{\text{i.i.d.}}{\sim}\mathrm{Unif}([-\varepsilon_Q,\varepsilon_Q])$
so that $\|\widehat Q^t-Q^{\pi^t}\|_\infty\le\varepsilon_Q$, exactly
satisfying Assumption~\ref{as:uniform-critic-error}. The step-size schedule
is the plain geometric $\eta_t=\eta_0 b^t$ with $b=1.05$ (above the
floor $\bar\chi/(\bar\chi-1)\approx 1.003$ required by
Lemma~\ref{lem:master}(i)). We use the unique-minimizer instance with the
tie-breaker reduced to $\varepsilon=10^{-4}$ so that all three noise regimes
considered below are non-trivial.

Figure~\ref{fig:app:pmd-inexact}(a) directly addresses the concern flagged
in Section~\ref{sec:preliminaries}: aggressive step sizes do not amplify
critic errors at the floor. Fixing $\varepsilon_Q=10^{-2}$ and sweeping the
initial step size $\eta_0$ across two orders of magnitude, the three
trajectories enter the rate-dominated phase at very different iterations
but settle to an indistinguishable tail value around $3\times 10^{-5}$. Step
size controls the rate, not the floor. Figure~\ref{fig:app:pmd-inexact}(b)
fixes $\eta_0$ and sweeps $\varepsilon_Q$ across the same range; the tail
floors are monotonically ordered by the noise magnitude, consistent with
Theorem~\ref{thm:avg-pmd-linear-inexact-limsup}. The separation is modest
because the three-state MDP admits a deterministic optimum: once the policy
is sharply concentrated, the only pathway for critic noise to contribute to
$\Delta_t$ is by flipping the tie-broken actions in states $0$ and $1$, a
discrete event whose frequency saturates once $\varepsilon_Q$ exceeds the
tie-breaker scale. The theoretical upper bound
$2(\bar\chi+1)\varepsilon_Q$ is loose for this small MDP but its qualitative
dependence on $\varepsilon_Q$ is visible.

\begin{figure}[t]
  \centering
  \includegraphics[width=0.98\linewidth]{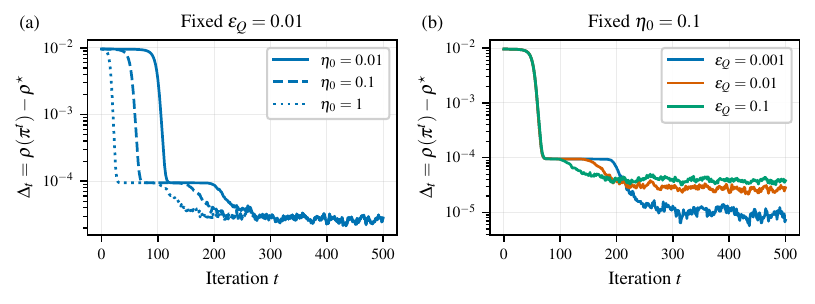}
  \caption{Inexact KL-PMD under uniform critic noise, on the unique-minimizer
  instance with tie-breaker $\varepsilon=10^{-4}$; each curve is the mean of
  $N=100$ seeds.
  (a)~Step-size independence. At fixed $\varepsilon_Q=10^{-2}$, three
  geometric schedules ($b=1.05$, $\eta_0\in\{10^{-2},10^{-1},1\}$)
  converge at very different rates but to essentially the same tail floor.
  (b)~Floor depends on critic error. At fixed $\eta_0=10^{-1}$, sweeping
  $\varepsilon_Q\in\{10^{-3},10^{-2},10^{-1}\}$ gives monotonically ordered
  tail floors.}
  \label{fig:app:pmd-inexact}
\end{figure}

\subsection{Empirical $\chi \ll \bar\chi$ across diverse MDPs} \label{ap:exp-rate-gap}

The 3-state experiments of Sections~\ref{ap:exp-exact}
and~\ref{ap:exp-inexact} dissect the $\chi$-vs-$\bar\chi$ separation in a
controlled setting where the Poisson equation, the optimal policy, and
both rate constants are computable in closed form. This subsection
shows that the same separation arises across considerably more diverse
problems. The figure referenced in the main body
(Figure~\ref{fig:rate-gap-main}) is the result of the experiment
described here.

\paragraph{MDP setups.} We use four MDP families, chosen to span structural variety. Three are ergodic, so $\chi_t$ is finite by Lemma~\ref{lem:uniform-d-bound}. Access-control is unichain and so also illustrates the extension of Appendix~\ref{ap:mixing-assumption-details}: the PMD iterates keep every action probability positive, which keeps $\chi_t$ finite, and the path bound $\bar\chi$ is verified along each run, as that extension requires.
\begin{itemize}[leftmargin=*]
\item \emph{Random Dirichlet MDPs} ($|\mathcal S|=20$, $|\mathcal A|=4$):
for each $(s,a)$, the next-state distribution $P(\cdot\mid s,a)$ is
sampled from Dirichlet$(0.1)$ (yielding full-support kernels) and the
cost $c(s,a)$ is sampled uniformly on $[0,1]$. We sample $80$ such MDPs and run PMD on each from the worst-cost policy, which plays the most costly action in every state, with zero probabilities floored at $10^{-300}$ so that the KL update and $D_0^\star$ are well defined. The variability across the ensemble then comes from the sampled MDPs themselves. This adversarial start gives these well-mixing MDPs a sizeable transient mismatch; it is also why their trajectories stay flat for the first $\sim\!20$ iterations, the time the growing step sizes need to lift the floored probabilities.
\item \emph{Access-control queuing} \citep[Example~10.2]{sutton_reinforcement_2020}:
$10$ identical servers, customers with priorities $\{1,2,4,8\}$
arriving uniformly, each busy server completes with probability $0.06$
per step; the agent admits or rejects each arrival. We use a single
fixed instance and run PMD from $80$ random Dirichlet$(0.5)$ initial
policies.
\item \emph{$5\times 5$ GridWorld}: a slip-prone gridworld
(slip probability $0.1$) in cost-minimization form, with the
goal-state transition acting as a teleport back to the start so the
chain is ergodic. We use a single fixed instance and run PMD from
$80$ random Dirichlet$(0.5)$ initial policies.
\item \emph{Jack's car rental} \citep[Example~4.2]{sutton_reinforcement_2020}:
the canonical two-location car-rental MDP with $441$ states and $11$
overnight-move actions, instantiated via the \texttt{gym-classics}
\texttt{JacksCarRental-v0} environment \citep{daley2021gym}. The Poisson rentals/returns
make every transition full-support, so the chain is ergodic under every
policy without any smoothing. We run PMD from $80$ random
Dirichlet$(0.5)$ initial policies.
\end{itemize}

\paragraph{Schedule and algorithm.} All four ensembles use the mixed
super-geometric schedule~\eqref{eq:app-schedule} with $\eta_0 = 10$,
$b = 1.1$ and $\alpha = 1.001$, again with $\eta_{-1} := \eta_0/b$. Its ratio
$\eta_t/\eta_{t-1}$ diverges, as Lemma~\ref{lem:master}(ii) requires, and is
at least $b$, so, as with doubling (Section~\ref{sec:master}),
Lemma~\ref{lem:master}(i) holds with $\bar\chi$ replaced by
$\max\{b/(b-1),\bar\chi\} = \max\{11,\bar\chi\}$; the ensembles below have
median $\bar\chi$ between $14$ and $372$. The KL-PMD update is
computed in log space, so the large late steps
($\eta_{60}\approx 1.1\times 10^5$) cause no overflow. We use the exact
differential value $Q^{\pi^t}$ at every iteration via the Poisson solve
described in Section~\ref{ap:exp-setup}, so this is the exact-PMD setting of
Corollary~\ref{cor:exact-pmd} (no critic noise).

\paragraph{Diagnostic.} For each run we record the
stationary-distribution mismatch
$\chi_t = \|d^{\pi^\star}/d^{\pi^t}\|_\infty$ over $T=60$ iterations
and report the \emph{normalized trajectory} $\chi_t/\bar\chi$, where
$\bar\chi = \max_{0\le t\le T}\chi_t$ is the per-run peak. The tail value $\chi$ of a run is the median of $\chi_t$ over its final $12$ iterations. This
normalization makes the separation between $\bar\chi$ (the constant
driving the non-asymptotic rate) and the asymptotic constant $\chi$
visually comparable across MDPs of very different absolute scales.
Figure~\ref{fig:rate-gap-main} plots the per-ensemble median of
$\chi_t/\bar\chi$ with the inter-quartile range shaded.

\paragraph{Result.} Across all four families the median trajectory
collapses by roughly $10\text{--}1000\times$ within the first
$\sim 20\text{--}40$ iterations and remains essentially flat thereafter. The
transient $\bar\chi$ regime that drives
Theorem~\ref{thm:avg-pmd-linear-inexact-limsup}'s non-asymptotic bound
is therefore a brief warm-up across problem types of widely varying
state-action sizes, transition structures, and optimal-policy
geometry; the asymptotic regime of Corollary~\ref{cor:exact-pmd}, where
$\chi$ governs convergence, is the operationally relevant one.

\paragraph{Per-MDP convergence and mismatch.}
Figure~\ref{fig:app:rate-gap-panels} expands the body figure into the
diagnostic format of Figures~\ref{fig:app:pmd-multi}
and~\ref{fig:app:pmd-unique}: for each MDP family we plot the median
suboptimality $\Delta_t$ (panel~(a)) and the median mismatch $\chi_t$
(panel~(b)), with the inter-quartile range over the ensemble shaded
in both. Panel~(a) overlays the non-asymptotic rate reference
$(1-1/\bar\chi)^t\Psi_0$ (Lemma~\ref{lem:master}(i)) and the
asymptotic-rate reference $(1-1/\chi)^t\Psi_0$
(Lemma~\ref{lem:master}(ii)), with
$\Psi_0:=\Delta_0+D_0^\star/(\eta_{-1}\bar\chi)$ the initial Lyapunov
value, evaluated at the ensemble-median $\bar\chi$ and tail $\chi$
(the asymptotic reference is omitted when $\chi\!\approx\!1$ collapses it to
zero, the superlinear regime of Corollary~\ref{cor:exact-pmd}); panel~(b)
overlays horizontal references at the same $\bar\chi$ and tail $\chi$
values. The observed $\Delta_t$ is below the non-asymptotic reference in every panel. The asymptotic reference must be read differently: Corollary~\ref{cor:exact-pmd} is a limit statement, $\limsup_t \Delta_t^{1/t} \le 1 - 1/\chi$, which constrains the tail slope of $\Delta_t$ and asserts nothing at any finite $t$. In three of the four families $\chi \approx 1$, and the median $\Delta_t$ reaches the numerical floor of $10^{-15}$ within $40$ iterations. In GridWorld, where $\chi \approx 7.5$, the median $\Delta_t$ stays below the asymptotic reference throughout and reaches the floor at $t = 58$, after which floating-point error dominates. In all four cases the empirical decay is significantly faster than the worst-case rate of Theorem~\ref{thm:avg-pmd-linear-inexact-limsup}.

\begin{figure}[H]
  \centering
  \includegraphics[width=0.88\linewidth]{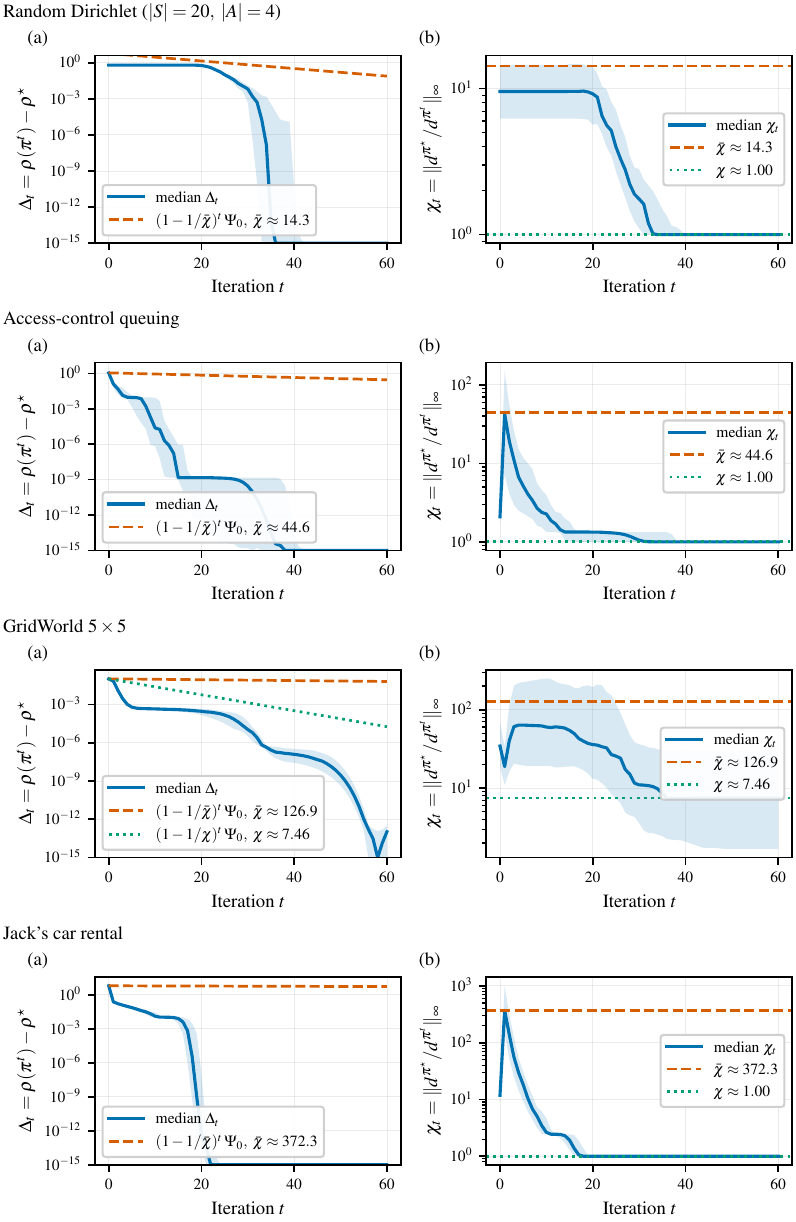}
  \caption{Per-MDP diagnostics for the rate-gap experiment (Section~\ref{ap:exp-rate-gap}): panel~(a) median $\Delta_t$ with the non-asymptotic rate reference $(1-1/\bar\chi)^t\Psi_0$ (Lemma~\ref{lem:master}(i)) and the asymptotic-rate reference $(1-1/\chi)^t\Psi_0$ (Lemma~\ref{lem:master}(ii)), both at ensemble-median values, where $\Psi_0:=\Delta_0+D_0^\star/(\eta_{-1}\bar\chi)$. The asymptotic reference is a limit statement plotted from $t=0$, not a finite-time bound. Panel~(b) median $\chi_t$ with horizontal references at $\bar\chi$ and tail $\chi$. IQR shaded.}
  \label{fig:app:rate-gap-panels}
\end{figure}

\end{document}